\theoremstyle{plain}
\newtheorem{theorem}{Theorem}[section]
\newtheorem{lemma}[theorem]{Lemma}
\theoremstyle{definition}
\newtheorem{definition}[theorem]{Definition}
\newtheorem*{conjecture*}{Conjecture}
\theoremstyle{remark}
\def\figref#1{figure~\ref{#1}}
\def\secref#1{section~\ref{#1}}
\def\eqref#1{equation~\ref{#1}}
\def\1{\bm{1}}
\def\rva{{\mathbf{a}}}
\def\rvu{{\mathbf{i}}}
\def\rvr{{\mathbf{r}}}
\def\rvs{{\mathbf{s}}}
\def\rvu{{\mathbf{u}}}
\def\rvv{{\mathbf{v}}}
\def\rvw{{\mathbf{w}}}
\def\rvx{{\mathbf{x}}}
\def\rvy{{\mathbf{y}}}
\def\rvz{{\mathbf{z}}}
\def\rmS{{\mathbf{S}}}
\def\rmW{{\mathbf{W}}}
\def\rmX{{\mathbf{X}}}
\def\rmZ{{\mathbf{Z}}}
\DeclareMathAlphabet{\mathsfit}{\encodingdefault}{\sfdefault}{m}{sl}
\SetMathAlphabet{\mathsfit}{bold}{\encodingdefault}{\sfdefault}{bx}{n}
\DeclareMathOperator{\sign}{sign}
\newcommand{\wt}[1]{\widetilde{#1}}
\newcommand{\exrisk}[2]{\mathcal{R}_{#1}(#2)}
\newcommand{\oodrisk}[1]{\mathcal{R}_\mathrm{OOD}(#1)}
\newcommand{\zspace}{\mathcal{Z}}
\newcommand{\xspace}{\mathcal{X}}
\newcommand{\yspace}{\mathcal{Y}}
\newcommand{\dtrainset}{\mathbb{D}_\mathrm{train}}
\newcommand{\dset}{\mathbb{D}}
\newcommand{\dtrain}{\mathcal{D}_\mathrm{train}}
\newcommand{\dtest}{\mathcal{D}_\mathrm{test}}
\newcommand{\snorm}[1]{\lVert #1 \rVert_1}
\newcommand{\norm}[1]{\lVert #1\rVert_2}
\newcommand{\pnorm}[1]{\lVert #1 \rVert_p}
\newcommand{\bignorm}[1]{\big\lVert #1\big\rVert_2}
\newcommand{\dtcore}{\Delta_{\mathrm{core}}^{(t)}}
\newcommand{\dtbg}{\Delta_{\mathrm{bg}}^{(t)}}
\newcommand{\abs}[1]{|#1|}
\newcommand{\bigabs}[1]{\big|#1\big|}
\newcommand{\inprod}[2]{\langle {#1}, {#2} \rangle}
\newcommand{\inp}[2]{\inprod{#1}{#2}}
\newcommand{\biginprod}[2]{\big\langle {#1}, {#2} \big\rangle}
\newcommand{\Biginprod}[2]{\Big\langle {#1}, {#2} \Big\rangle}
\newcommand{\relu}{\mathsf{ReLU}}
\newcommand{\poly}[1]{\mathsf{poly}(#1)}
\newcommand{\its}[2]{#1^{(#2)}}
\newcommand{\indicator}[1]{\mathbf{1}_{#1}}
\newcommand{\ind}[1]{\indicator{#1}}
\newcommand{\di}{d_0}
\newcommand{\dcore}{d_\mathrm{core}}
\newcommand{\dbg}{d_\mathrm{bg}}
\newcommand{\score}{\mathcal{S}_\mathrm{core}}
\newcommand{\sbg}{\mathcal{S}_\mathrm{bg}}
\newcommand{\prob}[2]{\mathbf{Pr}_{#1}{#2}}
\newcommand{\expect}[2]{\mathbb{E}_{#1}{#2}}
\newcommand{\defeq}{\vcentcolon=}
\newcommand{\gkyt}[1]{g_{k,y,{#1}}^{(t)}}
\newcommand{\gknyt}[1]{g_{k,-y,{#1}}^{(t)}}
\newcommand{\gkytj}{g_{k,y,j}^{(t)}}
\newcommand{\nset}[2]{\mathcal{N}_{#1}^{(#2)}}
\newcommand{\nyt}{\nset{y}{t}}
\newcommand{\nyinit}{\nset{y}{0}}
\newcommand{\ny}[1]{\mathcal{N}_y^{(#1)}}
\newcommand{\npos}[1]{\mathcal{N}_1^{(#1)}}
\newcommand{\muj}{\mu_j}
\newcommand{\wkt}{\its{\rvw_k}{t}}
\newcommand{\wkinit}{\its{\rvw_k}{0}}
\newcommand{\wk}[1]{\its{\rvw_k}{#1}}
\newcommand{\mj}{\bm{m}_j}
\newcommand{\zj}{\rvz_j}
\newcommand{\htx}{h^{(t)}(\rvx)}
\newcommand{\hx}[1]{h^{(#1)}(\rvx)}
\renewcommand{\eqref}[1]{(\ref{#1})}
\renewcommand{\figref}[1]{\hyperref[#1]{Figure~\ref{#1}}}
\renewcommand{\secref}[1]{\hyperref[#1]{Section~\ref{#1}}}
\newcommand{\tableref}[1]{\hyperref[#1]{Table~\ref{#1}}}
\newcommand{\defref}[1]{\hyperref[#1]{Definition~\ref{#1}}}
\newcommand{\theoref}[1]{\hyperref[#1]{Theorem~\ref{#1}}}
\newcommand{\lemmaref}[1]{\hyperref[#1]{Lemma~\ref{#1}}}
\newcommand{\propref}[1]{\hyperref[#1]{Proposition~\ref{#1}}}
\newcommand{\cororef}[1]{\hyperref[#1]{Corollary~\ref{#1}}}
\newcommand{\conjref}[1]{\hyperref[#1]{Conjecture~\ref{#1}}}
\newcommand{\appref}[1]{\hyperref[#1]{Appendix~\ref{#1}}}
\definecolor{darkorange}{rgb}{1.0, 0.45, 0.0}
\icmltitlerunning{Feature Contamination}
\begin{document}

\twocolumn[
% \icmltitle{Feature Contamination: On the Difficulty of Learning Representations that Generalize Out-of-Distribution}
\icmltitle{Feature Contamination: Neural Networks Learn Uncorrelated Features\\ and Fail to Generalize}

% It is OKAY to include author information, even for blind
% submissions: the style file will automatically remove it for you
% unless you've provided the [accepted] option to the icml2024
% package.

% List of affiliations: The first argument should be a (short)
% identifier you will use later to specify author affiliations
% Academic affiliations should list Department, University, City, Region, Country
% Industry affiliations should list Company, City, Region, Country

% You can specify symbols, otherwise they are numbered in order.
% Ideally, you should not use this facility. Affiliations will be numbered
% in order of appearance and this is the preferred way.
\icmlsetsymbol{equal}{*}

\begin{icmlauthorlist}
\icmlauthor{Tianren Zhang}{equal,yyy}
\icmlauthor{Chujie Zhao}{equal,yyy}
\icmlauthor{Guanyu Chen}{yyy}
\icmlauthor{Yizhou Jiang}{yyy}
\icmlauthor{Feng Chen}{yyy,lab}
% \icmlauthor{Firstname6 Lastname6}{sch,yyy,comp}
% \icmlauthor{Firstname7 Lastname7}{comp}
%\icmlauthor{}{sch}
% \icmlauthor{Firstname8 Lastname8}{sch}
% \icmlauthor{Firstname8 Lastname8}{yyy,comp}
%\icmlauthor{}{sch}
%\icmlauthor{}{sch}
\end{icmlauthorlist}

\icmlaffiliation{yyy}{Department of Automation, Tsinghua University, Beijing, China}
\icmlaffiliation{lab}{LSBDPA Beijing Key Laboratory, Beijing, China}
% \icmlaffiliation{comp}{Company Name, Location, Country}
% \icmlaffiliation{sch}{School of ZZZ, Institute of WWW, Location, Country}

\icmlcorrespondingauthor{Feng Chen}{chenfeng@mail.tsinghua.edu.cn}
% \icmlcorrespondingauthor{Firstname2 Lastname2}{first2.last2@www.uk}

% You may provide any keywords that you
% find helpful for describing your paper; these are used to populate
% the "keywords" metadata in the PDF but will not be shown in the document
% \icmlkeywords{Feature Contamination, Out-of-Distribution Generalization, Distribution Shift, Neural Networks, Representation Learning}
\icmlkeywords{Out-of-Distribution Generalization, Distribution Shift, Spurious Correlations, Neural Networks, Representation Learning}

\vskip 0.3in
]

% this must go after the closing bracket ] following \twocolumn[ ...

% This command actually creates the footnote in the first column
% listing the affiliations and the copyright notice.
% The command takes one argument, which is text to display at the start of the footnote.
% The \icmlEqualContribution command is standard text for equal contribution.
% Remove it (just {}) if you do not need this facility.

%\printAffiliationsAndNotice{}  % leave blank if no need to mention equal contribution
\printAffiliationsAndNotice{\icmlEqualContribution} % otherwise use the standard text.

\begin{abstract}
% The capability of generalizing under distribution shifts is critical for machine learning models.
Learning representations that generalize under distribution shifts is critical for building robust machine learning models.
However, despite significant efforts in recent years, algorithmic advances in this direction have been limited.
In this work, we seek to understand the fundamental difficulty of out-of-distribution generalization with deep neural networks. We first empirically show that perhaps surprisingly, even allowing a neural network to \emph{explicitly} fit the representations obtained from a teacher network that \emph{can} generalize out-of-distribution is insufficient for the generalization of the student network. Then, by a theoretical study of two-layer ReLU networks optimized by stochastic gradient descent (SGD) under a structured feature model, we identify a fundamental yet unexplored feature learning proclivity of neural networks, \emph{feature contamination}: neural networks can learn \emph{uncorrelated} features together with predictive features, resulting in generalization failure under distribution shifts.
Notably, this mechanism essentially differs from the prevailing narrative in the literature that attributes the generalization failure to spurious correlations. Overall, our results offer new insights into the non-linear feature learning dynamics of neural networks and highlight the necessity of considering inductive biases in out-of-distribution generalization.\footnote{Code is available at \url{https://github.com/trzhang0116/feature-contamination}.}
% suggesting that out-of-distribution generalization in practice can deviate from existing explanations without considering the inductive biases of neural networks.
% and demonstrate the necessity of incorporating inductive bias into OOD generalization algorithms.
% Overall, our results imply that it may be generally not feasible to learn OOD generalizable representations without explicitly considering the inductive biases of SGD-trained neural networks and provide new insights into the OOD generalization failure, suggesting that OOD generalization in practice may behave \emph{very differently} from existing theoretical models and explanations.
\end{abstract}

% \vspace{-1em}
\section{Introduction}
\label{sec:intro}

The capability of generalizing under distribution shifts is crucial for machine learning systems to be deployed in the wild~\citep{amodei_concrete_2016,ferrari_recognition_2018,koh_wilds_2021}. In the last decade, it has proved that the conventional principle of empirical risk minimization (ERM), when combined with deep neural networks, can lead to remarkable {in-distribution (ID) generalization} performance given sufficient training data.
% are sampled from the distribution of interest.
Nevertheless, this powerful paradigm can often fail in \emph{out-of-distribution (OOD) generalization}, where distribution shifts occur due to data variations that are not well-covered in training~\citep{torralba_unbiased_2011,ferrari_recognition_2018,geirhos_generalisation_2018,degrave_ai_2021}.
% stemming from data variations that are not well-covered in training.
% Due to their ubiquity in the real world, distribution shifts have posed significant challenges to machine learning.

In response, recent years have witnessed a surge of developing algorithms that promote OOD generalization. However, the effectiveness of many proposed algorithms has been called into question by recent work~\citep{gulrajani_search_2021,koh_wilds_2021},
in which no tested algorithm exhibits a significant advantage over ERM under fair comparisons.
On the other hand, it turns out that the most effective means of improving OOD generalization to date is pre-training on a more diverse dataset~\citep{taori_measuring_2020,wiles_fine-grained_2022}, with notable examples including CLIP~\citep{radford_learning_2021} and GPT~\citep{brown_language_2020}. Yet, using additional pre-training data also blurs the notion of ``OOD'' itself since it essentially expands the training distribution.
% for example, CLIP is trained using a dataset of 400 million image-text pairs, which is hundreds of times larger than any existing OOD generalization dataset, and it has been observed that CLIP's distribution shift robustness crucially depends on the amount of pre-training data~\citep{fang_data_2022,santurkar_is_2023}.
Moreover, it has been observed that when the test distribution differs from the pre-training distribution, pre-trained models can also suffer from performance degradation~\citep{bommasani_opportunities_2022,liu_evaluating_2023,li_task_2023}.

The limited algorithmic success underlines the necessity of identifying and understanding the fundamental factors behind OOD generalization. In particular, a prevailing narrative in the literature attributes the OOD generalization failure to \emph{spurious correlations}~\citep{arjovsky_invariant_2019,nagarajan_understanding_2021,scholkopf_toward_2021}. This explanation is inspired by the observation that the representations learned by ERM can absorb features that have \emph{correlational} yet non-causal relationships with the output~\citep{ferrari_recognition_2018,geirhos_shortcut_2020}, and it has motivated a main line of algorithmic endeavor of designing better representation learning objectives in recent years~\citep{arjovsky_invariant_2019,krueger_out--distribution_2021,mitrovic_representation_2021,chen_iterative_2022,shi_gradient_2022}. However, despite being intuitive, it remains elusive how much this failure mode actually contributes to the OOD generalization failure in practice---as we will elaborate in the following sections, there exists a major OOD generalization gap in many tasks that \emph{cannot} be straightforwardly explained by spurious correlations, implying that there must exist some more dominant factors.

On the theoretical side, a series of work has been devoted to analyzing the failure modes of OOD generalization. However, existing analysis has two major limitations: (\romannumeral 1) conceptually, most studies only consider the failure mode due to spurious correlations; (\romannumeral 2)
% a large body of work has been devoted to understanding and addressing the OOD generalization failure caused by \emph{spurious correlations},
% (e.g., the reliance on class-correlated backgrounds in object recognition~\citep{ferrari_recognition_2018}).
% which represent the failure mode due to the model over-exploiting features that have superficial correlations with desired outputs.
% A well-known example is the reliance on class-correlated backgrounds in object recognition.
% An oft-stated example in the literature is the ``cow-camel'' classification~\citep{ferrari_recognition_2018,arjovsky_invariant_2019}, where most images of cows are with grass backgrounds and most images of camels are with desert backgrounds, resulting in a spurious correlation between labels and backgrounds.
% Intuitively, a model trained by ERM may absorb such spurious correlations in its representation and thus fail in test distributions where those correlations are reversed.
technically,
% most studies does not capturing the inductive biases of today's most widely used model class, namely \emph{non-linear neural networks}, for which it is well-known that inductive biases play a key role in generalization~\citep{zhang_understanding_2017}.
% However, those studies have two major drawbacks:
most studies either only consider \emph{linear} models such as linear classification over prescribed features or neural tangent kernels
% (including linearized neural networks such as neural tangent kernerls (NTKs))
\citep{arjovsky_invariant_2019,sagawa_investigation_2020,nagarajan_understanding_2021,xu_how_2021,ahuja_empirical_2021,ahuja_invariance_2021,pezeshki_gradient_2021,chen_iterative_2022,wang_provable_2022,rosenfeld_domain-adjusted_2022,abbe_generalization_2023,chen_understanding_2023}, or only consider arbitrary \emph{unstructured} models without taking into the account the role of optimization~\citep{rosenfeld_risks_2021,kamath_does_2021,ye_towards_2021}---this makes them unable to capture the inductive biases of today's most widely used model class, i.e., \emph{neural networks}. As a result, it has been observed that many OOD generalization algorithms that enjoy provable guarantees in their theoretical models do not excel in practice~\citep{gulrajani_search_2021}.

Overall, the above results imply that current explanations and theoretical models on OOD generalization may \emph{not} faithfully reflect real-world distribution shifts. Motivated by the gap between theory and practice, we argue that taking into account the inductive biases of neural networks is not only important but also \emph{necessary} for understanding OOD generalization in the era of deep learning.

\subsection{Our Results and Implications}
\label{sec:summary}

In this work, we set out to understand the fundamental difficulty of OOD generalization with deep neural networks:

\textbf{Empirically,} inspired by the ongoing trend of designing specific representation learning objectives for OOD generalization~\citep{arjovsky_invariant_2019,gulrajani_search_2021}, we investigate what will happen in an ``ideal'' setting where good representations are \emph{explicitly given} during training. Concretely, we show on a range of distribution shift benchmarks that perhaps surprisingly, even if we allow a neural network to explicitly fit the representations obtained from a teacher network that \emph{can} generalize out-of-distribution, the performance of the student network can still significantly deteriorate under distribution shifts. Our results thus imply that only considering the effect of the representation learning objective is \emph{insufficient} for understanding OOD generalization in practice without considering the inductive biases in optimization. Moreover, we show that the above generalization failure \emph{cannot} be simply explained by spurious correlations or other existing explanations in the literature on OOD generalization.

\textbf{Theoretically,} we prove that in certain structured binary classification tasks where the data is generated from generalizable \emph{core features} and other \emph{background features} (formal definitions in~\secref{sec:setup}), a randomly initialized two-layer ReLU neural network trained by SGD can achieve ID generalization given sufficient iterations, yet fails to generalize OOD. In particular, we show that the above failure mode differs fundamentally from prior work as it holds even when:
\begin{itemize}[leftmargin=1em]
\item Background features are \emph{uncorrelated} with the label (this excludes the failure mode due to spurious correlations).
\item Ground-truth labels can be perfectly predicted by core features (this excludes the failure mode due to lacking informative features for prediction).
\item Core features and background features are distributed in orthogonal subspaces (this excludes the failure mode due to non-linearly entangled features in the input that may be hard to disentangle for the neural network in training).
% \vspace{-0.3em}
\end{itemize}

\begin{figure*}[t]
\centering
\includegraphics[width=0.92\linewidth]{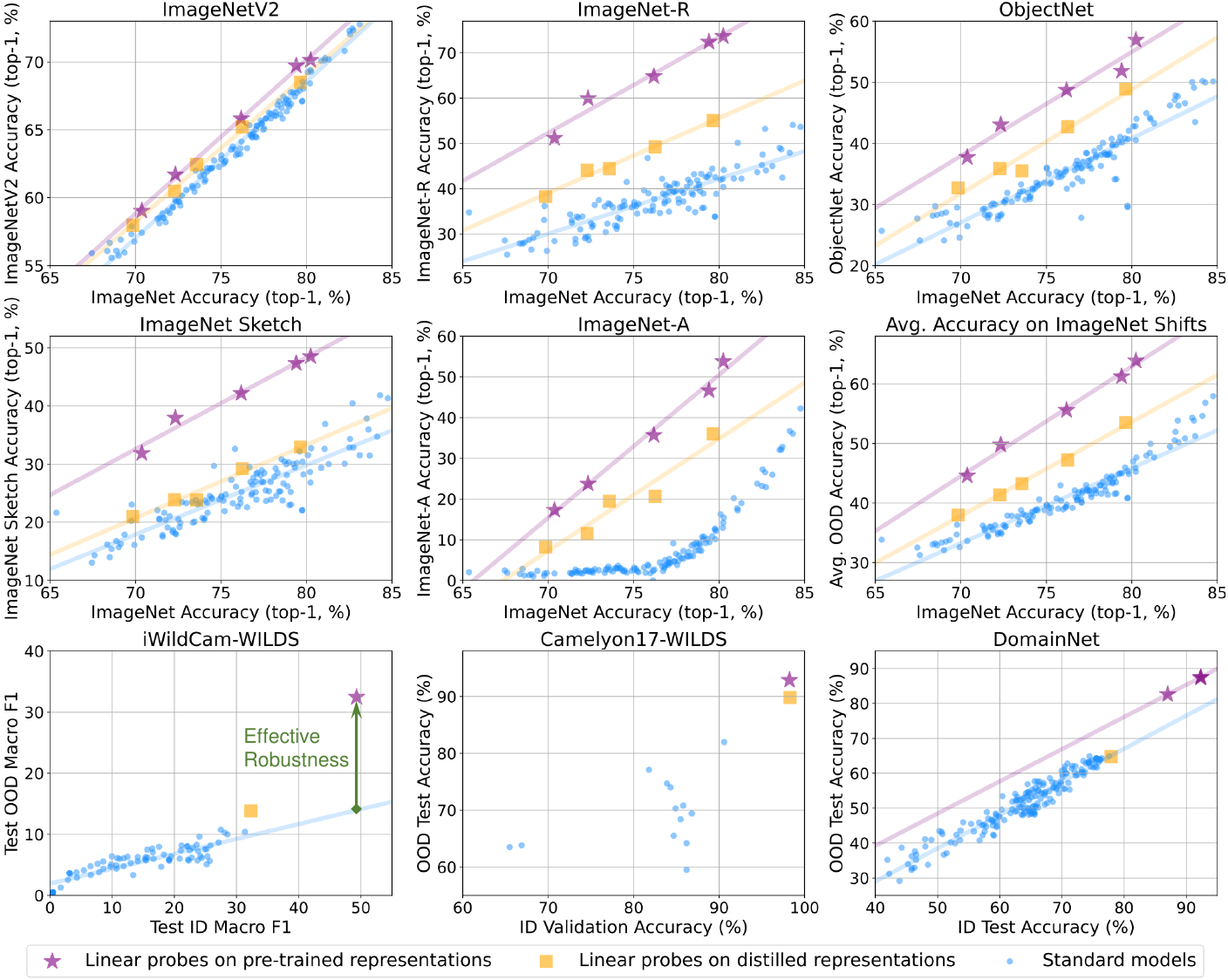}
\vspace{-0.5em}
\caption{OOD performance ($y$-axes) v.s. ID performance ($x$-axes) for three model families including (\romannumeral 1) linear probes on pre-trained representations ({\color{Purple}purple stars}), (\romannumeral 2) linear probes on distilled representations ({\color{Orange}orange squares}), and (\romannumeral 3) standard models trained on ID data ({\color{NavyBlue}blue circles}).
% from the testbed by~\citet{taori_measuring_2020} (blue circles).
% In each panel, we also plot linear fits of the ID and OOD performance for all model families, except in the ``ImageNet-A'' panel where the ID and OOD performance of baseline models does not follow a clear linear trend.
The $y$-axis of the sixth panel stands for the average accuracy on ImageNet-based OOD test sets, averaged from the first five panels. Please refer to~\secref{appsec:distill} for more details on each model family.}
\label{fig:results}
\vspace{-0.9em}
\end{figure*}

Instead, we identify that the above failure stems from a fundamental yet unexplored feature learning proclivity, which we name \emph{feature contamination}, of neural networks. In brief, feature contamination indicates that
% (stochastic) gradient descent accumulates in the network's hidden weights a direction that extracts \emph{both} core and background features in every step. In other words,
during the learning of core features, SGD-trained neural networks also learn background features simultaneously, % However, for neural networks with non-linear activation such as ReLU, \textbf{such ``removal'' process can only guarantee that background features are approximately ``self-cancelled'' and thus have no impact on ID prediction, but \emph{cannot} remove them from the weight space completely},
even when background features are \emph{uncorrelated} with the label and in the presence of weight decay. The reason for this phenomenon is that the neurons in the network tend to have \emph{asymmetric activation} for different classes, resulting in non-zero expected gradient projections onto \emph{both} the core feature subspace and the background feature subspace. This eventually leads to additional risks under distribution shifts due to the coupling of core and background features in the neurons' pre-activation.
% This further leads to the accumulation of background features in the neuron weights during SGD, leading to additional risk under distribution shifts due to the coupling of core and background features in the neurons' pre-activation.
Moreover, we formally show that ReLU networks and linear networks are {\emph{provably different}} in our setting with the latter exhibiting no such behavior, suggesting a separation between linear and non-linear models.
% Moreover, we show that while the learned background features do not affect ID prediction, they can weaken the expression of core features under certain distribution shifts, resulting in large OOD risk.
% We provide mathematical arguments on the rationale for feature contamination in~\secref{sec:main}, and
Finally, we present empirical evidence on deep neural networks that connects feature contamination to the empirical OOD generalization failure observed in our experiments.

At a high level, we expect that feature contamination as a \emph{novel inductive bias of SGD-trained neural networks} may also be used in more general contexts. For example, it may serve as a new perspective for understanding the \emph{feature learning} process of (deep) neural networks, complementing other known inductive biases of neural networks such as the simplicity bias~\citep{arpit_closer_2017,shah_pitfalls_2020}.

\section{Good Representations Are Hard to Learn Even when Explicitly Given in Training}
\label{sec:main_exp}
% \vspace{-0.2em}

We begin our analysis by an empirical study inspired by recent algorithmic explorations in OOD generalization. Existing work has made various attempts to learn OOD-generalizable models by designing auxiliary \emph{representation learning} objectives beyond minimizing the prediction risk (see the baseline models in~\secref{appsec:domainnet} for some examples).
% \footnote{While some objectives~\citep{shi_gradient_2022,rame_fishr_2022,rosenfeld_domain-adjusted_2022} also impose constraints on the (linear) classifier $f$ on top of the featurizer $\Phi$, they can be equivalently viewed as imposing constraints on a new featurizer $\Phi'=f\circ\Phi$ and then training a linear classifier $f'$ on top of $\Phi'$ using ERM. Hence, our discussion also applies to those objectives.}
% Those objectives typically rely on the \emph{prior} of what constitutes good representations, which may vary among different approaches, such as certain types of invariance~\citep{arjovsky_invariant_2019,krueger_out--distribution_2021,rame_fishr_2022}, richness~\citep{huang_self-challenging_2020,pezeshki_gradient_2021,zhang_rich_2022} or causality~\citep{scholkopf_toward_2021}.
Typically, those objectives reflect the premise of the properties of ``good'' representations.
For example, a major line of work focuses on learning invariant representations across multiple training domains~\citep{arjovsky_invariant_2019,chen_iterative_2022,shi_gradient_2022}, with the aim of removing domain-specific spurious correlations.
% However, a major downside of those objectives is that their minimizers may not readily lead to ideal representions.
Given the limited success of existing algorithms, here we would like to investigate the fundamental limitations of such representation learning methods.
However, a main confounder in our study is that it is often unclear whether optimizing certain objectives is indeed effective for shaping the representation to satisfy the ideal properties---for example, it has been shown that optimizing some invariant representation learning objectives may lead to representations that are not truly invariant~\citep{kamath_does_2021,rosenfeld_risks_2021}.
% representations satisfying certain prior are indeed OOD-optimal for a certain task
% ---many criticisms on those objectives construct hard instances where the representations optimizing those objectives can still underperform OOD~\citep{kamath_does_2021,rosenfeld_risks_2021}.
% Therefore, the limited empirical success of existing objectives does not nullify the possibility that good representations may be learned with ``better'' objectives.
% Motivated by this, we ask: \emph{what if we increase the ``amount'' of the prior on good representations to the fullest extent possible?}
% can get by representation learning given a finite training set?}
% , which may be one of the causes of their limited success in practice.
% Theoretical analysis of many objectives indicates that their effectiveness usually requires more assumptions on the relation between the spurious feature dimension $d_s$ and the cardinality of $\mathbb{D}_\mathrm{train}$, as well as other regularity conditions \emph{even for linear underparameterized models}~\citep{arjovsky_invariant_2019,rosenfeld_risks_2021}, which may be one of the causes of their limited success in practice.

% \begin{figure}[t]
% \centering
% \subcaptionbox{}{%
%   \includegraphics[width=0.58\linewidth]{}}
% \subcaptionbox{}{%
%   \includegraphics[width=0.405\linewidth]{}}
% \caption{Visualization}
% \label{figure:visualization}
% \end{figure}

% To study the \emph{limit} of ID representation learning for OOD generalization,
% \textbf{Representation learning objective.}
To ablate the potential sub-optimality in the representation learning objective, we focus on an ``ideal'' scenario where the model has \emph{explicit access to good representations} in training.
% Note that without further prior knowledge on the inductive biases of the model or the task, this is already the \emph{best} objective we can possibly define since its minimizer can \emph{uniquely} recover the ``right'' representations for all examples in the training set.
% This experiment captures the \emph{very best} we can hope for by representation learning given a finite training set---the encoder learns the ``right" representations for all inputs in the training set.
Concretely, we leverage large-scale pre-trained models such as CLIP~\citep{radford_learning_2021}, which has shown remarkable robustness against distribution shifts, to extract good representations for each input: given a pre-trained CLIP encoder as a \emph{teacher encoder}, we randomly initialize another \emph{student encoder} with the \emph{same} architecture. We then train the student encoder by minimizing the Euclidean distance between its output representations and the representations extracted by the teacher encoder, a process known as representation distillation~\citep{hinton_distilling_2014,tian_contrastive_2020}. Finally, we evaluate the ID and the OOD performance of both models.
% The main difference between our work and existing work on representation distillation is that we focus on OOD generalization with teacher and student models sharing the same architecture, while existing work mainly considers model compression and knowledge transfer between a teacher model and a smaller student model under ID evaluations~\citep{hinton_distilling_2014,tian_contrastive_2020}. 
In total, our experiments span six different pre-trained models and eight extensively benchmarked distribution shift datasets, including five ImageNet-based natural distribution shift datasets~\citep{taori_measuring_2020}, two in-the-wild distribution shift datasets from WILDS~\citep{koh_wilds_2021}, and a domain generalization dataset DomainNet~\citep{peng_moment_2019}.
% In the experiments, we employ fully-fledged neural network models including Vision Transformers (ViTs)~\citep{dosovitskiy_image_2021} and ResNets~\citep{he_deep_2016}.
Please see~\secref{appsec:distill} for more experimental details.
% In the context of domain generalization,~\citet{cha_domain_2022} propose a regularization objective that maximizes the mutual information between the learned representations and pre-trained representations. Instead of using pre-trained models as regularizers, here we let the student model directly fit pre-trained representations and neglect the original objective of minimizing the prediction risk.

\textbf{Evaluation protocol.} {We evaluate the ID and the OOD performance of pre-trained and distilled encoders by training linear probes on top of their output representations on the ID training set and then evaluate those linear probes on both ID and OOD test sets. Note that under our protocol, the linear probes still face OOD generalization tasks on the OOD test set, albeit with representations instead of raw images as inputs.} To compare the OOD generalization ability of different models, we follow the evaluation protocol of \emph{effective robustness}~\citep{taori_measuring_2020}, which quantifies a model's distribution shift robustness as its OOD performance advantage over a baseline representing the OOD performance of standard models trained on ID data. Following~\citet{taori_measuring_2020}, we illustrate the effective robustness of our models using scatter plots, with $x$-axes representing ID performance and $y$-axes representing OOD performance.
% across different datasets.
% We make two key observations from those plots:

% \begin{observation}[OOD performance of distilled representations constitutes a rough upper bound of existing OOD generalization algorithms]
\textbf{Results.} As shown in~\figref{fig:results}, linear probes on distilled representations exhibit consistent effective robustness gains over standard models.
% especially for large datasets such as ImageNet.
% \footnote{The main exception is on ImageNet Sketch where some standard models surpass the linear fit of the results of distilled representations. By manually checking those models, we found that they use intensive, custom data augmentations~\citep{hendrycks_augmix_2020,hendrycks_many_2021}, which essentially lead to a more diverse training distribution.}
This is not very surprising given that the distilled models have additional supervision provided by the representations obtained from the teacher models in training, while standard models do not. However, the upshot is that \emph{even with explicit access to good representations, the OOD generalization performance of distilled models still lags far behind their pre-trained counterparts}.
% Roughly, the linear fits of distilled models can serve as \emph{upper bounds} of the OOD performance improvements that we may achieve by representation learning from ID data (at least for existing methods), being consistent with our intuition on the representation distillation objective.
% Hence, it shows no contradiction with our models distilled using only standard augmentations as in common practice.
% \end{observation}
% \begin{observation}[Large ID-OOD gap remains even when distilling good pre-trained representations] 
% While distilling pre-trained representations can improve OOD generalization, its performance still lags far behind the pre-trained representations.
For example, distilled models only close about half of the average effective robustness gap between standard models and pre-trained models in ImageNet-based datasets, with even worse performance on iWildCam and DomainNet. Note that this is not due to the failure in distillation, as distilled models do achieve similar \emph{ID performance} to that of the pre-trained models.
% Given the fact that the representation learning objective itself cannot be further improved in general, our result implies that \textbf{OOD generalizable representations may not be learnable using only ID data without explicitly taking into account the inductive biases of the model or the task}.
% This is consistent with existing observations that even a standard ERM often remains strong in OOD generalization~\citep{gulrajani_search_2021,koh_wilds_2021}.
% \end{observation}
% Given the above observations, a natural question arises is that \emph{why} this ID-OOD gap still emerges even when we directly have access to the ideal representations in training. Here we note that:
Our results thus suggest that the limited algorithmic success in OOD generalization cannot be simply explained by not having a ``good enough'' representation learning objective.
% This implies the existence of more fundamental factors than the representation learning objective that contribute to the large robustness gap.

\textbf{What is the cause of the above failure?} First, one may argue that spurious correlations can still play a role here, as the representations extracted by pre-trained models may also contain spurious correlations to the label even if they achieve generally good OOD performance. While we do acknowledge this possibility, we emphasize that spurious correlations \emph{cannot} explain the large \emph{OOD performance gap} between the distilled and the pre-trained models since we would expect them to be similarly impacted by the spurious correlations in their representations.
% we argue that this result cannot be explained by spurious correlations since we use no label in representation learning, excluding the possibility of the model learning spurious features due to their correlations with labels. Meanwhile, due to our representation distillation objective, the gap between CLIP and distilled models can neither be explained by CLIP extracting richer representations than standard models~\citep{zhang_learning_2023}.
Another plausible explanation is data leakage, i.e., CLIP may have ``seen'' many OOD examples in its pre-training stage and thus can extract richer predictive features for OOD examples~\citep{zhang_learning_2023}.
% CLIP may extract OOD-informative features that are absent in ID data during pre-training, but
% it is then not straightforward to explain why \emph{ID-trained linear probes} on CLIP representations also achieve good performance, and
However, this possibility is nullified by a recent study~\citep{mayilvahanan_does_2024}, which shows that CLIP's distribution shift robustness persists even when OOD examples are pruned from its pre-training dataset.
% it contradicts the fact that CLIP still exhibits distributional robustness even with images similar to those in OOD test sets removed in its pre-training data~\citep{mayilvahanan_does_2024}.
% nor by CLIP extracting more ``OOD core features'' that are absent in ID data (since the linear probe \emph{ID-trained} on top of CLIP representations cannot leverage them to achieve OOD generalization).}
% Moreover, as we can also show that this failure mode seems to be non-existent in linear models, we believe that it is essentially tied to the \textbf{non-linear feature learning process of neural networks}.

In a nutshell, we argue that \emph{existing explanations are insufficient to account for the above OOD generalization gap}. This suggests that taking into the account the inductive biases of SGD-trained neural networks are necessary for understanding the OOD generalization failure in practice. In the following sections, we will formally identify feature contamination as a novel OOD generalization failure mode and further connect it to the results in this section.
\section{A Theoretical Model of OOD Generalization}
\label{sec:setup}
% \vspace{-0.2em}

% In this section, we introduce our formulation of OOD generalization used in the analysis.

\textbf{Notation.} We use $[d]$ to denote the set $\{1, \ldots, d\}$ for positive integers $d$.
% $\bm{I}_d$ to denote the $d\times d$ identity matrix, and $\mathcal{N}(\bm{\mu}, \bm{\Sigma})$ to denote the Gaussian distribution with mean $\bm{\mu}$ and covariance $\bm{\Sigma}$.
% We denote by $\circ$ the function composition operation, i.e., $f\circ g = f(g(\cdot))$ for compatible functions $f$ and $g$.
For a set $\mathcal{S}$, we denote its cardinality by $|\mathcal{S}|$.
% and use $\Delta(\mathcal{S})$ to denote the set of all distributions over $\mathcal{S}$.
% For a distribution $\mathcal{D}$ defined over a set $\mathcal{S}$, we denote its probability density function (PDF) by $p_\mathcal{D}(\cdot)$ and its support by $\supp(\mathcal{D}) \defeq \{s\mid p_\mathcal{D}(s) > 0,s\in\mathcal{S}\}$.
% and $\bm{O}_{m\times n}$ the all-zero matrix with size $m\times n$.
For a vector $\rvu$, we denote its $\ell^2$-norm by $\lVert\rvu\rVert_2$.
% For a matrix $\rmA$, we denote its $j$-th column by $\rva_j$ and its Frobenius norm by $\lVert\rmA\rVert_F$.
We denote the inner product of two vectors $\rvu$ and $\rvv$ by $\langle\rvu,\rvv\rangle$. We use the standard big-O notation: $O(\cdot)$, $\Omega(\cdot)$, $\Theta(\cdot)$, $o(\cdot)$, as well as their soft-O variants such as $\wt{\Theta}(\cdot)$ to hide logarithmic factors. For some parameter $d$, we use $\poly{d}$ to denote $\Theta(d^C)$ with some unspecified large constant $C$. 
We use $\ind{E}$ to denote the indicator function for an event $E$.
% We let $\sign(u)=1$ for $u \ge 0$ and $\sign(u)=-1$ for $u < 0$.

\subsection{OOD Generalization Problem Setup}
\label{subsec:dgm}

% \textbf{OOD generalization problem.}
\textbf{Task and data.} We consider a binary classification task with an input space $\mathcal{X}\subseteq\mathbb{R}^d$, a label space $\mathcal{Y} = \{-1, 1\}$, a model class $\mathcal{H}:\mathcal{X}\to\mathbb{R}$, and a loss function $\ell:\yspace\times\yspace\to\mathbb{R}$.
% We assume that each model factorizes $h = f\circ\Phi$, where $\Phi:\mathcal{X}\to\mathcal{Z}$ is a (perhaps non-linear) featurizer that maps the input to a feature space $\mathcal{Z}\subseteq\mathbb{R}^{d'}$, and $f:\mathcal{Z}\to\mathbb{R}$ is a linear classifier based on the feature.
For every distribution $\mathcal{D}$ over $\xspace\times\yspace$ and model $h\in\mathcal{H}$, the expected risk of $h$ on $\mathcal{D}$ is given by $\exrisk{\mathcal{D}}{h}\defeq\mathbb{E}_{(\rvx, \rvy)\sim\mathcal{D}}\ell(h(\rvx),\rvy)$.
% and let $\mathcal{H}$ be a model class.
In an OOD generalization problem, there exist a set of distributions $\mathbb{D}$ that consists of all possible distributions to which we would like our model to generalize. In training, we have access to a training distribution set $\mathbb{D}_\mathrm{train}\subsetneq\mathbb{D}$, where $\mathbb{D}_\mathrm{train}$ may contain one or multiple training distributions. Following prior work~\citep{arjovsky_invariant_2019,sagawa_distributionally_2020,nagarajan_understanding_2021,rosenfeld_risks_2021}, we aim to select a model $h\in\mathcal{H}$ to minimize the \emph{OOD risk}, defined as the worst-case expected risk on $\mathbb{D}$:
\begin{equation}
\oodrisk{h}\defeq \max_{\mathcal{D}\in\mathbb{D}} \mathcal{R}_\mathcal{D}(h).
\label{eq:ood_risk}
\end{equation}
It is clear that without further assumptions on $\dtrainset$ and $\dset$, OOD generalization is impossible since no model can generalize to an arbitrary distribution.
Fortunately, real-world distribution shifts are often \emph{structured} with some structural similarities shared by different distributions. We can thus hope that such structures can be captured by certain algorithms to train models that can generalize OOD.

To formalize this, in this work we assume that both ID and OOD data are generated by a dictionary $\bm{M} = (\bm{m}_1,\ldots,\bm{m}_{\di})\in\mathbb{R}^{d\times\di}$ consisting of $\di$ features with each feature $\bm{m}_i\in\mathbb{R}^d$. Throughout the paper, we work with the case where $\di$ is sufficiently large and $d \in [\Omega(\di^{2.01}),\poly{\di}]$. For simplicity, we assume that every feature satisfies $\norm{\bm{m}_i}=1$ and different features are orthogonal: $\forall i\ne j\in[\di], \inprod{\bm{m}_i}{\bm{m}_j} = 0$.\footnote{Another advantage of assuming orthogonal features is that this prevents the network from learning background features due to their correlations with core features. We note that our results can be extended to more general settings without orthogonality.}

Among all features in $\bm{M}$, we assume that there are $\dcore$ features consistently correlating with the label in all distributions in $\dset$. We denote the index set of those features by $\score\subsetneq[\di]$ and refer to them as {\textbf{core features}} since they are consistently predictive of the label in all distributions. We refer to the remaining features as {\textbf{background features}} and denote their index set by $\sbg = [\di]\setminus\score$ with $\dbg\defeq|\sbg|=\di-\dcore$. We assume that $\dcore=\Theta(\di)$ and $\dbg = \Theta(\frac{\di}{\log\di})$, so that the number of both core features and background features is non-negligible. With the above definitions, we introduce our ID and OOD data generation model in~\defref{def:dgp}.
% we introduce a data generation process that involves a structured latent feature space $\wt{\mathcal{Z}}$,\footnote{Note that $\wt{\mathcal{Z}}$ (the latent feature space in the data generation process) and $\mathcal{Z}$ (the feature space output by the featurizer) can be \emph{different} spaces with different dimensionalities.} which captures the common intuition of the separation of generalizable \emph{core features} and non-generalizable \emph{spurious features} in OOD generalization~\citep{ferrari_recognition_2018,arjovsky_invariant_2019,sagawa_investigation_2020,gulrajani_search_2021,nagarajan_understanding_2021,rosenfeld_risks_2021}. Specifically, we define the optimal model with regard to objective~\eqref{eq:ood_risk} to be $h^* \defeq \arg\min_{h\in\mathcal{H}} \oodrisk{h}$ and assume that the latent feature space is composed of two subspaces, i.e., $\wt{\mathcal{Z}} = \wt{\mathcal{Z}}_\mathrm{core}\times\wt{\mathcal{Z}}_\mathrm{spu}$, where $\zcspace\subseteq\mathbb{R}^{d_c}$ contains features used by $h^*$ (defined as core features) and $\zsspace\subseteq\mathbb{R}^{d_s}$ contains features not used by $h^*$ (defined as spurious features).
% With some abuse of notation, we also use $\mathbb{D}_\mathrm{train}$ and $\mathbb{D}$ to denote the distribution sets over $\zspace$ used in generating training data and test data. We then introduce the data generation process considered in this paper:
\begin{definition}[ID and OOD data generation]
\label{def:dgp}
Under the feature model stated above, consider a training distribution (ID data distribution) $\dtrain\in\dtrainset$ and a test distribution (OOD data distribution) $\dtest\in\dset\setminus\dtrainset$.\footnote{Note that $\dtrain$ and $\dtest$ can also be (weighted) \emph{mixtures} of multiple distributions in $\dtrainset$ and $\mathbb{D}\setminus\dtrainset$, respectively.}
% Let $\{C_{i1},C_{i2}\}_{i=1}^{\di}$ be constants that control the magnitude of features satisfying $\forall i\in[\di], C_{i2} > C_{i1} > 0$.
% Let $\{\beta_i\}_{i\in\score}$ be feature noise parameters satisfying $\forall i\in\score, \beta_i = O(\frac{1}{d})$.
Each example $(\rvx, \rvy)\sim \mathcal{D}\in\{\dtrain,\dtest\}$ is generated as follows:\vspace{-0.5em}
\begin{enumerate}[leftmargin=1.5em]
\setlength\itemsep{0.1em}
\item Sample a label $\rvy$ from the uniform distribution over $\yspace$.\vspace{-0.2em}
\item Sample a weight vector $\rvz=(\rvz_1,\ldots,\rvz_{\di})\in\mathbb{R}^{\di}$ where different coordinates of $\rvz$ are independent random variables generated as follows:\vspace{-0.2em}
\begin{itemize}[leftmargin=1em]
	\item \textbf{{ID data ($\mathcal{D} = \dtrain$):}} for every $j\in[\di]$,
	% if $i\in\score$, sample $\rvz_i$ from some distribution $p$ on $[0, 1]$
	% satisfying that $\forall a, b \in [C_{i1}, C_{i2}]$ with $b - a \ge \Omega(\frac{1}{\log d})$, $\prob{\rvz_i\sim p}{[\rvz_i \in (a, b)]} \ge \Omega(\frac{1}{\log d})$;\footnote{This assumption ensures that the distribution $p$ is not too concentrated in small regions in $[C_{i1},C_{i2}]$.}
	% if $i\in\sbg$,
	sample $\rvz_j$ from some distribution $\mathcal{D}_j$ over $[0, 1]$ such that its moments satisfy $\mu_{jp}\defeq\expect{\mathcal{D}_j}{\zj^p} = \Theta(1)$ for $p\in[3]$ and its variance satisfies $\sigma^2_j = \Theta(1)$.
	% and the expected total weight of core features is not less than background features: $\sum_{j\in\score}\mu_{j1}^2 - \sum_{j\in\sbg}\mu_{j1}^2 \ge 0$.\vspace{-0.2em}
	\item \textbf{{OOD data ($\mathcal{D} = \dtest$):}} for every $j\in[\di]$, if $j\in\score$, sample $\rvz_j$ from $\mathcal{D}_j$ over $[0, 1]$; if $j\in\sbg$, sample $\rvz_j$ from some distribution $\mathcal{D}'_j$ over $[-1, 0]$ such that $\expect{\mathcal{D}_j'}{\zj} = -\Theta(1)$.\vspace{-0.2em}
	% such that $\sum_{j\in\sbg}\expect{\mathcal{D}'_j}{\zj} = -\Theta(1)$.
\end{itemize}
\item Generate $\rvx = \sum_{j\in\score}\rvy\rvz_j\bm{m}_j + \sum_{j\in\sbg}\rvz_j\bm{m}_j$.
% \footnote{While we assume a linear transformation here, our analysis also applys to the case where $\rvx$ also depends on some label-irrelevant dense noise so that the concept class can only be learned by non-linear models, akin to~\citet{allen-zhu_feature_2021,karp_local_2021}. We refrain from this extension since it unnecessarily complicates our analysis without changing our main results on feature contamination and OOD failure.}
\end{enumerate}
% We assume that $\tilde{\rvz}\sim \dztrain\in\mathbb{D}_\mathrm{train}$ for training data and $\tilde{\rvz}\sim \dztest\in\mathbb{D}$ for test data, where $\dztrain$ and $\dztest$ are different distributions over $\zspace = \zcspace\times\zsspace$. Each labeled example $(\rvx, \rvy)$ is then generated by
% \begin{equation}
% \rvx = \bm{M}\tilde{\rvz} + \bm{\xi} = \mc\zc + \ms\zs + \bm{\xi},\quad \rvy = \sign(\langle\bm{w}^*,\tilde{\rvz}_\mathrm{core}\rangle),
% \end{equation}
% where $\bm{M} = [\mc\ \ms]\in\mathbb{R}^{d\times (d_c+d_s)}$ is a column-orthogonal matrix, $\tilde{\rvz}=\begin{bmatrix}\zc\\ \zs\end{bmatrix}\in\mathbb{R}^{d_c + d_s}$, $\bm{\xi}\sim\mathcal{N}(\bm{0},\sigma^2\bm{I}_d)$ is the dense noise with magnitude $\sigma^2$, and $\bm{w}^*\in\mathbb{R}^{d_c}$ is a ground-truth labeling vector satisfying $\langle \bm{w}^*,\zc \rangle\ne 0,\, \forall \z\in\bigcup_{\mathcal{D}_z\in\mathbb{D}}\supp(\mathcal{D}_z)$ (this indicates that the data is strictly linearly separable \emph{given only the core features}).
\end{definition}

\textbf{Remarks on data generation.} Our data model formalizes a structured OOD generalization setup reflecting several facets of real-world OOD generalization problems:\vspace{-0.4em}
\begin{itemize}[leftmargin=1.em]
\setlength\itemsep{0.05em}
% \item For a sampled label $\rvy\in\mathcal{Y}$, the majority of its $\rvx$ is generated by a subset of core features $\scus{\rvy}\subseteq\score$ and background features $\sbg$, while the contribution of the core features from the other class $\scus{-\rvy}$ is much smaller (modeled as feature noise controlled by $\beta_i$s). This resembles real-world data where inputs from different classes often tend to have distinct key features, while may also have certain amount of low-magnitude key feature overlapping.
\item The explicit separation of core and background features captures structural assumptions that make OOD generalization tractable: under the distribution shifts on background features, there still exists a set of core features that enable robust classification. Hence, a model that discards background features and retains core features can generalize OOD. This rules out the ill-posed case where the ID data is not informative enough to learn a generalizable model~\citep{tripuraneni_theory_2020,xu_how_2021,kumar_fine-tuning_2022}, and is also the key intuition of many OOD generalization algorithms aiming to learn invariant representations~\citep{gulrajani_search_2021}.
\item The weights of background features are assumed to be independent of the label, rendering background features and labels \emph{uncorrelated}. This differs from prior OOD generalization analysis~\citep{arjovsky_invariant_2019,sagawa_investigation_2020,nagarajan_understanding_2021,rosenfeld_risks_2021} where background features are assumed to be \emph{spuriously correlated} with the label and hence useful for prediction. We intentionally make this assumption to ``ablate'' the effect of spurious correlations in feature learning.\footnote{On the other hand, our results can be extended to the settings where some background features have spurious correlations.}
\end{itemize}

\subsection{Model and Training}
\label{sec:model}

\textbf{Model.} We consider a model class $\mathcal{H}$ representing width-$m$ two-layer neural networks with ReLU activation. Formally, given hidden-layer weights $\rmW = (\rvw_1,\ldots,\rvw_m) \in\mathbb{R}^{d\times m}$ and output-layer weights $\rva = (a_1,\ldots,a_m)^\top\in\mathbb{R}^m$, the output of a model $h\in\mathcal{H}$ given an input $\rvx\in\xspace$ is defined as
% $h(\rvx) = h_1(\rvx) - h_{-1}(\rvx)$, where
\begin{equation}
h(\rvx) = \sum\nolimits_{k\in[m]}a_k\cdot\relu{(\inprod{\rvw_{k}}{\rvx})},
\label{eq:nn}
\end{equation}
where $\relu(u) = \max\{u,0\},u\in\mathbb{R}$. Similar to practical design choices, we consider an \emph{overparameterized} setting where $m \in [\Theta(\di),\Theta(d)]$. We initialize each weight vector $\rvw_k,k\in[m]$ by sampling $\its{\rvw_i}{0}\sim\mathcal{N}(\mathbf{0},\sigma^2_0\bm{I}_d)$ with $\sigma_0^2 = \frac{1}{d}$.
% This parameterization can be equivalently viewed as general two-layer ReLU networks with the weights in the output layer fixed as $\pm 1$
We randomly initialize output-layer weights $\rva$ by sampling $a_k\sim \mathsf{Uniform}\{-\frac{1}{m},\frac{1}{m}\}$ independently for each $k\in[m]$. To simplify our analysis, we keep output-layer weights $\rva$ fixed during training, which is a common assumption in analyzing two-layer neural networks~\citep{allen-zhu_feature_2021,karp_local_2021,allen-zhu_towards_2023}.
% We randomly initialize the network $h$ by independently sampling each $\its{\rvw_k}{0}\sim\mathcal{N}(\bm{0},\sigma_0^2\bm{I}_d)$ with $\sigma_0^2 = \frac{1}{d}$ for $i\in\yspace$, $k\in[m]$.

\begin{figure*}[t]
\centering
\includegraphics[width=0.95\linewidth]{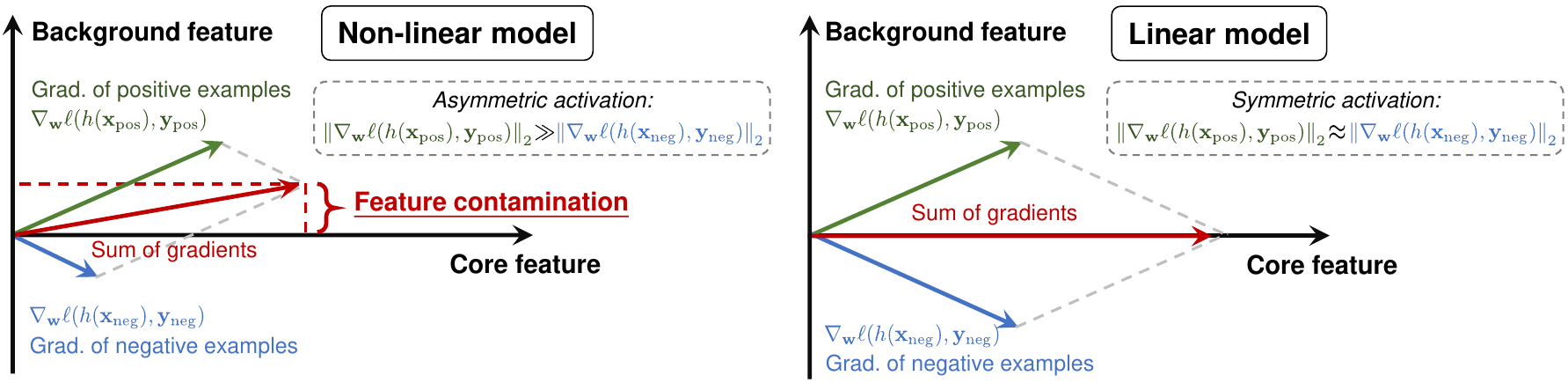}
\vspace{-0.7em}
\caption{A diagram of feature contamination in our binary classification setting. \textbf{Left:} for models with non-linear activation functions such as ReLU, activation asymmetry leads to non-zero gradient projections onto background features. \textbf{Right:} for linear models, background features are cancelled out in the gradients, exhibiting no feature contamination.
% By contrast, models without non-linear activation functions have symmetric activation, thus exhibiting no feature contamination (\textbf{left}).
}
\label{fig:theory}
\vspace{-0.5em}
\end{figure*}

\textbf{Training.} We train the network using SGD to minimize a standard hinge loss $\ell(y,y') = \max\{1-yy', 0\}$ with step size $\eta > 0$ for $T$ iterations. We also include a weight decay with strength $\lambda = O(\frac{\di}{m^{1.01}})$ for regularization. At each iteration $t\in\{0,\ldots,T\}$, we i.i.d. sample a batch of examples $\{(\its{\rvx_i}{t},\its{\rvy_i}{t})\}_{i\in[N]}\sim\dtrain^N$ with batch size $N = \poly{d}$ and consider the following empirical loss:
\begin{equation}
\widehat{\mathcal{L}}(\its{h}{t}) = \frac{1}{N}\sum_{i\in[N]} \ell\left(\its{h}{t}(\its{\rvx_i}{t}), \its{\rvy_i}{t}\right) + \frac{\lambda}{2} \sum_{k\in[m]}\bignorm{\its{\rvw_k}{t}}^2,
\label{eq:batch_loss}
\end{equation}
where we use $\its{h}{t}$ to denote the model at iteration $t$, with weights $\its{\rmW}{t} = (\its{\rvw_1}{t},\ldots,\its{\rvw_m}{t})$.
The SGD update for each weight vector $\rvw_k,k\in[m]$ is then given by
\begin{equation}
\its{\rvw_k}{t+1} = \its{\rvw_k}{t} - \eta \nabla_{\its{\rvw_k}{t}} \widehat{\mathcal{L}}(\its{h}{t}).
% = (1 - \eta\lambda) \its{\rvw_k}{t} - \eta \nabla_{\its{\rvw_k}{t}}\frac{1}{N}\sum\nolimits_{i\in[N]} \ell\left(\its{h}{t}(\its{\rvx_i}{t}), \its{\rvy_i}{t}\right).
\label{eq:sgd}
\end{equation}

\section{Main Theoretical Results}
\label{sec:main}

In this section, we present our main theoretical results, provide mathematical reasoning of why feature contamination happens, and discuss its impact on generalization.
% proved in the binary classification setting detailed in~\secref{sec:setup} since it is more amenable to analysis.
% We also provide mathematical reasoning of why feature contamination happens and discuss its impact on OOD generalization.
% in the next section we will provide empirical evidence that connects our theory to OOD generalization in practice.
We also include numerical results and show that our findings can be extended to more general data and neural network models.

\textbf{Technical challenges.} As we have discussed in~\secref{sec:intro}, most existing theoretical work on OOD generalization \emph{separates generalization and optimization} and directly studies the \emph{global minimizers} of their training objecives without considering optimization dynamics. By contrast, without a unique global minimizer, our setup requires an explicit analysis on the SGD optimization trajectory, which is known to be challenging due to its \emph{non-convex} and \emph{non-linear} nature. Prior work has studied fine-tuning pre-trained models for OOD generalization in the context of two-layer \emph{linear networks}~\citep{kumar_fine-tuning_2022,lee_surgical_2023}. Analyzing non-linear networks further requires a careful treatment on the activation property of the neurons, which results in SGD dynamics that essentially deviate from linear networks.

\begin{figure*}[t]
\centering
\includegraphics[width=0.93\linewidth]{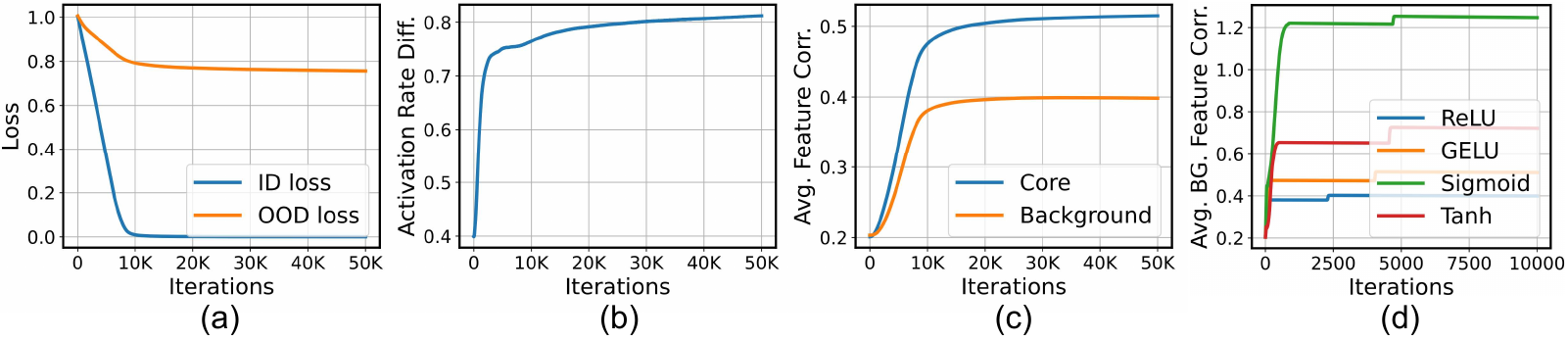}
\vspace{-0.7em}
\caption{Numerical results. \textbf{(a)} \emph{ID and OOD risks:} During training, ID loss quickly approaches zero, while OOD loss stays high. \textbf{(b)} \emph{Activation asymmetry:} the difference of average neuron activation rates for different classes largely increases during training. \textbf{(c)} \emph{Feature contamination:} the average correlations between neuron weights and both core features and \emph{uncorrelated background features} increase in training. \textbf{(d)} Feature contamination also occurs in more general settings with different activation functions. Please refer to~\secref{appsubsec:numerical} for more details and results.
}
\label{fig:numerical}
\vspace{-1.1em}
\end{figure*}

\textbf{Our approach.} At a high level, our analysis is based on the construction of two neuron subsets $\nyt$ (see \defref{def:neuron}) for $y\in\yspace=\{-1,1\}$ at iteration $t\in\{0,\ldots,T\}$ so that each subset has cardinality $\Theta(m)$ and its neurons are randomly initialized to have large enough expected correlations with the examples from the class $y$ (i.e., ``winning the lottery tickets''~\citep{frankle_lottery_2019,allen-zhu_feature_2021}). We then apply the Berry-Esseen theorem to bound the class-conditional activation probabilities of ReLU for the neurons in those subsets. By a careful treatment of the activation probabilities as the neurons evolve during training, we can bound the expected gradients for each neuron in $\nyt$ at every step $t$, hence iteratively tracking its weight updates throughout training. This treatment allows us to characterize the output of the network up to constant factors while avoiding the nuisance of analyzing the activation probability of \emph{every} neuron in the network, which turns out to be very challenging. For ease of presentation, in the sequel we separate our main results into four parts and introduce them progressively, with an illustration of our key ideas in~\figref{fig:theory}. Complete proofs of all theoretical results are deferred to~\hyperlink{app:theory}{Appendix \uppercase\expandafter{\romannumeral 1}}.

\textbf{\emph{{1.} Neuron activation is asymmetric.}} Our key insight is that during training, every neuron in $\nyt$ has the incentive to be positively correlated with the examples from at most one class $\rvy_\mathrm{pos} = y$ (whether $y=1$ or $y=-1$ depends on the random initialization of the neuron); we refer to those examples as \emph{positive examples} $(\rvx_\mathrm{pos}, \rvy_\mathrm{pos})\sim\dtrain|\rvy=\rvy_\mathrm{pos}$ for that neuron. Correspondingly, we refer to examples from the other class $\rvy_\mathrm{neg} = -y$ as \emph{negative examples} $(\rvx_\mathrm{neg}, \rvy_\mathrm{neg})\sim\dtrain|\rvy=\rvy_\mathrm{neg}$ for the neuron. Due to randomness at initialization, we can show that $|\nyinit| = \Theta(m)$ for both $y\in\{-1,1\}$ and, after sufficient SGD iterations, all neurons in $\nyt$ will accumulate (in expectation) positive correlations with examples from $y$ and negative correlations with examples from $-y$, resulting in class-wise asymmetry in their activation as shown by~\theoref{theo:activation}.
% \footnote{
% In our data generation model, different classes have the same set of core features with different signs. Thus, increasing the correlation with one class would naturally decreasing the correlation with the other class in terms of core features. Nevertheless,
% We note that activation asymmetry also emerges in more general settings beyond our current data model such as when different classes have distinct subsets of core features. The essential reason for this is that a neuron that has positive correlations with both classes is not informative enough for classification.} Since ReLU only activates for positive inputs, the activation probability of those neurons would become much larger for $\rvx_\mathrm{pos}$ than for $\rvx_\mathrm{neg}$, which we refer to as \emph{activation asymmetry} and formally demonstrate by the following theorem.

\begin{theorem}[Activation asymmetry]
\label{theo:activation}
For every $\eta \le \frac{1}{\poly{\di}}$ and every $y\in\yspace$, there exists $T_0 = \wt{\Theta}(\frac{m}{\eta\sqrt{d}})$ such that w.h.p., for every $t\ge T_0$, there exist $\Theta(m)$ neurons in which the weight $\rvw_k^{(t)}$ for each neuron satisfies:
\begin{equation}
\begin{aligned}
&\prob{\rvx|\rvy=y\sim\dtrain}{[\inp{\wk{t}}{\rvx} \ge 0]} = 1 - O\left(\di^{-\frac{1}{2}}\right),\\
&\prob{\rvx|\rvy=-y\sim\dtrain}{[\inp{\wk{t}}{\rvx} \ge 0]} = o(1).
\end{aligned}
\end{equation}
\end{theorem}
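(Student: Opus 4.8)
The plan is to identify, at initialization, a set of $\Theta(m)$ ``lottery ticket'' neurons whose weights are randomly correlated with one class, and then to track the SGD updates of these neurons to show their correlations with the respective class monotonically grow until the claimed activation asymmetry holds. First I would set up the neuron subset $\nyinit$: since $\wkinit\sim\mathcal{N}(\vzero,\frac1d\bm I_d)$, the projection $\inp{\wkinit}{\mj}$ onto each feature direction is Gaussian of variance $1/d$, and the signed sum $\sum_{j\in\score}\inp{\wkinit}{\mj}$ governs whether a neuron is biased toward class $y=1$ or $y=-1$ (recall $\rvx$ contains $\rvy\rvz_j\mj$ on core coordinates). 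A constant fraction of neurons will, by anticoncentration of Gaussians, have $|\sum_{j\in\score}\inp{\wkinit}{\mj}|$ at least $\Omega(\sqrt{\dcore/d})=\Omega(\sqrt{1/\di})$ (using $\dcore=\Theta(\di)$); assign each such neuron to $\nyinit$ according to the sign. This gives $|\nyinit|=\Theta(m)$ for each $y$.

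Next I would compute the per-step expected gradient for a neuron $k\in\nyt$. With the hinge loss, the gradient contribution of an example $(\rvx,\rvy)$ to $\wkt$ is $a_k\rvy\rvx\cdot\ind{\inp{\wkt}{\rvx}\in(0,1)}\cdot\ind{\rvy\,h^{(t)}(\rvx)<1}$, minus the weight-decay term $\lambda\wkt$. The key point is the \emph{asymmetry in the activation indicator}: a neuron biased toward positive examples fires on (almost) all positive examples but progressively fewer negative examples, so the expected gradient over the batch, decomposed along the $\{\mj\}$ basis, has a component along $\sum_{j\in\score}a_k\mj$ (of consistent sign, since positive examples carry $+\rvz_j\mj$ and negative examples $-\rvz_j\mj$ but the latter are filtered out by non-activation) that drives $\inp{\wkt}{\mj}$ upward for $j\in\score$ while the weight decay $\lambda=O(\di/m^{1.01})$ is too weak to counteract it. I would set up a coupled recursion for the scalar quantities $\rho_y^{(t)}\defeq\sum_{j\in\score}\inp{\wkt}{\mj}$ (aggregate core correlation) and the class-conditional pre-activation mean/variance, showing $\rho_y^{(t)}$ grows roughly linearly at rate $\Theta(\eta\sqrt{\dcore}/\cdots)$ per step until it reaches the $\Theta(1)$-scale threshold, which happens at $T_0=\wt\Theta(m/(\eta\sqrt d))$; the factor $\sqrt d$ enters through $\sigma_0=1/\sqrt d$ and the normalization of the hinge margin via the fixed output weights $a_k=\pm1/m$.

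To convert correlation growth into the probabilistic activation statement I would invoke the Berry--Esseen theorem. Conditioned on $\rvy=y$, the pre-activation $\inp{\wkt}{\rvx}=\sum_{j\in\score}\rvz_j\inp{\wkt}{\mj}+\sum_{j\in\sbg}\rvz_j\inp{\wkt}{\mj}$ is a sum of $\di$ independent bounded terms (the $\rvz_j$ are independent with $\Theta(1)$ moments by~\defref{def:dgp}), so its distribution is within $O(\di^{-1/2})$ in Kolmogorov distance of a Gaussian with mean $m_{t,y}\approx\rho_y^{(t)}\cdot\Theta(1)$ and standard deviation $s_{t,y}=\Theta(\sqrt{\di}\,\sigma_0)=\Theta(\sqrt{\di/d})$ (the background and untracked-core coordinates, each $O(\sigma_0)$ in magnitude, contribute $O(\sqrt\di\,\sigma_0)$ to the standard deviation, which is $o(1)$ since $d=\Omega(\di^{2.01})$). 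Once $t\ge T_0$, for the $+$ class the standardized mean $m_{t,y}/s_{t,y}=\Omega(1)/o(1)\to\infty$, so $\Pr[\inp{\wkt}{\rvx}\ge0]=1-O(\di^{-1/2})$ after accounting for the Berry--Esseen error; for the $-$ class the mean is $-\Omega(1)$ at scale $\gg s_{t,y}$, giving $\Pr[\inp{\wkt}{\rvx}\ge0]=o(1)$. The main obstacle I anticipate is controlling the SGD trajectory \emph{uniformly over all iterations} $t\ge T_0$ rather than at a single step: because the activation indicators $\ind{\inp{\wkt}{\rvx}\in(0,1)}$ and the margin indicators feed back into the gradient, one must show the coupled recursion stays in a ``good'' regime (correlations bounded away from $0$, variance $o(1)$, margin eventually saturated) via an inductive/bootstrap argument, and handle the finite-batch fluctuations (batch size $N=\poly d$ gives concentration with exponentially small failure probability, which I would union-bound over $T$ steps and $m$ neurons) — this is where the bulk of the technical care lies, and it is exactly the part the authors flag as the key technical challenge.
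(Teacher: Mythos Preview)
Your high-level strategy matches the paper's: define a lottery-ticket neuron set at initialization, track the growth of the class-conditional expected pre-activation via the SGD recursion, and convert this into activation probabilities via Berry--Esseen. However, there are two genuine gaps that would cause your argument to fail.

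\textbf{The neuron set must also constrain $\sign(a_k)$.} Your $\nyinit$ is defined only by the sign of $\sum_{j\in\score}\inp{\wkinit}{\mj}$. But the per-example gradient on $\wkt$ is proportional to $a_k\rvy\rvx$, so a neuron whose weights initially correlate with class $y$ but has $\sign(a_k)=-y$ will be pushed \emph{away} from class-$y$ examples by SGD, not toward them. The paper's definition of $\nyt$ (their Definition~A.4) therefore requires both a positive expected correlation with class-$y$ inputs \emph{and} $\sign(a_k)=y$; the intersection still has size $\Theta(m)$ since the two events are independent at initialization. Without this, your monotone-growth recursion for $\rho_y^{(t)}$ is false for half the neurons you include. (A minor related error: the ReLU derivative is $\ind{\inp{\wkt}{\rvx}\ge 0}$, not $\ind{\inp{\wkt}{\rvx}\in(0,1)}$.)

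\textbf{The scale analysis at $T_0$ is off, and the correct calculation is tight rather than overwhelming.} You claim that at $T_0$ the mean pre-activation $m_{t,y}$ reaches $\Omega(1)$ while the standard deviation $s_{t,y}$ stays at its initialization level $\Theta(\sqrt{\di/d})$, so $m_{t,y}/s_{t,y}\to\infty$. Neither holds. The per-step growth of $\expect{\rvx|\rvy=y}{\inp{\wkt}{\rvx}}$ is $\Theta(\eta\di/m)$ (this is the paper's Lemma~A.12), so after $T_0=\wt\Theta(m/(\eta\sqrt{d}))$ steps the mean reaches only $\wt\Theta(\di/\sqrt{d})$, which is $o(1)$ since $d\ge\di^{2.01}$. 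Meanwhile the individual projections $\inp{\wkt}{\mj}$ also grow (this is precisely the feature-contamination mechanism on background coordinates and the core-feature learning on core coordinates), so $s_{t,y}=\Theta(1)\sqrt{\sum_j\inp{\wkt}{\mj}^2}$ does not stay fixed. The paper's calculation shows that at $T_0$ the standardized mean is only $\Theta(\sqrt{\log\di})$, and it is exactly this scale that gives $\Phi(-\Theta(\sqrt{\log\di}))=\Theta(\di^{-1/2})$, matching the Berry--Esseen error term. In other words, the $O(\di^{-1/2})$ in the theorem is not slack from Berry--Esseen alone---it is also the Gaussian tail at the actual standardized mean, and your argument as written would not recover it.
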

% \begin{theorem}[Activation asymmetry]
% For every sufficiently large $\di$ and every $\eta \le \frac{1}{\poly{\di}}$, after at most $T_0 = \wt{O}(\frac{1}{\eta})$ iterations, there are $\Theta(m)$ neurons whose weight vectors $\wk{T_0}$ satisfy that there exists $y\in\yspace$ such that $\prob{\rvx|\rvy=y\sim\dtrain}{[\inp{\wk{T_0}}{\rvx} \ge 0]} = \Theta(1)$ and $\prob{\rvx|\rvy=-y\sim\dtrain}{[\inp{\wk{T_0}}{\rvx} \ge 0]} = o(1)$.
% \end{theorem}
% Recall that our data generation process as in~\defref{def:dgp} assumes no correlation between background features and labels. Therefore, one may intuitively perceive that the learned feature would have little correlation with background features, i.e., $\inprod{\its{\rvw_k}{t}}{\bm{m}_j}\approx 0,\forall j\in\sbg$ for some $t$ (as we will see, this is indeed true for linear models). However, our first result suggests that this is \emph{not} the case in our setting:

\begin{figure}[t]
\centering
\includegraphics[width=0.49\linewidth]{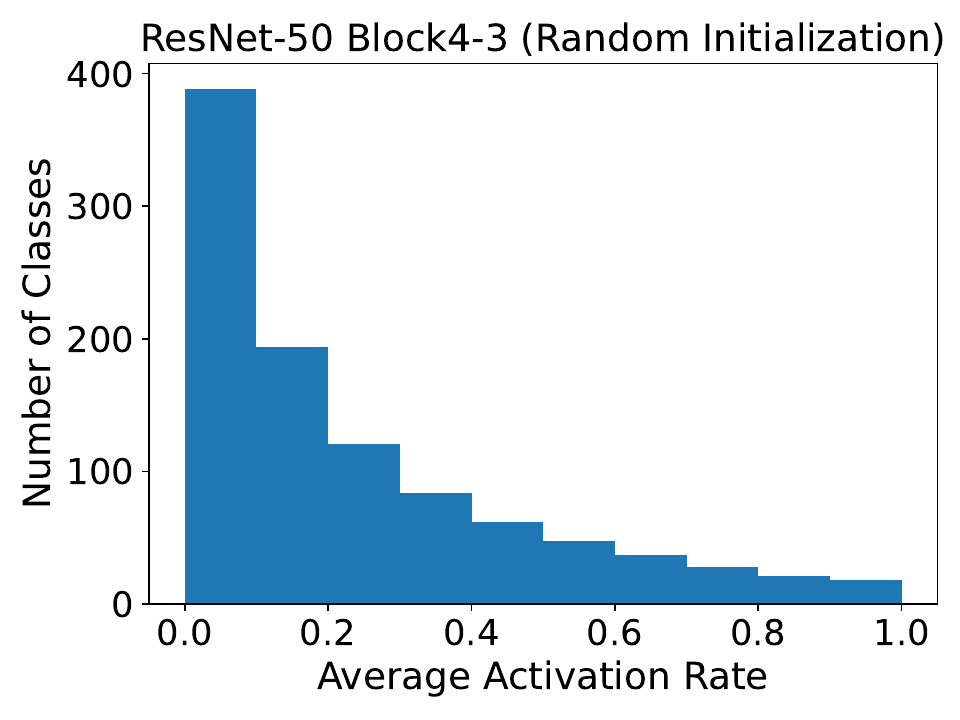}
\includegraphics[width=0.49\linewidth]{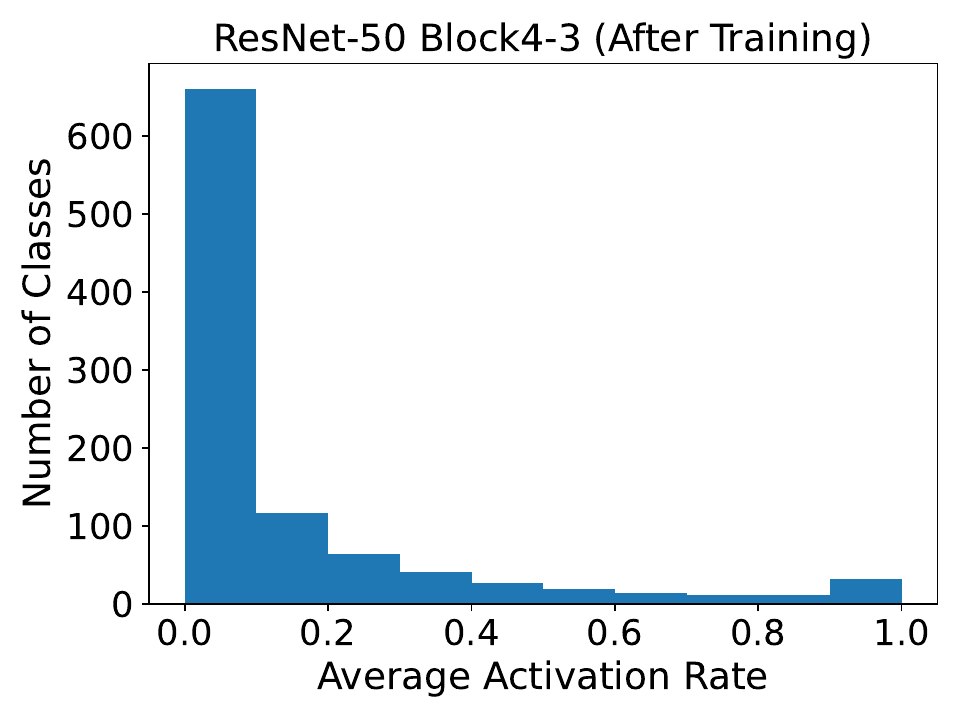}
\vspace{-0.5em}
\caption{\emph{Class-averaged} activation rate histograms of a randomly initialized CLIP-RN50 \textbf{(left)} and a distilled CLIP-RN50 \textbf{(right)}. After training, more classes have smaller average activation rates close to zero and only a small number of classes have large average activation rates.
% See~\secref{appsec:histograms} for more results.
}
\label{fig:histograms}
\vspace{-1.0em}
\end{figure}

\emph{\textbf{2. Activation asymmetry leads to feature contamination.}}
Note that for every $k\in[m]$, the weight vector of the $k$-th neuron (we will also refer to it as the \emph{learned feature} of the neuron) after $t$ iterations can be written as
\begin{equation}
\its{\rvw_k}{t} = \sum_{j\in\score }\inprod{\its{\rvw_k}{t}}{\bm{m}_j}\bm{m}_j + \sum_{j\in\sbg}\inprod{\its{\rvw_k}{t}}{\bm{m}_j}\bm{m}_j + \mathrm{res},
\label{eq:weight_decompose}
\end{equation}
where the residual term satisfies $\inprod{\mathrm{res}}{\bm{m}_j} = 0$ for every $j\in[\di]$ and thus can be neglected.
Intuitively, Eq.~\eqref{eq:weight_decompose} indicates that the learned feature can be decomposed into its projections onto different feature vectors.
% where the correlation between the learned feature and each feature vector $\mj$ determines how much $\mj$ contributes to the learned feature.
Meanwhile, as we will prove in~\lemmaref{lemma:corr_gradient}, at iteration $t$, the gradient projection onto background features for every neuron $k\in \nyt$ satisfies: for every $j\in\sbg$,
\begin{equation}
\begin{aligned}
% \inprod{-\nabla_{\its{\rvw_k}{t}} \mathcal{L}(\its{h}{t})}{\bm{m}_j}\approx \expect{(\rvx,\rvy)\sim\dtrain}{\frac{\indicator{\rvy=i} - \indicator{\rvy=-i}\cdot e^{\its{h}{t}(\rvx)}} {1 + e^{\its{h}{t}(\rvx)}} \indicator{\inprod{\its{\rvw_k}{t}}{\rvx} \ge 0} \rvz_j}.
&\inprod{-\nabla_{\its{\rvw_k}{t}} \widehat{\mathcal{L}}(\its{h}{t})}{\bm{m}_j}\propto\\
&\qquad \expect{(\rvx,\rvy)}{(\indicator{\rvy=\rvy_\mathrm{pos}} - \indicator{\rvy=\rvy_\mathrm{neg}}) \indicator{\inprod{\its{\rvw_k}{t}}{\rvx} \ge 0} \rvz_j}.
\label{eq:approx_grad}
\end{aligned}
\end{equation}
% where we omit weight decay for brevity.
By~\theoref{theo:activation}, we then have that for at least $\Theta(m)$ neurons in $\nyt$, $\expect{\rvx|\rvy=\rvy_\mathrm{pos}}{\ind{\inp{\wkt}{\rvx}\ge 0}}$ would be much larger than $\expect{\rvx|\rvy=\rvy_{\mathrm{neg}}}{\ind{\inp{\wkt}{\rvx}\ge 0}}$, resulting in a positive gradient projection onto \emph{every} background feature $\mj$ \emph{regardless of its correlation with the label}. We refer to this feature learning proclivity of neural networks as \textbf{feature contamination}. Formally,~\theoref{theo:feature} shows that this will result in the neurons' learned features accumulating both correlated core features and \emph{uncorrelated} background features.
% \theoref{theo:feature} indicates that apart from learning core features $\score$ that are useful for prediction, many neurons in the network also provably accumulate a portion of positive correlations with background features $\sbg$ even if they are \emph{not} directly useful for prediction in training. We refer to this phenomenon as \textbf{feature contamination} and illustrate it in~\figref{fig:theory}.
\begin{theorem}[Learned features]
\label{theo:feature}
For every $\eta \le \frac{1}{\poly{\di}}$ and every $y\in\yspace$, there exists $T_1 = \Theta(\frac{m}{\eta\di})$ such that w.h.p., after $T_1$ iterations, there exist $\Theta(m)$ neurons in which the weight $\wk{T_1}$ for each neuron satisfies the following:
\begin{equation}
\begin{aligned}
&\sum\nolimits_{j\in\score}\mu_{j1}\inp{\wk{T_1}}{\mj} = y\cdot\Theta(1),\\
&\sum\nolimits_{j\in\sbg}\mu_{j1}\inp{\wk{T_1}}{\mj} = \wt{\Theta}(1).
\label{eq:feature}
\end{aligned}
\end{equation}
\end{theorem}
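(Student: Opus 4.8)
The plan is to combine the activation asymmetry of~\theoref{theo:activation} with the gradient-projection identity~\eqref{eq:approx_grad} of~\lemmaref{lemma:corr_gradient} and track, for each of the $\Theta(m)$ neurons supplied by~\theoref{theo:activation}, the scalar projections $\inp{\wk{t}}{\mj}$ for $j\in\score\cup\sbg$ across the window $t\in[T_0,T_1]$. Decomposing $\rvw_k$ in the feature basis as in~\eqref{eq:weight_decompose}, the residual term never matters: every minibatch gradient lies in $\mathrm{span}(\bm{m}_1,\ldots,\bm{m}_{\di})$ because $\rvx$ does, and weight decay acts coordinatewise, so~\eqref{eq:sgd} projects onto the one-dimensional recursion $\inp{\wk{t+1}}{\mj}=(1-\eta\lambda)\inp{\wk{t}}{\mj}+\eta\,\Gamma_{k,j}^{(t)}$, where $\Gamma_{k,j}^{(t)}$ abbreviates the (stochastic) gradient-projection term. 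A first step is a crude a priori bound: since $|a_k|=\tfrac1m$ and $|\inp{\bm{m}_j}{\rvx}|\le 1$, each step moves any projection by at most $\eta/m$, so $|\inp{\wk{t}}{\mj}|\le|\inp{\wk{0}}{\mj}|+\eta t/m=O(1/\di)$ for \emph{all} $k\in[m]$, $j$, and $t\le T_1=\Theta(m/(\eta\di))$; consequently $|\inp{\wk{t}}{\rvx}|=O(1)$ and $|\hx{t}|=O(1)$ uniformly, and with the hidden constant in $T_1$ taken small enough this keeps $\rvy\hx{t}<1$ w.h.p.\ for (essentially) all examples throughout $[0,T_1]$, i.e.\ the hinge loss stays active --- the regime in which~\lemmaref{lemma:corr_gradient} applies.

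Next, for a neuron $k$ in the asymmetric set of~\theoref{theo:activation}, I would compute the drift $\E[\Gamma_{k,j}^{(t)}]$ from~\eqref{eq:approx_grad} for $j\in\sbg$ and from the analogous core-feature identity, which because $\inp{\bm{m}_j}{\rvx}=\rvy\rvz_j$ cancels a factor $\rvy$ and therefore carries an overall sign $y$. In both cases~\theoref{theo:activation} makes the negative-class conditional expectation $o(1)$ and the positive-class one $\mu_{j1}\bigl(1-O(\di^{-1/2})\bigr)$; here one also checks that a single coordinate $\rvz_j$ is nearly independent of the activation indicator, since each projection is $O(1/\di)$ whereas $\inp{\wk{t}}{\rvx}$ aggregates $\Theta(\di)$ comparable terms --- this is where the orthogonality of the $\bm{m}_j$ and the scaling $d\ge\Omega(\di^{2.01})$ are used. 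Hence $\E[\Gamma_{k,j}^{(t)}]=\Theta(\mu_{j1}/m)$ for $j\in\sbg$ and $\E[\Gamma_{k,j}^{(t)}]=y\cdot\Theta(\mu_{j1}/m)$ for $j\in\score$, with a uniform hidden constant. A Bernstein bound over the size-$\poly{d}$ batch, together with a union bound over the $\le T_1$ steps, the $\Theta(m)$ neurons, and the $\di$ features, promotes this to the same estimate for $\Gamma_{k,j}^{(t)}$ w.h.p.

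It then remains to sum the recursion over $t\in[0,T_1]$. The weight-decay contraction is negligible over the horizon: $T_1\eta\lambda=\Theta(m/(\eta\di))\cdot\eta\cdot O(\di/m^{1.01})=O(m^{-0.01})=o(1)$, so $(1-\eta\lambda)^{T_1}=1-o(1)$ --- this is exactly why the weight decay in~\eqref{eq:batch_loss} fails to prevent the phenomenon. The initialization $|\inp{\wk{0}}{\mj}|=\wt{O}(1/\sqrt d)=o(1/\di)$ is negligible as well, and the steps $t<T_0$ contribute only a $o(1)$ fraction since $T_0=\wt{\Theta}(m/(\eta\sqrt d))\ll T_1$ (because $\sqrt d\gg\di$) and are already controlled inside the proof of~\theoref{theo:activation}. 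Therefore $\inp{\wk{T_1}}{\mj}=y\cdot\Theta(\mu_{j1}/\di)$ for $j\in\score$ and $\Theta(\mu_{j1}/\di)$ for $j\in\sbg$, with a uniform constant, and summing against $\mu_{j1}$ (using $\dcore=\Theta(\di)$, $\dbg=\Theta(\di/\log\di)$ and $\mu_{j1}^2=\Theta(1)$) gives
\[
\sum_{j\in\score}\mu_{j1}\inp{\wk{T_1}}{\mj}=y\cdot\Theta(1),\qquad\sum_{j\in\sbg}\mu_{j1}\inp{\wk{T_1}}{\mj}=\Theta(\dbg/\di)=\wt{\Theta}(1),
\]
which is the claim.

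The step I expect to be the main obstacle is the first: controlling the hinge activation uniformly over \emph{all} $m$ neurons and all $t\le T_1$ (not merely the asymmetric ones), and pinning down the constant in $T_1$ so that simultaneously (i) $\rvy\hx{t}$ keeps a constant margin below $1$, so the signal lower bound loses at most a constant factor, and (ii) the core correlation reaches a $\Theta(1)$ value without overshooting. This forces a matching \emph{upper} bound on the asymmetric neurons' projections, run in tandem with the lower bound; the argument is symmetric but bookkeeping-heavy, and it is precisely where the joint scaling of $T_1$, $\eta$, $\lambda$, $m$, $d$, and $\di$ must be balanced.
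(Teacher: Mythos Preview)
Your proposal is correct and follows essentially the same approach as the paper. The paper tracks the \emph{aggregate} projections $\sum_{j\in\score}\muj\inp{\wkt}{\mj}$ and $\sum_{j\in\sbg}\muj\inp{\wkt}{\mj}$ directly (via dedicated lemmas on gradient projections onto these sums) rather than individual $\inp{\wkt}{\mj}$, and bounds the conditional expectation $\E[\ind{\inp{\wkt}{\rvx}\ge 0}\sum_j\muj\rvz_j\mid\rvy]$ by a law-of-total-expectation decomposition combined with the Berry--Esseen activation-probability bound rather than your near-independence heuristic; but the overall structure---population gradient projection via~\lemmaref{lemma:corr_gradient}, splitting at $T_0$ to invoke~\theoref{theo:activation} so that the negative-class contribution to the background drift becomes $o(1)$, summing the one-dimensional recursion with negligible weight decay, and handling the hinge via a crude $O(1)$ a priori bound on the output over the window---is the same, and the tension you flag at the end is treated in the paper with the same looseness you anticipate.
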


\begin{figure}[t]
\centering
\includegraphics[width=0.49\linewidth]{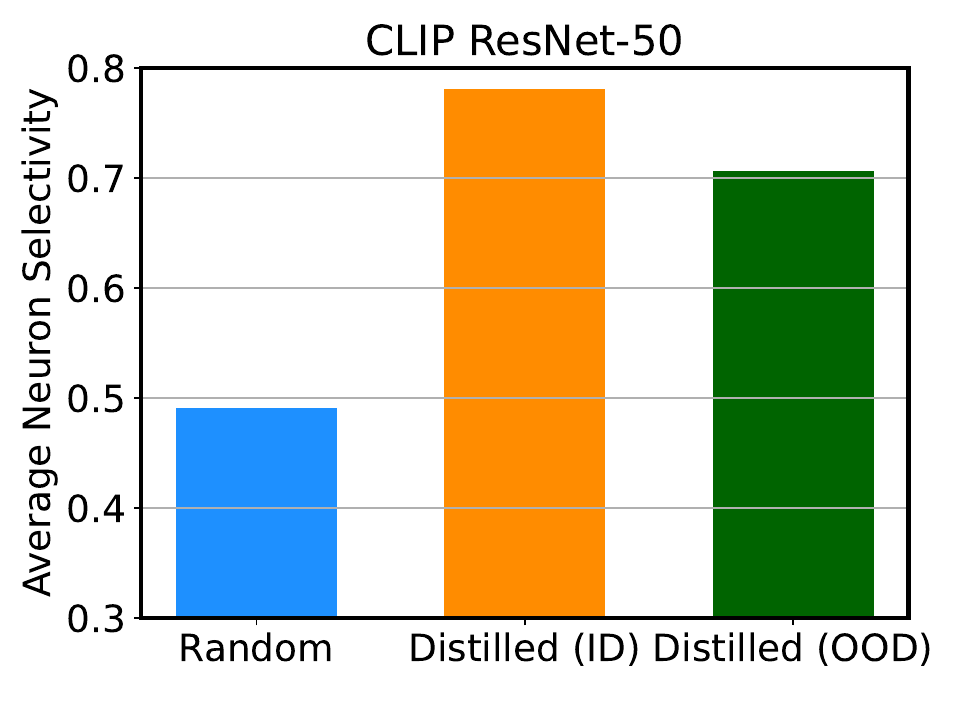}
\includegraphics[width=0.49\linewidth]{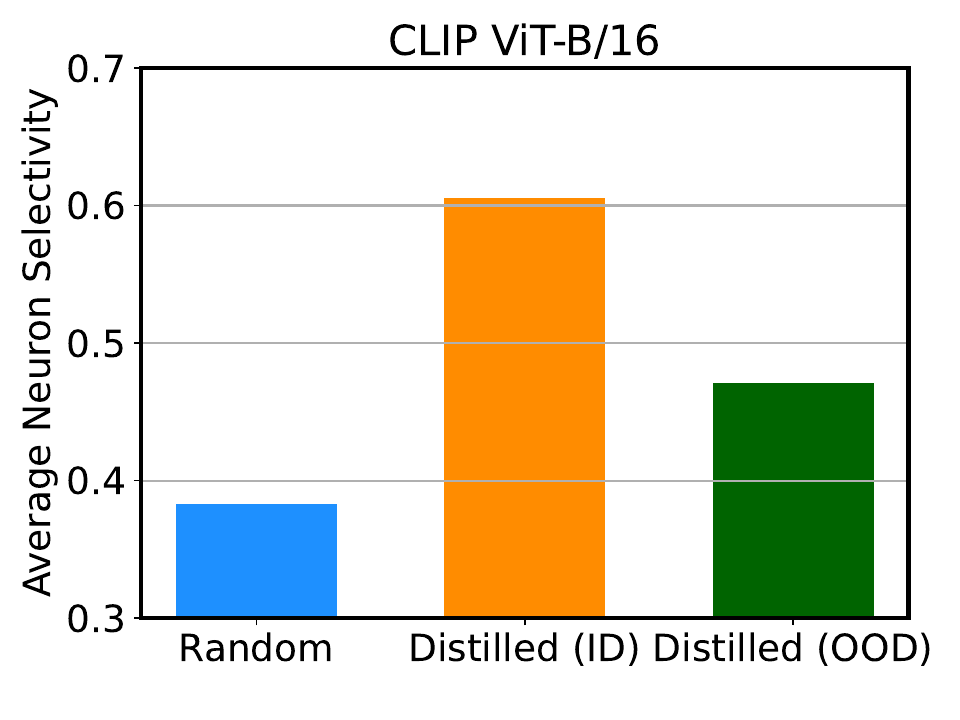}
\vspace{-0.5em}
\caption{Average \emph{neuron selectivity} of random and distilled CLIP-RN50 \textbf{(left)} and CLIP-ViT-B/16 \textbf{(right)} models.
Distilled models have larger selectivity compared with random models and exhibit a selectivity drop in OOD data. Please refer to~\secref{appsec:selectivity} for more details.
% For CLIP ResNet-50, the neuron selectivity is averaged over all dimensions of the output of the last attention pooling layer. For CLIP ViT-B/16, the neuron selectivity is averaged over all dimensions of the output of every GELU layer in the last attention block.
% Please see~\secref{appsec:histograms} for more details.
}
\label{fig:selectivity}
\vspace{-1.2em}
\end{figure}

\begin{figure*}[t]
\centering
\includegraphics[width=0.96\linewidth]{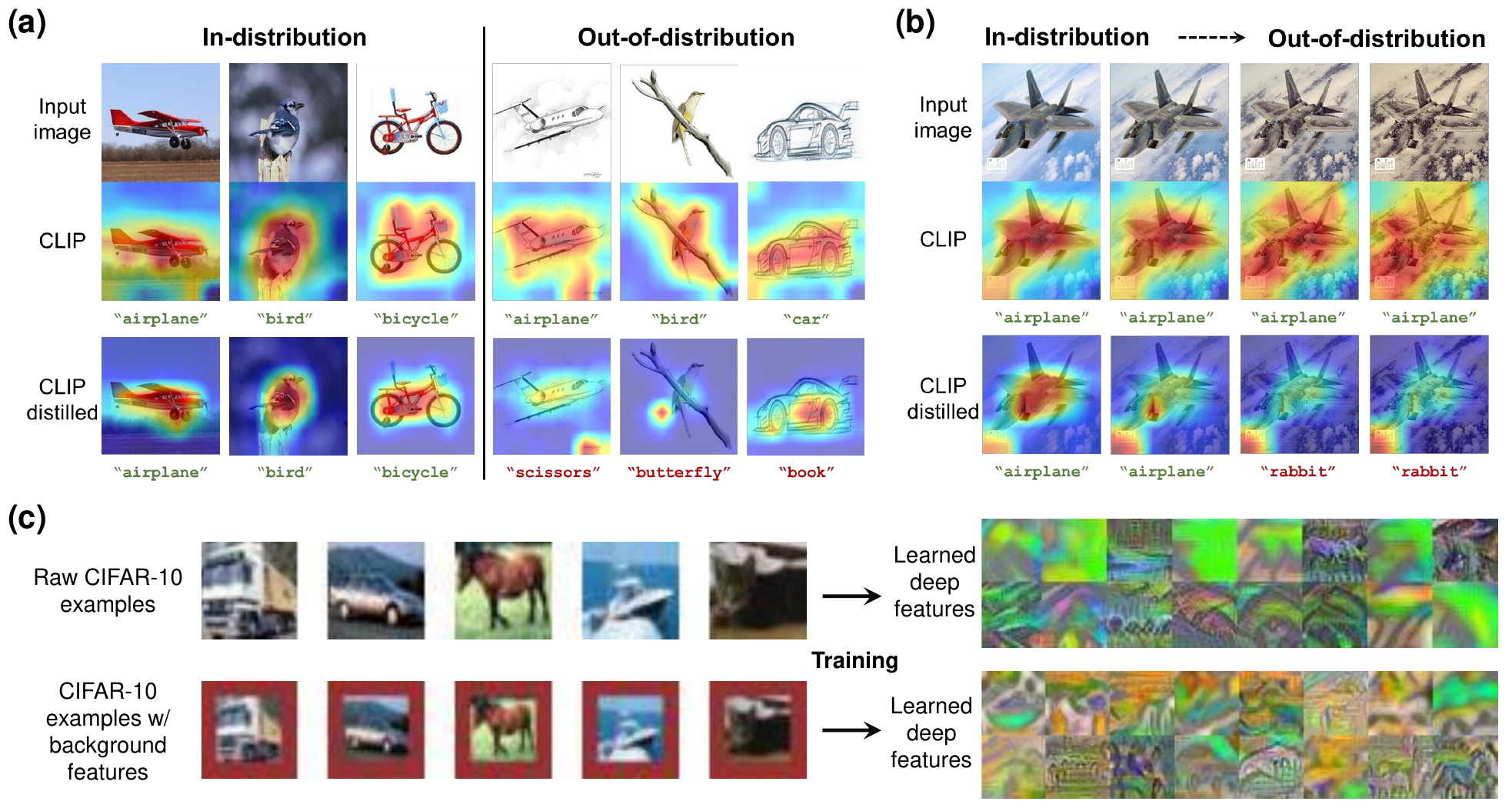}
% \begin{subfigure}{0.53\textwidth}
%   \centering
%   \includegraphics[width=1\linewidth]{}
%   \caption*{}
% \end{subfigure}
% \hspace{2em}
% \begin{subfigure}{0.36\textwidth}
%   \centering
%   \includegraphics[width=1\linewidth]{}
%   \caption*{}
% \end{subfigure}
\vspace{-0.5em}
\caption{Empirical evidence for feature contamination in \emph{deep} neural networks. \textbf{(a)} Grad-CAM results for CLIP-RN50 and a distilled CLIP-RN50 on DomainNet. \textbf{(b)} Grad-CAM results for CLIP-RN50 and a distilled CLIP-RN50 on controlled distribution shifts. For the distilled model, while the weights of core objects are dominant for ID images, they are \emph{reduced} under distribution shifts. \textbf{(c)} Example images of the raw CIFAR-10 dataset and our modified version with background color features that are \emph{uncorrelated} with the label, and the visualization of deep features learned by a ResNet on both datasets.}
\label{fig:visualization}
% \vspace{-1em}
\end{figure*}

\emph{\textbf{3. Feature contamination induces large OOD risk.}}
Intuitively, this result is a direct consequence of the \emph{coupling} of core features and background features in the neurons' pre-activation as shown by~\theoref{theo:feature}. With this coupling, negative shifts of background features can \emph{reduce the activation of the neuron}, resulting in OOD risk. In extreme cases, when the pre-activation of a neuron is reduced to negative, the contribution of the core features extracted by the neuron will also diminish. Formally,~\theoref{theo:risk} quantifies such impact of feature contamination on ID and OOD risks.
% We formally demonstrate this impact in the following theorem.

\begin{theorem}[ID and OOD risks]
\label{theo:risk}
For every $\eta \le \frac{1}{\poly{\di}}$, there exists $T_2 = \wt{\Theta}(\frac{m}{\eta\di})$ such that w.h.p., after $T_2$ iterations, the trained model $\its{h}{T_2}$ satisfies the following:
\begin{equation}
\exrisk{\dtrain}{\its{h}{T_2}} \le o(1),\ \oodrisk{\its{h}{T_2}} = \wt{\Theta}(1).
\end{equation}
% \begin{theorem}[ID and OOD risk]
% \label{theo:risk}
% Under the same condition as in~\theoref{theo:feature}, with high probability, the trained model $\its{h}{T}$ satisfies the following:
% % $\its{h}{T}$ has \textbf{small ID risk}:
% \begin{itemize}[leftmargin=2em]
% \setlength\itemsep{0.1em}
% \item $\its{h}{T}$ has small ID risk: $\exrisk{\dtrain}{\its{h}{T}} \le o(1)$.
% \item $\its{h}{T}$ has large OOD risk: $\oodrisk{\its{h}{T}} = \Theta(1)$.
% % there exist a set of distributions $\{p_i\}_{i\in[\di]}\in\Delta^{\di}([-1, 0])$ such that when $\rvz_i\sim p_i,\forall i\in [\di]$ for $\rvz\sim\dtest$, it holds that $\exrisk{\dtest}{\its{h}{T}} \ge 1 - o(1)$; this immediately leads to $\oodrisk{\its{h}{T}}\ge 1 - o(1)$.
% \end{itemize}
% \end{theorem}
% In brief, Theorem~\ref{theo:error} shows that the learned model $\its{h}{T}$ has small ID error, but large OOD error.

% \begin{theorem}[Large OOD risk]
% \label{theo:ood_risk}
% Under the same condition as in~\theoref{theo:feature}, with high probability, the trained model $\its{h}{T}$ has \textbf{large OOD risk}: $\oodrisk{\its{h}{T}}\ge \Theta(1)$.
\end{theorem}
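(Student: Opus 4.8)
The plan is to run SGD from the time $T_1$ of \theoref{theo:feature} for a polylogarithmic number of additional steps to a time $T_2 = \wt{\Theta}(m/(\eta\di))$ and to read both risks off the feature decomposition \eqref{eq:weight_decompose}. Fix $y\in\yspace$ and let $\mathcal{N}_y := \nset{y}{T_2}$ be the $\Theta(m)$ ``good'' neurons supplied by \theoref{theo:feature}, each with output weight $a_k = y/m$. For an example $(\rvx,\rvy)$ with $\rvy=y$, dropping the residual (orthogonal to every $\mj$, hence to $\rvx$), the pre-activation of such a neuron splits as $\inp{\wk{T_2}}{\rvx} = p_\mathrm{core} + p_\mathrm{bg}$ with $p_\mathrm{core} = \sum_{j\in\score}\rvy\rvz_j\inp{\wk{T_2}}{\mj}$ and $p_\mathrm{bg} = \sum_{j\in\sbg}\rvz_j\inp{\wk{T_2}}{\mj}$. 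Since $\rvx$ lies in the span of the $\di$ features, each of $p_\mathrm{core},p_\mathrm{bg}$ is a sum of independent bounded terms with total variance $O\big(\sum_{j\in[\di]}\inp{\wk{T_2}}{\mj}^2\big) = O(1/\di)$, so a Bernstein bound gives concentration around the conditional mean up to $\wt{O}(\di^{-1/2})$; and by \theoref{theo:feature} (whose estimates persist through $T_2$, since the relevant projections keep growing until the hinge loss turns off near margin $1$), $\mathbb{E}[p_\mathrm{core}\mid\rvy=y] = y\sum_{j\in\score}\mu_{j1}\inp{\wk{T_2}}{\mj} = \Theta(1)$ and $\mathbb{E}[p_\mathrm{bg}\mid\rvy=y] = \sum_{j\in\sbg}\mu_{j1}\inp{\wk{T_2}}{\mj} = \wt{\Theta}(1)$, both positive.

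For the ID bound I sum over neurons: for $\rvy=y$,
\begin{equation}
\rvy\,\its{h}{T_2}(\rvx) = \frac{1}{m}\sum_{k\in\mathcal{N}_y}\relu\big(\inp{\wk{T_2}}{\rvx}\big) + \rvy\!\!\sum_{k\notin\mathcal{N}_y}\!\! a_k\,\relu\big(\inp{\wk{T_2}}{\rvx}\big).
\end{equation}
The second sum is $o(1)$ in absolute value: by \theoref{theo:activation} the neurons in $\mathcal{N}_{-y}$ are inactive on class-$y$ inputs with probability $1-o(1)$, and the remaining neurons stay within $o(1)$ of their (tiny, since $\sigma_0^2 = 1/d$ with $d\ge\di^{2.01}$) initialization in every feature direction. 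By the first paragraph each term of the first sum has conditional expectation $\Theta(1)$ and is positive with probability $1-o(1)$, so the first sum is $\Theta(1)$; and because $T_2$ is chosen as (a polylogarithmic multiple of) the first iteration at which the hinge loss ceases to drive the core projections, this $\Theta(1)$ is calibrated so that $\rvy\,\its{h}{T_2}(\rvx)\ge 1$ on a $1-o(1)$ fraction of ID examples. Since $\norm{\wk{T_2}} = \Theta(1)$ and $\norm{\rvx} = O(\sqrt{\di})$ make $|\its{h}{T_2}(\rvx)|$ polynomially bounded, while a union bound over the $\Theta(m)$ good neurons combined with Bernstein shows that $\rvy\,\its{h}{T_2}(\rvx) < 1$ only with super-polynomially small probability, $\exrisk{\dtrain}{\its{h}{T_2}} = o(1)$.

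For the OOD bound I keep the same weights and pass to $\dtest$. The core weights $(\rvz_j)_{j\in\score}$ have the same law under $\dtest$ as under $\dtrain$, so $p_\mathrm{core}$ is unchanged in distribution; the only difference is that for $j\in\sbg$ now $\rvz_j\sim\mathcal{D}'_j$ over $[-1,0]$ with $\mathbb{E}[\rvz_j] = -\Theta(1)$. Because the feature-contamination analysis behind \theoref{theo:feature} forces $\inp{\wk{T_2}}{\mj} > 0$ for \emph{every} $j\in\sbg$, the background contribution flips sign: $\mathbb{E}[p_\mathrm{bg}\mid\rvy=y] = \sum_{j\in\sbg}\mathbb{E}_{\mathcal{D}'_j}[\rvz_j]\inp{\wk{T_2}}{\mj} = -\wt{\Theta}(1)$, of the same order as but opposite to its ID value. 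By the same concentration, $\rvy\,\its{h}{T_2}(\rvx) = \Theta(1) - \wt{\Theta}(1)$ on a $1-o(1)$ fraction of OOD examples, which by the calibration of the ID margin to $\approx 1$ is at most $1 - \wt{\Omega}(1)$; the hinge loss is therefore $\wt{\Omega}(1)$ there, so $\mathcal{R}_{\dtest}(\its{h}{T_2}) = \wt{\Omega}(1)$ and hence $\oodrisk{\its{h}{T_2}} = \wt{\Omega}(1)$. For the matching upper bound, $\rvz_j\in[-1,1]$ and $\norm{\wk{T_2}} = \Theta(1)$ keep $|\its{h}{T_2}(\rvx)| = \wt{O}(1)$ on every distribution in $\mathbb{D}$, so the hinge loss, hence $\oodrisk{\its{h}{T_2}}$, is $\wt{O}(1)$.

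I expect the main obstacle to be the quantitative comparison of the ID and OOD margins: one must show that SGD with the hinge loss does not let the ID margin overshoot $1$ by more than the $\wt{\Theta}(1)$ swing that feature contamination induces on $p_\mathrm{bg}$, so that flipping $p_\mathrm{bg}$ genuinely pushes $\rvy\,\its{h}{T_2}(\rvx)$ below $1$ on a non-negligible fraction of OOD examples. This needs control of the self-terminating growth of the core projections once individual examples cross margin $1$, a careful accounting of the $\Theta(m)$ neurons outside $\mathcal{N}_{1}\cup\mathcal{N}_{-1}$, and the concentration of $\its{h}{T_2}(\rvx)$ over the draw of $\rvx$, so that ``typical'' pre-activation behavior translates into honest risk bounds rather than bounds in expectation.
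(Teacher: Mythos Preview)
Your overall plan matches the paper's: use \theoref{theo:feature} to read off the $\Theta(1)$ core and $\wt{\Theta}(1)$ background contributions of the good neurons in $\nyt$, argue that neurons with the opposite output sign are inactive on class-$y$ inputs (via \theoref{theo:activation}), and for the OOD part observe that flipping the sign of the background mean shifts the output by $-\wt{\Theta}(1)$. The OOD argument in the paper is essentially a one-liner from \theoref{theo:feature}, just as you sketch.

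The main technical difference is in the ID bound. You argue by ``calibration'': $T_2$ is the time at which the hinge loss stops driving the projections, so the margin sits at $\approx 1$. The paper instead uses a clean contradiction argument that avoids having to identify any stopping time. It sets $T_2 = \Theta(m\sqrt{\log m}/(\eta\di))$ explicitly, \emph{assumes} $\mathcal{R}_y \ge \Theta(1)$ (equivalently $\prob{\rvx|\rvy=y}{[\hx{t}\le 1]} = \Theta(1)$ throughout), and shows that under this assumption the per-step growth from \lemmaref{lemma:corr_pos} persists, so $\expect{\rvx|\rvy=y}{\hx{T_2}} = \Theta(\sqrt{\log m})$. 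One-sided Bernstein over the independent $\rvz_j$'s then forces $\prob{\rvx|\rvy=y}{[\hx{T_2}\le 1]} = O(m^{-1/2})$, contradicting the assumption. This sidesteps exactly the obstacle you flag at the end (controlling the self-terminating dynamics near margin $1$): you never need to show the margin is calibrated to $1$, only that it cannot stay below $1$ on a constant fraction of examples.

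One smaller point: your treatment of the ``remaining'' neurons (those with $\sign(a_k)=-y$ but outside $\mathcal{N}_{-y}$) asserts they ``stay within $o(1)$ of their initialization,'' which is not quite what the paper does. The paper instead invokes \lemmaref{lemma:corr_neg}, which applies to \emph{every} $k$ with $\sign(a_k)=-y$ (not just those in $\mathcal{N}_{-y}$), to argue that all such neurons become inactive on class-$y$ inputs with probability $1-o(1)$; their contribution is then controlled by the activation probability rather than by proximity to initialization.
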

\vspace{-0.4em}
% From the above discussions, we can observe that the non-linearity of networks play a crucial role in the feature contamination phenomenon.
% \footnote{While our theorems are proved in a binary classification setting since it is more amenable to analysis, our numerical experiments suggest that feature contamination also happens in a representation distillation setting as in~\secref{sec:main_exp}.}
% our key observations of activation asymmetry and feature contamination do not depend on the specific loss function.
% {Thus, we believe that our analysis can be extended to the setting that exactly matches our experimental setup in~\secref{sec:main_exp}.}}

\emph{\textbf{4. Linear networks are provably free from feature contamination.}} Finally, to further understand the role of non-linearity, we prove that if we ``remove'' the non-linearity in the network by replacing each ReLU with the identity function,
% or (\romannumeral 2) training a linear classifier on the NTK feature mapping of the network,
then feature contamination will no longer occur.

\begin{theorem}[Linear networks]
\label{prop:linear}
If we replace the ReLU functions in the network with identity functions and keep other conditions the same as in~\theoref{theo:feature}, then with high probability, we have $|\inprod{\its{\rvw_k}{T_1}}{\bm{m}_j}| \le \wt{O}(\frac{1}{\sqrt{d}})$ for every $k\in[m]$ and every $j\in\sbg$.\vspace{-0.5em}
\end{theorem}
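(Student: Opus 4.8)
The plan is to show that replacing $\relu$ by the identity removes the only source of systematic drift of $\inp{\wk{t}}{\mj}$ for $j\in\sbg$, so that this projection never leaves a $\wt O(1/\sqrt d)$-neighbourhood of its (Gaussian) value at initialization. First I would record the linear-network analogue of \lemmaref{lemma:corr_gradient}: for $h(\rvx)=\sum_k a_k\inp{\wk{t}}{\rvx}$ the hinge-loss gradient is $\nabla_{\wk{t}}\ell(h^{(t)}(\rvx),\rvy)=-a_k\,\ind{\rvy h^{(t)}(\rvx)<1}\,\rvy\,\rvx$, now with \emph{no} factor $\ind{\inp{\wk{t}}{\rvx}\ge0}$; since $\inp{\rvx}{\mj}=\zj$ for $j\in\sbg$, projecting the update \eqref{eq:sgd} onto $\mj$ gives
\begin{equation*}
\inp{\wk{t+1}}{\mj}=(1-\eta\lambda)\inp{\wk{t}}{\mj}+\frac{\eta a_k}{N}\sum_{i\in[N]}\ind{\its{\rvy_i}{t}\,h^{(t)}(\its{\rvx_i}{t})<1}\,\its{\rvy_i}{t}\,\rvz^{(t)}_{j,i}.
\end{equation*}

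The crux, which is the formal content of \figref{fig:theory} (right), is that the conditional expectation (over a fresh minibatch) of the stochastic term vanishes \emph{exactly}: on the event that the loss derivative equals $1$, it equals $\eta a_k\,\expect{(\rvx,\rvy)}{\rvy\zj}=\eta a_k\,\expect{}{\rvy}\,\expect{}{\zj}=0$, because for $j\in\sbg$ the weight $\zj$ is independent of the label $\rvy$ and $\expect{}{\rvy}=0$. This is precisely the step that fails for $\relu$: there the surviving indicator $\ind{\inp{\wk{t}}{\rvx}\ge0}$ is class-asymmetric by \theoref{theo:activation}, so $(\ind{\rvy=\rvy_\mathrm{pos}}-\ind{\rvy=\rvy_\mathrm{neg}})\,\ind{\inp{\wk{t}}{\rvx}\ge0}\,\zj$ does not average to zero and one obtains the nonzero drift of \eqref{eq:approx_grad}.

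To turn this into the stated bound I would run a joint induction over $t\le T_1$ on (a) $|\inp{\wk{t}}{\mj}|\le\wt O(1/\sqrt d)$ for all $k\in[m]$ and $j\in\sbg$, and (b) $\rvy\,h^{(t)}(\rvx)<1$ w.h.p.\ for every typical $(\rvx,\rvy)\sim\dtrain$ and every minibatch example drawn so far. Given (a) and (b) up to step $t$: the core-feature projections evolve as in the proof of \theoref{theo:feature} up to constant factors (the indicator $\ind{\inp{\wk{t}}{\rvx}\ge0}$ appearing there is now identically $1$), so with the constant in $T_1=\Theta(m/(\eta\di))$ chosen small enough the ``core part'' of $h^{(t)}(\rvx)$ is $\Theta(1)$ but bounded by $\tfrac12$, while its ``background part'' $\sum_{j\in\sbg}\big(\sum_k a_k\inp{\wk{t}}{\mj}\big)\zj$ has magnitude at most $\dbg\cdot\wt O(1/\sqrt d)=\wt O(\di/\sqrt d)=o(1)$ by (a) and $d\ge\Omega(\di^{2.01})$; sub-Gaussian concentration of $h^{(t)}(\rvx)$ over the draw of $\rvz$ plus a union bound over the polynomially many minibatch examples then yields (b) at step $t+1$. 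Using (b), the increments $\xi^{(t)}\defeq\tfrac{a_k}{N}\sum_i\its{\rvy_i}{t}\rvz^{(t)}_{j,i}$ form a martingale-difference sequence with $|\xi^{(t)}|\le|a_k|=\tfrac1m$; unrolling,
\begin{equation*}
\inp{\wk{T_1}}{\mj}=(1-\eta\lambda)^{T_1}\inp{\wk{0}}{\mj}+\eta\sum_{t=0}^{T_1-1}(1-\eta\lambda)^{T_1-1-t}\xi^{(t)},
\end{equation*}
the first term is $\le\wt O(1/\sqrt d)$ w.h.p.\ by a Gaussian tail bound on $\inp{\wk{0}}{\mj}\sim\mathcal N(0,1/d)$ and a union bound over the $\le m\dbg$ pairs (the contraction only helps), and the second is at most $\wt O\big(\eta\sqrt{T_1}/m\big)=\wt O\big(\sqrt{\eta/(m\di)}\big)$ w.h.p.\ by Azuma--Hoeffding, which is $\le\wt O(1/\sqrt d)$ for the stated ranges $\eta\le1/\poly\di$, $N=\poly d$, $m\ge\Theta(\di)$, $d\le\poly\di$. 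This closes the induction.

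The hard part will be item (b): certifying that $\ind{\rvy h^{(t)}(\rvx)<1}$ is identically $1$ (up to exponentially small failure probability) throughout the first $T_1$ steps, which is what makes the drift above vanish \emph{exactly} rather than only approximately. This couples the background bound to a uniform-in-$t$ control of the network output and hence forces one to re-run (a mild variant of) the core-feature tracking behind \theoref{theo:feature} in parallel. Without such control, the residual bias of the background update would be of order $|a_k|\cdot\prob{}{|\tilde A^{(t)}-1|\lesssim|\tilde B^{(t)}|}$, with $\tilde A^{(t)},\tilde B^{(t)}$ the core and background contributions to the margin, and bounding it would additionally require an anti-concentration estimate for the margin distribution together with the weight-decay contraction to prevent accumulation over $T_1$ steps; choosing the constant in $T_1$ so that the margin never reaches $1$ avoids this entirely.
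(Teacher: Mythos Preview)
Your proposal is correct and rests on the same idea as the paper: without the $\relu$ derivative, the population gradient projection onto each $\mj$ with $j\in\sbg$ vanishes by independence of $\rvy$ and $\zj$, so $\inp{\wk{t}}{\mj}$ never leaves its $\wt O(1/\sqrt d)$ initialization scale. The two arguments differ mainly in how the stochastic error is handled. The paper exploits the large batch $N=\poly d$ head-on: a matrix-Bernstein ``gradient gap'' lemma (the linear analogue of \lemmaref{lemma:gradient_gap}) gives a deterministic per-step bound $\bigl|\inp{\nabla_{\wk{t}}\wh{\mathcal L}}{\mj}\bigr|\le 1/\poly d$, so the recursion is simply $\inp{\wk{t+1}}{\mj}=(1-\eta\lambda)\inp{\wk{t}}{\mj}\pm\eta/\poly d$, and summing over $T_1=\Theta(m/(\eta\di))$ steps the accumulated drift is still $1/\poly d$, dominated by the initialization. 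Your Azuma route with increments bounded only by $|a_k|=1/m$ also works but requires the extra check $\eta\sqrt{T_1}/m=\sqrt{\eta/(m\di)}\le\wt O(1/\sqrt d)$, which is where your parameter constraints enter; the paper's per-step bound makes that bookkeeping unnecessary. Conversely, your explicit treatment of item (b)---that $\rvy\,h^{(t)}(\rvx)<1$ throughout so the hinge indicator is identically one and the cancellation $\expect{}{\rvy\zj}=0$ applies exactly---is something the paper's linear-case proof leaves entirely implicit; without it the claimed vanishing of the population drift is not immediate, so your write-up is more complete on that point.
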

% \begin{theorem}[Linear networks]
% \label{prop:linear}
% If we replace the ReLU activations in the network by identity functions and keep other conditions same as in~\theoref{theo:feature}, then with high probability, the learned features satisfy $\inprod{\its{\rvw_k}{T_1}}{\bm{m}_j} \le O(\frac{1}{\sqrt{d}})$ for every $k\in[m]$, $j\in\sbg$.
% \end{theorem}
% \begin{theorem}[NTK]
% \label{prop:ntk}
% Let $g:\mathbb{R}^d\to\mathbb{R}^{m(d+1)}$ be the finite-width NTK feature mapping of the two-layer ReLU network (see~\secref{appsec:proof_ntk} for details) and $\rvb^\top = (\rvb_1^\top,\ldots,\rvb_m^\top)$ with $\rvb_k\in\mathbb{R}^d,\forall k\in[m]$ be the learnable classifier weights. Then, 
% % given by
% % \begin{equation}
% % g(\rvx) = \bigg(\sum_{k\in[m]}\inprod{\rvx}{\rva_k}\indicator{\inprod{\rvw_{1,k}}{\rvx}\ge 0} - \sum_{k\in[m]}\inprod{\rvx}{\rva_k}\indicator{\inprod{\rvw_{-1, k}}{\rvx}\ge 0}\bigg),
% % \label{eq:feature_ntk}
% % \end{equation}
% if we train a classifier $f(\rvx) = \inprod{\rvb}{g(\rvx)}$ in the NTK regime using $\ell_2$-regularized least squares regression with infinitesimal gradient descent step sizes, then we have
% % and $\rvw_k\sim\mathcal{N}(\bm{0},\sigma_0^2\bm{I}_d),\forall i\in\yspace, k\in[m]$ are fixed at random initialization,
% % then as long as $\lVert\its{\rvb_k}{0}\rVert_2 \le O(\frac{1}{d})$ holds in initialization,
% % then for every sufficiently small $\eta$,
% $\lim_{t\to\infty}\inprod{\its{\rvb_k}{t}}{\bm{m}_j} = 0$ for every $k\in[m]$ and every $j\in\sbg$.
% \end{theorem}
The main intuition of~\theoref{prop:linear} is that without non-linearity, the activation magnitude for the examples from different classes would be no longer asymmetric: for two-layer linear networks, the gradient projection onto background features is akin to Eq.~\eqref{eq:approx_grad} but without the activation derivative $\ind{\inp{\wkt}{\rvx}\ge 0}$. We then have $\inprod{-\nabla_{\its{\rvw_k}{t}} \widehat{\mathcal{L}}(\its{h}{t})}{\bm{m}_j}\approx 0$ for every $j\in\sbg$ since positive gradients and negative gradients on $\rvz_j$ will now cancel out. As a result, the background features will not be accumulated during SGD.

\textbf{Numerical results.} As empirical evidence that corroborates our theory, we provide numerical results in~\figref{fig:numerical} that demonstrate the existence of activation asymmetry and feature contamination under our setting.

\textbf{Extensions to more general settings.} To show that feature contamination also occurs \emph{beyond} our theoretical setting, we also conduct numerical experiments with several relaxations from our setting, including (\romannumeral 1) \emph{non-linear} relationships between core features and the label, (\romannumeral 2) more activation functions other than ReLU, and (\romannumeral 3) optimizers with adaptive step sizes other than vanilla SGD. As shown by~\figref{fig:numerical}(d), feature contamination consistently occurs in more general settings. In~\secref{appsubsec:numerical}, we also provide numerical results in regression (representation distillation) tasks to show that feature contamination also occurs beyond classification.

\section{Feature Contamination in Practice}
\label{sec:evidence}

In this section, we present empirical evidence that connects our theoretical results to practical OOD generalization failure and discuss possible solutions.

\textbf{Activation asymmetry in deep networks.} In our theoretical model, feature contamination stems from the asymmetric activation of the neurons. To examine whether deep networks trained on real-world data also exhibit this behavior, we compute the \emph{class-averaged} activation rate histograms of the ResNet and ViT models trained in our experiments in~\secref{sec:main_exp}. As shown in~\figref{fig:histograms} and~\secref{appsec:histograms}, both models exhibit activation asymmetry after training, with low average activation rates that are close to zero for most classes and high average activation rates for only a small number of classes. Moreover, we also adopt a more quantitative metric termed \emph{neuron selectivity} that measures the difference of neuron activation magnitudes of different classes. As shown in~\figref{fig:selectivity}, distilled models have considerably larger average neuron selectivity than random models, which corroborates our theory.
% Here we present the results averaged from the neurons from the last block of both models;
Please refer to~\secref{appsec:selectivity} for the implementation details of computing neuron selectivity.

\textbf{Prediction heatmap visualization.} In~\figref{fig:visualization}(a), we visualize the prediction heatmaps of CLIP-RN50 and a distilled CLIP-RN50 in the DomainNet dataset using Grad-CAM~\citep{chattopadhay2018grad}.
% and a synthetic ``gradual'' distribution shift scenario based on image style transfer,
An intriguing phenomenon revealed by the heatmaps is that for the distilled model, while the weights of core objects are dominant for ID images, they are \emph{reduced} under distribution shifts, resulting in OOD generalization failure. Quantitatively, a similar observation is shown in~\figref{fig:selectivity}, where the selectivity of neurons drops in OOD data.
% while the distilled model correctly focuses on the core objects for ID images, its attention to the core objects is gradually \emph{weakened} under distribution shifts, resulting in OOD failure.
Here we argue that \emph{feature contamination explains this phenomenon}: the Grad-CAM score of each feature map is calculated by differentiating the classification score with respect to the feature map, thus being proportional to the corresponding neurons' activation. Hence, when the core features and the background features are coupled in the neurons' pre-activation (\theoref{theo:feature}), the shift of background features can reduce the activation,
% (i.e., removing the contribution of this neuron to classification),
which in turn reduces the Grad-CAM score of the feature map as visually observed. The reduction in neuron selectivity in OOD data can also be explained in a similar way.
% Since higher Grad-CAM score corresponds to more saliency in the prediction heatmap, this would result in weakened attention to core objects.

\textbf{Prediction heatmaps under controlled distribution shifts.} In~\figref{fig:visualization}(b), we consider a synthetic OOD generalization task based on image style transfer where we can manually control the ``degree'' of distribution shifts by controlling the amount of style change. This setting closely matches our data model in~\defref{def:dgp} by keeping core features intact while \emph{only} changing background features in the ID images. As shown in the figure, the prediction heatmaps exhibit visually similar patterns as the heatmaps of natural OOD images. This implies that our data model indeed captures some key characteristics of real-world distribution shifts and backs the explanation of feature contamination.

\textbf{Visualizing contaminated deep features.} To qualitatively show the impact of feature contamination on the learned features of \emph{deep} neural networks, we visualize the features learned by a ResNet on a modified CIFAR-10 dataset in~\figref{fig:visualization}(c), where the original images are padded with background red pixels \emph{uncorrelated} with labels. Compared to the original CIFAR-10 dataset, the learned features on the modified CIFAR-10 dataset exhibits evident color differences, indicating that feature contamination also occurs in \emph{deep features}. See~\secref{appsec:visualization} for more details and results.
% we conducted a experiment based on a variant of
% the CIFAR-10 dataset that is explicitly modified to incorporate background features that have no correlation with the label.
% Concretely, we augmented the CIFAR-10 training set by padding brick red pixels to the original images from CIFAR-10 and
% resized the padded images to the size of the original images

\textbf{Discussion on possible solutions.} In our theoretical model, although gradient descent accumulates both core and background features in the weight space due to feature contamination, there still exists a \emph{subspace} (corresponding to the span of core features) where background features are not accumulated. Hence, constraining the SGD updates to this subspace would possibly lead to ideal generalization. This is consistent with prior results~\citep{idnani_dont_2023} showing that projecting the network's intermediate representations onto certain subspaces may improve OOD generalization. However, how to effectively find the correct subspace for projection without the explicit access to the core and background feature subspaces remains an open problem.

\section{Conclusion}
\label{sec:discussion}

% With our empirical and theoretical results, we believe that our work provides new insights on OOD generalization that may be of interest for both theorists and practitioners.

In this section, we discuss potential implications of our results and list important future directions.

% \subsection{Takeaways}
% \label{subsec:takeaways}

\textbf{Takeaway 1: OOD generalization algorithms need to consider inductive biases.} Many existing studies on OOD generalization motivate and analyze their algorithms using linear or unstructured models that do not capture the inductive biases of neural networks. Our results imply that OOD generalization may not be feasible without considering such inductive biases, calling for explicitly incorporating them into principled algorithm design.

% \textbf{Takeaway 2: Spurious correlations only play a partial role.} Prior theoretical work on OOD generalization mainly focuses on OOD failure due to spurious correlations. Yet, our results suggest that removing spurious correlations is not sufficient for OOD generalization in practice.

\textbf{Takeaway 2: Non-linearity in neural networks elicits new OOD generalization challenges.} As we formally show in~\secref{sec:main}, feature contamination essentially stems from the gradient descent optimization process of non-linear neural networks even with \emph{uncorrelated} background features, thus being orthogonal to the prevailing narrative of spurious correlations. This provides a new perspective on OOD generalization and may inspire new algorithmic design.

\textbf{Takeaway 3: Learned features may behave very differently from prescribed ones.} Many existing studies on OOD generalization explicitly or implicitly assume that we can directly work on a set of \emph{well-separated} core/spurious features. While this assumption helps build intuitions, our results highlight that it can also be misleading since the features \emph{learned} by neural networks may manifest in a \emph{non-linearly coupled} manner, thus often diverging from the intuitions for prescribed, well-separated features.

\subsection{Limitations, a Conjecture, and Future Work}
\label{subsec:conjecture}

While this work takes a step towards fully understanding OOD generalization in practice, our results still leave much room for improvement such as extensions to more general data distributions, multi-class classification, and more complicated network architectures.
Meanwhile, while our current setup focuses on training from scratch, we envision that the viewpoint of feature contamination may also be helpful in analyzing the effect of \emph{pre-training} on OOD generalization. In particular, we have the following conjecture:
% we hypothesize that pre-training on a diverse dataset with a suitable proxy task does not ``remove'' background features for downstream tasks, but \emph{linearizes} those features in the model's representations, hence alleviating feature contamination and improving OOD genenralization.
\begin{conjecture*}
Pre-training on a sufficiently diverse dataset does not remove uncorrelated features, but \emph{linearizes} those features in the model's representations, hence mitigating feature contamination and improving OOD generalization.
% Pre-training on a sufficiently large and diverse dataset alleviates feature contamination and leads to more linearized representations, hence improving OOD genenralization.
\end{conjecture*}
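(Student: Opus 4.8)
The first step is to make the conjecture formal. I would model pre-training as SGD on a \emph{meta-distribution} $\mathbb{P}$ over binary classification tasks, each task $\tau$ drawing its own core index set $\score^\tau\subsetneq[\di]$ and generating data exactly as in~\defref{def:dgp} with $\score$ replaced by $\score^\tau$; to make the relationship between a given feature and the label genuinely vary, I would also let each core feature in task $\tau$ carry a task-specific sign $\epsilon^\tau_j\in\{-1,1\}$ on the $\rvy$-term. ``Sufficiently diverse'' is then the quantitative requirement that $\Pr_\tau[j\in\score^\tau]\in[c,1-c]$ for every $j\in[\di]$ and some constant $c>0$, together with the condition that, conditioned on $j\in\score^\tau$, the sign $\epsilon^\tau_j$ is near-uniform and nearly independent (over $\tau$) of the class that a given neuron ends up preferring. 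Pre-training runs SGD on $\mathbb{E}_{\tau\sim\mathbb{P}}[\widehat{\mathcal{L}}_\tau(h^{(t)})]$ from the initialization of~\secref{sec:model} for $T_\mathrm{pre}$ iterations, producing $h^{(T_\mathrm{pre})}$; downstream adaptation is then linear probing (or a bounded number of fine-tuning steps) on a fresh task.

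\textbf{Step 1: features are not removed.} Reusing the lottery-ticket construction behind~\theoref{theo:feature}, but restricted to the sub-population of pre-training tasks in which feature $j$ is core, I would show that for a $\Theta(m)$-fraction of neurons and a $1-o(1)$-fraction of indices $j\in[\di]$ one has $|\inp{\wk{T_\mathrm{pre}}}{\bm{m}_j}|=\wt\Theta(1)$ after pre-training: whenever $j\in\score^\tau$ the per-task gradient accumulates a correlation of order $\mu_{j1}$ along $\bm{m}_j$, this event has probability at least $c$ over $\tau$, and the weight decay $\lambda=O(\di/m^{1.01})$ is too weak to cancel it. This establishes the ``does not remove uncorrelated features'' half of the conjecture.

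\textbf{Step 2: diverse pre-training linearizes (the crux).} The goal is to show that the contaminating gradient term of Eq.~\eqref{eq:approx_grad} becomes $\wt O(1/\sqrt d)$ for the pre-trained network on any downstream task, mirroring the linear-network behavior of~\theoref{prop:linear}. Recall that this term, $\mathbb{E}[(\ind{\rvy=\rvy_\mathrm{pos}}-\ind{\rvy=\rvy_\mathrm{neg}})\,\ind{\inp{\wk{t}}{\rvx}\ge 0}\,\rvz_j]$, is bounded away from zero in the single-task analysis only because the ReLU firing event is class-asymmetric (\theoref{theo:activation}) \emph{and} that asymmetry is aligned with the neuron's preferred class in a way that multiplies $\rvz_j\ge 0$ coherently. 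The key lemma to prove is that pre-training over $\mathbb{P}$ destroys that coherence: because $\bm{m}_j$ appears in the label term with a sign $\epsilon^\tau_j$ that flips across tasks roughly independently of the neuron's preferred class, the updates that would build a class-asymmetric response along $\bm{m}_j$ cancel in the meta-averaged gradient $\mathbb{E}_\tau[\nabla_{\wk{t}}\widehat{\mathcal{L}}_\tau]$, so at the end of pre-training the $\bm{m}_j$-component of $\wk{T_\mathrm{pre}}$ responsible for asymmetric firing is $\wt O(1/\sqrt d)$ even while the total $|\inp{\wk{T_\mathrm{pre}}}{\bm{m}_j}|$ remains $\wt\Theta(1)$ by Step 1. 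Technically this means re-deriving the Berry--Esseen control of the class-conditional activation probabilities from~\theoref{theo:activation}, but for weights whose background components have been symmetrized over $\tau$, and then tracking $\mathbb{E}_\tau[\nabla_{\wk{t}}\widehat{\mathcal{L}}_\tau]$ along the whole pre-training trajectory.

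\textbf{Step 3 and the main obstacle.} Given Step 2, downstream adaptation on $\dtrain$ inherits the linear-network dynamics: by the argument of~\theoref{prop:linear} the prediction-relevant background-feature projections stay at scale $\wt O(1/\sqrt d)$, so the coupling of core and background features in the neurons' pre-activation that produced the $\wt\Theta(1)$ OOD risk in~\theoref{theo:risk} is absent, and the computation that gives $\exrisk{\dtrain}{\cdot}\le o(1)$ there now also gives $\oodrisk{\cdot}\le o(1)$, since a negative shift on $\sbg$ can no longer flip the decisive pre-activations. The hard part is unquestionably Step 2: it requires a genuine \emph{multi-task} optimization analysis of a non-linear network, showing that an activation asymmetry that is \emph{consistent within a single task} becomes \emph{task-averaged-symmetric}, while controlling the interaction between the $\tau$-dependent lottery-ticket sets $\nyt$ and the shared weights over all $T_\mathrm{pre}$ SGD steps. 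A secondary difficulty is pinning down the right notion of ``sufficiently diverse'': it must couple the marginals of $\score^\tau$ with the sign conventions $\epsilon^\tau_j$ (neither alone suffices) and be strong enough to force symmetrization yet weak enough to be plausibly met by natural pre-training distributions.
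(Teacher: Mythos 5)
There is nothing in the paper to compare your proposal against: the statement you were asked about is explicitly an open \emph{conjecture}, not a theorem. The paper provides no proof and says so directly --- it offers only preliminary empirical support in the appendix (strong linear separability of domains in CLIP representations versus ERM representations on PACS, a much lower domain-classification error, and markedly denser neuron activation in the pre-trained CLIP-RN50 than in its distilled student), and it states that a rigorous proof ``requires a more fine-grained treatment for the pre-training data distribution and the SGD dynamics'' and is left as future work. Your proposal is likewise not a proof: your Step 2 is, by your own admission, exactly the content of the conjecture (that task-diverse pre-training symmetrizes the class-conditional activation behavior so the contaminating gradient term of Eq.~\eqref{eq:approx_grad} vanishes), and it is left as ``the key lemma to prove.'' So the honest assessment is that both you and the paper stop at a plausible formalization plus a roadmap; what you add is a concrete multi-task meta-distribution and a diversity condition, which is a reasonable way to make the conjecture precise, but no step of the argument that the paper lacks has actually been carried out.

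Beyond that, one point in your Step 2 needs repair before it could even serve as a lemma statement. You ask that ``the $\bm{m}_j$-component of $\wk{T_\mathrm{pre}}$ responsible for asymmetric firing'' be $\wt O(1/\sqrt d)$ while the total $|\inp{\wk{T_\mathrm{pre}}}{\bm{m}_j}|$ stays $\wt\Theta(1)$; but $\inp{\wk{T_\mathrm{pre}}}{\bm{m}_j}$ is a single scalar per neuron, so there is no well-defined sub-component of it to bound separately. The conjecture's ``linearization'' has to be formalized at the level of the \emph{representation's} dependence on $\rvz_j$ --- e.g., that for every input the population of active neurons carrying $\bm{m}_j$ is balanced so that the map $\rvz_j \mapsto h(\rvx)$ (or the penultimate-layer representation) is affine on the relevant range, which is a statement about the joint activation pattern across neurons, not about individual weight projections being small. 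This population-level reading is also what the paper's own empirical evidence gestures at (denser activations and ``decomposable,'' property-specific directions in CLIP representations), and it is incompatible with the per-neuron bound you wrote; reconciling the two --- retaining $\Theta(1)$ feature content (your Step 1) while making the end-to-end dependence on background coordinates linear --- is the genuinely missing idea, and Step 3 cannot simply invoke the argument of~\theoref{prop:linear}, which crucially relies on the absence of the ReLU derivative rather than on any symmetrization of it.
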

% while our results indicate the innate difficulty in achieving OOD generalization with \emph{only} ID data, they do not readily explain \emph{how} pre-training on \emph{more diverse data} consistently helps OOD generalization as observed in practice. Based on our preliminary experiments, we have the following conjecture:
% \begin{conjecture}
% \label{conjecture}
% Pre-training on a sufficiently large and diverse dataset alleviates feature contamination and leads to more \underbar{linearized} representations, hence improving OOD genenralization.
% \end{conjecture}
We provide preliminary empirical evidence that supports the above conjecture in~\secref{appsec:conjecture}, as well as more discussion on related empirical observations in recent work~\citep{gandelsman_interpreting_2024,mayilvahanan_does_2024}. Yet, we believe that rigorously proving this conjecture requires a more fine-grained treatment for the pre-training data distribution and the SGD dynamics and thus leave it as future work.

% \section*{Accessibility}
% Authors are kindly asked to make their submissions as accessible as possible for everyone including people with disabilities and sensory or neurological differences.
% Tips of how to achieve this and what to pay attention to will be provided on the conference website \url{http://icml.cc/}.

\section*{Acknowledgements}
This work was supported in part by the National Key Research and Development Program of China under STI 2030-Major Projects 2021ZD0200300, and in part by the National Natural Science Foundation of China under Grant 62176133, and in part by the Tsinghua-Guoqiang research program under Grant 2019GQG0006.

\section*{Impact Statement}
This paper presents work that aims to advance our understanding of the feature learning process of neural networks and its impact on generalization under distribution shifts, which may benefit building machine learning models that are more generalizable, robust, and trustworthy.

% \section*{Software and Data}

% If a paper is accepted, we strongly encourage the publication of software and data with the
% camera-ready version of the paper whenever appropriate. This can be
% done by including a URL in the camera-ready copy. However, \textbf{do not}
% include URLs that reveal your institution or identity in your
% submission for review. Instead, provide an anonymous URL or upload
% the material as ``Supplementary Material'' into the OpenReview reviewing
% system. Note that reviewers are not required to look at this material
% when writing their review.

% % Acknowledgements should only appear in the accepted version.

% \textbf{Do not} include acknowledgements in the initial version of
% the paper submitted for blind review.

% If a paper is accepted, the final camera-ready version can (and
% probably should) include acknowledgements. In this case, please
% place such acknowledgements in an unnumbered section at the
% end of the paper. Typically, this will include thanks to reviewers
% who gave useful comments, to colleagues who contributed to the ideas,
% and to funding agencies and corporate sponsors that provided financial
% support.

% In the unusual situation where you want a paper to appear in the
% references without citing it in the main text, use \nocite
% \nocite{langley00}

\bibliography{ref}

\begin{thebibliography}{89}
\providecommand{\natexlab}[1]{#1}
\providecommand{\url}[1]{\texttt{#1}}
\expandafter\ifx\csname urlstyle\endcsname\relax
  \providecommand{\doi}[1]{doi: #1}\else
  \providecommand{\doi}{doi: \begingroup \urlstyle{rm}\Url}\fi

\bibitem[Abbe et~al.(2023)Abbe, Bengio, Lotfi, and
  Rizk]{abbe_generalization_2023}
Abbe, E., Bengio, S., Lotfi, A., and Rizk, K.
\newblock Generalization on the unseen, logic reasoning and degree curriculum.
\newblock In \emph{International {Conference} on {Machine} {Learning}}, 2023.

\bibitem[Ahuja et~al.(2021{\natexlab{a}})Ahuja, Caballero, Zhang, Bengio,
  Mitliagkas, and Rish]{ahuja_invariance_2021}
Ahuja, K., Caballero, E., Zhang, D., Bengio, Y., Mitliagkas, I., and Rish, I.
\newblock Invariance principle meets information bottleneck for
  out-of-distribution generalization.
\newblock In \emph{Advances in {Neural} {Information} {Processing} {Systems}},
  pp.\  3438--3450, 2021{\natexlab{a}}.

\bibitem[Ahuja et~al.(2021{\natexlab{b}})Ahuja, Wang, Dhurandhar, Shanmugam,
  and Varshney]{ahuja_empirical_2021}
Ahuja, K., Wang, J., Dhurandhar, A., Shanmugam, K., and Varshney, K.~R.
\newblock Empirical or invariant risk minimization? {A} sample complexity
  perspective.
\newblock In \emph{{ICLR}}, 2021{\natexlab{b}}.

\bibitem[Allen-Zhu \& Li(2021)Allen-Zhu and Li]{allen-zhu_feature_2021}
Allen-Zhu, Z. and Li, Y.
\newblock Feature purification: {How} adversarial training performs robust deep
  learning.
\newblock \emph{arXiv preprint arXiv:2005.10190}, 2021.

\bibitem[Allen-Zhu \& Li(2023{\natexlab{a}})Allen-Zhu and
  Li]{allen-zhu_physics_2023}
Allen-Zhu, Z. and Li, Y.
\newblock Physics of language models: {Part} 3.1, knowledge storage and
  extraction.
\newblock \emph{arXiv preprint arXiv:2309.14316}, 2023{\natexlab{a}}.

\bibitem[Allen-Zhu \& Li(2023{\natexlab{b}})Allen-Zhu and
  Li]{allen-zhu_towards_2023}
Allen-Zhu, Z. and Li, Y.
\newblock Towards understanding ensemble, knowledge distillation and
  self-distillation in deep learning.
\newblock In \emph{International {Conference} on {Learning} {Representations}},
  2023{\natexlab{b}}.

\bibitem[Amodei et~al.(2016)Amodei, Olah, Steinhardt, Christiano, Schulman, and
  Mané]{amodei_concrete_2016}
Amodei, D., Olah, C., Steinhardt, J., Christiano, P., Schulman, J., and Mané,
  D.
\newblock Concrete problems in {AI} safety.
\newblock \emph{arXiv preprint arXiv:1606.06565}, 2016.

\bibitem[Arjovsky et~al.(2019)Arjovsky, Bottou, Gulrajani, and
  Lopez-Paz]{arjovsky_invariant_2019}
Arjovsky, M., Bottou, L., Gulrajani, I., and Lopez-Paz, D.
\newblock Invariant risk minimization.
\newblock \emph{arXiv preprint arXiv:1907.02893}, 2019.

\bibitem[Arpit et~al.(2017)Arpit, Jastrzebski, Ballas, Krueger, Bengio, Kanwal,
  Maharaj, Fischer, Courville, Bengio, and Lacoste-Julien]{arpit_closer_2017}
Arpit, D., Jastrzebski, S., Ballas, N., Krueger, D., Bengio, E., Kanwal, M.~S.,
  Maharaj, T., Fischer, A., Courville, A., Bengio, Y., and Lacoste-Julien, S.
\newblock A closer look at memorization in deep networks.
\newblock In \emph{International {Conference} on {Machine} {Learning}}, pp.\
  233--242, 2017.

\bibitem[Barbu et~al.(2019)Barbu, Mayo, Alverio, Luo, Wang, Gutfreund,
  Tenenbaum, and Katz]{barbu_objectnet_2019}
Barbu, A., Mayo, D., Alverio, J., Luo, W., Wang, C., Gutfreund, D., Tenenbaum,
  J., and Katz, B.
\newblock {ObjectNet}: {A} large-scale bias-controlled dataset for pushing the
  limits of object recognition models.
\newblock In \emph{Advances in {Neural} {Information} {Processing} {Systems}},
  pp.\  9448--9458, 2019.

\bibitem[Beery et~al.(2018)Beery, Van~Horn, and
  Perona]{ferrari_recognition_2018}
Beery, S., Van~Horn, G., and Perona, P.
\newblock Recognition in terra incognita.
\newblock In \emph{{ECCV}}, volume 11220, pp.\  472--489, 2018.

\bibitem[Bommasani et~al.(2022)Bommasani, Hudson, Adeli, Altman, Arora, von
  Arx, Bernstein, Bohg, Bosselut, Brunskill, Brynjolfsson, Buch, Card,
  Castellon, Chatterji, Chen, Creel, Davis, Demszky, Donahue, Doumbouya,
  Durmus, Ermon, Etchemendy, Ethayarajh, Fei-Fei, Finn, Gale, Gillespie, Goel,
  Goodman, Grossman, Guha, Hashimoto, Henderson, Hewitt, Ho, Hong, Hsu, Huang,
  Icard, Jain, Jurafsky, Kalluri, Karamcheti, Keeling, Khani, Khattab, Koh,
  Krass, Krishna, Kuditipudi, Kumar, Ladhak, Lee, Lee, Leskovec, Levent, Li,
  Li, Ma, Malik, Manning, Mirchandani, Mitchell, Munyikwa, Nair, Narayan,
  Narayanan, Newman, Nie, Niebles, Nilforoshan, Nyarko, Ogut, Orr,
  Papadimitriou, Park, Piech, Portelance, Potts, Raghunathan, Reich, Ren, Rong,
  Roohani, Ruiz, Ryan, Ré, Sadigh, Sagawa, Santhanam, Shih, Srinivasan,
  Tamkin, Taori, Thomas, Tramèr, Wang, Wang, Wu, Wu, Wu, Xie, Yasunaga, You,
  Zaharia, Zhang, Zhang, Zhang, Zhang, Zheng, Zhou, and
  Liang]{bommasani_opportunities_2022}
Bommasani, R., Hudson, D.~A., Adeli, E., Altman, R., Arora, S., von Arx, S.,
  Bernstein, M.~S., Bohg, J., Bosselut, A., Brunskill, E., Brynjolfsson, E.,
  Buch, S., Card, D., Castellon, R., Chatterji, N., Chen, A., Creel, K., Davis,
  J.~Q., Demszky, D., Donahue, C., Doumbouya, M., Durmus, E., Ermon, S.,
  Etchemendy, J., Ethayarajh, K., Fei-Fei, L., Finn, C., Gale, T., Gillespie,
  L., Goel, K., Goodman, N., Grossman, S., Guha, N., Hashimoto, T., Henderson,
  P., Hewitt, J., Ho, D.~E., Hong, J., Hsu, K., Huang, J., Icard, T., Jain, S.,
  Jurafsky, D., Kalluri, P., Karamcheti, S., Keeling, G., Khani, F., Khattab,
  O., Koh, P.~W., Krass, M., Krishna, R., Kuditipudi, R., Kumar, A., Ladhak,
  F., Lee, M., Lee, T., Leskovec, J., Levent, I., Li, X.~L., Li, X., Ma, T.,
  Malik, A., Manning, C.~D., Mirchandani, S., Mitchell, E., Munyikwa, Z., Nair,
  S., Narayan, A., Narayanan, D., Newman, B., Nie, A., Niebles, J.~C.,
  Nilforoshan, H., Nyarko, J., Ogut, G., Orr, L., Papadimitriou, I., Park,
  J.~S., Piech, C., Portelance, E., Potts, C., Raghunathan, A., Reich, R., Ren,
  H., Rong, F., Roohani, Y., Ruiz, C., Ryan, J., Ré, C., Sadigh, D., Sagawa,
  S., Santhanam, K., Shih, A., Srinivasan, K., Tamkin, A., Taori, R., Thomas,
  A.~W., Tramèr, F., Wang, R.~E., Wang, W., Wu, B., Wu, J., Wu, Y., Xie,
  S.~M., Yasunaga, M., You, J., Zaharia, M., Zhang, M., Zhang, T., Zhang, X.,
  Zhang, Y., Zheng, L., Zhou, K., and Liang, P.
\newblock On the opportunities and risks of foundation models.
\newblock \emph{arXiv preprint arXiv:2108.07258}, 2022.

\bibitem[Brown et~al.(2020)Brown, Mann, Ryder, Subbiah, Kaplan, Dhariwal,
  Neelakantan, Shyam, Sastry, Askell, Agarwal, Herbert-Voss, Krueger, Henighan,
  Child, Ramesh, Ziegler, Wu, Winter, Hesse, Chen, Sigler, Litwin, Gray, Chess,
  Clark, Berner, McCandlish, Radford, Sutskever, and
  Amodei]{brown_language_2020}
Brown, T.~B., Mann, B., Ryder, N., Subbiah, M., Kaplan, J., Dhariwal, P.,
  Neelakantan, A., Shyam, P., Sastry, G., Askell, A., Agarwal, S.,
  Herbert-Voss, A., Krueger, G., Henighan, T., Child, R., Ramesh, A., Ziegler,
  D.~M., Wu, J., Winter, C., Hesse, C., Chen, M., Sigler, E., Litwin, M., Gray,
  S., Chess, B., Clark, J., Berner, C., McCandlish, S., Radford, A., Sutskever,
  I., and Amodei, D.
\newblock Language models are few-shot learners.
\newblock In \emph{Advances in {Neural} {Information} {Processing} {Systems}},
  2020.

\bibitem[Chattopadhay et~al.(2018)Chattopadhay, Sarkar, Howlader, and
  Balasubramanian]{chattopadhay2018grad}
Chattopadhay, A., Sarkar, A., Howlader, P., and Balasubramanian, V.~N.
\newblock Grad-cam++: Generalized gradient-based visual explanations for deep
  convolutional networks.
\newblock In \emph{2018 IEEE winter conference on applications of computer
  vision (WACV)}, pp.\  839--847. IEEE, 2018.

\bibitem[Chen et~al.(2022)Chen, Rosenfeld, Sellke, Ma, and
  Risteski]{chen_iterative_2022}
Chen, Y., Rosenfeld, E., Sellke, M., Ma, T., and Risteski, A.
\newblock Iterative feature matching: {Toward} provable domain generalization
  with logarithmic environments.
\newblock In \emph{Advances in {Neural} {Information} {Processing} {Systems}},
  2022.

\bibitem[Chen et~al.(2023)Chen, Huang, Zhou, Bian, Han, and
  Cheng]{chen_understanding_2023}
Chen, Y., Huang, W., Zhou, K., Bian, Y., Han, B., and Cheng, J.
\newblock Understanding and improving feature learning for out-of-distribution
  generalization.
\newblock In \emph{Advances in {Neural} {Information} {Processing} {Systems}},
  2023.

\bibitem[DeGrave et~al.(2021)DeGrave, Janizek, and Lee]{degrave_ai_2021}
DeGrave, A.~J., Janizek, J.~D., and Lee, S.-I.
\newblock {AI} for radiographic {COVID}-19 detection selects shortcuts over
  signal.
\newblock \emph{Nature Machine Intelligence}, 3\penalty0 (7):\penalty0
  610--619, 2021.
\newblock ISSN 2522-5839.

\bibitem[Deng et~al.(2009)Deng, Dong, Socher, Li, Li, and
  Fei-Fei]{deng_imagenet:_2009}
Deng, J., Dong, W., Socher, R., Li, L.-J., Li, K., and Fei-Fei, L.
\newblock Imagenet: {A} large-scale hierarchical image database.
\newblock In \emph{{CVPR}}, pp.\  248--255, 2009.

\bibitem[Elhage et~al.(2022)Elhage, Hume, Olsson, Schiefer, Henighan, Kravec,
  Hatfield-Dodds, Lasenby, Drain, and Chen]{elhage_toy_2022}
Elhage, N., Hume, T., Olsson, C., Schiefer, N., Henighan, T., Kravec, S.,
  Hatfield-Dodds, Z., Lasenby, R., Drain, D., and Chen, C.
\newblock Toy models of superposition.
\newblock \emph{arXiv preprint arXiv:2209.10652}, 2022.

\bibitem[Frankle \& Carbin(2019)Frankle and Carbin]{frankle_lottery_2019}
Frankle, J. and Carbin, M.
\newblock The lottery ticket hypothesis: {Finding} sparse, trainable neural
  networks.
\newblock In \emph{{ICLR}}, 2019.

\bibitem[Gandelsman et~al.(2024)Gandelsman, Efros, and
  Steinhardt]{gandelsman_interpreting_2024}
Gandelsman, Y., Efros, A.~A., and Steinhardt, J.
\newblock Interpreting {CLIP}'s image representation via text-based
  decomposition.
\newblock In \emph{International {Conference} on {Learning} {Representations}},
  2024.

\bibitem[Ganin et~al.(2016)Ganin, Ustinova, Ajakan, Germain, Larochelle,
  Laviolette, Marchand, and Lempitsky]{ganin_domain-adversarial_2016}
Ganin, Y., Ustinova, E., Ajakan, H., Germain, P., Larochelle, H., Laviolette,
  F., Marchand, M., and Lempitsky, V.
\newblock Domain-adversarial training of neural networks.
\newblock \emph{Journal of Machine Learning Research}, 17\penalty0
  (59):\penalty0 1--35, 2016.

\bibitem[Geirhos et~al.(2018)Geirhos, Temme, Rauber, Schütt, Bethge, and
  Wichmann]{geirhos_generalisation_2018}
Geirhos, R., Temme, C. R.~M., Rauber, J., Schütt, H.~H., Bethge, M., and
  Wichmann, F.~A.
\newblock Generalisation in humans and deep neural networks.
\newblock In \emph{Advances in {Neural} {Information} {Processing} {Systems}},
  pp.\  7549--7561, 2018.

\bibitem[Geirhos et~al.(2020)Geirhos, Jacobsen, Michaelis, Zemel, Brendel,
  Bethge, and Wichmann]{geirhos_shortcut_2020}
Geirhos, R., Jacobsen, J.-H., Michaelis, C., Zemel, R., Brendel, W., Bethge,
  M., and Wichmann, F.~A.
\newblock Shortcut learning in deep neural networks.
\newblock \emph{Nature Machine Intelligence}, 2\penalty0 (11):\penalty0
  665--673, 2020.
\newblock ISSN 2522-5839.

\bibitem[Gould et~al.(2023)Gould, Ong, Ogden, and Conmy]{gould_successor_2023}
Gould, R., Ong, E., Ogden, G., and Conmy, A.
\newblock Successor heads: {Recurring}, interpretable attention heads in the
  wild.
\newblock \emph{arXiv preprint arXiv:2312.09230}, 2023.

\bibitem[Gulrajani \& Lopez-Paz(2021)Gulrajani and
  Lopez-Paz]{gulrajani_search_2021}
Gulrajani, I. and Lopez-Paz, D.
\newblock In search of lost domain generalization.
\newblock In \emph{{ICLR}}, 2021.

\bibitem[Gurnee \& Tegmark(2024)Gurnee and Tegmark]{gurnee_language_2024}
Gurnee, W. and Tegmark, M.
\newblock Language models represent space and time.
\newblock In \emph{International {Conference} on {Learning} {Representations}},
  2024.

\bibitem[HaoChen et~al.(2022)HaoChen, Wei, Kumar, and Ma]{haochen_beyond_2022}
HaoChen, J.~Z., Wei, C., Kumar, A., and Ma, T.
\newblock Beyond separability: {Analyzing} the linear transferability of
  contrastive representations to related subpopulations.
\newblock \emph{arXiv preprint arXiv:2204.02683}, 2022.

\bibitem[Heinzerling \& Inui(2024)Heinzerling and
  Inui]{heinzerling_monotonic_2024}
Heinzerling, B. and Inui, K.
\newblock Monotonic representation of numeric properties in language models.
\newblock \emph{arXiv preprint arXiv:2403.10381}, 2024.

\bibitem[Hendrycks \& Gimpel(2016)Hendrycks and
  Gimpel]{hendrycks_gaussian_2016}
Hendrycks, D. and Gimpel, K.
\newblock Gaussian error linear units ({GELUs}).
\newblock \emph{arXiv preprint arXiv:1606.08415}, 2016.

\bibitem[Hendrycks et~al.(2021{\natexlab{a}})Hendrycks, Basart, Mu, Kadavath,
  Wang, Dorundo, Desai, Zhu, Parajuli, Guo, Song, Steinhardt, and
  Gilmer]{hendrycks_many_2021}
Hendrycks, D., Basart, S., Mu, N., Kadavath, S., Wang, F., Dorundo, E., Desai,
  R., Zhu, T., Parajuli, S., Guo, M., Song, D., Steinhardt, J., and Gilmer, J.
\newblock The many faces of robustness: {A} critical analysis of
  out-of-distribution generalization.
\newblock In \emph{{ICCV}}, 2021{\natexlab{a}}.

\bibitem[Hendrycks et~al.(2021{\natexlab{b}})Hendrycks, Zhao, Basart,
  Steinhardt, and Song]{hendrycks_natural_2021}
Hendrycks, D., Zhao, K., Basart, S., Steinhardt, J., and Song, D.
\newblock Natural adversarial examples.
\newblock In \emph{{CVPR}}, 2021{\natexlab{b}}.

\bibitem[Hinton et~al.(2014)Hinton, Vinyals, and Dean]{hinton_distilling_2014}
Hinton, G., Vinyals, O., and Dean, J.
\newblock Distilling the knowledge in a neural network.
\newblock In \emph{Advances in {Neural} {Information} {Processing} {Systems}
  {Deep} {Learning} {Workshop}}, 2014.

\bibitem[Huang et~al.(2020)Huang, Wang, Xing, and
  Huang]{huang_self-challenging_2020}
Huang, Z., Wang, H., Xing, E.~P., and Huang, D.
\newblock Self-challenging improves cross-domain generalization.
\newblock In \emph{European {Conference} on {Computer} {Vision}}, pp.\
  124--140, 2020.

\bibitem[Idnani et~al.(2023)Idnani, Madan, Goyal, Schwab, and
  Vedantam]{idnani_dont_2023}
Idnani, D., Madan, V., Goyal, N., Schwab, D.~J., and Vedantam, S.~R.
\newblock Don’t forget the nullspace! {Nullspace} occupancy as a mechanism
  for out of distribution failure.
\newblock 2023.

\bibitem[Kamath et~al.(2021)Kamath, Tangella, Sutherland, and
  Srebro]{kamath_does_2021}
Kamath, P., Tangella, A., Sutherland, D.~J., and Srebro, N.
\newblock Does invariant risk minimization capture invariance?
\newblock In \emph{{AISTATS}}, 2021.

\bibitem[Karp et~al.(2021)Karp, Winston, Li, and Singh]{karp_local_2021}
Karp, S., Winston, E., Li, Y., and Singh, A.
\newblock Local signal adaptivity: {Provable} feature learning in neural
  networks beyond kernels.
\newblock In \emph{Advances in {Neural} {Information} {Processing} {Systems}},
  pp.\  24883--24897, 2021.

\bibitem[Kim et~al.(2021)Kim, Yoo, Park, Kim, and Lee]{kim_selfreg_2021}
Kim, D., Yoo, Y., Park, S., Kim, J., and Lee, J.
\newblock {SelfReg}: {Self}-supervised contrastive regularization for domain
  generalization.
\newblock In \emph{2021 {IEEE}/{CVF} {International} {Conference} on {Computer}
  {Vision} ({ICCV})}, pp.\  9599--9608, 2021.

\bibitem[Kingma \& Ba(2015)Kingma and Ba]{kingma_adam:_2015}
Kingma, D.~P. and Ba, J.
\newblock Adam: {A} method for stochastic optimization.
\newblock In \emph{International {Conference} on {Learning} {Representations}},
  2015.

\bibitem[Koh et~al.(2021)Koh, Sagawa, Marklund, Xie, Zhang, Balsubramani, Hu,
  Yasunaga, Phillips, Gao, Lee, David, Stavness, Guo, Earnshaw, Haque, Beery,
  Leskovec, Kundaje, Pierson, Levine, Finn, and Liang]{koh_wilds_2021}
Koh, P.~W., Sagawa, S., Marklund, H., Xie, S.~M., Zhang, M., Balsubramani, A.,
  Hu, W., Yasunaga, M., Phillips, R.~L., Gao, I., Lee, T., David, E., Stavness,
  I., Guo, W., Earnshaw, B.~A., Haque, I.~S., Beery, S., Leskovec, J., Kundaje,
  A., Pierson, E., Levine, S., Finn, C., and Liang, P.
\newblock Wilds: {A} benchmark of in-the-wild distribution shifts.
\newblock In \emph{{ICML}}, 2021.

\bibitem[Krueger et~al.(2021)Krueger, Caballero, Jacobsen, Zhang, Binas, Zhang,
  Priol, and Courville]{krueger_out--distribution_2021}
Krueger, D., Caballero, E., Jacobsen, J.-H., Zhang, A., Binas, J., Zhang, D.,
  Priol, R.~L., and Courville, A.
\newblock Out-of-distribution generalization via risk extrapolation (rex).
\newblock In \emph{{ICML}}, 2021.

\bibitem[Kumar et~al.(2022)Kumar, Raghunathan, Jones, Ma, and
  Liang]{kumar_fine-tuning_2022}
Kumar, A., Raghunathan, A., Jones, R., Ma, T., and Liang, P.
\newblock Fine-tuning can distort pretrained features and underperform
  out-of-distribution.
\newblock In \emph{International {Conference} on {Learning} {Representations}},
  2022.

\bibitem[Lee et~al.(2023)Lee, Chen, Tajwar, Kumar, Yao, Liang, and
  Finn]{lee_surgical_2023}
Lee, Y., Chen, A.~S., Tajwar, F., Kumar, A., Yao, H., Liang, P., and Finn, C.
\newblock Surgical fine-tuning improves adaptation to distribution shifts.
\newblock In \emph{International {Conference} on {Learning} {Representations}},
  2023.

\bibitem[Li \& Flanigan(2023)Li and Flanigan]{li_task_2023}
Li, C. and Flanigan, J.
\newblock Task contamination: {Language} models may not be few-shot anymore.
\newblock \emph{arXiv preprint arXiv:2312.16337}, 2023.

\bibitem[Li et~al.(2017)Li, Yang, Song, and Hospedales]{li_deeper_2017}
Li, D., Yang, Y., Song, Y.-Z., and Hospedales, T.~M.
\newblock Deeper, broader and artier domain generalization.
\newblock In \emph{2017 {IEEE} {International} {Conference} on {Computer}
  {Vision} ({ICCV})}, pp.\  5543--5551, 2017.

\bibitem[Li et~al.(2018)Li, Yang, Song, and Hospedales]{li_learning_2018}
Li, D., Yang, Y., Song, Y.-Z., and Hospedales, T.~M.
\newblock Learning to generalize: {Meta}-learning for domain generalization.
\newblock In \emph{{AAAI}}, 2018.

\bibitem[Liu et~al.(2023)Liu, Ning, Teng, Liu, Zhou, and
  Zhang]{liu_evaluating_2023}
Liu, H., Ning, R., Teng, Z., Liu, J., Zhou, Q., and Zhang, Y.
\newblock Evaluating the logical reasoning ability of chatgpt and gpt-4.
\newblock \emph{arXiv preprint arXiv:2304.03439}, 2023.

\bibitem[Loshchilov \& Hutter(2017)Loshchilov and Hutter]{loshchilov_sgdr_2017}
Loshchilov, I. and Hutter, F.
\newblock {SGDR}: {Stochastic} gradient descent with warm restarts.
\newblock In \emph{International {Conference} on {Learning} {Representations}},
  2017.

\bibitem[Loshchilov \& Hutter(2019)Loshchilov and
  Hutter]{loshchilov_decoupled_2019}
Loshchilov, I. and Hutter, F.
\newblock Decoupled weight decay regularization.
\newblock In \emph{International {Conference} on {Learning} {Representations}},
  2019.

\bibitem[Marks \& Tegmark(2023)Marks and Tegmark]{marks_geometry_2023}
Marks, S. and Tegmark, M.
\newblock The geometry of truth: {Emergent} linear structure in large language
  model representations of true/false datasets.
\newblock \emph{arXiv preprint arXiv:2310.06824}, 2023.

\bibitem[Mayilvahanan et~al.(2024)Mayilvahanan, Wiedemer, Rusak, Bethge, and
  Brendel]{mayilvahanan_does_2024}
Mayilvahanan, P., Wiedemer, T., Rusak, E., Bethge, M., and Brendel, W.
\newblock Does {CLIP}'s generalization performance mainly stem from high
  train-test similarity?
\newblock In \emph{International {Conference} on {Learning} {Representations}},
  2024.

\bibitem[Miller et~al.(2021)Miller, Taori, Raghunathan, Sagawa, Koh, Shankar,
  Liang, Carmon, and Schmidt]{miller_accuracy_2021}
Miller, J., Taori, R., Raghunathan, A., Sagawa, S., Koh, P.~W., Shankar, V.,
  Liang, P., Carmon, Y., and Schmidt, L.
\newblock Accuracy on the line: {On} the strong correlation between
  out-of-distribution and in-distribution generalization.
\newblock In \emph{{ICML}}, 2021.

\bibitem[Mitrovic et~al.(2021)Mitrovic, McWilliams, Walker, Buesing, and
  Blundell]{mitrovic_representation_2021}
Mitrovic, J., McWilliams, B., Walker, J.~C., Buesing, L.~H., and Blundell, C.
\newblock Representation learning via invariant causal mechanisms.
\newblock In \emph{{ICLR}}, 2021.

\bibitem[Morcos et~al.(2018)Morcos, Barrett, Rabinowitz, and
  Botvinick]{morcos_importance_2018}
Morcos, A.~S., Barrett, D. G.~T., Rabinowitz, N.~C., and Botvinick, M.
\newblock On the importance of single directions for generalization.
\newblock In \emph{International {Conference} on {Learning} {Representations}},
  2018.
\newblock URL \url{http://arxiv.org/abs/1803.06959}.

\bibitem[Nagarajan et~al.(2021)Nagarajan, Andreassen, and
  Neyshabur]{nagarajan_understanding_2021}
Nagarajan, V., Andreassen, A., and Neyshabur, B.
\newblock Understanding the failure modes of out-of-distribution
  generalization.
\newblock In \emph{{ICLR}}, 2021.

\bibitem[Nam et~al.(2021)Nam, Lee, Park, Yoon, and Yoo]{nam_reducing_2021}
Nam, H., Lee, H., Park, J., Yoon, W., and Yoo, D.
\newblock Reducing domain gap by reducing style bias.
\newblock In \emph{Proceedings of the {IEEE}/{CVF} {Conference} on {Computer}
  {Vision} and {Pattern} {Recognition}}, pp.\  8690--8699, 2021.

\bibitem[Oliveira(2010)]{oliveira2009concentration}
Oliveira, R.~I.
\newblock Concentration of the adjacency matrix and of the laplacian in random
  graphs with independent edges.
\newblock \emph{arXiv preprint arXiv:0911.0600}, 2010.

\bibitem[Park et~al.(2023)Park, Choe, and Veitch]{park_linear_2023}
Park, K., Choe, Y.~J., and Veitch, V.
\newblock The linear representation hypothesis and the geometry of large
  language models.
\newblock \emph{arXiv preprint arXiv:2311.03658}, 2023.

\bibitem[Peng et~al.(2019)Peng, Bai, Xia, Huang, Saenko, and
  Wang]{peng_moment_2019}
Peng, X., Bai, Q., Xia, X., Huang, Z., Saenko, K., and Wang, B.
\newblock Moment matching for multi-source domain adaptation.
\newblock In \emph{{ICCV}}, pp.\  1406--1415, 2019.

\bibitem[Pezeshki et~al.(2021)Pezeshki, Kaba, Bengio, Courville, Precup, and
  Lajoie]{pezeshki_gradient_2021}
Pezeshki, M., Kaba, S.-O., Bengio, Y., Courville, A., Precup, D., and Lajoie,
  G.
\newblock Gradient starvation: {A} learning proclivity in neural networks.
\newblock In \emph{Advances in {Neural} {Information} {Processing} {Systems}},
  pp.\  1256--1272, 2021.

\bibitem[Radford et~al.(2021)Radford, Kim, Hallacy, Ramesh, Goh, Agarwal,
  Sastry, Askell, Mishkin, Clark, Krueger, and
  Sutskever]{radford_learning_2021}
Radford, A., Kim, J.~W., Hallacy, C., Ramesh, A., Goh, G., Agarwal, S., Sastry,
  G., Askell, A., Mishkin, P., Clark, J., Krueger, G., and Sutskever, I.
\newblock Learning transferable visual models from natural language
  supervision.
\newblock In \emph{International {Conference} on {Machine} {Learning}}, pp.\
  8748--8763, 2021.

\bibitem[Recht et~al.(2019)Recht, Roelofs, Schmidt, and
  Shankar]{recht_imagenet_2019}
Recht, B., Roelofs, R., Schmidt, L., and Shankar, V.
\newblock Do imagenet classifiers generalize to imagenet?
\newblock In \emph{{ICML}}, 2019.

\bibitem[Rosenfeld et~al.(2021)Rosenfeld, Ravikumar, and
  Risteski]{rosenfeld_risks_2021}
Rosenfeld, E., Ravikumar, P., and Risteski, A.
\newblock The risks of invariant risk minimization.
\newblock In \emph{{ICLR}}, 2021.

\bibitem[Rosenfeld et~al.(2022)Rosenfeld, Ravikumar, and
  Risteski]{rosenfeld_domain-adjusted_2022}
Rosenfeld, E., Ravikumar, P., and Risteski, A.
\newblock Domain-adjusted regression or: {ERM} may already learn features
  sufficient for out-of-distribution generalization.
\newblock \emph{arXiv preprint arXiv:2202.06856}, 2022.

\bibitem[Sagawa et~al.(2020{\natexlab{a}})Sagawa, Koh, Hashimoto, and
  Liang]{sagawa_distributionally_2020}
Sagawa, S., Koh, P.~W., Hashimoto, T.~B., and Liang, P.
\newblock Distributionally robust neural networks for group shifts: {On} the
  importance of regularization for worst-case generalization.
\newblock In \emph{{ICLR}}, 2020{\natexlab{a}}.

\bibitem[Sagawa et~al.(2020{\natexlab{b}})Sagawa, Raghunathan, Koh, and
  Liang]{sagawa_investigation_2020}
Sagawa, S., Raghunathan, A., Koh, P.~W., and Liang, P.
\newblock An investigation of why overparameterization exacerbates spurious
  correlations.
\newblock In \emph{{ICML}}, 2020{\natexlab{b}}.

\bibitem[Salman et~al.(2019)Salman, Li, Razenshteyn, Zhang, Zhang, Bubeck, and
  Yang]{salman2019provably}
Salman, H., Li, J., Razenshteyn, I., Zhang, P., Zhang, H., Bubeck, S., and
  Yang, G.
\newblock Provably robust deep learning via adversarially trained smoothed
  classifiers.
\newblock In \emph{Advances in Neural Information Processing Systems},
  volume~32, 2019.

\bibitem[Schölkopf et~al.(2021)Schölkopf, Locatello, Bauer, Ke, Kalchbrenner,
  Goyal, and Bengio]{scholkopf_toward_2021}
Schölkopf, B., Locatello, F., Bauer, S., Ke, N.~R., Kalchbrenner, N., Goyal,
  A., and Bengio, Y.
\newblock Toward causal representation learning.
\newblock \emph{Proceedings of the IEEE}, 109\penalty0 (5):\penalty0 612--634,
  2021.
\newblock ISSN 1558-2256.

\bibitem[Shah et~al.(2020)Shah, Tamuly, and Raghunathan]{shah_pitfalls_2020}
Shah, H., Tamuly, K., and Raghunathan, A.
\newblock The pitfalls of simplicity bias in neural networks.
\newblock In \emph{Advances in {Neural} {Information} {Processing} {Systems}},
  2020.

\bibitem[Shen et~al.(2022)Shen, Jones, Kumar, Xie, HaoChen, Ma, and
  Liang]{shen_connect_2022}
Shen, K., Jones, R., Kumar, A., Xie, S.~M., HaoChen, J.~Z., Ma, T., and Liang,
  P.
\newblock Connect, not collapse: {Explaining} contrastive learning for
  unsupervised domain adaptation.
\newblock In \emph{International {Conference} on {Machine} {Learning}}, volume
  19847-19878, 2022.

\bibitem[Shi et~al.(2022)Shi, Seely, Torr, Siddharth, Hannun, Usunier, and
  Synnaeve]{shi_gradient_2022}
Shi, Y., Seely, J., Torr, P. H.~S., Siddharth, N., Hannun, A., Usunier, N., and
  Synnaeve, G.
\newblock Gradient matching for domain generalization.
\newblock In \emph{International {Conference} on {Learning} {Representations}},
  2022.

\bibitem[Singh et~al.(2022)Singh, Gustafson, Adcock, De~Freitas~Reis, Gedik,
  Kosaraju, Mahajan, Girshick, Dollar, and Van
  Der~Maaten]{singh_revisiting_2022}
Singh, M., Gustafson, L., Adcock, A., De~Freitas~Reis, V., Gedik, B., Kosaraju,
  R.~P., Mahajan, D., Girshick, R., Dollar, P., and Van Der~Maaten, L.
\newblock Revisiting weakly supervised pre-training of visual perception
  models.
\newblock In \emph{{IEEE}/{CVF} {Conference} on {Computer} {Vision} and
  {Pattern} {Recognition} ({CVPR})}, pp.\  794--804, 2022.

\bibitem[Sun \& Saenko(2016)Sun and Saenko]{sun_deep_2016}
Sun, B. and Saenko, K.
\newblock Deep coral: {Correlation} alignment for deep domain adaptation.
\newblock In \emph{European conference on computer vision}, pp.\  443--450,
  2016.

\bibitem[Tan et~al.(2020)Tan, Peng, and Saenko]{tan_class-imbalanced_2020}
Tan, S., Peng, X., and Saenko, K.
\newblock Class-imbalanced domain adaptation: {An} empirical odyssey.
\newblock \emph{arXiv preprint arXiv:1910.10320}, 2020.

\bibitem[Taori et~al.(2020)Taori, Dave, Shankar, Carlini, Recht, and
  Schmidt]{taori_measuring_2020}
Taori, R., Dave, A., Shankar, V., Carlini, N., Recht, B., and Schmidt, L.
\newblock Measuring robustness to natural distribution shifts in image
  classification.
\newblock In \emph{Advances in {Neural} {Information} {Processing} {Systems}},
  pp.\  18583--18599, 2020.

\bibitem[Tian et~al.(2020)Tian, Krishnan, and Isola]{tian_contrastive_2020}
Tian, Y., Krishnan, D., and Isola, P.
\newblock Contrastive representation distillation.
\newblock In \emph{International {Conference} on {Learning} {Representations}},
  2020.

\bibitem[Torralba \& Efros(2011)Torralba and Efros]{torralba_unbiased_2011}
Torralba, A. and Efros, A.~A.
\newblock Unbiased look at dataset bias.
\newblock In \emph{{CVPR}}, pp.\  1521--1528, 2011.

\bibitem[Tripuraneni et~al.(2020)Tripuraneni, Jordan, and
  Jin]{tripuraneni_theory_2020}
Tripuraneni, N., Jordan, M.~I., and Jin, C.
\newblock On the theory of transfer learning: {The} importance of task
  diversity.
\newblock In \emph{Advances in {Neural} {Information} {Processing} {Systems}},
  pp.\  7852--7862, 2020.

\bibitem[Tropp(2012)]{tropp2012user}
Tropp, J.~A.
\newblock User-friendly tail bounds for sums of random matrices.
\newblock \emph{Foundations of computational mathematics}, 12:\penalty0
  389--434, 2012.

\bibitem[Vapnik(1999)]{vapnik_nature_1999}
Vapnik, V.
\newblock \emph{The nature of statistical learning theory}.
\newblock 1999.

\bibitem[Wang et~al.(2019)Wang, Ge, Xing, and Lipton]{wang_learning_2019}
Wang, H., Ge, S., Xing, E.~P., and Lipton, Z.~C.
\newblock Learning robust global representations by penalizing local predictive
  power.
\newblock In \emph{Advances in {Neural} {Information} {Processing} {Systems}},
  pp.\  10506--10518, 2019.

\bibitem[Wang et~al.(2022)Wang, Si, Li, and Zhao]{wang_provable_2022}
Wang, H., Si, H., Li, B., and Zhao, H.
\newblock Provable domain generalization via invariant-feature subspace
  recovery.
\newblock In \emph{International {Conference} on {Machine} {Learning}}, pp.\
  23018--23033, 2022.

\bibitem[Wiles et~al.(2022)Wiles, Gowal, Stimberg, Rebuffi, Ktena, Dvijotham,
  and Cemgil]{wiles_fine-grained_2022}
Wiles, O., Gowal, S., Stimberg, F., Rebuffi, S.-A., Ktena, I., Dvijotham, K.,
  and Cemgil, T.
\newblock A fine-grained analysis on distribution shift.
\newblock In \emph{{ICLR}}, 2022.

\bibitem[Wortsman et~al.(2022)Wortsman, Ilharco, Kim, Li, Kornblith, Roelofs,
  Lopes, Hajishirzi, Farhadi, Namkoong, and Schmidt]{wortsman_robust_2022}
Wortsman, M., Ilharco, G., Kim, J.~W., Li, M., Kornblith, S., Roelofs, R.,
  Lopes, R.~G., Hajishirzi, H., Farhadi, A., Namkoong, H., and Schmidt, L.
\newblock Robust fine-tuning of zero-shot models.
\newblock In \emph{2022 {IEEE}/{CVF} {Conference} on {Computer} {Vision} and
  {Pattern} {Recognition} ({CVPR})}, pp.\  7949--7961, 2022.

\bibitem[Xu et~al.(2021)Xu, Li, Zhang, Du, Kawarabayashi, and
  Jegelka]{xu_how_2021}
Xu, K., Li, J., Zhang, M., Du, S.~S., Kawarabayashi, K.-i., and Jegelka, S.
\newblock How neural networks extrapolate: {From} feedforward to graph neural
  networks.
\newblock In \emph{{ICLR}}, 2021.

\bibitem[Ye et~al.(2021)Ye, Xie, Cai, Li, Li, and Wang]{ye_towards_2021}
Ye, H., Xie, C., Cai, T., Li, R., Li, Z., and Wang, L.
\newblock Towards a theoretical framework of out-of-distribution
  generalization.
\newblock In \emph{Advances in {Neural} {Information} {Processing} {Systems}},
  2021.

\bibitem[Zhang et~al.(2018)Zhang, Cisse, Dauphin, and
  Lopez-Paz]{zhang_mixup_2018}
Zhang, H., Cisse, M., Dauphin, Y.~N., and Lopez-Paz, D.
\newblock Mixup: {Beyond} empirical risk minimization.
\newblock In \emph{{ICLR}}, 2018.

\bibitem[Zhang \& Bottou(2023)Zhang and Bottou]{zhang_learning_2023}
Zhang, J. and Bottou, L.
\newblock Learning useful representations for shifting tasks and distributions.
\newblock In \emph{International {Conference} on {Machine} {Learning}}, 2023.

\bibitem[Zhang et~al.(2021)Zhang, Marklund, Dhawan, Gupta, Levine, and
  Finn]{zhang_adaptive_2021}
Zhang, M., Marklund, H., Dhawan, N., Gupta, A., Levine, S., and Finn, C.
\newblock Adaptive risk minimization: {Learning} to adapt to domain shift.
\newblock In \emph{Advances in {Neural} {Information} {Processing} {Systems}},
  pp.\  23664--23678, 2021.

\end{thebibliography}
\bibliographystyle{icml2024}

%%%%%%%%%%%%%%%%%%%%%%%%%%%%%%%%%%%%%%%%%%%%%%%%%%%%%%%%%%%%%%%%%%%%%%%%%%%%%%%
%%%%%%%%%%%%%%%%%%%%%%%%%%%%%%%%%%%%%%%%%%%%%%%%%%%%%%%%%%%%%%%%%%%%%%%%%%%%%%%
% APPENDIX
%%%%%%%%%%%%%%%%%%%%%%%%%%%%%%%%%%%%%%%%%%%%%%%%%%%%%%%%%%%%%%%%%%%%%%%%%%%%%%%
%%%%%%%%%%%%%%%%%%%%%%%%%%%%%%%%%%%%%%%%%%%%%%%%%%%%%%%%%%%%%%%%%%%%%%%%%%%%%%%
\newpage
\appendix
\onecolumn
% \section{You \emph{can} have an appendix here.}

% You can have as much text here as you want. The main body must be at most $8$ pages long.
% For the final version, one more page can be added.
% If you want, you can use an appendix like this one.  

% The $\mathtt{\backslash onecolumn}$ command above can be kept in place if you prefer a one-column appendix, or can be removed if you prefer a two-column appendix.  Apart from this possible change, the style (font size, spacing, margins, page numbering, etc.) should be kept the same as the main body.

\begin{center}
\LARGE
\textsc{Appendix}
\vspace{0.5em}
\end{center}

The appendix is divided into two parts for readability. In~\hyperlink{app:theory}{Appendix \uppercase\expandafter{\romannumeral 1}}, we provide complete proofs of our theoretical results. In~\hyperlink{app:exp}{Appendix \uppercase\expandafter{\romannumeral 2}}, we present experimental details and additional empirical results.

\begin{center}
\vspace{0.5em}
\hypertarget{app:theory}{}
\Large
\textsc{Appendix {\uppercase\expandafter{\romannumeral 1}}: Proofs of Theoretical Results}
\vspace{0.5em}
\end{center}

In this part of the appendix, we provide complete proofs of our theorems in the main text. A quick overview of the structure of this part is as follows:
\begin{itemize}[leftmargin=2em]
\setlength\itemsep{0.5em}
	\item In~\secref{appsec:preliminary}, we introduce the preliminaries and some lemmas that characterize the neuron properties at random initialization.
	\item In~\secref{appsec:proof_main}, we provide the proofs of our main theorems on activation asymmetry (\theoref{theo:activation}), feature contamination (\theoref{theo:feature}), and ID/OOD risks (\theoref{theo:risk}).
	\item In~\secref{appsec:proof_nonlinear}, we provide the proof of~\theoref{prop:linear} on linear neural networks.
	\item In~\secref{appsec:prob_lemma}, we provide the basic probability theory lemmas used in our proofs for completeness.
\end{itemize}

\section{Preliminaries}
\label{appsec:preliminary}

\paragraph{Notation.} Throughout the appendix, we overload $\dtrain$ and $\dtest$ to allow them to denote (joint) training and test distributions on both $\xspace\times\yspace$ and $\zspace\times\yspace$. We also use $\dtrain$ and $\dtest$ to denote the corresponding marginal distributions on $\xspace$, $\yspace$ and $\zspace$.
% (recall the data generation process in~\defref{def:dgp}). 
For presentation brevity, unless otherwise stated, we use the shorthand $\expect{(\cdot)}{}$ and $\prob{(\cdot)}{}$ to denote $\expect{(\cdot)\sim\dtrain}{}$ and $\prob{(\cdot)\sim\dtrain}{}$, respectively, and use the shorthand $h$ to denote $h^{(t)}$ when it is clear from the context. As in~\defref{def:dgp}, we denote the moments of each $\rvz_j$ on the training distribution by $\mu_{jp}\defeq\expect{\rvz\sim\dtrain}{\rvz_j^p}$ for every $j\in[\di]$ and $p\in[3]$, and use the shorthand $\muj$ to denote $\mu_{j1}$ when it is clear from the context.

\subsection{Weight Decomposition and Gradient Calculations}

We begin by recalling that each weight vector $\rvw_k\in\mathbb{R}^d,k\in[m]$ (i.e., the learned feature of the $k$-th neuron) in the network can be decomposed into the sum of its projections to different feature vectors:
\begin{equation}
\its{\rvw_k}{t} = \sum_{j\in\score}\inprod{\its{\rvw_k}{t}}{\bm{m}_j}\bm{m}_j + \sum_{j\in\sbg}\inprod{\its{\rvw_k}{t}}{\bm{m}_j}\bm{m}_j +  \sum_{j\in[d]\setminus[\di]}\inprod{\its{\rvw_k}{t}}{\bm{m}_j}\bm{m}_j,
\label{eq:app_decompose}
\end{equation}
where $(\bm{m}_{\di+1},\ldots,\bm{m}_{d})$ are an orthogonal complement of $\bm{M}$. Since all possible inputs are generated to be in $\mathrm{span}\{\bm{m}_1,\ldots,\bm{m}_{\di}\}$ as in~\defref{def:dgp}, the last term in the RHS of Eq.~\eqref{eq:app_decompose} (i.e., the residual term in Eq.~\eqref{eq:weight_decompose} in the main text) can be neglected due to the orthogonality of different feature vector $\bm{m}_j$s. Therefore, throughout the following analysis, we will overload the notation $\wkt$ and let
\begin{equation}
\its{\rvw_k}{t} = \sum_{j\in\score}\inprod{\its{\rvw_k}{t}}{\bm{m}_j}\bm{m}_j + \sum_{j\in\sbg}\inprod{\its{\rvw_k}{t}}{\bm{m}_j}\bm{m}_j.
\end{equation}
A direct consequence of Eq.~\eqref{eq:app_decompose} is that we can analyze the feature learning process of the network by tracking the correlations between each weight vector $\its{\rvw_k}{t}$ and different feature vectors $\bm{m}_j,j\in[d]$ as the training proceeds. To this end, we need to first analyze the gradient of each neuron at every iteration.

\textbf{Gradient of each neuron.} Recall that at each iteration $t = 0,\ldots,T-1$, the SGD update for each weight vector $\rvw_k,k\in[m]$ is given by
\[
\its{\rvw_k}{t+1} \leftarrow (1 - \eta\lambda) \its{\rvw_k}{t} - \eta \nabla_{\its{\rvw_k}{t}}\frac{1}{N}\sum_{i\in[N]} \ell\left(\its{h}{t}(\its{\rvx_i}{t}), \its{\rvy_i}{t}\right),
\]
where
\[
	\its{h}{t}(\its{\rvx_i}{t}) = \sum_{k\in[m]}a_k\cdot\relu{\big(\inprod{\its{\rvw_k}{t}}{\its{\rvx_i}{t}}\big)}
\]
and $\ell(y,y') = \max\{1-yy',0\}$. We can then calculate the gradient of each neuron $\its{\rvw_k}{t}$ with regard to a certain example $(\rvx,\rvy)$:
\begin{lemma}[Gradient]
\label{lemma:gradient}
For every example $(x,y)\in\xspace\times\yspace$, every $k\in[m]$, and every iteration $t$, the following holds:
\begin{equation}
\nabla_{\its{\rvw_k}{t}} \ell\left(h(x), y\right) = -
a_k y \indicator{h(x) \le 1} \indicator{\inprod{\its{\rvw_k}{t}}{x}\ge 0} x.
\end{equation}
\end{lemma}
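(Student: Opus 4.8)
The goal is to prove Lemma~\ref{lemma:gradient}, which gives a closed-form expression for the gradient of the hinge loss with respect to a single neuron's weight vector. This is a routine but important computation, so the plan is essentially to apply the chain rule carefully, handling the non-differentiability points of $\relu$ and the hinge loss by adopting the standard subgradient conventions.

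\begin{proof}[Proof sketch]
First I would recall the functional form: $\ell(h(x),y) = \max\{1 - y\,h(x), 0\}$ with $h(x) = \sum_{k'\in[m]} a_{k'}\relu(\inprod{\its{\rvw_{k'}}{t}}{x})$. I would then apply the chain rule in three layers. (i) Differentiating the hinge loss with respect to its argument: $\frac{\partial}{\partial u}\max\{1 - yu, 0\} = -y\,\indicator{1 - yu > 0}$, and at the kink $u = 1/y$ we adopt the convention that the derivative is $-y\,\indicator{yu \le 1}$ (equivalently $\indicator{h(x)\le 1}$), which is the standard subgradient choice; since the kink is hit with probability zero under the continuous data distribution this choice is immaterial in expectation. (ii) Differentiating $h(x)$ with respect to $\its{\rvw_k}{t}$: only the $k$-th term depends on $\rvw_k$, giving $a_k \cdot \relu'(\inprod{\its{\rvw_k}{t}}{x}) \cdot x$. (iii) Using $\relu'(s) = \indicator{s \ge 0}$ (again adopting the convention $\relu'(0) = 1$, which affects only a measure-zero set of inputs). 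Multiplying the three factors together yields
\[
\nabla_{\its{\rvw_k}{t}}\ell(h(x),y) = \big(-y\,\indicator{h(x)\le 1}\big)\cdot\big(a_k\,\indicator{\inprod{\its{\rvw_k}{t}}{x}\ge 0}\,x\big) = -a_k y\,\indicator{h(x)\le 1}\,\indicator{\inprod{\its{\rvw_k}{t}}{x}\ge 0}\,x,
\]
which is exactly the claimed identity.

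The only subtlety — and it is a minor one rather than a genuine obstacle — is the justification for working with subgradients at the two non-smooth points of the composition (the hinge kink and the ReLU kink). I would note that for a fixed weight configuration the set of inputs $x$ at which either $\inprod{\its{\rvw_k}{t}}{x} = 0$ or $h(x) = 1$ has Lebesgue measure zero, and since the coordinates of $\rvz$ are drawn from distributions with densities on intervals (per~\defref{def:dgp}), these events occur with probability zero; hence the formula holds almost surely and, more to the point, the expected-gradient quantities used downstream (e.g.\ in~\lemmaref{lemma:corr_gradient}) are unaffected by the choice of convention at the kinks. No step here is technically hard; the lemma is a bookkeeping statement that fixes notation and sign conventions for the subsequent analysis.
\end{proof}
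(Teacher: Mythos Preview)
Your approach is identical to the paper's --- it just says ``the proof follows from simple calculation,'' and you spell out that calculation via the chain rule, with sensible handling of the subgradient conventions at the ReLU and hinge kinks.

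There is one slip in your derivation: the hinge-loss derivative you correctly compute is $-y\,\indicator{y\,h(x)\le 1}$, but you then assert parenthetically that this indicator is ``equivalently $\indicator{h(x)\le 1}$.'' That equivalence holds only when $y=1$; for $y=-1$ the condition $y\,h(x)\le 1$ reads $h(x)\ge -1$, not $h(x)\le 1$. The correct formula is therefore $-a_k y\,\indicator{y\,h(x)\le 1}\,\indicator{\inprod{\its{\rvw_k}{t}}{x}\ge 0}\,x$. This matches what the paper actually \emph{uses} downstream --- see the split into $\indicator{h(x)\le 1}$ for $\rvy=1$ and $\indicator{h(x)\ge -1}$ for $\rvy=-1$ in the proof of~\theoref{theo:risk} --- so the lemma's literal statement with $\indicator{h(x)\le 1}$ appears to be a shorthand or typo for $\indicator{y\,h(x)\le 1}$. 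You should flag the discrepancy rather than paper over it with a false equivalence.
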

\begin{proof}
	The proof follows from simple calculation.
\end{proof}
We then introduce a lemma that bounds the empirical growth of the correlation between each neuron $\its{\rvw_k}{t}$ and each feature vector $\bm{m}_j$ after an SGD update using population gradients.
\begin{lemma}[Gap between empirical and population gradients]
	\label{lemma:gradient_gap}
For every $k\in[m]$, every $j\in[d]$, and every iteration $t$, if the batch size $N = \poly{d}$ for some sufficiently large polynomial, then the following holds with probability at least $1 - e^{-\Omega(d)}$:
\begin{equation}
\bigg|\Biginprod{\nabla_{\its{\rvw_k}{t}}\frac{1}{N}\sum_{i\in[N]} \ell\left(\its{h}{t}(\its{\rvx_i}{t}), \its{\rvy_i}{t}\right)}{\bm{m}_j} - \Biginprod{\nabla_{\its{\rvw_k}{t}}\expect{(\rvx,\rvy)\sim\dtrain}{\ell\left(h(\rvx),\rvy\right)}}{\bm{m}_j} \bigg| \le \frac{1}{\poly{d}}.
\end{equation}
\end{lemma}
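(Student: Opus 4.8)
The plan is to reduce the statement to a routine Hoeffding bound for a bounded, conditionally i.i.d.\ sum. First I would invoke \lemmaref{lemma:gradient} to write, for a single example $(\rvx,\rvy)\sim\dtrain$,
\[
\Biginprod{\nabla_{\its{\rvw_k}{t}}\ell\big(\its{h}{t}(\rvx),\rvy\big)}{\bm{m}_j}
= -\,a_k\,\rvy\,\ind{\its{h}{t}(\rvx)\le 1}\,\ind{\inprod{\its{\rvw_k}{t}}{\rvx}\ge 0}\,\inprod{\rvx}{\bm{m}_j}.
\]
By the data model of \defref{def:dgp} and the orthonormality of $\bm{M}$, the factor $\inprod{\rvx}{\bm{m}_j}$ equals $\rvy\rvz_j$ if $j\in\score$, equals $\rvz_j$ if $j\in\sbg$, and equals $0$ if $j\in[d]\setminus[\di]$; in the last case both the empirical and the population projection vanish identically and the claim is trivial, so assume $j\in[\di]$. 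Since $\rvz_j\in[-1,1]$, $|a_k|=1/m$, $|\rvy|=1$, and the two indicators are bounded by $1$, each per-example projection $X_i := \Biginprod{\nabla_{\its{\rvw_k}{t}}\ell(\its{h}{t}(\its{\rvx_i}{t}),\its{\rvy_i}{t})}{\bm{m}_j}$ lies in $[-1/m,\,1/m]$.

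Next I would condition on the SGD history. Let $\mathcal{F}_{t-1}$ be the $\sigma$-algebra generated by the minibatches drawn at iterations $0,\ldots,t-1$; then $\its{\rvw_k}{t}$ and $\its{h}{t}$ are $\mathcal{F}_{t-1}$-measurable, while $\{(\its{\rvx_i}{t},\its{\rvy_i}{t})\}_{i\in[N]}$ is drawn i.i.d.\ from $\dtrain$ independently of $\mathcal{F}_{t-1}$. Hence, conditionally on $\mathcal{F}_{t-1}$, the $X_i$ are i.i.d., each supported on $[-1/m,1/m]$, with common conditional mean
\[
\E\big[X_1 \mid \mathcal{F}_{t-1}\big]
= \Biginprod{\nabla_{\its{\rvw_k}{t}}\expect{(\rvx,\rvy)\sim\dtrain}{\ell\big(\its{h}{t}(\rvx),\rvy\big)}}{\bm{m}_j},
\]
where I identify the population gradient with the expectation of the per-example formula above: the interchange of differentiation and expectation is legitimate because $w\mapsto\ell(h_w(\rvx),\rvy)$ is uniformly Lipschitz on the (bounded) support of $\dtrain$, so the difference quotients are dominated, and at non-smooth configurations the chosen representative is exactly the one fixed in \lemmaref{lemma:gradient}. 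Applying Hoeffding's inequality conditionally on $\mathcal{F}_{t-1}$ (range $2/m$) gives, for any $\epsilon>0$,
\[
\Pr\Big[\,\big|\tfrac1N\textstyle\sum_{i\in[N]}X_i - \E[X_1\mid\mathcal{F}_{t-1}]\big|\ge\epsilon \,\Big|\, \mathcal{F}_{t-1}\Big]
\le 2\exp\!\big(-\tfrac12 N m^2 \epsilon^2\big).
\]
Taking $\epsilon$ equal to the target accuracy $1/\poly{d}$ and recalling that the batch size $N=\poly{d}$ may be chosen with a sufficiently large exponent (and $m\ge 1$), the right-hand side is $e^{-\Omega(d)}$; since this holds for every realization of $\mathcal{F}_{t-1}$, it holds unconditionally, which is precisely the asserted bound for the fixed triple $(k,j,t)$. (If the version simultaneous over all $k\in[m]$, $j\in[d]$, and $t\le T$ is needed downstream, a union bound over these $\poly{d}$ many events inflates the failure probability by only a $\poly{d}$ factor, keeping it $e^{-\Omega(d)}$.)

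This argument is essentially routine, and I do not expect a substantive obstacle. The two points that require care are: (i) the conditioning step, where one must use that the minibatch at iteration $t$ is sampled afresh and independently of the current weights, so that the summands $X_i$ are genuinely i.i.d.\ with mean equal to the population-gradient projection; and (ii) the interchange of gradient and expectation, which is valid because the integrand is bounded and Lipschitz in $\its{\rvw_k}{t}$ and because \lemmaref{lemma:gradient} already fixes the subgradient convention at the non-differentiable points of the hinge/ReLU composition. Everything else is a direct application of a Hoeffding-type concentration bound with the explicit range $2/m$.
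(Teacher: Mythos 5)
Your proposal is correct, and it reaches the same conclusion by a slightly more elementary route than the paper. The paper first applies Cauchy--Schwarz to reduce the projected difference to the $\ell_2$-norm of the full gradient-difference vector $\its{\rmS}{t}=\sum_{i\in[N]}\its{\rmZ_i}{t}$, bounds $\lVert\its{\rmZ_i}{t}\rVert_2\le \frac{2\sqrt{\di}}{mN}$ and the variance statistic by $\frac{4\di}{m^2N}$, and then invokes matrix Bernstein (\lemmaref{lemma:matrix_bernstein}); a single high-probability event then controls the projection onto \emph{every} direction $\bm{m}_j$ simultaneously. You instead fix the direction $\bm{m}_j$, observe that each per-example projection $X_i$ is a bounded scalar in $[-1/m,1/m]$, and apply scalar Hoeffding conditionally on the SGD history, handling the $j\in[d]\setminus[\di]$ case trivially and covering all $(k,j,t)$ by a union bound over $\poly{d}$ events. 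Both arguments exploit exactly the same structure (fresh i.i.d.\ minibatch at step $t$, boundedness $|a_k|=1/m$, $|\rvz_j|\le 1$, indicators $\le 1$), and with $N=\poly{d}$ of sufficiently large degree both give failure probability $e^{-\Omega(d)}$; your version is more elementary and makes the conditioning on $\mathcal{F}_{t-1}$ and the gradient--expectation interchange explicit (points the paper leaves implicit), while the paper's vector-norm bound buys simultaneity over all $d$ directions from one event at the cost of a harmless $\sqrt{\di}$ slack. Your explicit remark that the population gradient is the expectation of the per-example (sub)gradient representative fixed in \lemmaref{lemma:gradient} is consistent with how the paper itself uses this quantity in \lemmaref{lemma:corr_gradient}, so there is no gap.
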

\begin{proof}
Recall that $\lVert\bm{m}_j\rVert_2 = 1$. Applying Cauchy-Schwarz inequality gives
\[
	\begin{aligned}
&\bigg|\Biginprod{\nabla_{\its{\rvw_k}{t}}\frac{1}{N}\sum_{i\in[N]} \ell\left(\its{h}{t}(\its{\rvx_i}{t}), \its{\rvy_i}{t}\right)}{\bm{m}_j} - \Biginprod{\nabla_{\its{\rvw_k}{t}}\expect{(\rvx,\rvy)\sim\dtrain}{\ell\left(h(\rvx), \rvy\right)}}{\bm{m}_j} \bigg|\\
 &\le \Big\lVert \underbrace{\nabla_{\its{\rvw_k}{t}}\frac{1}{N}\sum_{i\in[N]} \ell\left(\its{h}{t}(\its{\rvx_i}{t}), \its{\rvy_i}{t}\right) - \nabla_{\its{\rvw_k}{t}}\expect{(\rvx,\rvy)\sim\dtrain}{\ell\left(h(\rvx), \rvy\right)}}_{\its{\rmS}{t}} \Big\rVert_2.
	\end{aligned}
\]
We define
\[
	\its{\rmZ_i}{t} \defeq \frac{1}{N}\nabla_{\its{\rvw_k}{t}} \ell\left(\its{h}{t}(\its{\rvx_i}{t}),\its{\rvy_i}{t}\right) - \frac{1}{N}\nabla_{\its{\rvw_k}{t}}\expect{(\rvx,\rvy)\sim\dtrain}{\ell\left(h(\rvx), \rvy\right)},\ \forall i\in [N].
	\]
It is easy to see that $\its{\rmS}{t} = \sum_{i\in[N]} \its{\rmZ_i}{t}$, $\expect{}{\its{\rmZ_i}{t}} = 0$ for every $i\in[N]$, and $\forall i\ne j\in [N]$, $\its{\rmZ_i}{t}$ and $\its{\rmZ_j}{t}$ are independent.
By~\lemmaref{lemma:gradient}, we have
\[
	\its{\rmZ_i}{t} = \frac{1}{N}\expect{(\rvx,\rvy)\sim\dtrain}{a_k\rvy\indicator{h(\rvx)\le 1}\indicator{\inprod{\its{\rvw_k}{t}}{\rvx}\ge 0}\cdot \rvx} -\frac{1}{N}a_k\its{\rvy_i}{t} \indicator{\its{h}{t}(\its{\rvx_i}{t}) \le 1} \indicator{\inprod{\its{\rvw_k}{t}}{\its{\rvx_i}{t}}\ge 0} \cdot\its{\rvx_i}{t}.
	\]
Recall that $a_k\in\{-\frac{1}{m},\frac{1}{m}\}$ and $\rvx$ is generated by $\rvx = \sum_{j\in\score}\rvy\rvz_j\bm{m}_j + \sum_{j\in\sbg}\rvz_j\bm{m}_j$ according to~\defref{def:dgp}. We then have $\lVert \its{\rmZ_i}{t} \rVert_2 \le \frac{2\sqrt{\di}}{mN}$, which also indicates that $\expect{}{\inprod{\its{\rmZ_i}{t}}{\its{\rmZ_i}{t}}} \le \frac{4\di}{m^2 N^2}$. This gives
\[
	\expect{}{\inprod{\its{\rmS}{t}}{\its{\rmS}{t}}} = \sum_{i\in[N]} \expect{}{\inprod{\its{\rmZ_i}{t}}{\its{\rmZ_i}{t}}} \le \frac{4\di}{m^2 N}.
	\]
Applying matrix Bernstein's inequality (\lemmaref{lemma:matrix_bernstein}), we have
\[
\prob{}{\left[\lVert\its{\rmS}{t}\rVert_2 \ge \delta \right]} \le (d+1)\exp\left(-\frac{3m^2N^2\delta^2}{24\di + 4\sqrt{\di}\delta mN}\right)
	\]
hold with every $\delta = \frac{1}{\poly{d}}$.
Therefore, we have that for $N = \poly{d}$ with some sufficiently large polynomial, the following holds with probability at least $1 - e^{-\Omega(d)}$:
\[
	\lVert\its{\rmS}{t}\rVert_2 \le \frac{1}{\poly{d}}.
	\]
This gives the desired result.
\end{proof}
\lemmaref{lemma:gradient_gap} directly leads to the following corollary:
\begin{lemma}
\label{lemma:corr_gradient}
For every $k\in[m]$, every $j\in[\di]$, and every iteration $t$, the following holds:
\begin{equation}
	\begin{aligned}
	&\Biginprod{\nabla_{\its{\rvw_k}{t}}\frac{1}{N}\sum_{i\in[N]} \ell\left(\its{h}{t}(\its{\rvx_i}{t}), \its{\rvy_i}{t}\right)}{\bm{m}_j}\\
	 &\qquad= \biginprod{\nabla_{\its{\rvw_k}{t}}\expect{(\rvx,\rvy)\sim\dtrain}{\ell\left(h(\rvx), \rvy\right)}}{\bm{m}_j}\pm \frac{1}{\poly{d}}  \\ &\qquad= -a_k\expect{(\rvx,\rvy)\sim\dtrain}{\rvy\indicator{h(\rvx)\le 1}\indicator{\inprod{\its{\rvw_k}{t}}{\rvx}\ge 0}\cdot \rvz_j}\pm\frac{1}{\poly{d}},\quad j\in[\di].
	\end{aligned}
\end{equation}
\end{lemma}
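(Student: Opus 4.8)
The plan is to obtain the claim as a direct corollary of the two preceding lemmas: the statement amounts to an assembly step combined with one short computation from the data model, with no further concentration or trajectory analysis required.

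For the first equality, I would invoke~\lemmaref{lemma:gradient_gap} almost verbatim. Writing $\rvg_{\mathrm{emp}} \defeq \nabla_{\wk{t}}\frac{1}{N}\sum_{i\in[N]}\ell(\its{h}{t}(\its{\rvx_i}{t}),\its{\rvy_i}{t})$ and $\rvg_{\mathrm{pop}} \defeq \nabla_{\wk{t}}\expect{(\rvx,\rvy)\sim\dtrain}{\ell(h(\rvx),\rvy)}$, Cauchy--Schwarz together with $\norm{\bm{m}_j}=1$ gives $|\inp{\rvg_{\mathrm{emp}}}{\bm{m}_j}-\inp{\rvg_{\mathrm{pop}}}{\bm{m}_j}| \le \norm{\rvg_{\mathrm{emp}}-\rvg_{\mathrm{pop}}}$, and the right-hand side is at most $\frac{1}{\poly{d}}$ with probability at least $1-e^{-\Omega(d)}$ once $N=\poly{d}$ --- which is precisely what~\lemmaref{lemma:gradient_gap} establishes (indeed that lemma already states the bound after projection onto $\bm{m}_j$). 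This delivers the first $\pm\frac{1}{\poly{d}}$.

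For the second equality, I would differentiate the population hinge loss termwise using~\lemmaref{lemma:gradient} and push the linear functional $\inp{\cdot}{\bm{m}_j}$ through the expectation, obtaining
\[
\inp{\rvg_{\mathrm{pop}}}{\bm{m}_j} = -a_k\,\expect{(\rvx,\rvy)\sim\dtrain}{\rvy\,\ind{h(\rvx)\le 1}\,\ind{\inp{\wk{t}}{\rvx}\ge 0}\,\inp{\rvx}{\bm{m}_j}}.
\]
The last step is to evaluate $\inp{\rvx}{\bm{m}_j}$ from~\defref{def:dgp}: since $\rvx = \sum_{j'\in\score}\rvy\rvz_{j'}\bm{m}_{j'}+\sum_{j'\in\sbg}\rvz_{j'}\bm{m}_{j'}$ and the features are orthonormal, $\inp{\rvx}{\bm{m}_j}$ is just the coefficient of $\bm{m}_j$ in $\rvx$, equal to $\rvz_j$ for $j\in\sbg$ (and $\rvy\rvz_j$ for $j\in\score$). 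Substituting this and chaining with the first equality yields the stated identity; in particular the background-feature case $j\in\sbg$ --- the one used subsequently to analyze feature contamination --- matches the displayed formula directly.

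I do not anticipate a real obstacle here. The only nontrivial ingredient, passing from the mini-batch gradient to the population gradient via a matrix-Bernstein estimate, is already packaged inside~\lemmaref{lemma:gradient_gap}; what remains is the elementary differentiation of the hinge loss from~\lemmaref{lemma:gradient} together with the orthonormal-feature bookkeeping above. The one spot that rewards a little care is keeping track of the label factor $\rvy$ across the $\score/\sbg$ split, since for core features it interacts with the label-dependence of $\rvx$ itself, whereas for background features it is carried through unchanged.
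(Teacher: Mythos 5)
Your proposal is correct and follows essentially the same route as the paper: the first equality is exactly the content of~\lemmaref{lemma:gradient_gap} (Cauchy--Schwarz with $\norm{\bm{m}_j}=1$ plus the matrix-Bernstein bound already packaged there), and the second follows from~\lemmaref{lemma:gradient} together with the orthonormal-feature bookkeeping from~\defref{def:dgp}. Your remark about the label factor $\rvy$ on the $\score$ coordinates (where $\inp{\rvx}{\bm{m}_j}=\rvy\rvz_j$ rather than $\rvz_j$) is well taken and is precisely how the paper itself applies the lemma later, e.g.\ in~\lemmaref{lemma:grad_core}.
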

\begin{proof}
	The proof directly follows from combining~\lemmaref{lemma:gradient} and the generation process of $\rvx$ in~\defref{def:dgp}.
\end{proof}

\lemmaref{lemma:corr_gradient} allows us to directly work with population gradients instead of empirical gradients when analyzing the trajectory of SGD iterations in the subsequent sections.

\subsection{Neuron Characterization}

In this section, we define two subsets of neurons that will be used throughout our proofs.
\begin{definition}[Neuron characterization]
	\label{def:neuron}
	For each label $y\in\yspace=\{-1,1\}$ and every iteration $t$, we define the set $\nyt\subseteq[m]$ as:
	\begin{equation}
		\begin{aligned}
		\nyt\defeq\Bigg\{k\in[m]: &\sum_{j\in\score}y\mu_j\inprod{\wkt}{\bm{m}_j} + \sum_{j\in\sbg}\muj\inprod{\wkt}{\mj} \ge \Theta\left(\sqrt\frac{\di}{d}\right), \\
		% \Bigg\}.
		% & \sum_{j\in\sny} \mu_j\inprod{\its{\rvw_k}{t}}{\bm{m}_j} \ge -c_2\di, \\
		& \sign{(a_k)} = y \Bigg\}.
		\end{aligned}
	\end{equation}
\end{definition}

\paragraph{Intuition.} For each label $y\in\yspace$ and iteration $t$,~\defref{def:neuron} characterizes a subset of neurons $\nyt$ in which
% each neuron has (in expectation) enough positive correlations with the examples from class $y$ (recall~\defref{def:dgp} that $\rvx = \sum_{j\in\score}\rvy\rvz_j\bm{m}_j + \sum_{j\in\sbg}\rvz_j\bm{m}_j$).
\begin{itemize}[leftmargin=2em]
	\item each neuron has (in expectation) enough positive correlations with the examples from class $y$ (recall that $\rvx = \sum_{j\in\score}\rvy\rvz_j\bm{m}_j + \sum_{j\in\sbg}\rvz_j\bm{m}_j$);
	% (which approximately equal to core features for class $y$ plus background features);
	% \item do not have (in expectation) too negative correlations with the examples from class $-y$ (which approximately equal to core features for class $-y$ plus background features);
	\item each neuron positively contributes to the classification of examples from class $y$ (i.e., $\sign{(a_k)} = y$).
\end{itemize}

In our main proof, we will show in an iterative fashion that each neuron in $\nyt$ will accumulate either positive (if random initialization gives $a_k = \frac{1}{m}$) or negative (if random initialization gives $a_k = -\frac{1}{m}$) correlations with features in $\score$ (\emph{core feature learning}), while also accumulating positive correlations with features in $\sbg$ (\emph{feature contamination}).

For each neuron, we formally define the notion of \emph{positive examples} and \emph{negative examples} which are informally mentioned in~\secref{sec:main}:

\begin{definition}[Positive examples and negative examples]
	\label{def:example}
	Let $(x,y)\in\xspace\times\yspace$ be an example. For every $k\in[m]$, we say that $(x,y)$ is a \textbf{positive example} of neuron $k$ if $\sign(a_k) = y$, and say that $(x,y)$ is a \textbf{negative example} of neuron $k$ if $\sign(a_k) = -y$.
\end{definition}

\subsection{Properties at Initialization}

In this section, we introduce some useful properties of the neurons at initialization $t=0$, which serve as a basis for our inductive proofs in the subsequent sections.

\begin{lemma}
\label{lemma:neuron_init_base}
For every $j\in[\di]$, every $\mathcal{S}\subseteq[\di]$, and every $\{y_j\}_{j\in\mathcal{S}}\in\{-1, 1\}^{|\mathcal{S}|}$, the following holds for every $\delta>0$ over random initialization:
\begin{equation}
\begin{aligned}
\prob{\rmW^{(0)}}{\Big[\sum_{j\in\mathcal{S}}y_j\muj\inprod{\its{\rvw_k}{0}}{\bm{m}_j}\ge \frac{\delta}{\sqrt{d}}\Big]} &\ge \frac{1}{\sqrt{2\pi}}\frac{\delta\sqrt{\sum_{j\in\mathcal{S}}\muj^2}}{\delta^2 + \sum_{j\in\mathcal{S}}\muj^2}\exp{\left(-\frac{\delta^2}{2\sum_{j\in\mathcal{S}}\muj^2}\right)},\\
\prob{\rmW^{(0)}}{\Big[\sum_{j\in\mathcal{S}}y_j\muj\inprod{\its{\rvw_k}{0}}{\bm{m}_j}\ge \frac{\delta}{\sqrt{d}}\Big]} &\le \frac{1}{\sqrt{2\pi}}\frac{\sqrt{\sum_{j\in\mathcal{S}}\muj^2}}{\delta}\exp{\left(-\frac{\delta^2}{2\sum_{j\in\mathcal{S}}\muj^2}\right)}.
\end{aligned}
\end{equation}
\end{lemma}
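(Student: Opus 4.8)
The idea is to collapse the sum into a single one-dimensional Gaussian and then invoke the standard Mills-ratio tail bound. First I would observe that because $\its{\rvw_k}{0}\sim\mathcal{N}(\bm 0,\sigma_0^2\bm I_d)$ with $\sigma_0^2=\tfrac1d$ and the dictionary vectors $\{\bm m_j\}$ are orthonormal, the projections $\{\inprod{\its{\rvw_k}{0}}{\bm m_j}\}_{j\in\mathcal{S}}$ are mutually independent, each distributed as $\mathcal{N}(0,\tfrac1d)$. Hence $X\defeq\sum_{j\in\mathcal{S}}y_j\muj\inprod{\its{\rvw_k}{0}}{\bm m_j}$ is a linear combination of independent centered Gaussians and therefore itself Gaussian, with mean zero and variance $\tfrac1d\sum_{j\in\mathcal{S}}y_j^2\muj^2=\tfrac{s^2}{d}$, where $s^2\defeq\sum_{j\in\mathcal{S}}\muj^2$ and we used $y_j^2=1$.

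Next I would normalize: writing $X=\tfrac{s}{\sqrt d}Z$ with $Z\sim\mathcal{N}(0,1)$, the event $\{X\ge\delta/\sqrt d\}$ is exactly $\{Z\ge\delta/s\}$. Applying the two-sided Gaussian tail estimate $\tfrac{1}{\sqrt{2\pi}}\tfrac{t}{1+t^2}e^{-t^2/2}\le\Pr[Z\ge t]\le\tfrac{1}{\sqrt{2\pi}}\tfrac1t e^{-t^2/2}$ for $t>0$ (recorded in \secref{appsec:prob_lemma}) with $t=\delta/s$, then simplifying $\tfrac{\delta/s}{1+\delta^2/s^2}=\tfrac{\delta s}{\delta^2+s^2}$ and $\tfrac1{\delta/s}=\tfrac s\delta$ and substituting back $s=\sqrt{\sum_{j\in\mathcal{S}}\muj^2}$, yields precisely the two displayed inequalities.

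I do not expect a genuine obstacle here: the lemma is a bookkeeping statement that packages the Gaussian tail in exactly the form used later (with $\mathcal{S}=\score$ or $\mathcal{S}=\sbg$ and $\delta$ of polylogarithmic size) to show that $\Theta(m)$ neurons start out with sufficiently large correlation with one class. The only points needing a little care are confirming the independence of the coordinate projections --- immediate from orthonormality of $\bm M$ together with rotational invariance of the isotropic Gaussian --- and noting that the signs $y_j$ drop out of the variance since $y_j\in\{-1,1\}$; everything else is the classical Mills-ratio inequality with the stated constants.
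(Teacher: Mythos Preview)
Your proposal is correct and matches the paper's proof essentially step for step: identify the sum as a single Gaussian $\mathcal{N}\big(0,\tfrac{1}{d}\sum_{j\in\mathcal{S}}\muj^2\big)$ using orthonormality of the $\bm m_j$ and $y_j^2=1$, standardize, and apply the Mills-ratio tail bound (\lemmaref{lemma:gaussian}) at $t=\delta/\sqrt{\sum_{j\in\mathcal{S}}\muj^2}$. The only difference is cosmetic---the paper writes the standardization inline rather than naming $s$ and $Z$.
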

\begin{proof}
Recall that different neurons are independently initialized by $\rvw_k^{(0)} \sim \mathcal{N}(\mathbf{0},\sigma_0^2\bm{I}_d),\forall k\in[m]$ with $\sigma_0^2 = \frac{1}{d}$. Using the fact that $\lVert\mj\rVert_2 = 1,\forall j\in[\di]$ and $y_j^2 = 1, \forall j\in\mathcal{S}$, we have
\[
\sum_{j\in\mathcal{S}}y_j\mu_j\inprod{\its{\rvw_k}{0}}{\bm{m}_j}\sim\mathcal{N}\Big(0,\ \frac{1}{d}\sum_{j\in\mathcal{S}}\muj^2\Big)
\]
Applying standard bounds for the Gaussian distribution function (\lemmaref{lemma:gaussian}) gives that for every $\delta > 0$,
\[
\frac{1}{\sqrt{2\pi}}\frac{\delta}{\delta^2+1}\exp{\left(-\frac{\delta^2}{2}\right)} \le \prob{\rmW^{(0)}}{\left[\frac{\sqrt{d}\sum_{j\in\mathcal{S}}y_j\mu_j\inprod{\its{\rvw_k}{0}}{\bm{m}_j}}{\sqrt{\sum_{j\in\mathcal{S}}\muj^2}} \ge \delta\right]} \le \frac{1}{\sqrt{2\pi}}\frac{1}{\delta}\exp{\left(-\frac{\delta^2}{2}\right)}.
\]
A simple transformation completes the proof.
\end{proof}

\begin{lemma}[Neuron properties at initialization]
\label{lemma:neuron_init}
For each label $y\in\yspace$, the following holds with probability at least $1 - e^{-\Omega(m)}$ over random initialization:
\begin{equation}
\big|\nyinit\big| = \Theta(m).
\end{equation}
\end{lemma}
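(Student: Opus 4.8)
The plan is to reduce the statement to a per-neuron first-moment estimate followed by a Chernoff bound. Fix $y \in \yspace$. The second requirement in \defref{def:neuron}, namely $\sign(a_k) = y$, is independent of $\wkinit$ (the output weights are drawn independently of the hidden weights) and holds with probability exactly $1/2$; hence it suffices to show that the first requirement, $\sum_{j \in \score} y\muj \inprod{\wkinit}{\mj} + \sum_{j \in \sbg} \muj \inprod{\wkinit}{\mj} \ge c\sqrt{\di/d}$, holds with probability $\Omega(1)$ over $\wkinit$, where $c = \Theta(1)$ is the constant hidden in the $\Theta(\sqrt{\di/d})$ threshold of \defref{def:neuron}. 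Granting this, each $k \in [m]$ lies in $\nyinit$ independently with some constant probability $q = \Omega(1)$.

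To establish the per-neuron lower bound I would apply \lemmaref{lemma:neuron_init_base} with $\mathcal{S} = [\di]$, the sign pattern $y_j = y$ for $j \in \score$ and $y_j = 1$ for $j \in \sbg$, and $\delta = c\sqrt{\di}$ (so that $\delta/\sqrt{d} = c\sqrt{\di/d}$). The only input needed is the scale of $\sum_{j \in [\di]} \muj^2$: since $\muj = \mu_{j1} = \Theta(1)$ for every $j$ and $|[\di]| = \di$, this sum is $\Theta(\di)$. Substituting $\delta^2 = c^2\di$ and $\sum_j \muj^2 = \Theta(\di)$ into the lower bound of \lemmaref{lemma:neuron_init_base} makes the rational prefactor $\delta\sqrt{\sum_j\muj^2}/(\delta^2+\sum_j\muj^2)$ equal to $\Theta(1)$ and the Gaussian factor $\exp(-\delta^2 / (2\sum_j \muj^2)) = \exp(-\Theta(1))$ a positive constant, so the probability is $\Omega(1)$ and thus $q = \Omega(1)$.

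Finally, because the pairs $(\wkinit, a_k)$ are i.i.d.\ across $k \in [m]$ and the event $\{k \in \nyinit\}$ depends only on $(\wkinit, a_k)$, the indicators $\ind{k \in \nyinit}$ are i.i.d.\ $\mathsf{Bernoulli}(q)$, so $|\nyinit| \sim \mathsf{Binomial}(m, q)$. A standard multiplicative Chernoff bound then gives $|\nyinit| \ge qm/2 = \Omega(m)$ with probability at least $1 - e^{-\Omega(m)}$, which together with the trivial bound $|\nyinit| \le m$ yields $|\nyinit| = \Theta(m)$ with the claimed probability.

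The step that needs the most care is the per-neuron first-moment estimate: one must check that, for the fixed constant $c$ in the threshold, \emph{both} factors in the lower bound of \lemmaref{lemma:neuron_init_base} stay bounded away from zero. This works precisely because the threshold $c\sqrt{\di/d}$ has the same order as the standard deviation $\sqrt{(\sum_j \muj^2)/d} = \Theta(\sqrt{\di/d})$ of the Gaussian $\sum_j y_j \muj \inprod{\wkinit}{\mj}$ (whose variance, and hence the whole estimate, does not depend on the chosen sign pattern), so exceeding a constant multiple of its standard deviation occurs with constant probability. Everything else --- the independence across neurons and the concentration step --- is routine.
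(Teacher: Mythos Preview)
Your proposal is correct and follows essentially the same approach as the paper: split the membership event into the sign condition (probability $1/2$, independent of $\wkinit$) and the threshold condition, apply \lemmaref{lemma:neuron_init_base} with $\delta = \Theta(\sqrt{\di})$ and $\sum_{j\in[\di]}\muj^2 = \Theta(\di)$ to get a constant per-neuron probability, and then use independence across neurons together with a binomial tail bound to conclude. The paper invokes its \lemmaref{lemma:binomial} (a Hoeffding-type tail for binomials) where you invoke a multiplicative Chernoff bound, but these are interchangeable here.
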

\begin{proof}
For each neuron $k\in[m]$, define events $E_{k1}$ and $E_{k2}$ to be
\[
\begin{gathered}
E_{k1} \defeq \Bigg\{\sum_{j\in\score}y\mu_j\inprod{\wkinit}{\bm{m}_j} + \sum_{j\in\sbg}\muj\inprod{\wkinit}{\mj} \ge {\Theta\left(\sqrt{\frac{\di}{d}}\right)} \Bigg\}, \\
E_{k2} \defeq \Big\{\sign{(a_k)} = y\Big\}.
\end{gathered}
\]
By $a_k\sim\mathsf{Uniform}\{-\frac{1}{m},\frac{1}{m}\}$, we immediately have $\prob{}{[E_{k2}]} = \frac{1}{2}$ for every $k\in[m]$. For $E_{k1}$, by applying~\lemmaref{lemma:neuron_init_base} with $\delta = \Theta(\sqrt{\di})$ we obtain
\[
\begin{aligned}
\prob{\its{\rmW}{0}}{[E_{k1}]} &\ge \frac{1}{\sqrt{2\pi}}\frac{\Theta\left(\sqrt{\di\sum_{j\in[\di]}\muj^2}\right)}{\Theta(\di) + \sum_{j\in[\di]}\muj^2}\exp{\left(-\Theta\left(\frac{\di}{\sum_{j\in[\di]}\muj^2}\right)\right)}, \\
\prob{\its{\rmW}{0}}{[E_{k1}]} &\le  \frac{1}{\sqrt{2\pi}}\Theta\left(\sqrt{\frac{\sum_{j\in[\di]}\muj^2}{\di}}\right)\exp{\left(-\Theta\left(\frac{\di}{\sum_{j\in[\di]}\muj^2}\right)\right)}.
\end{aligned}
\]
Together with $\muj^2 = \Theta(1)$ for every $j\in[\di]$, we have that $\prob{\its{\rmW}{0}}{[E_{k1}]} = \Theta(1)$ for every $k\in [m]$. Since events $E_{k1}$ and $E_{k2}$ are independent, we have that for each neuron $k\in[m]$, the probability of it belonging to $\nyinit$ is given by $\prob{}{(k\in\nyinit)} = \prob{}{(E_{k1}\cap E_{k2})} = \Theta(1)$.

Since different neurons are independently initialized, $|\nyinit|$ follows a binomial distribution with trial number $m$ and some success probability $\Theta(1)$. Applying the standard tail bound for binomial variables (\lemmaref{lemma:binomial}) then gives $|\nyinit| \ge \Theta(m)$ with probability at least $1 - e^{-\Omega(m)}$. Together with the trivial upper bound that $|\nyinit|\le m$, we have that $|\nyinit| = \Theta(m)$ with probability at least $1 - e^{-\Omega(m)}$.
\end{proof}

\begin{lemma}[Neuron properties at initialization, continued]
\label{lemma:neuron_init_cont}
With probability at least $1 - O(\frac{1}{m})$ over random initialization, for every $y\in\yspace$, the following holds:
\[
\max_{k\in[m]}\bigabs{\expect{\rvx|\rvy=y\sim\dtrain}{\inprod{\wkinit}{\rvx}}} \le {O}\left(\sqrt{\frac{\di\log m}{d}}\right).
\]
\end{lemma}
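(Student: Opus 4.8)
The plan is a direct computation that reduces the quantity to a fixed linear functional of the Gaussian initialization, followed by a standard maximal inequality. First I would expand the conditional expectation. Since $\rvx = \sum_{j\in\score}\rvy\rvz_j\bm{m}_j + \sum_{j\in\sbg}\rvz_j\bm{m}_j$ and the coordinates of $\rvz$ are generated independently of the label, conditioning on $\rvy = y$ gives $\expect{\rvx|\rvy=y\sim\dtrain}{\rvx} = \sum_{j\in\score}y\muj\bm{m}_j + \sum_{j\in\sbg}\muj\bm{m}_j$, where $\muj = \expect{\mathcal{D}_j}{\rvz_j}$. Hence
\[
\expect{\rvx|\rvy=y\sim\dtrain}{\inprod{\wkinit}{\rvx}} = \sum_{j\in\score}y\muj\inprod{\wkinit}{\bm{m}_j} + \sum_{j\in\sbg}\muj\inprod{\wkinit}{\bm{m}_j},
\]
which is exactly the type of expression already analyzed at initialization in~\lemmaref{lemma:neuron_init_base}. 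The key point is that this is a deterministic linear functional of $\wkinit$ (the conditioning on $\rvy$ does not touch the randomness of initialization).

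Next I would identify its distribution. Because $\wkinit\sim\mathcal{N}(\mathbf{0},\tfrac{1}{d}\bm{I}_d)$ and the $\bm{m}_j$ are orthonormal, the coordinates $\inprod{\wkinit}{\bm{m}_j}$ are i.i.d.\ $\mathcal{N}(0,\tfrac1d)$ over $j\in[\di]$; therefore, for each fixed $k\in[m]$ and $y\in\yspace$, the displayed quantity is a mean-zero Gaussian with variance $\tfrac1d\sum_{j\in[\di]}\muj^2$ (the sign flips $y$ and the split between $\score$ and $\sbg$ only change signs of the coefficients, not their squares). By the assumption $\muj^2=\Theta(1)$ for all $j\in[\di]$, this variance equals $\Theta(\di/d)$.

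Finally I would apply a Gaussian tail bound together with a union bound. For a constant $C$ to be chosen large enough, $\prob{\its{\rmW}{0}}{\bigabs{\expect{\rvx|\rvy=y\sim\dtrain}{\inprod{\wkinit}{\rvx}}} \ge C\sqrt{\tfrac{\di\log m}{d}}} \le 2\exp\!\big(-\Omega(C^2\log m)\big) = O(m^{-3})$. Taking a union bound over all $m$ neurons $k\in[m]$ and both labels $y\in\{-1,1\}$ then yields $\max_{k\in[m]}\bigabs{\expect{\rvx|\rvy=y\sim\dtrain}{\inprod{\wkinit}{\rvx}}} \le O\!\big(\sqrt{\di\log m/d}\big)$ simultaneously for both $y$, with probability at least $1 - O(1/m)$, as claimed.

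There is essentially no substantive obstacle here: this is a routine maximal inequality for $O(m)$ jointly Gaussian quantities. The only things that require care are (i) verifying that the conditioning on $\rvy$ produces a genuinely fixed linear functional of $\wkinit$ with variance $\Theta(\di/d)$, which hinges on the orthonormality of the $\bm{m}_j$ and on $\muj^2 = \Theta(1)$, and (ii) bookkeeping the constants so that the per-event failure probability is $o(m^{-2})$ and survives the union bound over neurons and labels to give the stated $O(1/m)$.
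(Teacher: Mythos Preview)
Your proposal is correct and follows essentially the same approach as the paper: both compute that $\expect{\rvx|\rvy=y}{\inprod{\wkinit}{\rvx}}$ is a centered Gaussian with variance $\tfrac{1}{d}\sum_{j\in[\di]}\muj^2 = \Theta(\di/d)$ (using orthonormality of the $\bm{m}_j$ and independent Gaussian initialization across neurons), and then control the maximum over $k\in[m]$ via a Gaussian maximal inequality. The only cosmetic difference is that the paper invokes the packaged maximum-of-Gaussians bound (\lemmaref{lemma:max_gaussian}) directly, whereas you spell out the equivalent tail bound plus union bound over neurons and labels.
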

\begin{proof}
Recall that different neurons are independently initialized by $\rvw_k^{(0)} \sim \mathcal{N}(\mathbf{0},\sigma_0^2\bm{I}_d),\forall k\in[m]$ with $\sigma_0^2 = \frac{1}{d}$. By $\lVert\mj\rVert_2 = 1$, we have
\[
\begin{aligned}
\expect{\rvx|\rvy=y\sim\dtrain}{\inprod{\its{\rvw_k}{0}}{\rvx}} &= \sum_{j\in\score}y\mu_j\inprod{\wkinit}{\bm{m}_j} + \sum_{j\in\sbg}\muj\inprod{\wkinit}{\mj} \\
&\sim\mathcal{N}\Big(0,\ \frac{1}{d}\sum_{j\in[\di]}\muj^2\Big).
\end{aligned}
\]
Applying~\lemmaref{lemma:max_gaussian} over the i.i.d. random variables $\inprod{\its{\rvw_1}{0}}{\rvx},\ldots,\inprod{\its{\rvw_m}{0}}{\rvx}$ gives
\[
\prob{\rmW^{(0)}}{\left[\expect{\rvx|\rvy=y\sim\dtrain}{\inprod{\wkinit}{\rvx}} \ge 2\sqrt{\frac{\sum_{j\in[\di]}\muj^2}{d}\log m}\,\,\right]} \le \frac{1}{m}.
\]
Finally, using $\sum_{j\in[\di]}\muj^2 = \Theta(\di)$ and $m \in [\Theta(\di), \Theta(d)]$ completes the proof.
\end{proof}

\begin{lemma}[Output magnitude at initialization]
\label{lemma:output}
For every $x\in\xspace$, the following holds with probability at least $1 - e^{-\Omega(\di)}$ over random initialization:
\begin{equation}
\big|h^{(0)}(x)\big| = O\left(\frac{1}{\sqrt{\di}}\right).
% \Bigg|\sum_{k\in[m]}a_k\sum_{j\in[\di]}\inprod{\wkinit}{\mj}\Bigg| \le \Theta\left(\frac{1}{m}\right).
\end{equation}
\end{lemma}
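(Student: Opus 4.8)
The plan is to exploit the fact that at initialization the pre-activations $\inp{\wkinit}{x}$ are Gaussian with tiny variance, so that each ReLU output is small, and then to average over the $m$ neurons. First I would record two deterministic facts about a fixed input $x\in\xspace$: since $x = \sum_{j\in\score}\rvy\zj\bm{m}_j + \sum_{j\in\sbg}\zj\bm{m}_j$ with every $|\zj|\le 1$ and the $\bm{m}_j$ orthonormal, we have $\norm{x}^2 = \sum_{j\in[\di]}\zj^2 \le \di$; and consequently, for each $k\in[m]$, $\inp{\wkinit}{x}\sim\mathcal{N}(0,\sigma_0^2\norm{x}^2)$ with variance $\sigma_0^2\norm{x}^2 \le \di/d$. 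So every pre-activation is a centered Gaussian whose standard deviation is at most $\sqrt{\di/d}$.

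Given this, the crudest argument already suffices. By the triangle inequality and then Cauchy--Schwarz,
\[
|h^{(0)}(x)| \;\le\; \frac{1}{m}\sum_{k\in[m]}\bigl|\inp{\wkinit}{x}\bigr| \;\le\; \sqrt{\frac{1}{m}\sum_{k\in[m]}\inp{\wkinit}{x}^2},
\]
so it remains to control the average of the $m$ independent squared pre-activations. Each $\inp{\wkinit}{x}^2$ is a scaled $\chi^2_1$ variable with mean $\sigma_0^2\norm{x}^2 \le \di/d$, so a standard Bernstein-type tail bound for sums of independent sub-exponential variables (see \secref{appsec:prob_lemma}) gives $\frac{1}{m}\sum_{k\in[m]}\inp{\wkinit}{x}^2 \le O(\di/d)$ with probability at least $1 - e^{-\Omega(m)}$. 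Since $d\ge\Omega(\di^{2.01})$ in particular gives $d\ge\di^2$ for large $\di$, we have $\sqrt{\di/d} \le 1/\sqrt{\di}$, and since $m\ge\Theta(\di)$ we have $e^{-\Omega(m)} = e^{-\Omega(\di)}$; this already yields $|h^{(0)}(x)| = O(1/\sqrt{\di})$ with the claimed probability. (A sharper bound $|h^{(0)}(x)| = O(\sqrt{\di/(md)})$, should it be needed downstream, follows by instead conditioning on $\rmW^{(0)}$ and using the symmetry of $a_k\sim\mathsf{Uniform}\{-\tfrac1m,\tfrac1m\}$: then $h^{(0)}(x) = \sum_{k}a_k\relu(\inp{\wkinit}{x})$ is a sum of independent, mean-zero, bounded variables, Hoeffding gives $\prob{\rva}{[|h^{(0)}(x)|\ge s]} \le 2\exp(-s^2 m^2/(2\sum_{k}\inp{\wkinit}{x}^2))$, and combining this with the same bound on $\sum_{k}\inp{\wkinit}{x}^2$ and $s = \Theta(\sqrt{\di/(md)})$ closes the argument.)

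The only point requiring care is the \emph{form of the failure probability}: the statement asks for $1 - e^{-\Omega(\di)}$, so each concentration step must be quantified so that its bad event has probability $e^{-\Omega(\di)}$. This is exactly where the overparameterization $m\ge\Theta(\di)$ (to upgrade $e^{-\Omega(m)}$ to $e^{-\Omega(\di)}$) and the dimension condition $d\ge\Omega(\di^{2.01})$ (to turn $\sqrt{\di/d}$ into $O(1/\sqrt{\di})$, and to keep exponents such as $md/\di^2$ large in the sharper variant) enter. There is otherwise no real obstacle here: it is a routine concentration argument, and the only bookkeeping is tracking the polynomial relations among $m$, $d$, and $\di$.
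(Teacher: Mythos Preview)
Your argument is correct and takes a genuinely different route from the paper's. The paper treats the full double sum $\sum_{k\in[m]}\frac{1}{m}\sum_{j\in[\di]}\inp{\wkinit}{\mj}$ as a single centered Gaussian with variance $\di/(md)$, applies a Gaussian tail bound at $\Theta(\sqrt{\di})$ standard deviations to get this sum is $O(\di/\sqrt{md})$ with probability $1-e^{-\Omega(\di)}$, and then bounds $|h^{(0)}(x)|$ by that quantity through a short chain of inequalities. You instead bound $|h^{(0)}(x)|$ by the root-mean-square of the pre-activations and control $\frac{1}{m}\sum_{k}\inp{\wkinit}{x}^2$ via sub-exponential (scaled $\chi^2_1$) concentration over the $m$ independent neurons. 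Your inequality steps are all transparently valid; in particular, the triangle inequality plus Cauchy--Schwarz sidesteps the paper's step $\bigl|\sum_k a_k\,\relu(u_k)\bigr| \le \bigl|\frac{1}{m}\sum_k u_k\bigr|$, which does not hold pointwise for arbitrary $u_k$ and signs $a_k$. The paper's route (when patched) yields the slightly sharper $O(\di/\sqrt{md})$, while yours gives $O(\sqrt{\di/d})$; both collapse to $O(1/\sqrt{\di})$ under $d\ge\Omega(\di^{2.01})$ and $m\ge\Theta(\di)$, so for the lemma as stated your approach is equally effective and arguably cleaner.
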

\begin{proof}
By $\wkinit\sim\mathcal{N}(\mathbf{0},\sigma_0^2\bm{I}_d)$ with $\sigma_0^2 = \frac{1}{d}$ and $\norm{\mj} = 1$, we have
\[
\sum_{k\in[m]}\frac{1}{m}\sum_{j\in[\di]}\inprod{\wkinit}{\mj}\sim\mathcal{N}\Big(0, \frac{\di}{md}\Big).
\]
Applying standard bounds for the Gaussian distribution function (\lemmaref{lemma:gaussian}) gives
\[
\frac{1}{\sqrt{2\pi}}\frac{\delta}{\delta^2+1}\exp{\left(-\frac{\delta^2}{2}\right)} \le \prob{\rmW^{(0)}}{\left[\sum_{k\in[m]}\frac{1}{m}\sum_{j\in[\di]}\inprod{\wkinit}{\mj} \ge \delta \sqrt{\frac{\di}{md}} \right]} \le \frac{1}{\sqrt{2\pi}}\frac{1}{\delta}\exp{\left(-\frac{\delta^2}{2}\right)}
\]
for every $\delta > 0$. Substituting $\delta$ by $\Theta(\sqrt{\di})$ and using the symmetry of Gaussian then yield
\[
\prob{\rmW^{(0)}}{\bigg[\,\bigg|\sum_{k\in[m]}\frac{1}{m}\sum_{j\in[\di]}\inprod{\wkinit}{\mj}\bigg| \ge \frac{\Theta(\di)}{\sqrt{md}}\bigg]} \le \exp(-\Omega(\di)).
\]
We then have
\[
\begin{aligned}
\big|h^{(0)}(x)\big| &= \Big|\sum_{k\in[m]}a_k\cdot\relu{\big(\inprod{\its{\rvw_k}{0}}{\rvx}\big)}\Big| \\
&\le \bigg|\sum_{k\in[m]}\frac{1}{m}\inprod{\its{\rvw_k}{0}}{\rvx}\bigg| \\
&\le \bigg|\sum_{k\in[m]}\frac{1}{m}\sum_{j\in[\di]}\inprod{\wkinit}{\mj}\bigg| \\
&\le \frac{\Theta(\di)}{\sqrt{md}} = O\left(\frac{1}{\sqrt{\di}}\right).
\end{aligned}
\]
holds with probability at least $1 - e^{-\Omega(\di)}$, where in the last equality we use the fact that $m = \Omega(\di)$ and $d = \Omega(\di^{2.5})$.
\end{proof}

In what follows, we will always assume that the high-probability events at initialization in~\lemmaref{lemma:neuron_init},~\lemmaref{lemma:neuron_init_cont}, and~\lemmaref{lemma:output} hold---by a union bound argument and the fact that $m = \Omega(\di)$, the probability that they all hold is at least $1 - O(\frac{1}{m}) - e^{-\Omega(\di)}$.

\section{Activation Asymmetry, Feature Contamination, and OOD Failure: Proofs of~\theoref{theo:activation},~\theoref{theo:feature}, and~\theoref{theo:risk}}
\label{appsec:proof_main}

Before we delve into the main proofs, we first introduce some technical lemmas that characterize the gradient updates starting from random initialization.
We begin by introducing an important lemma that characterizes the activation probability of the ReLU function using the Berry-Esseen theorem:
% this will be repeatedly used in analyzing the gradients at each SGD iteration.

\begin{lemma}[Activation probability]
\label{lemma:activation}
Assume that the training (ID) data is generated according to~\defref{def:dgp} and $\left|\frac{\inp{\wkt}{\bm{m}_i}}{\inp{\wkt}{\bm{m}_j}}\right| = \Theta(1)$ for every $k\in[m]$ and for every $i,j\in[\di]$. Then, for every label $y\in\yspace$, every $k\in[m]$, and every iteration $t$, the following holds:
\begin{equation}
\prob{\rvx|\rvy=y\sim\dtrain}{[\inp{\wkt}{\rvx} \ge 0]} = \Phi\left(\frac{\expect{\rvx|\rvy=y}{\inp{\wkt}{\rvx}}}{\Theta(1)\sqrt{\sum_{j\in[\di]}\inp{\wkt}{\mj}^2}}\right) \pm O\left(\frac{1}{\sqrt{\di}}\right),
\end{equation}
where $\Phi$ denotes the cumulative distribution function of $\mathcal{N}(0,1)$.
\end{lemma}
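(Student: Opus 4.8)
The plan is to condition on $\rvy = y$, express the pre-activation $S \defeq \inp{\wkt}{\rvx}$ (conditioned on $\rvy=y$) as a sum of $\di$ independent random variables, and apply the Berry--Esseen theorem. Using $\wkt = \sum_{j\in[\di]}\inp{\wkt}{\mj}\mj$, the orthonormality of $\{\mj\}$, and the rule $\rvx = \sum_{j\in\score}\rvy\rvz_j\mj + \sum_{j\in\sbg}\rvz_j\mj$ from \defref{def:dgp}, I would write $S = \sum_{j\in[\di]} c_j\xi_j$ with $c_j\defeq\inp{\wkt}{\mj}$, where, conditioned on $\rvy=y$, the $\xi_j$ are independent, with $\xi_j = y\rvz_j$ for $j\in\score$ and $\xi_j = \rvz_j$ for $j\in\sbg$. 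From the moment assumptions on $\mathcal{D}_j$ (namely $\mu_{jp} = \Theta(1)$ for $p\in[3]$ and $\sigma_j^2 = \Theta(1)$) we get $\expect{}{\xi_j} = y\muj$ for $j\in\score$ and $\expect{}{\xi_j}=\muj$ for $j\in\sbg$, $\Var(\xi_j) = \sigma_j^2 = \Theta(1)$, and $\expect{}{|\xi_j - \expect{}{\xi_j}|^3} = O(1)$ (e.g.\ since $\xi_j \in [-1,1]$). Hence $\expect{}{S} = \sum_{j\in\score}y\muj c_j + \sum_{j\in\sbg}\muj c_j = \expect{\rvx|\rvy=y}{\inp{\wkt}{\rvx}}$ and $\Var(S) = \sum_{j\in[\di]}c_j^2\sigma_j^2$; since each $\sigma_j^2 = \Theta(1)$, we have $\sqrt{\Var(S)} = \Theta(1)\sqrt{\sum_{j\in[\di]}\inp{\wkt}{\mj}^2}$, which is exactly the denominator in the statement, with the $\Theta(1)$ being the $c_j^2$-weighted root-mean-square of the $\sigma_j$'s.

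Next I would apply the Berry--Esseen theorem for sums of independent, not-necessarily-identically-distributed random variables to $(S-\expect{}{S})/\sqrt{\Var(S)}$ at the point $-\expect{}{S}/\sqrt{\Var(S)}$, which expresses $\prob{}{[S < 0]}$ as $\Phi\big(-\expect{}{S}/\sqrt{\Var(S)}\big)$ up to an error. Since $\Phi$ is continuous (so any atom of $S$ at $0$ is harmless) and $1-\Phi(-u)=\Phi(u)$, this gives $\prob{}{[S\ge 0]} = \Phi\big(\expect{}{S}/\sqrt{\Var(S)}\big)$ up to the same error, and by the previous paragraph $\expect{}{S}/\sqrt{\Var(S)}$ equals the argument of $\Phi$ in the statement. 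The Berry--Esseen error is
\[
O\!\left(\frac{\sum_{j\in[\di]}|c_j|^3\,\expect{}{|\xi_j-\expect{}{\xi_j}|^3}}{\big(\sum_{j\in[\di]}c_j^2\sigma_j^2\big)^{3/2}}\right) = O\!\left(\frac{\sum_{j\in[\di]}|c_j|^3}{\big(\sum_{j\in[\di]}c_j^2\big)^{3/2}}\right),
\]
using the $\Theta(1)$ bounds on $\sigma_j^2$ and on the centered third moments.

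The crux is bounding this last ratio by $O(1/\sqrt{\di})$, and this is precisely where the hypothesis $\big|\inp{\wkt}{\bm{m}_i}/\inp{\wkt}{\bm{m}_j}\big| = \Theta(1)$ for all $i,j\in[\di]$ is used. Let $c_{\max}=\max_j|c_j|$ and $c_{\min}=\min_j|c_j|$; the hypothesis gives $c_{\min} > 0$ with $c_{\max}/c_{\min}=O(1)$, and since $|[\di]| = \dcore+\dbg = \Theta(\di)$ we obtain $\sum_{j\in[\di]}|c_j|^3 \le \di\,c_{\max}^3$ and $\sum_{j\in[\di]}c_j^2\ge\di\,c_{\min}^2$, so the ratio is at most $(c_{\max}/c_{\min})^3\,\di^{-1/2} = O(\di^{-1/2})$. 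Combining with the Gaussian main term completes the proof. I expect this ratio estimate to be the main obstacle conceptually: without the comparability hypothesis $S$ could be dominated by one or a few coordinates and its law would then be far from Gaussian, so the usefulness of this lemma in the wider argument is contingent on the subsequent inductive lemmas keeping all projections $\inp{\wkt}{\mj}$ within constant factors of one another along the SGD trajectory; the remaining steps are a routine translation of the $\mathcal{D}_j$-moment assumptions into the centered-moment bounds used above.
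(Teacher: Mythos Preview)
Your proposal is correct and follows essentially the same approach as the paper: center the independent summands, apply Berry--Esseen, and use the comparability hypothesis $|c_i/c_j|=\Theta(1)$ to bound the error ratio by $O(\di^{-1/2})$. The only cosmetic difference is that you bound the centered third absolute moment via the support $\xi_j\in[-1,1]$, while the paper derives $\expect{}{|\zj-\muj|^3}\le 8\,\expect{}{\zj^3}=\Theta(1)$ via Minkowski's inequality and the monotonicity of $L^p$-norms; your route is slightly simpler and equally valid here.
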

\begin{proof}
% For notational brevity, we use the shorthand $\rvw$ to denote $\wkt$ in the proof.
Recall~\defref{def:dgp} that given label $y\in\yspace$, $\rvx$ is generated by
\[
\rvx = \sum_{j\in\score}y\rvz_j\bm{m}_j + \sum_{j\in\sbg}\rvz_j\bm{m}_j.
\]
Therefore,
\[
\inp{\wkt}{\rvx} = \sum_{j\in\score}y\zj\inp{\wkt}{\mj} + \sum_{j\in\sbg}\zj\inp{\wkt}{\mj}.
\]
For every $j\in[\di]$, define the random variable
\[
\rvr_j \defeq y_j(\zj - \muj)\inp{\wkt}{\mj},
\]
where $y_j \defeq \left\{\begin{aligned} &y,\ j\in\score\\ &1,\ j\in\sbg\end{aligned}\right.$. Recall that $\muj\defeq\expect{\rvz\sim\dtrain}{\zj}$. It is then easy to derive that $\expect{}{\rvr_j} = 0$ and $\expect{}{\rvr_j^2} = \Theta(1)\inp{\wkt}{\mj}^2$ (recall that $\expect{}{(\zj - \muj)^2} = \Theta(1)$). We now upper bound $\expect{}{|\rvr_j^3|}$: first recall that by~\defref{def:dgp} we have $\expect{}{\rvz_j^3} = \Theta(1)$. For every $p\ge 1$, denote the $\ell_p$ norm of the random variable $\zj$ by $\pnorm{\zj}\defeq(\expect{}{\abs{\zj}^p})^{\frac{1}{p}}$. Applying Minkowsky's inequality gives
\[
\begin{aligned}
\pnorm{\zj-\muj} &\le \pnorm{\zj} + \pnorm{\muj} \\
&\stackrel{(a)}{=} \pnorm{\zj} + \snorm{\zj} \\
&\stackrel{(b)}{\le} 2\pnorm{\zj},
\end{aligned}
\]
where $(a)$ is due to the fact that $\pnorm{\muj} = |\muj| = \snorm{\zj}$ and $(b)$ is due to the power norm inequality indicating that $\pnorm{\cdot}$ is non-decreasing with regard to $p$. Letting $p=3$ and cubing the above inequality gives
\[
\expect{}{\abs{\zj-\muj}^3} \le 8\expect{}{\abs{\zj^3}} = 8\expect{}{{\zj^3}} = \Theta(1),
\]
from which we obtain $\expect{}{\abs{\rvr_j^3}} = \Theta(1)\cdot |\inp{\wkt}{\mj}|^3$.

We then define the normalized sum of $\rvr_j$:
\[
\rvs_{\di}\defeq\frac{\sum_{j\in[\di]}\rvr_j}{\sqrt{\sum_{j\in[\di]}\expect{}{\rvr_j^2}}}.
\]
Since $\rvr_i$ and $\rvr_j$ are independent and zero-mean for every $i\ne j\in[\di]$, we can apply the Berry-Esseen theorem (\lemmaref{lemma:berry}) to the normalized sum $\rvs_{\di}$ and obtain
\[
\begin{aligned}
\sup_{\delta\in\mathbb{R}}\left|\prob{\rvx|\rvy=y}{[\rvs_{\di} < \delta]} - \Phi(\delta)\right| &\le C_0 \Bigg(\sum_{j\in[\di]} \expect{}{\rvr_j^2}\Bigg)^{-\frac{3}{2}}\sum_{j\in[\di]}\expect{}{\abs{\rvr_j^3}}\\
&= C_0 \Bigg(\sum_{j\in[\di]}\Theta(1)\inp{\wkt}{\mj}^2\Bigg)^{-\frac{3}{2}} \sum_{j\in[\di]} \Theta(1)\left|\inp{\wkt}{\mj}\right|^3 \\
% &= C_0 \left(\Theta(\di)\inp{\rvw}{\mj}^2\right)^{-\frac{3}{2}}\cdot O(\di)\cdot|\inp{\rvw}{\mj}|^3 \\
&\stackrel{(c)}{=} O\left(\frac{1}{\sqrt{\di}}\right),
\end{aligned}
\]
where $(c)$ is due to the assumption that $\left|\frac{\inp{\wkt}{\bm{m}_i}}{\inp{\wkt}{\bm{m}_j}}\right| = \Theta(1)$.
Note that $\sum_{j\in[\di]}\rvr_j = \inp{\wkt}{\rvx} - \expect{\rvx|\rvy=y}{\inp{\wkt}{\rvx}}$. We then have for every $\delta\in\mathbb{R}$,
\[
\prob{\rvx|\rvy=y}{\left[\inp{\wkt}{\rvx} \ge \expect{\rvx|\rvy=y}{\inp{\wkt}{\rvx}} + \delta\sqrt{\sum_{j\in[\di]}\expect{}{\rvr_j^2}}\right]} = 1 - \Phi(\delta) \pm O\left(\frac{1}{\sqrt{\di}}\right).
\]
Finally, letting $\delta = -\dfrac{\expect{\rvx|\rvy=y}{\inp{\wkt}{\rvx}}}{\sqrt{\sum_{j\in[\di]}\expect{}{\rvr_j^2}}}$ and using the symmetry of unit Gaussian $1 - \Phi(\delta) = \Phi(-\delta)$ give the desired result.
\end{proof}

% We then introduce a lemma that bounds the magnitude of the derivative of the loss as a function of the network output. This will be useful in proving the convergence of training.

% \begin{lemma}[Magnitude of the loss derivative]
% \label{lemma:loss}
% Assume that the training (ID) data is generated according to~\defref{def:dgp}. Then, for each label $y\in\yspace$, every $k\in[m]$, and every iteration $t$, suppose $\expect{\rvx|\rvy=y\sim\dtrain}\htx \le O(1)$, then following holds:
% \begin{equation}
% \prob{\rvx|\rvy=y\sim\dtrain}{[\htx \le 1]} = \Theta(1).
% \end{equation}
% \end{lemma}
% \begin{proof}
% \end{proof}

We then define two terms that will be frequently used when analyzing gradients.

\begin{definition}
\label{def:grad}
For each label $y\in\yspace$, every $k\in[m]$, every feature vector $\mj,j\in[\di]$, every iteration $t$, and every subset $\mathcal{S}\subseteq[\di]$, define
\begin{equation}
\begin{aligned}
\gkytj &\defeq \frac{1}{m}\expect{(\rvx,\rvy)\sim\dtrain}{\indicator{\htx\le 1}\indicator{\rvy=y}\indicator{\inprod{\its{\rvw_k}{t}}{\rvx}\ge 0} \muj\rvz_j},\\
\gkyt{\mathcal{S}} &\defeq \sum_{j\in\mathcal{S}} \gkytj.
\end{aligned}
\end{equation}
\end{definition}

Given the above notation, we now introduce two lemmas that separately bound the gradient projection onto the core features and the gradient projection onto the background features for neurons in $\nyt$, which will be helpful for us to track the trajectory of SGD starting from network initialization.

\begin{lemma}[Gradient projection onto core features, neurons in $\nyt$] For every iteration $t\le O(\frac{m}{\eta\di})$, the following holds for every $y\in\mathcal{Y}$ and every neuron $k\in\nyt$:
\label{lemma:grad_core}
\begin{equation}
-\Biginprod{\nabla_{\its{\rvw_k}{t}}\frac{1}{N}\sum_{i\in[N]} \ell\left(\its{h}{t}(\its{\rvx_i}{t}), \its{\rvy_i}{t}\right)}{\sum_{j\in \score}\muj\mj} = {y}\left(\gkyt{\score} + \gknyt{\score}\right),
\label{eq:grad_core}
\end{equation}
where
\begin{equation}
\gkyt{\score} = \Theta\left(\frac{\di}{m}\right).
\end{equation}
\end{lemma}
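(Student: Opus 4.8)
The plan is to start from \lemmaref{lemma:corr_gradient}, which reduces the empirical gradient projection to the population gradient projection up to a $1/\poly{d}$ error. Taking the inner product of $-\nabla_{\its{\rvw_k}{t}}\widehat{\mathcal{L}}$ with $\sum_{j\in\score}\muj\mj$ and using $a_k = \sign(a_k)/m = y/m$ for $k\in\nyt$, the left-hand side of \eqref{eq:grad_core} becomes $\frac{y}{m}\expect{(\rvx,\rvy)}{\rvy\,\ind{\htx\le 1}\,\ind{\inp{\wkt}{\rvx}\ge 0}\sum_{j\in\score}\muj\rvz_j}\pm\frac{1}{\poly{d}}$. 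Splitting the expectation over the two classes $\rvy=y$ and $\rvy=-y$ and recognizing the definitions in \defref{def:grad}, this is exactly $y(\gkyt{\score}+\gknyt{\score})$ (the sign $\rvy$ supplies the $+1$ on positive examples and $-1$ on negative examples, while the outer $y$ from $a_k$ and the $y$ from $\rvy$ multiply to give the stated form after regrouping). The $1/\poly{d}$ slack is absorbed since all target quantities will be $\Theta(\di/m) = \Theta(1/\poly{\di})$, which dominates $1/\poly{d}$ because $d = \Omega(\di^{2.01})$.

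The substantive part is the lower/upper bound $\gkyt{\score} = \Theta(\di/m)$. Writing $\gkyt{\score} = \frac1m\expect{\rvx|\rvy=y}{\ind{\htx\le1}\ind{\inp{\wkt}{\rvx}\ge0}\sum_{j\in\score}\muj\zj}\cdot\prob{}{\rvy=y} = \frac{1}{2m}\expect{\rvx|\rvy=y}{\ind{\htx\le1}\ind{\inp{\wkt}{\rvx}\ge0}\sum_{j\in\score}\muj\zj}$, I would handle the three factors inside as follows. For $\sum_{j\in\score}\muj\zj$: since each $\muj=\Theta(1)$, $\zj\in[0,1]$ with $\expect{}{\zj}=\muj=\Theta(1)$, and $|\score|=\dcore=\Theta(\di)$, this sum has expectation $\Theta(\di)$ and, by a concentration bound (e.g.\ Hoeffding over the independent bounded $\zj$), is $\Theta(\di)$ with overwhelming probability; on the small-probability complement it is still trivially in $[0,\Theta(\di)]$, contributing negligibly. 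For $\ind{\htx\le1}$: by \lemmaref{lemma:output} the output at initialization is $O(1/\sqrt{\di})$, and for $t\le O(\frac{m}{\eta\di})$ one shows inductively that $\htx = O(1)$ actually $\le 1/2$ or at least bounded away from $1$ for typical $\rvx$, so $\ind{\htx\le1}=1$ with high probability; this is the ``early phase'' regime and I expect to invoke it via an induction hypothesis carried through the main proof. For $\ind{\inp{\wkt}{\rvx}\ge0}$: this is where \lemmaref{lemma:activation} enters — since $k\in\nyt$ means $\expect{\rvx|\rvy=y}{\inp{\wkt}{\rvx}} = \sum_{j\in\score}y\muj\inp{\wkt}{\mj}+\sum_{j\in\sbg}\muj\inp{\wkt}{\mj}\ge\Theta(\sqrt{\di/d})$, the ratio in the $\Phi$ argument is $\ge\Theta(\sqrt{\di/d})/(\Theta(1)\sqrt{\sum_j\inp{\wkt}{\mj}^2})$; bounding $\sum_j\inp{\wkt}{\mj}^2 = O(\di/d)$ at initialization (and maintaining it up to constants during the early phase) makes this ratio $\Omega(1)$, so $\prob{\rvx|\rvy=y}{\inp{\wkt}{\rvx}\ge0}\ge\Phi(\Omega(1)) - O(1/\sqrt{\di}) = \Omega(1)$. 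Combining, the conditional expectation of $\sum_{j\in\score}\muj\zj$ restricted to the event $\{\inp{\wkt}{\rvx}\ge0, \htx\le1\}$ is $\Theta(\di)$ — for this I need the weak dependence between $\inp{\wkt}{\rvx}\ge0$ and the sum, which follows because $\inp{\wkt}{\rvx}$ is itself (up to the mean) a sum of many small independent terms, so conditioning on its sign shifts each $\zj$ only by $O(1/\di)$; thus $\gkyt{\score} = \frac{1}{2m}\cdot\Theta(\di) = \Theta(\di/m)$.

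The main obstacle I anticipate is the self-referential nature of the bounds: \lemmaref{lemma:activation} requires the technical hypothesis $|\inp{\wkt}{\bm m_i}/\inp{\wkt}{\bm m_j}| = \Theta(1)$ for all $i,j$, and the bound $\gkyt{\score}=\Theta(\di/m)$ is being used to establish how the $\inp{\wkt}{\mj}$ evolve — so this lemma cannot be proved in isolation but only as one step of a simultaneous induction on $t$ that maintains (a) the ratio condition on all feature coordinates, (b) $\sum_j\inp{\wkt}{\mj}^2 = \Theta(\di/d)$, and (c) $\htx\le1$ for typical $\rvx$. In a clean writeup I would state these as a combined induction hypothesis $\mathcal{H}(t)$ and verify that the per-step update $\|\eta\,\nabla\| = O(\eta\sqrt{\di}/m)$ over $t\le O(m/(\eta\di))$ iterations keeps all three invariants within constant factors, then read off \eqref{eq:grad_core} as an immediate corollary of \defref{def:grad}, \lemmaref{lemma:corr_gradient}, and the just-verified $\mathcal{H}(t)$. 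The bound on $\gknyt{\score}$ (which by \defref{def:neuron}/\defref{def:example} uses $\ind{\rvy=-y}$ with the neuron's activation on negative examples being $o(1)$ per \theoref{theo:activation}) is not needed for this particular lemma's conclusion beyond appearing symbolically, so I would not belabor it here.
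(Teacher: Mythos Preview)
Your structural approach matches the paper's: invoke \lemmaref{lemma:corr_gradient}, use $a_k=y/m$ for $k\in\nyt$, split the expectation over $\rvy=\pm y$, and identify the pieces from \defref{def:grad}. One slip in your first step: for $j\in\score$ the data model gives $\inp{\rvx}{\mj}=\rvy\zj$, so the $\rvy$ from the loss derivative in \lemmaref{lemma:gradient} cancels against the $\rvy$ inside $\rvx$, and the population projection onto $\sum_{j\in\score}\muj\mj$ is $a_k\,\expect{(\rvx,\rvy)}{\ind{\htx\le1}\ind{\inp{\wkt}{\rvx}\ge0}\sum_{j\in\score}\muj\zj}$ \emph{without} an internal $\rvy$. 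The class split then yields the plus sign in $y(\gkyt{\score}+\gknyt{\score})$ directly; your parenthetical about ``regrouping'' is muddled precisely because you carried an extra $\rvy$ (the formula in \lemmaref{lemma:corr_gradient} as written is tailored to $j\in\sbg$).

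For the lower bound on $\gkyt{\score}$ the paper takes a shorter path than you propose. Instead of establishing $\prob{\rvx|\rvy=y}{[\inp{\wkt}{\rvx}\ge0]}=\Omega(1)$ via control of $\sum_j\inp{\wkt}{\mj}^2$ and then arguing weak dependence between the activation event and $\sum_{j\in\score}\muj\zj$, the paper writes $\gkyt{\score}$ as the full expectation $\tfrac{1}{2m}\expect{\rvx|\rvy=y}{[\ind{\htx\le1}\sum_{j\in\score}\muj\zj]}$ minus the piece on $\{\inp{\wkt}{\rvx}<0\}$, and bounds the latter using only that $\prob{\rvx|\rvy=y}{[\inp{\wkt}{\rvx}<0]}<\tfrac12+O(\di^{-1/2})$, which follows immediately from the positivity of $\expect{\rvx|\rvy=y}{\inp{\wkt}{\rvx}}$ built into \defref{def:neuron} (the argument of $\Phi$ in \lemmaref{lemma:activation} is negative, hence $\Phi<\tfrac12$). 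This sidesteps any need to track the denominator $\sqrt{\sum_j\inp{\wkt}{\mj}^2}$ quantitatively. Your route would also work and is arguably more honest about the dependence issue; and your observation that the ratio hypothesis of \lemmaref{lemma:activation} together with $\htx\le1$ must be carried as part of a simultaneous induction is well taken --- the paper simply asserts ``together with $\htx\le1$'' without elaboration at this point.
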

\begin{proof}
Recall~\defref{def:dgp} that given a label $\rvy\in\yspace$, $\rvx$ is generated by
\[
\rvx = \sum_{j\in\score}\rvy\rvz_j\bm{m}_j + \sum_{j\in\sbg}\rvz_j\bm{m}_j.
\]
Then, applying~\lemmaref{lemma:corr_gradient} to the LHS of Eq.~\eqref{eq:grad_core} and using $\sign(a_k) = {y}$ for every $k\in\nyt$, we obtain
% and using the fact $\sign(a_k) = {y}$ for $k\in\nyt$, we have 
\begin{equation}
\begin{aligned}
	&-\Biginprod{\nabla_{\its{\rvw_k}{t}}\frac{1}{N}\sum_{i\in[N]} \ell\left(\its{h}{t}(\its{\rvx_i}{t}), \its{\rvy_i}{t}\right)}{\sum_{j\in\score}\muj\mj}\\
	 &\qquad= -\biginprod{\nabla_{\its{\rvw_k}{t}}\expect{(\rvx,\rvy)\sim\dtrain}{\ell\left(\htx, \rvy\right)}}{\sum_{j\in\score}\muj\mj}\pm \frac{O(\di)}{\poly{d}}  \\
	 &\qquad= a_k\expect{(\rvx,\rvy)\sim\dtrain}{\rvy\indicator{\htx\le 1}\indicator{\inprod{\its{\rvw_k}{t}}{\rvx}\ge 0} \sum_{j\in\score}\rvy\muj\zj}\pm\frac{O(\di)}{\poly{d}}\\
	 &\qquad= a_k \expect{(\rvx,\rvy)\sim\dtrain}{\indicator{\htx \le 1}\indicator{\rvy=y}\indicator{\inprod{\its{\rvw_k}{t}}{\rvx}\ge 0} \sum_{j\in\score}\muj\zj}\\
	 &\qquad\qquad +  a_k\expect{(\rvx,\rvy)\sim\dtrain}{\indicator{\htx \le 1}\indicator{\rvy=-y}\indicator{\inprod{\its{\rvw_k}{t}}{\rvx}\ge 0} \sum_{j\in\score}\muj\zj} \pm\frac{O(\di)}{\poly{d}}\\
	 &\qquad = {y}\left(\gkyt{\score} + \gknyt{\score}\right) \pm\frac{O(\di)}{\poly{d}}.
	\end{aligned}
\label{eq:app_grad_pos_base}	
\end{equation}
% \begin{equation}
% -\biginprod{\nabla_{\its{\rvw_k}{t}}\expect{(\rvx,\rvy)\sim\dtrain}{\ell\left(\rvy h(\rvx)\right)}}{\sum_{j\in\sy}\muj\mj} = \sum_{j\in\sy}\muj\gkytj - \sum_{j\in\sy}\beta_j\muj\gknytj.
% \label{eq:app_grad_pos_base}
% \end{equation}
For $\gkyt{\score}$, by the law of total expectation we have
\[
\begin{aligned}
\gkyt{\score} &= \frac{1}{m}\expect{(\rvx,\rvy)\sim\dtrain}{\indicator{\htx\le 1}\indicator{\rvy=y}\indicator{\inprod{\its{\rvw_k}{t}}{\rvx}\ge 0} \sum_{j\in\score}\muj\rvz_j} \\
&= \frac{1}{2m}\expect{\rvx|\rvy=y}{\bigg[\indicator{\htx\le 1} \sum_{j\in\score}\muj\rvz_j\,\Big|\,\inprod{\wkt}{\rvx}\ge 0\bigg]}\prob{\rvx|\rvy=y}{[\inprod{\wkt}{\rvx}\ge 0]}\\
&= \frac{1}{2m}\expect{\rvx|\rvy=y}{\bigg[\indicator{\htx\le 1} \sum_{j\in\score}\muj\rvz_j\bigg]}\\
&\qquad -\frac{1}{2m}\expect{\rvx|\rvy=y}{\bigg[\indicator{\htx\le 1} \sum_{j\in\score}\muj\rvz_j\,\Big|\,\inprod{\wkt}{\rvx}< 0\bigg]}\prob{\rvx|\rvy=y}{[\inprod{\wkt}{\rvx} < 0]}.
\end{aligned}
\]
Applying~\lemmaref{lemma:activation} gives
\[
\prob{\rvx|\rvy=y}{[\inp{\wkt}{\rvx} < 0]} = \Phi\left(-\frac{\expect{\rvx|\rvy=y}{\inp{\wkt}{\rvx}}}{\Theta(1)\sqrt{\sum_{j\in[\di]}\inp{\wkt}{\mj}^2}}\right) \pm O\left(\frac{1}{\sqrt{\di}}\right).
\]
Recall that for $\rvx\sim\dtrain|\rvy=y$,
\[
\inp{\wkt}{\rvx} = \sum_{j\in\score}y\zj\inp{\wkt}{\mj} + \sum_{j\in\sbg}\zj\inp{\wkt}{\mj}.
\]
By~\defref{def:neuron}, we have for every $k\in\nyt$, $\expect{\rvx|\rvy=y}{\inp{\wkt}{\rvx}} \ge \Theta\Big(\sqrt{\dfrac{\di}{d}}\Big) > 0$, which indicates that $\Phi\left(-\dfrac{\expect{\rvx|\rvy=y}{\inp{\wkt}{\rvx}}}{\Theta(1)\sqrt{\sum_{j\in[\di]}\inp{\wkt}{\mj}^2}}\right) < \dfrac{1}{2}$.
Together with $\htx \le 1$, this gives
\begin{equation}
\label{eq:app_core_pos_lower}
\begin{aligned}
\gkyt{\score} &\ge \frac{1}{2m}\expect{\rvx|\rvy=y}{\bigg[\indicator{\htx\le 1}\sum_{j\in\score}\muj\rvz_j\bigg]}\\
&\qquad -\frac{1}{2m}\expect{\rvx|\rvy=y}{\bigg[\indicator{\htx\le 1}\sum_{j\in\score}\muj\rvz_j\,\Big|\,\inprod{\wkt}{\rvx}< 0\bigg]}\cdot\left(\frac{1}{2} \pm O\left(\frac{1}{\sqrt{\di}}\right)\right)\\
&\ge \sum_{j\in\score}\frac{\muj^2}{2m} - \sum_{j\in\score}\frac{\muj^2}{4m} \pm \sum_{j\in\score}\frac{\muj^2}{\Theta(m\sqrt{\di})}\\
&= \Theta\left(\frac{\di}{m}\right).
\end{aligned}
\end{equation}
Meanwhile, we also have the upper bound
\begin{equation}
\label{eq:app_core_pos_upper}
\begin{aligned}
\gkyt{\score} &= \frac{1}{m}\expect{(\rvx,\rvy)}{\indicator{\htx\le 1}\indicator{\rvy=y}\indicator{\inprod{\its{\rvw_k}{t}}{\rvx}\ge 0} \sum_{j\in\score}\muj\zj}\\
&\le \frac{1}{2m}\mathbb{E}_{\rvx|\rvy=y}\sum_{j\in\score}\muj\zj\\
&= \Theta\left(\frac{\di}{m}\right).
\end{aligned}
\end{equation}
% For $\gknyt{\score}$, we have the trivial lower bound $\gknyt{\score} \ge 0$ and the upper bound
% \begin{equation}
% \label{eq:app_core_neg_upper}
% \begin{aligned}
% \gknyt{\score} &= \frac{1}{m}\expect{(\rvx,\rvy)}{\indicator{\htx\le 1}\indicator{\rvy=-y}\indicator{\inprod{\its{\rvw_k}{t}}{\rvx}\ge 0} \sum_{j\in\score}\muj\zj}\\
% &\le \frac{1}{2m}\mathbb{E}_{\rvx|\rvy=-y}\sum_{j\in\score}\muj\zj\\
% &= \Theta\left(\frac{\di}{m}\right).
% \end{aligned}
% \end{equation}
Combining Eqs.~\eqref{eq:app_core_pos_lower} and~\eqref{eq:app_core_pos_upper} gives
\[
\begin{aligned}
\gkyt{\score} = \Theta\left(\frac{\di}{m}\right).
\end{aligned}
\]
% On the other hand, we have the naive upper bound
% \[
% \begin{aligned}
% \sum_{j\in\sy}\muj\gkytj - \sum_{j\in\sy}\beta_j\muj\gknytj &\le \sum_{j\in\sy}\muj\gkytj \\
% &\le \frac{1}{2m}\expect{\rvx|\rvy=y}{\bigg[\indicator{h(\rvx)\le 1} \sum_{j\in\sy}\muj\rvz_j\bigg]} \\
% &\le \sum_{j\in\sy}\frac{\muj^2}{2m}.
% \end{aligned}
% \]
Finally, plugging the above equation and $m = O(d)$ into Eq.~\eqref{eq:app_grad_pos_base} completes the proof.
\end{proof}

% We then introduce two lemmas that track the SGD update projected onto 

% \begin{lemma}[Gradient of negative core features] At each iteration $t$, the following holds for every $y\in\mathcal{Y}$ and every neuron $k\in\nyt$:
% \label{lemma:grad_neg}
% \begin{equation}
% -\biginprod{\nabla_{\its{\rvw_k}{t}}\expect{(\rvx,\rvy)\sim\dtrain}{\ell\left(\rvy h(\rvx)\right)}}{\sum_{j\in\sny}\muj\mj} \in \bigg[-\gknyt{\sny}, \sum_{j\in\sny}\frac{\muj^2}{2m\Theta(\sqrt{\di})} - \gknyt{\sny}\bigg].
% \end{equation}
% \end{lemma}
% \begin{proof}
% Applying~\lemmaref{lemma:corr_gradient} and $\sign(a_k) = {y}$ gives
% \begin{equation}
% \label{eq:app_grad_neg_base}
% -\biginprod{\nabla_{\its{\rvw_k}{t}}\expect{(\rvx,\rvy)\sim\dtrain}{\ell\left(\rvy h(\rvx)\right)}}{\sum_{j\in\sny}\muj\mj} = \sum_{j\in\sny}\betaj\muj\gkytj - \gknyt{\sny}
% \end{equation}
% For the first term in the RHS of Eq.~\eqref{eq:app_grad_neg_base}, we have
% \[
% \begin{aligned}
% \sum_{j\in\sny}\betaj\muj\gkytj &= \frac{1}{m}\expect{(\rvx,\rvy)}{\indicator{h(\rvx)\le 1}\indicator{\rvy=y}\indicator{\inprod{\its{\rvw_k}{t}}{\rvx}\ge 0} \sum_{j\in\sny}\betaj\muj\rvz_j}\\
% &\le \frac{1}{2m}\expect{\rvx|\rvy=y}{\sum_{j\in\sny}\betaj\muj\zj}\\
% &= \frac{1}{2m}\sum_{j\in\sny}\frac{\muj^2}{\Theta(\sqrt{\di})}.
% \end{aligned}
% \]
% Combing the above upper bound with the naive lower bound $\sum_{j\in\sny}\betaj\muj\gkytj \ge 0$ completes the proof.
% \end{proof}

\begin{lemma}[Gradient projection onto background features, neurons in $\nyt$] For every iteration $t\le O(\frac{m}{\eta\di})$, the following holds for every $y\in\mathcal{Y}$ and every neuron $k\in\nyt$:
\label{lemma:grad_bg}
\begin{equation}
-\Biginprod{\nabla_{\its{\rvw_k}{t}}\frac{1}{N}\sum_{i\in[N]} \ell\left(\its{h}{t}(\its{\rvx_i}{t}), \its{\rvy_i}{t}\right)}{\sum_{j\in \sbg}\muj\mj} = \gkyt{\sbg} - \gknyt{\sbg},
% -&\biginprod{\nabla_{\its{\rvw_k}{t}}\expect{(\rvx,\rvy)\sim\dtrain}{\ell\left(\rvy h(\rvx)\right)}}{\sum_{j\in\sbg}\muj\mj}
% \in \\ &\bigg[\sum_{j\in\sbg}\frac{\muj^2}{2m} - \sum_{j\in\sbg}\frac{\muj^2}{m\Theta(\di)}-\gknyt{\sbg}, \sum_{j\in\sbg}\frac{\muj^2}{2m} -\gknyt{\sbg}\bigg].
\label{eq:grad_bg}
\end{equation}
where
\begin{equation}
\gkyt{\sbg} = \wt{\Theta}\left(\frac{\di}{m}\right).
% \ \gknyt{\sbg} = O\left(\frac{\di}{m}\right)\left(1 - \Phi\left(-\frac{\expect{\rvx|\rvy=-y}{\inp{\wkt}{\rvx}}}{\Theta(1)\sqrt{\sum_{j\in[\di]}\inp{\wkt}{\mj}^2}}\right)\right).
\end{equation}
\end{lemma}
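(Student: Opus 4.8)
The argument mirrors that of~\lemmaref{lemma:grad_core} with the roles of $\score$ and $\sbg$ interchanged; the only structural difference is that for $j\in\sbg$ the $\mj$-coordinate of $\rvx$ is $\rvz_j$ rather than $\rvy\rvz_j$ (cf.~\defref{def:dgp}), so the positive- and negative-example contributions carry \emph{opposite} signs and one obtains a difference $\gkyt{\sbg}-\gknyt{\sbg}$ rather than a sum. First I would establish the identity~\eqref{eq:grad_bg}: apply~\lemmaref{lemma:corr_gradient} to the $\mj$-projection of the empirical loss gradient for each $j\in\sbg$, use $\sign(a_k)=y$ and $|a_k|=\tfrac1m$ for $k\in\nyt$ so that $a_k\rvy=\tfrac1m(\ind{\rvy=y}-\ind{\rvy=-y})$, multiply by $\muj$, sum over $j\in\sbg$, and match the result against~\defref{def:grad}. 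The accumulated $\poly{d}^{-1}$ errors from~\lemmaref{lemma:corr_gradient} total $O(\dbg/\poly{d})$, which is negligible next to $\di/m$ since $m=O(d)$.

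It then remains to show $\gkyt{\sbg}=\wt{\Theta}(\tfrac{\di}{m})$. The upper bound is immediate: every indicator factor lies in $[0,1]$ and $\muj\rvz_j\ge 0$, so
\[
\gkyt{\sbg}\;\le\;\frac1m\,\expect{(\rvx,\rvy)\sim\dtrain}{\ind{\rvy=y}\sum\nolimits_{j\in\sbg}\muj\rvz_j}\;=\;\frac{1}{2m}\sum\nolimits_{j\in\sbg}\muj^2,
\]
which is $\Theta(\tfrac{\dbg}{m})=\wt{\Theta}(\tfrac{\di}{m})$ by $\muj^2=\Theta(1)$ and $\dbg=\Theta(\tfrac{\di}{\log\di})$. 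For the lower bound I would repeat the core-feature computation almost verbatim: condition on $\rvy=y$ (probability $\tfrac12$); note the loss-active indicator satisfies $\ind{\htx\le 1}\equiv 1$ in the stated iteration range (the running analogue of~\lemmaref{lemma:output}, maintained along the induction); and split over $\{\inp{\wkt}{\rvx}\ge 0\}$ by the law of total expectation to get
\[
\gkyt{\sbg}=\frac{1}{2m}\sum\nolimits_{j\in\sbg}\muj^2-\frac{1}{2m}\,\expect{\rvx|\rvy=y}{\;\sum\nolimits_{j\in\sbg}\muj\rvz_j\;\Big|\;\inp{\wkt}{\rvx}<0\;}\prob{\rvx|\rvy=y}{[\inp{\wkt}{\rvx}<0]}.
\]
Since $k\in\nyt$ gives $\expect{\rvx|\rvy=y}{\inp{\wkt}{\rvx}}\ge\Theta(\sqrt{\di/d})>0$,~\lemmaref{lemma:activation} yields $\prob{\rvx|\rvy=y}{[\inp{\wkt}{\rvx}<0]}\le\tfrac12+O(\di^{-1/2})$; and the conditional expectation is $(1+o(1))\sum_{j\in\sbg}\muj^2$ because, under the $\Theta(1)$-ratio hypothesis of~\lemmaref{lemma:activation}, each single $\rvz_j$ ($j\in\sbg$) enters $\inp{\wkt}{\rvx}$ with weight $\inp{\wkt}{\mj}$ that is a $\Theta(\di^{-1/2})$ fraction of the standard deviation of $\inp{\wkt}{\rvx}$, so conditioning on the activation event perturbs the law of $\rvz_j$ by only $O(\di^{-1/2})$. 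Hence $\gkyt{\sbg}\ge\frac{1}{2m}(\tfrac12-o(1))\sum_{j\in\sbg}\muj^2=\wt{\Theta}(\tfrac{\di}{m})$, matching the upper bound.

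I expect the delicate point to be the conditional-expectation estimate in the second display: bounding it crudely via $\rvz_j\le 1$ only gives $\sum_{j\in\sbg}\muj$, which need not beat $\sum_{j\in\sbg}\muj^2$ by the margin required to keep the lower bound strictly positive. The fix is exactly the ``each background coordinate is individually negligible for the activation event'' observation above, quantified by the same Berry--Esseen estimate that powers~\lemmaref{lemma:activation}; with that in hand, the rest is bookkeeping identical to~\lemmaref{lemma:grad_core}.
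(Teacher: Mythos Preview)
Your proposal is correct and follows essentially the same route as the paper: the paper derives the identity \eqref{eq:grad_bg} exactly as you describe (via \lemmaref{lemma:corr_gradient} and $\sign(a_k)=y$, noting the sign flip because the $\sbg$-coordinates of $\rvx$ lack the factor $\rvy$), and then obtains $\gkyt{\sbg}=\wt\Theta(\di/m)$ by saying ``by a nearly identical argument to \lemmaref{lemma:grad_core} and $\dbg=\Theta(\di/\log\di)$''. Your extra care on the conditional expectation $\expect{\rvx|\rvy=y}{[\sum_{j\in\sbg}\muj\rvz_j\mid\inp{\wkt}{\rvx}<0]}$ is in fact more rigorous than the paper's own treatment of the analogous step in \lemmaref{lemma:grad_core}, where that conditional expectation is silently replaced by $\sum_j\muj^2$.
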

\begin{proof}
Similar to the proof of~\lemmaref{lemma:grad_core}, we apply~\lemmaref{lemma:corr_gradient} to the LHS of Eq.~\eqref{eq:grad_bg} and using $\sign(a_k) = {y}$ for every $k\in\nyt$, which gives
\begin{equation}
\begin{aligned}
	&-\Biginprod{\nabla_{\its{\rvw_k}{t}}\frac{1}{N}\sum_{i\in[N]} \ell\left(\its{h}{t}(\its{\rvx_i}{t}), \its{\rvy_i}{t}\right)}{\sum_{j\in\sbg}\muj\mj}\\
	 &\qquad= -\biginprod{\nabla_{\its{\rvw_k}{t}}\expect{(\rvx,\rvy)\sim\dtrain}{\ell\left(\htx, \rvy\right)}}{\sum_{j\in\sbg}\muj\mj}\pm \frac{O(\di)}{\poly{d}}  \\
	 &\qquad= a_k\expect{(\rvx,\rvy)\sim\dtrain}{\rvy\indicator{\htx\le 1}\indicator{\inprod{\its{\rvw_k}{t}}{\rvx}\ge 0} \sum_{j\in\sbg}\muj\zj}\pm\frac{O(\di)}{\poly{d}}\\
	 &\qquad= \frac{1}{m} \expect{(\rvx,\rvy)\sim\dtrain}{\indicator{\htx \le 1}\indicator{\rvy=y}\indicator{\inprod{\its{\rvw_k}{t}}{\rvx}\ge 0} \sum_{j\in\sbg}\muj\zj}\\
	 &\qquad\qquad - \frac{1}{m}\expect{(\rvx,\rvy)\sim\dtrain}{\indicator{\htx \le 1}\indicator{\rvy=-y}\indicator{\inprod{\its{\rvw_k}{t}}{\rvx}\ge 0} \sum_{j\in\sbg}\muj\zj} \pm\frac{O(\di)}{\poly{d}}\\
	 &\qquad = \gkyt{\sbg} - \gknyt{\sbg} \pm\frac{O(\di)}{\poly{d}}.
	\end{aligned}
\label{eq:app_grad_bg_base}	
\end{equation}
% For the first term in the RHS of the above equation, we can use a near-identical argument as used in~\lemmaref{lemma:grad_pos} to bound it by $\left[\sum_{j\in\sbg}\frac{\muj^2}{2m} - \sum_{j\in\sbg}\frac{\muj^2}{m\Theta(\di)}, \sum_{j\in\sbg}\frac{\muj^2}{2m}\right]$. This immediately gives the desired result. 
Also, by a nearly identical argument to~\lemmaref{lemma:grad_core} and $\dbg = \Theta\left(\dfrac{\di}{\log\di}\right)$, we have
\begin{equation}
\gkyt{\sbg} = \wt{\Theta}\left(\frac{\di}{m}\right).
\label{eq:app_grad_bg_pos}
\end{equation}
This completes the proof.
\end{proof}

Next, we also introduce a lemma that bound the gradient projection onto core features for all neurons:

\begin{lemma}[Gradient projection onto core features, all neurons] For every iteration $t\le O(\frac{m}{\eta\di})$, the following holds for every $y\in\mathcal{Y}$ and every neuron $k\in[m]$ with $\sign{(a_k)} = y$:
\label{lemma:grad_core_all}
\begin{equation}
-\Biginprod{\nabla_{\its{\rvw_k}{t}}\frac{1}{N}\sum_{i\in[N]} \ell\left(\its{h}{t}(\its{\rvx_i}{t}), \its{\rvy_i}{t}\right)}{\sum_{j\in \score}\muj\mj} = y\cdot O\left(\frac{\di}{m}\right).
\label{eq:grad_core_all}
\end{equation}
\end{lemma}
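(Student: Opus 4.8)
The plan is to follow the proof of~\lemmaref{lemma:grad_core} but to retain only its \emph{upper} estimate on the core-feature gradient projection, which never uses the Berry--Esseen activation bound and therefore holds for \emph{every} neuron with $\sign(a_k)=y$ rather than only for those in $\nyt$. First I would invoke~\lemmaref{lemma:corr_gradient} coordinatewise to replace the empirical gradient by the population gradient; summing the per-coordinate error over the $\dcore=\Theta(\di)$ core directions costs only $\frac{O(\di)}{\poly{d}}$. Then, using the data model of~\defref{def:dgp} together with the hypothesis $\sign(a_k)=y$ (so that $a_k = y/m$), I would repeat the algebra that produced Eq.~\eqref{eq:app_grad_pos_base} — which at no point appeals to $k\in\nyt$ — to get
\[
-\Biginprod{\nabla_{\its{\rvw_k}{t}}\widehat{\mathcal{L}}(\its{h}{t})}{\sum_{j\in\score}\muj\mj} = y\left(\gkyt{\score}+\gknyt{\score}\right) \pm \frac{O(\di)}{\poly{d}}.
\]

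Next I would bound $\gkyt{\score}$ and $\gknyt{\score}$ from both sides by nonnegative quantities of order $O(\di/m)$. Since on the training distribution $\rvz_j\in[0,1]$ and $\muj=\mu_{j1}=\Theta(1)>0$, every integrand $\muj\rvz_j$ in~\defref{def:grad} is nonnegative, so each $\gkytj\ge 0$; and dropping the two indicators $\indicator{\htx\le 1}$ and $\indicator{\inprod{\wkt}{\rvx}\ge 0}$ (each at most $1$) only increases the expectation, giving $\gkytj \le \frac{1}{m}\expect{(\rvx,\rvy)}{\indicator{\rvy=y}\muj\rvz_j} = \frac{\muj^2}{2m} = \Theta(1/m)$, and identically for $\gknytj$. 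Summing over the $\dcore=\Theta(\di)$ indices in $\score$ then yields $0\le\gkyt{\score},\gknyt{\score}\le O(\di/m)$, hence $0\le\gkyt{\score}+\gknyt{\score}\le O(\di/m)$.

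Finally, since $d\in[\Omega(\di^{2.01}),\poly{\di}]$ and $m\le\Theta(d)$, the error $\frac{O(\di)}{\poly{d}}$ is of lower order than $O(\di/m)$ and is absorbed, so the right-hand side equals $y\cdot O(\di/m)$ with a nonnegative implicit constant, which is the claim. I do not anticipate a genuine obstacle: the only care needed is the sign bookkeeping from $a_k$ (so the prefactor comes out exactly $y$) and the observation that the bracketed term is truly nonnegative, which is what makes the one-sided form $y\cdot O(\di/m)$ appropriate. The step that was actually delicate in~\lemmaref{lemma:grad_core} — using~\lemmaref{lemma:activation} to pin down a matching \emph{lower} bound $\gkyt{\score}=\Theta(\di/m)$ — is simply not required here.
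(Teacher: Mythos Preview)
Your proposal is correct and follows essentially the same approach as the paper: the paper simply cites the decomposition from the proof of \lemmaref{lemma:grad_core} (which, as you observed, uses only $\sign(a_k)=y$), then invokes the upper bound Eq.~\eqref{eq:app_core_pos_upper} for both $\gkyt{\score}$ and $\gknyt{\score}$ and absorbs the $\frac{O(\di)}{\poly{d}}$ error using $m=O(d)$. Your write-up spells out the same steps in slightly more detail, including the explicit nonnegativity observation that makes the one-sided $y\cdot O(\di/m)$ form meaningful.
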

\begin{proof}
By an identical proof to~\lemmaref{lemma:grad_core}, we have
\[
-\Biginprod{\nabla_{\its{\rvw_k}{t}}\frac{1}{N}\sum_{i\in[N]} \ell\left(\its{h}{t}(\its{\rvx_i}{t}), \its{\rvy_i}{t}\right)}{\sum_{j\in \score}\muj\mj} = {y}\left(\gkyt{\score} + \gknyt{\score}\right)\pm\frac{\Theta(\di)}{\poly{d}}.
\]
By Eq.~\eqref{eq:app_core_pos_upper}, we have the upper bound $\gkyt{\score} \le \Theta\left(\frac{\di}{m}\right)$. By a similar argument, we also have $\gknyt{\score} \le \Theta\left(\frac{\di}{m}\right)$. Plugging those upper bounds and $m = O(d)$ into the above equation completes the proof.
\end{proof}

We then introduce a lemma that bounds the expected correlation between each neuron in $\nyt$ and its positive examples.

\begin{lemma}[Correlation with positive examples, neurons in $\nyt$]
\label{lemma:corr_pos}
For every iteration $t\le O(\frac{m}{\eta\di})$, every $y\in\yspace$, and every $k\in\nyt$, the following holds:
\begin{equation}
\begin{aligned}
\expect{\rvx|\rvy=y\sim\dtrain}[\inp{\wk{t+1}}{\rvx}] \ge (1 - \lambda\eta) \expect{\rvx|\rvy=y\sim\dtrain}[\inp{\wkt}{\rvx}] + \Theta\left(\frac{\eta \di}{m}\right).
\end{aligned}
\end{equation}
\end{lemma}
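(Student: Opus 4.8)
The plan is to unroll a single SGD step and read off the one‑step drift of $\expect{\rvx|\rvy=y}{\inp{\wkt}{\rvx}}$ by projecting the empirical data‑loss gradient onto the two directions $\sum_{j\in\score}\muj\mj$ and $\sum_{j\in\sbg}\muj\mj$, for which~\lemmaref{lemma:grad_core} and~\lemmaref{lemma:grad_bg} already supply the needed estimates. Writing $\rvg_k^{(t)}\defeq\nabla_{\wkt}\tfrac1N\sum_{i\in[N]}\ell(\its{h}{t}(\its{\rvx_i}{t}),\its{\rvy_i}{t})$ for that gradient, so that the update~\eqref{eq:sgd} reads $\wk{t+1}=(1-\eta\lambda)\wkt-\eta\rvg_k^{(t)}$, I would take the inner product with a population sample $\rvx\sim\dtrain|\rvy=y$, over which $\wkt$ and $\rvg_k^{(t)}$ are constant, and take expectations to get
\[
\expect{\rvx|\rvy=y}{\inp{\wk{t+1}}{\rvx}}=(1-\eta\lambda)\,\expect{\rvx|\rvy=y}{\inp{\wkt}{\rvx}}-\eta\,\expect{\rvx|\rvy=y}{\inp{\rvg_k^{(t)}}{\rvx}}.
\]
Everything then reduces to showing the last expectation is $-\Theta(\di/m)$.

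For that, I would substitute the generative form $\rvx=\sum_{j\in\score}y\zj\mj+\sum_{j\in\sbg}\zj\mj$ (valid conditioned on $\rvy=y$), treat $\rvg_k^{(t)}$ as a fixed vector, and use that every coordinate $\zj$ has label‑independent mean $\muj$ on the ID distribution. This collapses $\expect{\rvx|\rvy=y}{\inp{\rvg_k^{(t)}}{\rvx}}$ to $y\inp{\rvg_k^{(t)}}{\sum_{j\in\score}\muj\mj}+\inp{\rvg_k^{(t)}}{\sum_{j\in\sbg}\muj\mj}$. Invoking~\lemmaref{lemma:grad_core} and~\lemmaref{lemma:grad_bg} (both valid since $k\in\nyt$ and $t\le O(m/(\eta\di))$, and both holding with high probability over the step‑$t$ minibatch, which I condition on) and using $y^2=1$, this equals $-(\gkyt{\score}+\gknyt{\score})-(\gkyt{\sbg}-\gknyt{\sbg})$ up to an additive $O(\di)/\poly{d}$ error that is negligible relative to $\di/m$ since $d=\poly{\di}$ may be taken with a sufficiently large exponent.

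It then remains to lower bound $(\gkyt{\score}+\gknyt{\score})+(\gkyt{\sbg}-\gknyt{\sbg})$ by $\Theta(\di/m)$. The core term $\gkyt{\score}=\Theta(\di/m)$ is already given by~\lemmaref{lemma:grad_core}. For the remaining quantities I would note that, since $\zj\in[0,1]$ on $\dtrain$, every factor in~\defref{def:grad} is non‑negative, so $\gknyt{\score},\gkyt{\sbg},\gknyt{\sbg}\ge 0$; and dropping the two indicator factors bounds each per‑coordinate term by $\muj^2/(2m)=\Theta(1/m)$, whence $\gknyt{\sbg}\le|\sbg|\cdot\Theta(1/m)=\Theta(\di/(m\log\di))=o(\di/m)$. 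Hence the bracket is at least $\gkyt{\score}-\gknyt{\sbg}=\Theta(\di/m)$, and substituting this into the display of the first paragraph yields the stated inequality (for an exact rate one additionally uses the matching upper bounds $\gknyt{\score},\gkyt{\sbg}=O(\di/m)$).

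The step I expect to need the most care is the handling of $-\gknyt{\sbg}$, which enters the drift with a \emph{negative} sign: since this lemma must hold already for iterations $t$ below the activation‑asymmetry threshold $T_0$ of~\theoref{theo:activation}, one cannot appeal to a near‑zero activation rate on negative examples to kill that term. The saving observation is the counting gap between the $\dbg=\Theta(\di/\log\di)$ background coordinates and the $\dcore=\Theta(\di)$ core coordinates: having only a logarithmic‑factor‑fewer background directions forces $\gknyt{\sbg}=o(\gkyt{\score})$, so this negative contribution cannot cancel the core‑feature drift, leaving a net per‑step gain of $\Theta(\eta\di/m)$.
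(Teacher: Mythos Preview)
Your proposal is correct and follows essentially the same route as the paper: unroll one SGD step, project onto $\sum_{j\in\score}\muj\mj$ and $\sum_{j\in\sbg}\muj\mj$ via \lemmaref{lemma:grad_core} and \lemmaref{lemma:grad_bg}, and lower-bound the resulting drift $\eta(\gkyt{\score}+\gknyt{\score}+\gkyt{\sbg}-\gknyt{\sbg})$ by $\Theta(\eta\di/m)$. The only minor difference is in how the negative term $-\gknyt{\sbg}$ is handled: the paper pairs it with $\gknyt{\score}$ (they share identical indicator factors) and argues $\gknyt{\score}-\gknyt{\sbg}\ge 0$ from $\dcore\gg\dbg$, whereas you drop the indicators on $\gknyt{\sbg}$ alone to get $\gknyt{\sbg}\le\Theta(\dbg/m)=o(\di/m)$; both exploit the same feature-count gap, and your standalone bound is arguably cleaner since it avoids the pointwise comparison $\sum_{j\in\score}\muj\zj\ge\sum_{j\in\sbg}\muj\zj$ that the paper's write-up leaves implicit.
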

\begin{proof}
Recall~\defref{def:dgp} that given the label $y\in\yspace$, $\rvx$ is generated by
\[
\rvx = \sum_{j\in\score}y\rvz_j\bm{m}_j + \sum_{j\in\sbg}\rvz_j\bm{m}_j.
\]
We can thus obtain
\[
\begin{aligned}
&\expect{\rvx|\rvy=y\sim\dtrain}[\inp{\wk{t+1}}{\rvx}] - \expect{\rvx|\rvy=y\sim\dtrain}[\inp{\wkt}{\rvx}]\\
&\qquad= \underbrace{y \Big(\biginprod{\wk{t+1}}{\sum_{j\in\score}\muj\mj} - \biginprod{\wk{t}}{\sum_{j\in\score}\muj\mj}\Big)}_{\dtcore} \\
&\qquad\qquad+ \Big(\underbrace{\biginprod{\wk{t+1}}{\sum_{j\in\sbg}\muj\mj} - \biginprod{\wk{t}}{\sum_{j\in\sbg}\muj\mj}}_{\dtbg}\Big).
\end{aligned}
\]
For $\dtcore$, by the SGD iteration~\eqref{eq:sgd} we have
\[
\dtcore = -\eta y\Biginprod{\nabla_{\its{\rvw_k}{t}}\frac{1}{N}\sum_{i\in[N]} \ell\left(\its{h}{t}(\its{\rvx_i}{t}), \its{\rvy_i}{t}\right)}{\sum_{j\in \score}\muj\mj} -\lambda\eta y\biginprod{\wk{t}}{\sum_{j\in\score}\muj\mj}.
\]
Applying~\lemmaref{lemma:grad_core} gives
\[
\dtcore = \eta \left(\gkyt{\score} + \gknyt{\score}\right) -\lambda\eta y \biginprod{\wk{t}}{\sum_{j\in\score}\muj\mj},
\]
which results in the iterative expression
\begin{equation}
y\biginprod{\wk{t+1}}{\sum_{j\in\score}\muj\mj} = y(1-\lambda\eta)\biginprod{\wk{t}}{\sum_{j\in\score}\muj\mj} + \eta \left(\gkyt{\score} + \gknyt{\score}\right).
\label{eq:iter_core}
\end{equation}
For $\dtbg$, by the SGD iteration~\eqref{eq:sgd} we have
\[
\dtbg = -\eta\Biginprod{\nabla_{\its{\rvw_k}{t}}\frac{1}{N}\sum_{i\in[N]} \ell\left(\its{h}{t}(\its{\rvx_i}{t}), \its{\rvy_i}{t}\right)}{\sum_{j\in \sbg}\muj\mj} -\lambda\eta \biginprod{\wk{t}}{\sum_{j\in\sbg}\muj\mj}.
\]
Applying~\lemmaref{lemma:grad_bg} gives
\[
\dtbg = \eta\left(\gkyt{\sbg} - \gknyt{\sbg}\right) -\lambda\eta \biginprod{\wk{t}}{\sum_{j\in\sbg}\muj\mj},
\]
which results in the iterative expression
\begin{equation}
\biginprod{\wk{t+1}}{\sum_{j\in\sbg}\muj\mj} = (1-\lambda\eta)\biginprod{\wk{t}}{\sum_{j\in\sbg}\muj\mj} + \eta\left(\gkyt{\sbg} - \gknyt{\sbg}\right).
\label{eq:iter_bg}
\end{equation}
Combining Eqs.~\eqref{eq:iter_core} and~\eqref{eq:iter_bg} gives
\[
\begin{aligned}
\expect{\rvx|\rvy=y\sim\dtrain}[\inp{\wk{t+1}}{\rvx}] &= y\biginprod{\wk{t+1}}{\sum_{j\in\score}\muj\mj} + \biginprod{\wk{t+1}}{\sum_{j\in\sbg}\muj\mj} \\
&= y(1-\lambda\eta)\biginprod{\wk{t}}{\sum_{j\in\score}\muj\mj} + \eta \left(\gkyt{\score} + \gknyt{\score}\right) \\
&\qquad + (1-\lambda\eta)\biginprod{\wk{t}}{\sum_{j\in\sbg}\muj\mj} + \eta\left(\gkyt{\sbg} - \gknyt{\sbg}\right) \\
&= y(1-\lambda\eta)\expect{\rvx|\rvy=y\sim\dtrain}{[\inp{\wkt}{\rvx}]} \\
&\qquad + \eta \left(\gkyt{\score} + \gknyt{\score} + \gkyt{\sbg} - \gknyt{\sbg}\right)\\
&\stackrel{(a)}{\ge} (1-\lambda\eta)\expect{\rvx|\rvy=y\sim\dtrain}{[\inp{\wkt}{\rvx}]} + \Theta\left(\frac{\eta \di}{m}\right),
\end{aligned}
\]
where $(a)$ is due to $\gkyt{\score} = \Theta\left(\dfrac{\di}{m}\right)$ (\lemmaref{lemma:grad_core}), $\gkyt{\sbg} = O\left(\dfrac{\di}{m}\right)$ (\lemmaref{lemma:grad_bg}), and
\[
\begin{aligned}
\gknyt{\score} - \gknyt{\sbg} &= \frac{1}{m}\expect{(\rvx,\rvy)}{\indicator{\htx\le 1}\indicator{\rvy=-y}\indicator{\inprod{\its{\rvw_k}{t}}{\rvx}\ge 0} \bigg(\sum_{j\in\score}\muj\rvz_j - \sum_{j\in\sbg}\muj\rvz_j\bigg)} \\
&\stackrel{(b)}{\ge} \frac{1}{m}\expect{(\rvx,\rvy)}{\indicator{\htx\le 1}\indicator{\rvy=-y}\indicator{\inprod{\its{\rvw_k}{t}}{\rvx}\ge 0} \left(\Theta(\di) - \Theta\left(\frac{\di}{\log\di}\right)\right)} \\
&\ge 0,
\end{aligned}
\label{eq:tmp_1}
\]
% as well as $\gkyt{\score} = \Theta\left(\frac{\di}{m}\right)$ and $\gkyt{\sbg} = O\left(\frac{\di}{m}\right)$ from~\lemmaref{lemma:grad_core} and~\lemmaref{lemma:grad_bg}.
where $(b)$ is due to the fact that $\dcore = \Theta(\di)$ and $\dbg = \Theta\left(\dfrac{\di}{\log\di}\right)$.
\end{proof}

We also introduce a general upper bound on the expected correlation between every neuron in the network and its positive examples.

\begin{lemma}[Correlation with positive examples, all neurons]
\label{lemma:corr_pos_all}
For every iteration $t\le O(\frac{m}{\eta\di})$, the following holds for every $y\in\yspace$ and every $k\in\nyt$ with $\sign(a_k) = y$:
\begin{equation}
\begin{aligned}
\expect{\rvx|\rvy=y\sim\dtrain}[\inp{\wk{t+1}}{\rvx}] \le (1 - \lambda\eta) \expect{\rvx|\rvy=y\sim\dtrain}[\inp{\wkt}{\rvx}] + O\left(\frac{\eta \di}{m}\right).
\end{aligned}
\end{equation}
\end{lemma}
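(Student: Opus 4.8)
The plan is to establish this as the upper-bound counterpart of \lemmaref{lemma:corr_pos}, mirroring that proof step by step with each lower bound replaced by the matching upper bound. As there, I would use that for $\rvx\mid\rvy=y$ one has $\rvx = \sum_{j\in\score}y\rvz_j\mj + \sum_{j\in\sbg}\rvz_j\mj$, so that $\expect{\rvx|\rvy=y\sim\dtrain}{\inp{\wkt}{\rvx}} = y\biginprod{\wkt}{\sum_{j\in\score}\muj\mj} + \biginprod{\wkt}{\sum_{j\in\sbg}\muj\mj}$ (and similarly at iteration $t+1$). Splitting the one-step increment as $\dtcore + \dtbg$ and expanding each piece through the SGD update \eqref{eq:sgd} isolates a weight-decay factor $(1-\lambda\eta)$ together with a gradient-projection term, which is then controlled by the gradient lemmas of the previous subsections.

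For the core part, the hypothesis $k\in\nyt$ already forces $\sign(a_k)=y$, so \lemmaref{lemma:grad_core_all} applies and bounds the projection of the negative gradient onto $\sum_{j\in\score}\muj\mj$ by $y\cdot O(\di/m)$; combined with weight decay this yields
\[
y\biginprod{\wk{t+1}}{\sum_{j\in\score}\muj\mj} \le y(1-\lambda\eta)\biginprod{\wkt}{\sum_{j\in\score}\muj\mj} + O\left(\frac{\eta\di}{m}\right).
\]
For the background part, I would invoke \lemmaref{lemma:grad_bg}, which gives $\gkyt{\sbg} = \wt{\Theta}(\di/m)$ and hence $\gkyt{\sbg} = O(\di/m)$ (using $\dbg = \Theta(\di/\log\di)$), together with the observation that $\gknyt{\sbg}\ge 0$: on $\dtrain$ each summand of $\gknyt{\sbg}$ is a product of non-negative factors ($\rvz_j\in[0,1]$, $\muj=\Theta(1)>0$, and indicator functions). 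Thus the background gradient projection $\gkyt{\sbg}-\gknyt{\sbg}$ is at most $O(\di/m)$, and with weight decay
\[
\biginprod{\wk{t+1}}{\sum_{j\in\sbg}\muj\mj} \le (1-\lambda\eta)\biginprod{\wkt}{\sum_{j\in\sbg}\muj\mj} + O\left(\frac{\eta\di}{m}\right).
\]

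Summing the two displays and using the decomposition of $\expect{\rvx|\rvy=y\sim\dtrain}{\inp{\wkt}{\rvx}}$ recalled above gives exactly $(1-\lambda\eta)\expect{\rvx|\rvy=y\sim\dtrain}{\inp{\wkt}{\rvx}} + O(\eta\di/m)$ for iteration $t+1$; the $\pm O(\di)/\poly{d}$ errors incurred when passing between empirical and population gradients (\lemmaref{lemma:gradient_gap}, \lemmaref{lemma:corr_gradient}) are, after multiplication by $\eta\le 1/\poly{\di}$ and using $d=\Omega(\di^{2.01})$ and $m=\Omega(\di)$, dominated by the $O(\eta\di/m)$ term. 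I do not expect a genuine obstacle here: this is the upper-bound shadow of \lemmaref{lemma:corr_pos}, and the only point requiring a moment's care is that discarding the term $-\gknyt{\sbg}$ only inflates the bound, which is legitimate precisely because $\gknyt{\sbg}\ge 0$ --- so, unlike for the lower bound, no cancellation between the negative-example gradients on core and on background features is needed.
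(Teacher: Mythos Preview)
Your proposal is correct and follows essentially the same route as the paper. The paper's own proof derives the identity $\expect{\rvx|\rvy=y}[\inp{\wk{t+1}}{\rvx}] = (1-\lambda\eta)\expect{\rvx|\rvy=y}[\inp{\wkt}{\rvx}] + \eta(\gkyt{\score}+\gknyt{\score}+\gkyt{\sbg}-\gknyt{\sbg})$ from \lemmaref{lemma:corr_pos} and then upper-bounds each of $\gkyt{\score},\gknyt{\score},\gkyt{\sbg}$ by $O(\di/m)$ individually while dropping $-\gknyt{\sbg}\le 0$; your version merely packages the two core-feature bounds via \lemmaref{lemma:grad_core_all} instead of re-citing Eq.~\eqref{eq:app_core_pos_upper}, which is equivalent.
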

\begin{proof}
By an identical proof to~\lemmaref{lemma:corr_pos}, we have
\[
\begin{aligned}
\expect{\rvx|\rvy=y\sim\dtrain}[\inp{\wk{t+1}}{\rvx}] &= y(1-\lambda\eta)\expect{\rvx|\rvy=y\sim\dtrain}{[\inp{\wkt}{\rvx}]} \\
&\qquad + \eta \left(\gkyt{\score} + \gknyt{\score} + \gkyt{\sbg} - \gknyt{\sbg}\right) \\
\end{aligned}
\]
By Eq.~\eqref{eq:app_core_pos_upper}, we have the upper bound $\gkyt{\score} \le \Theta\left(\frac{\di}{m}\right)$. By a similar argument, we also have the upper bounds $\gknyt{\score} \le \Theta\left(\frac{\di}{m}\right)$ and $\gkyt{\sbg} \le O\left(\frac{\di}{m}\right)$. Plugging those upper bounds and the trivial lower bound $\gknyt{\sbg} \ge 0$ into the above equation completes the proof.
\end{proof}

The above two lemmas directly lead to the following result saying that if a neuron is initialized to have large enough correlation to its positive examples (i.e., belonging to $\nyinit$), then this large enough correlation will be retained during training.
\begin{lemma}[Neuron properties during training]
\label{lemma:neuron_train}
For every label $y\in\yspace$, every iteration $t\le \wt{O}(\frac{m}{\eta\di})$, and every step size $\eta \le \frac{1}{\poly{\di}}$, we have $\ny{t+1} \supseteq \nyt$.
\end{lemma}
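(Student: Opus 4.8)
The plan is to observe that membership in $\nyt$ splits into two conditions, one trivially preserved by an SGD step and the other exactly governed by the single-step growth estimate already in hand. By the data generation of~\defref{def:dgp} we have, for any neuron $k$,
\[
\expect{\rvx|\rvy=y\sim\dtrain}{\inp{\wkt}{\rvx}} = \sum_{j\in\score}y\muj\inp{\wkt}{\mj} + \sum_{j\in\sbg}\muj\inp{\wkt}{\mj},
\]
so $k\in\nyt$ in the sense of~\defref{def:neuron} is equivalent to the conjunction of $\sign(a_k)=y$ and $\expect{\rvx|\rvy=y\sim\dtrain}{\inp{\wkt}{\rvx}}\ge\Theta(\sqrt{\di/d})$. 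Since the output-layer weights $\rva$ are frozen during training, $\sign(a_k)=y$ is automatically inherited at iteration $t+1$; hence the whole content of the lemma is to propagate the correlation lower bound from $t$ to $t+1$ for each $k\in\nyt$.

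The second step is to invoke~\lemmaref{lemma:corr_pos}: for $k\in\nyt$ it gives
\[
\expect{\rvx|\rvy=y}{\inp{\wk{t+1}}{\rvx}} \ge (1-\lambda\eta)\,\expect{\rvx|\rvy=y}{\inp{\wkt}{\rvx}} + \Theta\!\left(\tfrac{\eta\di}{m}\right) \ge (1-\lambda\eta)\,\Theta\!\left(\sqrt{\tfrac{\di}{d}}\right) + \Theta\!\left(\tfrac{\eta\di}{m}\right).
\]
It then remains only to verify that the weight-decay contraction does not erode the threshold, i.e.\ that the decrement $\lambda\eta\cdot\Theta(\sqrt{\di/d})$ is dominated by the gain $\Theta(\eta\di/m)$. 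This is where the parameter regime enters: with $\lambda=O(\di/m^{1.01})$, $m=O(d)$ and $d=\Omega(\di^{2.01})$ one gets $\lambda\sqrt{\di/d}=O(\di^{1.5}/(m\sqrt d))=o(\di/m)$, so for large $\di$ the right-hand side above is in fact strictly larger than $\Theta(\sqrt{\di/d})$ and a fortiori still exceeds the threshold of~\defref{def:neuron}. Together with the preserved sign condition this yields $k\in\ny{t+1}$, which is the claim; iterating the one-step argument over the stated horizon also gives the nesting $\ny{0}\subseteq\ny{1}\subseteq\cdots$.

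I do not expect a genuine difficulty here: the single quantitative check—comparing $\lambda\eta\sqrt{\di/d}$ against $\eta\di/m$—is immediate in the given regime, and everything nonlinear has been absorbed into~\lemmaref{lemma:corr_pos}. The only point needing care is that~\lemmaref{lemma:corr_pos}, and the estimates~\lemmaref{lemma:grad_core},~\lemmaref{lemma:grad_bg},~\lemmaref{lemma:activation} underneath it, are stated under auxiliary structural invariants that must hold along the trajectory—most notably the balanced-projection condition $|\inp{\wkt}{\bm{m}_i}/\inp{\wkt}{\bm{m}_j}|=\Theta(1)$ of~\lemmaref{lemma:activation}. Strictly, then, this lemma is the ``membership'' clause of a simultaneous induction in which those invariants are maintained step by step, and it is precisely this iteration that pushes the validity window out to the claimed $\wt{O}(m/(\eta\di))$ horizon; the cleanest write-up sets up that induction once and derives the present statement together with the balanced-projection bound in the inductive step. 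That bookkeeping, rather than any new idea, is the main obstacle.
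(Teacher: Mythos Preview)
Your proposal is correct and follows the same core idea as the paper: reduce membership in $\nyt$ to the sign condition (trivially preserved) plus the expected-correlation threshold, and propagate the latter via the one-step lower bound of \lemmaref{lemma:corr_pos}. The one point of divergence is how the weight-decay term is controlled. You substitute the threshold value $\Theta(\sqrt{\di/d})$ into the $(1-\lambda\eta)$ contraction and check $\lambda\sqrt{\di/d}=o(\di/m)$, which directly shows the threshold is a barrier. The paper instead first invokes \lemmaref{lemma:corr_pos_all} together with \lemmaref{lemma:neuron_init_cont} to obtain an \emph{upper} bound $\expect{\rvx|\rvy=y}{[\inp{\wkt}{\rvx}]}=O(1)$ over the horizon, and then uses $\lambda=o(\di/m)$ to conclude that the increment $\Theta(\eta\di/m)-\lambda\eta\cdot O(1)$ is strictly positive, giving monotonicity of the correlation (not just threshold preservation). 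Your route is slightly cleaner for the present lemma since it avoids the auxiliary upper bound; the paper's route yields the stronger monotonicity statement, which it reuses later. Your closing remark that this lemma is really one clause of a simultaneous induction maintaining the balanced-projection hypothesis of \lemmaref{lemma:activation} is apt and in fact more explicit than the paper's own write-up.
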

\begin{proof}
By~\lemmaref{lemma:neuron_init_cont}, we have at initialization
\[
\max_{k\in[m]}\bigabs{\expect{\rvx|\rvy=y\sim\dtrain}{\inprod{\wkinit}{\rvx}}} \le \wt{O}\left(\sqrt{\frac{\di}{d}}\right),\ \forall y\in\yspace.
\]
By~\lemmaref{lemma:corr_pos_all} and our choice of $T = \Theta(\frac{m}{\eta\di})$, we have
\[
\expect{\rvx|\rvy=y\sim\dtrain}[\inp{\wk{t}}{\rvx}] \le O\left(\frac{\eta \di T}{m}\right) + \max_{k\in[m]}\bigabs{\expect{\rvx|\rvy=y\sim\dtrain}{\inprod{\wkinit}{\rvx}}} = O(1).
\]
By~\lemmaref{lemma:corr_pos}, we have
\[
\expect{\rvx|\rvy=y\sim\dtrain}[\inp{\wk{t+1}}{\rvx}] \ge (1 - \lambda\eta) \expect{\rvx|\rvy=y\sim\dtrain}[\inp{\wkt}{\rvx}] + \Theta\left(\frac{\eta \di}{m}\right).
\]

Recall that $\lambda = O(\frac{\di}{m^{1.01}})$. Therefore, as long as $\expect{\rvx|\rvy=y\sim\dtrain}[\inp{\wk{t}}{\rvx}] = \wt{O}(1)$,
\[
\begin{aligned}
\expect{\rvx|\rvy=y\sim\dtrain}[\inp{\wk{t+1}}{\rvx}] - \expect{\rvx|\rvy=y\sim\dtrain}[\inp{\wkt}{\rvx}] &\ge \Theta\left(\frac{\eta \di}{m}\right) - \lambda\eta\expect{\rvx|\rvy=y\sim\dtrain}[\inp{\wkt}{\rvx}] \\
&= \Theta\left(\frac{\eta \di}{m}\right) > 0.
\end{aligned}
\]
Finally, recall that $\expect{\rvx|\rvy=y\sim\dtrain}[\inp{\wk{t}}{\rvx}] = \sum_{j\in\score}y\mu_j\inprod{\wkinit}{\bm{m}_j} + \sum_{j\in\sbg}\muj\inprod{\wkinit}{\mj}$. By~\defref{def:neuron}, we immediately have $\ny{t+1} \supseteq \nyt$.
\end{proof}

Finally, we introduce a lemma that bounds the expected correlation between every neuron in the network and its negative examples.

\begin{lemma}[Correlation with negative examples, all neurons]
\label{lemma:corr_neg}
For every iteration $t$, every $y\in\yspace$, and every $k\in[m]$ such that $\sign{(a_k)} = y$, the following holds:
\begin{equation}
\begin{aligned}
\expect{\rvx|\rvy=-y\sim\dtrain}[\inp{\wk{t+1}}{\rvx}] &= (1 - \lambda\eta) \expect{\rvx|\rvy=-y\sim\dtrain}[\inp{\wkt}{\rvx}] \\
&\qquad - \Theta\left(\frac{\eta\di}{m}\right)\prob{\rvx|\rvy=-y}{[\inprod{\wkt}{\rvx} \ge 0]}.
\end{aligned}
\end{equation}
\end{lemma}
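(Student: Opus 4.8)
The plan is to follow the template of the proof of~\lemmaref{lemma:corr_pos}, but to track the \emph{negative} class-mean direction instead of the positive one. Write $\vmu_{-y}\defeq\expect{\rvx|\rvy=-y\sim\dtrain}{\rvx} = -y\sum_{j\in\score}\muj\mj + \sum_{j\in\sbg}\muj\mj$, so that $\expect{\rvx|\rvy=-y\sim\dtrain}{\inp{\wkt}{\rvx}} = \inp{\wkt}{\vmu_{-y}}$; the only structural change from the positive case is the flipped sign of the core component (the background component keeps its sign, since background weights are label-independent on $\dtrain$). Pairing the SGD update~\eqref{eq:sgd} against $\vmu_{-y}$, the weight-decay term contributes exactly $-\lambda\eta\inp{\wkt}{\vmu_{-y}}$, which produces the claimed $(1-\lambda\eta)$ prefactor, so the whole task reduces to evaluating the gradient part $-\eta\inp{\nabla_{\wkt}\widehat{\mathcal{L}}(\its{h}{t})}{\vmu_{-y}}$.

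First I would pass from the empirical gradient to the population gradient using~\lemmaref{lemma:corr_gradient} (costing only $\pm\di/\poly{d}$ since $N=\poly{d}$), and then split $\vmu_{-y}$ into its $\score$- and $\sbg$-parts exactly as in~\lemmaref{lemma:grad_core} and~\lemmaref{lemma:grad_bg}. Using $\sign(a_k)=y$ together with the fact that core coordinates of $\rvx$ carry the label while background coordinates do not, the two parts recombine into the $g$-terms of~\defref{def:grad}, giving $-\inp{\nabla_{\wkt}\widehat{\mathcal{L}}(\its{h}{t})}{\vmu_{-y}} = -\big[(\gkyt{\score}-\gkyt{\sbg}) + (\gknyt{\score}+\gknyt{\sbg})\big] \pm \di/\poly{d}$. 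The remaining step is the scalar estimate that this signed combination equals $\Theta(\di/m)\cdot\prob{\rvx|\rvy=-y}{[\inp{\wkt}{\rvx}\ge0]}$. Each $g$-term has the shape $\frac{1}{2m}\expect{\rvx|\rvy=\pm y}{\ind{\htx\le1}\,\ind{\inp{\wkt}{\rvx}\ge0}\sum_{j\in\mathcal{S}}\muj\rvz_j}$, so I would (a) use a Bernstein bound to show $\sum_{j\in\mathcal{S}}\muj\rvz_j = \Theta(|\mathcal{S}|)$ with exponentially small failure probability, hence, using $\dcore=\Theta(\di)\gg\dbg=\Theta(\di/\log\di)$, that both $\sum_{j\in\score}\muj\rvz_j-\sum_{j\in\sbg}\muj\rvz_j$ and $\sum_{j\in[\di]}\muj\rvz_j$ are $\Theta(\di)$ and nonnegative; (b) observe that the loss-active indicator $\ind{\htx\le1}$ equals $1$ on the relevant event, which follows from the output bound along the trajectory (the output margin stays below $1$ in the window $t\le\wt{O}(m/(\eta\di))$, as established in the proofs of~\theoref{theo:activation} and~\theoref{theo:feature}); and (c) read off that each $g$-term is $\Theta(\di/m)$ times the corresponding class-conditional activation probability, with the $O(\di^{-1/2})$ Berry--Esseen slack of~\lemmaref{lemma:activation} absorbed into the constants.

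The hard part will be step (c): matching the combination of $g$-terms to \emph{exactly} a multiple of the negative-class activation probability $\prob{\rvx|\rvy=-y}{[\inp{\wkt}{\rvx}\ge0]}$, i.e.\ controlling the positive-example contribution $\gkyt{\score}-\gkyt{\sbg}$ relative to it. This is precisely where one must invoke the class-conditional activation estimate~\lemmaref{lemma:activation} together with the trajectory bound on $\inp{\wkt}{\rvx}$ — the same ingredients that drive~\theoref{theo:activation} — to pin down how the activation-derivative factor scales on positive versus negative examples; given that, the rest (weight-decay bookkeeping, the empirical-to-population passage, and the core/background recombination into $g$-terms) is routine. The resulting recursion is exactly what is needed to show that $\expect{\rvx|\rvy=-y\sim\dtrain}{\inp{\wkt}{\rvx}}$ drifts negative as training proceeds, which in turn yields the $o(1)$ bound on $\prob{\rvx|\rvy=-y}{[\inp{\wkt}{\rvx}\ge0]}$ claimed in~\theoref{theo:activation}.
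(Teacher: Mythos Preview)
Your proposal is correct and follows essentially the same route as the paper: decompose $\expect{\rvx|\rvy=-y}{\inp{\wkt}{\rvx}}$ into its core and background components, apply the SGD update together with~\lemmaref{lemma:grad_core} and~\lemmaref{lemma:grad_bg}, and recombine into the expression $-\big[(\gkyt{\score}-\gkyt{\sbg}) + (\gknyt{\score}+\gknyt{\sbg})\big]$. You are overcomplicating what you flag as the ``hard part'', though: the paper never invokes~\lemmaref{lemma:activation} or any positive-versus-negative activation comparison for the positive-class piece. It simply uses the pointwise sign argument you already have in (a) --- since $\sum_{j\in\score}\muj\rvz_j - \sum_{j\in\sbg}\muj\rvz_j \ge 0$ on the support (by $\dcore=\Theta(\di)\gg\dbg=\Theta(\di/\log\di)$), one gets $\gkyt{\score}-\gkyt{\sbg}\ge 0$, so this term only reinforces the bound --- and then handles the negative-class piece directly by conditioning on the activation event to read off $\gknyt{\score}+\gknyt{\sbg} = \Theta(\di/m)\,\prob{\rvx|\rvy=-y}{[\inp{\wkt}{\rvx}\ge0]}$.
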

\begin{proof}
Similar to the proof of~\lemmaref{lemma:corr_pos}, we have
\[
\begin{aligned}
&\expect{\rvx|\rvy=-y\sim\dtrain}[\inp{\wk{t+1}}{\rvx}] - \expect{\rvx|\rvy=-y\sim\dtrain}[\inp{\wkt}{\rvx}]\\
&\qquad= \underbrace{-y \Big(\biginprod{\wk{t+1}}{\sum_{j\in\score}\muj\mj} - \biginprod{\wk{t}}{\sum_{j\in\score}\muj\mj}\Big)}_{\dtcore} \\
&\qquad\qquad+ \Big(\underbrace{\biginprod{\wk{t+1}}{\sum_{j\in\sbg}\muj\mj} - \biginprod{\wk{t}}{\sum_{j\in\sbg}\muj\mj}}_{\dtbg}\Big).
\end{aligned}
\]
For $\dtcore$, by the SGD iteration~\eqref{eq:sgd} we have
\[
\dtcore = \eta y\Biginprod{\nabla_{\its{\rvw_k}{t}}\frac{1}{N}\sum_{i\in[N]} \ell\left(\its{h}{t}(\its{\rvx_i}{t}), \its{\rvy_i}{t}\right)}{\sum_{j\in \score}\muj\mj} + \lambda\eta y\biginprod{\wk{t}}{\sum_{j\in\score}\muj\mj}.
\]
Applying~\lemmaref{lemma:grad_core} gives
\[
\dtcore = -\eta \left(\gkyt{\score} + \gknyt{\score}\right) + \lambda\eta y \biginprod{\wk{t}}{\sum_{j\in\score}\muj\mj},
\]
which results in the iterative expression
\begin{equation}
-y\biginprod{\wk{t+1}}{\sum_{j\in\score}\muj\mj} = -y(1-\lambda\eta)\biginprod{\wk{t}}{\sum_{j\in\score}\muj\mj} - \eta \left(\gkyt{\score} + \gknyt{\score}\right).
\label{eq:iter_core}
\end{equation}
For $\dtbg$, by the SGD iteration~\eqref{eq:sgd} we have
\[
\dtbg = -\eta\Biginprod{\nabla_{\its{\rvw_k}{t}}\frac{1}{N}\sum_{i\in[N]} \ell\left(\its{h}{t}(\its{\rvx_i}{t}), \its{\rvy_i}{t}\right)}{\sum_{j\in \sbg}\muj\mj} -\lambda\eta \biginprod{\wk{t}}{\sum_{j\in\sbg}\muj\mj}.
\]
Applying~\lemmaref{lemma:grad_bg} gives
\[
\dtbg = \eta\left(\gkyt{\sbg} - \gknyt{\sbg}\right) -\lambda\eta \biginprod{\wk{t}}{\sum_{j\in\sbg}\muj\mj},
\]
which results in the iterative expression
\begin{equation}
\biginprod{\wk{t+1}}{\sum_{j\in\sbg}\muj\mj} = (1-\lambda\eta)\biginprod{\wk{t}}{\sum_{j\in\sbg}\muj\mj} + \eta\left(\gkyt{\sbg} - \gknyt{\sbg}\right).
\label{eq:iter_bg}
\end{equation}
Combining Eqs.~\eqref{eq:iter_core} and~\eqref{eq:iter_bg} gives
\begin{equation}
\begin{aligned}
\expect{\rvx|\rvy=-y\sim\dtrain}[\inp{\wk{t+1}}{\rvx}] &= -y\biginprod{\wk{t+1}}{\sum_{j\in\score}\muj\mj} + \biginprod{\wk{t+1}}{\sum_{j\in\sbg}\muj\mj} \\
&= -y(1-\lambda\eta)\biginprod{\wk{t}}{\sum_{j\in\score}\muj\mj} - \eta \left(\gkyt{\score} + \gknyt{\score}\right) \\
&\qquad + (1-\lambda\eta)\biginprod{\wk{t}}{\sum_{j\in\sbg}\muj\mj} + \eta\left(\gkyt{\sbg} - \gknyt{\sbg}\right) \\
&= (1-\lambda\eta)\expect{\rvx|\rvy=-y\sim\dtrain}[\inp{\wkt}{\rvx}] \\
&\qquad+ \eta\left(\gkyt{\sbg} - \gkyt{\score}\right) - \eta\left(\gknyt{\score} + \gknyt{\sbg}\right).
\end{aligned}
\label{eq:iter_all}
\end{equation}
For $\gkyt{\sbg} - \gkyt{\score}$, we have
\begin{equation}
\begin{aligned}
\gkyt{\sbg} - \gkyt{\score} &= \frac{1}{m}\expect{(\rvx,\rvy)}{\indicator{\htx\le 1}\indicator{\rvy=y}\indicator{\inprod{\its{\rvw_k}{t}}{\rvx}\ge 0} \bigg(\sum_{j\in\sbg}\muj\rvz_j - \sum_{j\in\score}\muj\rvz_j\bigg)} \\
&= \frac{1}{m}\expect{(\rvx,\rvy)}{\indicator{\htx\le 1}\indicator{\rvy=y}\indicator{\inprod{\its{\rvw_k}{t}}{\rvx}\ge 0} \left(\Theta\left(\frac{\di}{\log\di}\right) - \Theta(\di)\right)} \\
&\le 0.
\end{aligned}
\label{eq:tmp_1}
\end{equation}
% due to $\expect{(\rvx,\rvy)\sim\dtrain}{\Big[\sum_{j\in\sbg}\muj\rvz_j - \sum_{j\in\score}\muj\rvz_j\Big]} = \sum_{j\in\sbg}\muj^2 - \sum_{j\in\score}\muj^2 \le 0$ as assumed in~\defref{def:dgp}.

For $\gknyt{\score} + \gknyt{\sbg}$, we have
\begin{equation}
\begin{aligned}
\gknyt{\score} &+ \gknyt{\sbg} = \frac{1}{m}\expect{(\rvx,\rvy)\sim\dtrain}{\indicator{\htx\le 1}\indicator{\rvy=-y}\indicator{\inprod{\its{\rvw_k}{t}}{\rvx}\ge 0} \bigg(\sum_{j\in\sbg}\muj\rvz_j + \sum_{j\in\score}\muj\rvz_j\bigg)} \\
&= \frac{1}{m}\expect{(\rvx,\rvy)\sim\dtrain}{\indicator{\htx\le 1}\indicator{\rvy=-y}\indicator{\inprod{\its{\rvw_k}{t}}{\rvx}\ge 0} \sum_{j\in[\di]}\muj\rvz_j} \\
&= \frac{1}{2m}\expect{\rvx|\rvy=-y}{\bigg[\indicator{\htx\le 1}\sum_{j\in[\di]}\muj\rvz_j\Big| \inprod{\its{\rvw_k}{t}}{\rvx}\ge 0\bigg]}\prob{\rvx|\rvy=-y}{[\inprod{\its{\rvw_k}{t}}{\rvx}\ge 0]} \\
&= \Theta\left(\frac{\di}{m}\right)\prob{\rvx|\rvy=-y}{[\inprod{\its{\rvw_k}{t}}{\rvx}\ge 0]}
% &= \frac{1}{2m}\expect{\rvx|\rvy=-y}{\bigg[\indicator{\htx\le 1} \sum_{j\in[\di]}\muj\rvz_j\bigg]}\\
% &\qquad -\frac{1}{2m}\expect{\rvx|\rvy=-y}{\bigg[\indicator{\htx\le 1} \sum_{j\in[\di]}\muj\rvz_j\,\Big|\,\inprod{\wkt}{\rvx}< 0\bigg]}\prob{\rvx|\rvy=-y}{[\inprod{\wkt}{\rvx} < 0]}\\
% &\ge \Theta\left(\frac{\di}{m}\right) - O\left(\frac{\di}{m}\right)\prob{\rvx|\rvy=-y}{[\inprod{\wkt}{\rvx} < 0]}.
\end{aligned}
\label{eq:tmp_2}
\end{equation}
Finally, plugging Eqs.~\eqref{eq:tmp_1} and~\eqref{eq:tmp_2} into Eq.~\eqref{eq:iter_all} gives the desired result.
\end{proof}

We are now ready to introduce the proofs of our main theoretical results.

\subsection{Proof of~\theoref{theo:activation}}
\label{appsec:proof_activation}

For ease of presentation, we first restate the theorem and then introduce its proof.
\setcounter{theorem}{0}

\begin{theorem}[Activation asymmetry]
For every $\eta \le \frac{1}{\poly{\di}}$ and every $y\in\yspace$, there exists $T_0 = \wt{\Theta}(\frac{m}{\eta\sqrt{d}})$ such that with high probability, for every $t\ge T_0$, there exist $\Theta(m)$ neurons in which the weight $\rvw_k^{(t)}$ for each neuron satisfies:
\begin{equation}
\begin{aligned}
&\prob{\rvx|\rvy=y\sim\dtrain}{[\inp{\wk{t}}{\rvx} \ge 0]} = 1 - O\left(\di^{-\frac{1}{2}}\right),\\
&\prob{\rvx|\rvy=-y\sim\dtrain}{[\inp{\wk{t}}{\rvx} \ge 0]} = o(1).
\end{aligned}
\end{equation}
\end{theorem}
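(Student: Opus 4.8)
Fix $y\in\yspace$ and set $T_0\defeq C\,\frac{m\sqrt{\log\di}}{\eta\sqrt d}=\wt\Theta(\frac m{\eta\sqrt d})$ for a large absolute constant $C$. By \lemmaref{lemma:neuron_init}, w.h.p.\ $|\nyinit|=\Theta(m)$, and by \lemmaref{lemma:neuron_train} all of these neurons remain in $\nyt$ for every $t$ up to the horizon $\wt O(\frac m{\eta\di})$ of the preceding lemmas; I will establish both displayed bounds for exactly these $\Theta(m)$ neurons (each has $\sign(a_k)=y$, so its positive examples are the $y$-class and its negative examples the $(-y)$-class). The argument runs through \lemmaref{lemma:activation}: once its hypothesis $|\inp{\wkt}{\bm{m}_i}/\inp{\wkt}{\bm{m}_j}|=\Theta(1)$ for all $i,j\in[\di]$ is verified, the two activation probabilities are controlled by the magnitude of the $\Phi$-argument $\expect{\rvx|\rvy=\pm y}{\inp{\wkt}{\rvx}}\big/(\Theta(1)\sqrt{\sum_{j\in[\di]}\inp{\wkt}{\mj}^2})$, so it suffices to show this equals $+\Theta(\sqrt\di)$ for the $y$-class and $-\omega(1)$ for the $(-y)$-class when $t\ge T_0$.

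\textbf{Coordinate-wise tracking (the step that pins down $T_0$).} By \lemmaref{lemma:corr_gradient}, each single-step change of a coordinate $\inp{\wkt}{\mj}$, $j\in[\di]$, has magnitude $O(\frac\eta m)$. While $\htx=O(\di^{-1/2})\le1$ (true at $t=0$ by \lemmaref{lemma:output}, maintained over the horizon) and the chosen neurons fire on the $y$-class with probability $\Omega(1)$ (true at $t=0$ since for $k\in\nyinit$ the mean $\expect{\rvx|\rvy=y}{\inp{\wkinit}{\rvx}}\ge\Theta(\sqrt{\di/d})$ has matching standard deviation, then self-reinforcing), that change keeps a fixed sign --- that of $y$ for $j\in\score$ (where the $\rvy^2=1$ cancellation removes the $\rvy$ from \lemmaref{lemma:corr_gradient}) and positive for $j\in\sbg$ by the emerging activation asymmetry (feature contamination) --- and size $\Theta(\frac\eta m)$. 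Hence $|\inp{\wkt}{\mj}|=\Theta(\frac{\eta t}m)$ for \emph{every} $j\in[\di]$, but only after this drift has overtaken the $\mathcal N(0,\frac1d)$ initialization, i.e.\ once $\frac{\eta t}m\gtrsim\frac{\sqrt{\log\di}}{\sqrt d}$ --- a uniform max-of-Gaussians bound over the $\di$ coordinates, in the spirit of \lemmaref{lemma:neuron_init_cont} --- which is precisely $t\ge T_0$. For $t\ge T_0$ the balance hypothesis of \lemmaref{lemma:activation} then holds and $\sqrt{\sum_{j\in[\di]}\inp{\wkt}{\mj}^2}=\Theta(\sqrt\di\cdot\frac{\eta t}m)$.

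\textbf{Means and conclusion.} Weight decay is negligible over the horizon ($\lambda\eta t=o(1)$ since $\lambda=O(\frac\di{m^{1.01}})$, $m=\Omega(\di)$, $d=\Omega(\di^{2.01})$). For the $y$-class, iterating \lemmaref{lemma:corr_pos} gives $\expect{\rvx|\rvy=y}{\inp{\wkt}{\rvx}}=\Theta(\frac{\eta\di t}m)$ for $t\ge T_0$ (dominating the $\wt O(\sqrt{\di/d})$ initial value from \lemmaref{lemma:neuron_init_cont}), so the $\Phi$-argument is $\Theta(\sqrt\di)$ and \lemmaref{lemma:activation} gives $\prob{\rvx|\rvy=y}{[\inp{\wkt}{\rvx}\ge0]}=\Phi(\Theta(\sqrt\di))\pm O(\di^{-1/2})=1-O(\di^{-1/2})$. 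For the $(-y)$-class, iterating \lemmaref{lemma:corr_neg}: as long as $\expect{\rvx|\rvy=-y}{\inp{\wkt}{\rvx}}\gtrsim-\sqrt{\sum_{j\in[\di]}\inp{\wkt}{\mj}^2}$ the activation-probability factor in \lemmaref{lemma:corr_neg} is $\Omega(1)$, so the mean falls by $\Theta(\frac{\eta\di}m)$ per step; because that normalizer stays flat at $\Theta(\sqrt{\di/d})$ for $t\le T_0$, the mean overshoots to $-\Theta(\sqrt{\log\di})\cdot\Theta(\sqrt{\di/d})$ by time $T_0$ (the geometrically-slowing tail of \lemmaref{lemma:corr_neg} still fits inside $T_0$ steps --- again exactly why $T_0=\wt\Theta(\frac m{\eta\sqrt d})$), so the $\Phi$-argument is $\le-\Theta(\sqrt{\log\di})$ and $\prob{\rvx|\rvy=-y}{[\inp{\wkt}{\rvx}\ge0]}=\Phi(-\Theta(\sqrt{\log\di}))\pm O(\di^{-1/2})=o(1)$. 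For $T_0\le t\le\wt O(\frac m{\eta\di})$ the $y$-class mean only grows while its normalizer grows proportionally, and the $(-y)$-class $\Phi$-argument only steepens (reaching $-\Theta(\sqrt\di)$ near $T_1=\Theta(\frac m{\eta\di})$, consistently with \theoref{theo:feature}), so both bounds persist; a union bound over the $\le m$ neurons and $\le\di$ coordinates plus the sub-exponential concentration of \lemmaref{lemma:gradient_gap} keeps everything w.h.p.

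\textbf{Main obstacle.} The crux is the coordinate-wise step: \lemmaref{lemma:grad_core} and \lemmaref{lemma:grad_bg} control only the \emph{aggregate} projections onto $\sum_{j\in\score}\muj\mj$ and $\sum_{j\in\sbg}\muj\mj$, whereas \lemmaref{lemma:activation} demands that \emph{each} $\inp{\wkt}{\mj}$ be $\Theta(\frac{\eta t}m)$, uniformly over all $\di$ features and over all chosen neurons. Pinning down the per-coordinate drift from below is entangled with the activation-asymmetry conclusion itself (hence the bootstrap), and this ``drift overtakes the Gaussian initialization'' requirement is what sets $T_0=\wt\Theta(\frac m{\eta\sqrt d})$. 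A secondary subtlety is pushing the $(-y)$-class probability all the way to $o(1)$ (not merely a small constant): this exploits the fact that the normalizer stays flat until $T_0$, so the negative mean can outrun it by a $\sqrt{\log\di}$ factor before the normalizer begins to grow.
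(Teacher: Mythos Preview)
Your proposal is correct and follows the same route as the paper: fix the $\Theta(m)$ neurons via \lemmaref{lemma:neuron_init}/\lemmaref{lemma:neuron_train}, track the class-conditional means via \lemmaref{lemma:corr_pos} and \lemmaref{lemma:corr_neg} (the paper phrases the $(-y)$-class step as a proof by contradiction, you as a direct ``overshoot'' analysis --- these are equivalent), and convert means to activation probabilities via \lemmaref{lemma:activation}. The only notable difference is that you explicitly work to verify the coordinate-balance hypothesis of \lemmaref{lemma:activation} (your ``coordinate-wise tracking'' step), whereas the paper sidesteps it by upper-bounding the normalizer as $\sqrt{\di}\max_j|\inp{\wkt}{\mj}|$, yielding a $\Phi$-argument of $\Theta(\sqrt{\log\di})$ rather than your $\Theta(\sqrt{\di})$ --- both suffice, since the $O(\di^{-1/2})$ Berry--Esseen error in \lemmaref{lemma:activation} is the binding constraint.
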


\begin{proof}
For every $y\in\yspace$, consider the neuron set $\nyt$ defined in~\defref{def:neuron}. By~\lemmaref{lemma:neuron_init} and~\lemmaref{lemma:neuron_train}, we have $\abs{\nyt} = \Theta(m)$ for every iteration $t\le \Theta(\frac{m}{\eta\di})$. We then prove that after at most $T_0$ iterations, for every neuron $k\in\ny{T_0}$ we have $\prob{\rvx|\rvy=y\sim\dtrain}{[\inp{\wk{T_0}}{\rvx} \ge 0]} = 1 - O\left(\dfrac{1}{\sqrt{\di}}\right)$ and $\prob{\rvx|\rvy=-y\sim\dtrain}{[\inp{\wk{T_0}}{\rvx} \ge 0]} = o(1)$.

\textbf{Part 1: proving $\prob{\rvx|\rvy=y\sim\dtrain}{[\inp{\wk{T_0}}{\rvx} \ge 0]} = 1 - O\left(\dfrac{1}{\sqrt{\di}}\right)$.}

Let $T_0 = \Theta(\frac{m\sqrt{\log m\di}}{\eta\sqrt{d}}) = \wt{\Theta}(\frac{m}{\eta\sqrt{d}})$. By~\lemmaref{lemma:corr_pos} and~\lemmaref{lemma:corr_pos_all} we have
\[
\begin{aligned}
\expect{\rvx|\rvy=y\sim\dtrain}[\inp{\wk{t+1}}{\rvx}] &= (1 - \lambda\eta) \expect{\rvx|\rvy=y\sim\dtrain}[\inp{\wkt}{\rvx}] + \Theta\left(\frac{\eta \di}{m}\right) \\
&\le \expect{\rvx|\rvy=y\sim\dtrain}[\inp{\wkt}{\rvx}] + \Theta\left(\frac{\eta \di}{m}\right),
\end{aligned}
\]
which gives $\expect{\rvx|\rvy=y\sim\dtrain}[\inp{\wk{t}}{\rvx}] \le \wt{\Theta}(\frac{\di}{\sqrt{d}}) = O(1)$. Recall that $\lambda = o(\frac{\di}{m})$, we then have
\[
\begin{aligned}
\expect{\rvx|\rvy=y\sim\dtrain}[\inp{\wk{t+1}}{\rvx}] &= (1 - \lambda\eta) \expect{\rvx|\rvy=y\sim\dtrain}[\inp{\wkt}{\rvx}] + \Theta\left(\frac{\eta \di}{m}\right) \\
&\ge \expect{\rvx|\rvy=y\sim\dtrain}[\inp{\wkt}{\rvx}] + \Theta\left(\frac{\eta \di}{m}\right) - o\left(\frac{\eta \di}{m}\right)\\
&= \expect{\rvx|\rvy=y\sim\dtrain}[\inp{\wkt}{\rvx}] + \Theta\left(\frac{\eta \di}{m}\right),
\end{aligned}
\]
which gives $\expect{\rvx|\rvy=-y\sim\dtrain}[\inp{\wk{T_{0}}}{\rvx}] \ge {\Theta}\left(\frac{\di\sqrt{\log m\di}}{\sqrt{d}}\right)$.

By~\lemmaref{lemma:activation}, we have
\[
\begin{aligned}
\prob{\rvx|\rvy=y}{[\inp{\wk{T_0}}{\rvx} \ge 0]} &= \Phi\left(\frac{\expect{\rvx|\rvy=y}{\inp{\wkt}{\rvx}}}{\Theta(1)\sqrt{\sum_{j\in[\di]}\inp{\wkt}{\mj}^2}}\right) \pm O\left(\frac{1}{\sqrt{\di}}\right) \\
&\ge \Phi\left(\frac{{\Theta}\left(\frac{\di\sqrt{\log m\di}}{\sqrt{d}}\right)}{\Theta(\sqrt{\di})\max_{j\in[\di]} |\inprod{\wkt}{\mj}|}\right)\pm O\left(\frac{1}{\sqrt{\di}}\right)\\
&\ge \Phi\left(\frac{{\Theta}\left(\frac{\di\sqrt{\log m\di}}{\sqrt{d}}\right)}{\Theta(\sqrt{\di})\left(O\left(\sqrt{\frac{\di\log m}{d}}\right) + {\Theta}\left(\frac{\sqrt{\log m\di}}{\sqrt{d}}\right)\right)}\right)\pm O\left(\frac{1}{\sqrt{\di}}\right)\\
&= \Phi\left(\Theta\left(\sqrt{\log{\di}}\right)\right) \pm O\left(\frac{1}{\sqrt{\di}}\right).
\end{aligned}
\]
Applying~\lemmaref{lemma:gaussian} gives $\Phi\left(\Theta\left(\sqrt{\log\di}\right)\right) = 1 - \Theta(\frac{1}{\sqrt{\di}})$. We then have
\[
\prob{\rvx|\rvy=y}{[\inp{\wk{T_0}}{\rvx} \ge 0]} = 1 - O\left(\frac{1}{\sqrt{\di}}\right).
\]
\textbf{Part 2: proving $\prob{\rvx|\rvy=-y\sim\dtrain}{[\inp{\wk{T_0}}{\rvx} \ge 0]} = o(1)$.}

By~\lemmaref{lemma:corr_neg}, we have for every $t$ and $k\in\nyt$:
\begin{equation}
\begin{aligned}
\expect{\rvx|\rvy=-y\sim\dtrain}[\inp{\wk{t+1}}{\rvx}] &= (1 - \lambda\eta) \expect{\rvx|\rvy=-y\sim\dtrain}[\inp{\wkt}{\rvx}] \\
&\qquad - \Theta\left(\frac{\eta\di}{m}\right)\prob{\rvx|\rvy=-y}{[\inprod{\wkt}{\rvx} \ge 0]}.
\end{aligned}
\end{equation}
By~\lemmaref{lemma:activation}, we have
\begin{equation}
\prob{\rvx|\rvy=-y}{[\inp{\wkt}{\rvx} \ge 0]} = \Phi\left(\frac{\expect{\rvx|\rvy=-y}{\inp{\wkt}{\rvx}}}{\Theta(1)\sqrt{\sum_{j\in[\di]}\inp{\wkt}{\mj}^2}}\right) \pm O\left(\frac{1}{\sqrt{\di}}\right).
\label{eq:tmp3}
\end{equation}
Assume that a neuron $k\in\nyinit$ satisfies $\expect{\rvx|\rvy=-y}{\inp{\wkt}{\rvx}} \ge 0$. Then by Eq.~\eqref{eq:tmp3}, we have $\prob{\rvx|\rvy=-y}{[\inp{\wkt}{\rvx} \ge 0]} \ge \frac{1}{2} \pm O(\frac{1}{\sqrt{\di}})$, which gives
\[
\begin{aligned}
\expect{\rvx|\rvy=-y\sim\dtrain}[\inp{\wk{t+1}}{\rvx}] &= (1 - \lambda\eta) \expect{\rvx|\rvy=-y\sim\dtrain}{[\inp{\wkt}{\rvx}]} - \Theta\left(\frac{\eta\di}{m}\right)\\
&\le \expect{\rvx|\rvy=-y\sim\dtrain}{[\inp{\wkt}{\rvx}]} - \Theta\left(\frac{\eta\di}{m}\right).
\end{aligned}
\]
By~\lemmaref{lemma:neuron_init_cont}, we have at initialization $t=0$:
\begin{equation}
\max_{k\in[m]}\bigabs{\expect{\rvx|\rvy=-y\sim\dtrain}{\inprod{\wkinit}{\rvx}}} \le \wt{O}\left(\sqrt{\frac{\di}{d}}\right).
\end{equation}
Therefore, for any step size $\eta = \frac{1}{\poly{\di}}$, after at most $T_{01} \defeq \wt{\Theta}(\frac{m}{\eta\sqrt{\di d}})$ iterations, we must have $\expect{\rvx|\rvy=-y}{\inp{\wkt}{\rvx}} \le 0$ for every $k\in\nyt$.

Now, let $T_{02}\defeq \Theta(\frac{m\sqrt{\log m\di}}{\eta\sqrt{d}})$. Suppose that $\prob{\rvx|\rvy=-y}{[\inp{\wkt}{\rvx} \ge 0]} \ge \Theta(1)$ after $t = T_{01} + T_{02} = \wt{\Theta}(\frac{m}{\eta\sqrt{d}})$ steps. We then have for $t = T_{01},\ldots,T_{01}+T_{02} - 1$ that
\[
\begin{aligned}
\expect{\rvx|\rvy=-y\sim\dtrain}[\inp{\wk{t+1}}{\rvx}] &= (1 - \lambda\eta) \expect{\rvx|\rvy=-y\sim\dtrain}[\inp{\wkt}{\rvx}] - \Theta\left(\frac{\eta\di}{m}\right) \\
&\ge\expect{\rvx|\rvy=-y\sim\dtrain}{[\inp{\wkt}{\rvx}]} - \Theta\left(\frac{\eta\di}{m}\right),
\end{aligned}
\]
which gives $\expect{\rvx|\rvy=-y\sim\dtrain}[\inp{\wk{t}}{\rvx}] \ge -\wt{O}\left(\sqrt{\frac{\di}{d}}\right) - \Theta\left(\frac{\di\sqrt{\log m\di}}{\sqrt{d}}\right) \ge -O(1)$. Since $\lambda = o(\frac{\di}{m})$, we then have
\[
\begin{aligned}
\expect{\rvx|\rvy=-y\sim\dtrain}[\inp{\wk{t+1}}{\rvx}] &= (1 - \lambda\eta) \expect{\rvx|\rvy=-y\sim\dtrain}[\inp{\wkt}{\rvx}] - \Theta\left(\frac{\eta\di}{m}\right) \\
&\le \expect{\rvx|\rvy=-y\sim\dtrain}[\inp{\wkt}{\rvx}] - \Theta\left(\frac{\eta\di}{m}\right) + o\left(\frac{\eta\di}{m}\right)\\
&= \expect{\rvx|\rvy=-y\sim\dtrain}[\inp{\wkt}{\rvx}] - \Theta\left(\frac{\eta\di}{m}\right),
\end{aligned}
\]
which gives $\expect{\rvx|\rvy=-y\sim\dtrain}[\inp{\wk{T_{01} + T_{02}}}{\rvx}] \le -\Theta\left(\frac{\di\sqrt{\log m\di}}{\sqrt{d}}\right)$. Plugging this into Eq.~\eqref{eq:tmp3}, we obtain
\[
\begin{aligned}
\prob{\rvx|\rvy=-y}{[\inp{\wk{T_{01} + T_{02}}}{\rvx} \ge 0]} &= \Phi\left(\frac{\expect{\rvx|\rvy=-y}{\inp{\wkt}{\rvx}}}{\Theta(1)\sqrt{\sum_{j\in[\di]}\inp{\wkt}{\mj}^2}}\right) \pm O\left(\frac{1}{\sqrt{\di}}\right) \\
&\le \Phi\left(-\frac{\Theta\left(\frac{\di\sqrt{\log m\di}}{\sqrt{d}}\right)}{\Theta(\sqrt{\di})\max_{j\in[\di]} |\inprod{\wkt}{\mj}|}\right) \pm O\left(\frac{1}{\sqrt{\di}}\right) \\
&\le \Phi\left(-\frac{\Theta\left(\frac{\di\sqrt{\log m\di}}{\sqrt{d}}\right)}{\Theta(\sqrt{\di})\left({O}\left(\sqrt{\frac{\di\log m}{d}}\right) + \Theta\left(\frac{\sqrt{\log\di}}{\sqrt{d}}\right)\right)}\right)\pm O\left(\frac{1}{\sqrt{\di}}\right) \\
&= \Phi\left(-\Theta\left(\sqrt{\log\di}\right)\right) \pm O\left(\frac{1}{\sqrt{\di}}\right).
\end{aligned}
\]
Applying~\lemmaref{lemma:gaussian} gives $\Phi\left(-\Theta\left(\sqrt{\log\di}\right)\right) = \Theta(\frac{1}{\sqrt{\di}})$, which leads to
\[
\prob{\rvx|\rvy=-y}{[\inp{\wk{T_{01} + T_{02}}}{\rvx} \ge 0]} = O\left(\frac{1}{\sqrt{\di}}\right).
\]
This contradicts with our assumption that $\prob{\rvx|\rvy=-y}{[\inp{\wk{T_{01} + T_{02}}}{\rvx} \ge 0]} \ge \Theta(1)$. Hence, we must have $\prob{\rvx|\rvy=-y}{[\inp{\wk{T_{01} + T_{02}}}{\rvx} \ge 0]} = o(1)$.

Finally, combining \textbf{Part 1} and \textbf{Part 2} finishes the proof.
\end{proof}

\subsection{Proof of~\theoref{theo:feature}}
\label{appsec:proof_feature}

For ease of presentation, we first restate the theorem and then introduce its proof.

\begin{theorem}[Learned features]
For every $\eta \le \frac{1}{\poly{\di}}$ and every $y\in\yspace$, there exists $T_1 = \Theta(\frac{m}{\eta\di})$ such that with high probability, after $T_1$ iterations, there exist $\Theta(m)$ neurons in which the weight $\wk{T_1}$ for each neuron satisfies the following:
\begin{equation}
\begin{aligned}
&\sum\nolimits_{j\in\score}\mu_{j1}\inp{\wk{T_1}}{\mj} = y\cdot\Theta(1),\\
&\sum\nolimits_{j\in\sbg}\mu_{j1}\inp{\wk{T_1}}{\mj} = \wt{\Theta}(1).
\label{eq:feature}
\end{aligned}
\end{equation}
\end{theorem}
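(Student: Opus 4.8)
The plan is to track, along the SGD trajectory and for the $\Theta(m)$ neurons that begin in $\nyinit$, the two scalar projections $A_k^{(t)}\defeq\sum_{j\in\score}y\muj\inp{\wkt}{\mj}$ and $B_k^{(t)}\defeq\sum_{j\in\sbg}\muj\inp{\wkt}{\mj}$ \emph{separately}, and to show $A_k^{(T_1)}=\Theta(1)$ and $B_k^{(T_1)}=\wt\Theta(1)$ for $T_1=c\,m/(\eta\di)$ with a small constant $c$; since $y^2=1$, the first statement is exactly $\sum_{j\in\score}\muj\inp{\wk{T_1}}{\mj}=y\cdot\Theta(1)$. First I would fix the neuron set: \lemmaref{lemma:neuron_init} gives $|\nyinit|=\Theta(m)$, and \lemmaref{lemma:neuron_train} gives $\nyinit\subseteq\nyt$ for all $t\le T_1$, so these neurons satisfy the hypotheses of \lemmaref{lemma:grad_core} and \lemmaref{lemma:grad_bg} throughout the window. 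Moreover \theoref{theo:activation} supplies a time $T_0=\wt\Theta(m/(\eta\sqrt d))$ past which every such neuron is activation-asymmetric — quantitatively $\prob{\rvx|\rvy=-y\sim\dtrain}{[\inp{\wkt}{\rvx}\ge0]}=O(\di^{-1/2})$, the sharp rate that the proof of that theorem actually produces. The key structural point is that because $d=\Omega(\di^{2.01})$, we have $T_0=\wt O(m/(\eta\di^{1.005}))=\wt o(T_1)$, so asymmetry is available for all but a vanishing fraction of the first $T_1$ iterations.

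Next I would extract the two one-dimensional recursions. Repeating the computation in the proof of \lemmaref{lemma:corr_pos} — which splits the update of $\inp{\wk{t+1}}{\rvx}$ into its core part $\dtcore$ and background part $\dtbg$ and applies \lemmaref{lemma:grad_core} and \lemmaref{lemma:grad_bg} — gives, for $k\in\nyt$ and $t\le T_1$,
\[
A_k^{(t+1)}=(1-\lambda\eta)A_k^{(t)}+\eta\bigl(\gkyt{\score}+\gknyt{\score}\bigr),\qquad
B_k^{(t+1)}=(1-\lambda\eta)B_k^{(t)}+\eta\bigl(\gkyt{\sbg}-\gknyt{\sbg}\bigr).
\]
Two bookkeeping facts tame the nuisance terms: $\lambda\eta T_1=O(\di/m^{1.01})\cdot\Theta(m/\di)=O(m^{-0.01})=o(1)$, so $(1-\lambda\eta)^{T_1}=1-o(1)$ and weight decay never shifts the order of magnitude; and by a Gaussian tail bound as in \lemmaref{lemma:neuron_init_cont} (applied to the partial sums over $\score$ and over $\sbg$ separately), $|A_k^{(0)}|,|B_k^{(0)}|=\wt O(\sqrt{\di/d})=o(1)$, so the initial values are negligible too.

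For the core projection, \lemmaref{lemma:grad_core} gives $\gkyt{\score}=\Theta(\di/m)$ for all $t\le T_1$, and \lemmaref{lemma:grad_core_all} together with $\gknyt{\score}\ge0$ pins $\gkyt{\score}+\gknyt{\score}=\Theta(\di/m)$; unrolling the $A_k$ recursion, the accumulated drift is $\Theta(\eta T_1\cdot\di/m)=\Theta(c)$, which lands at $A_k^{(T_1)}=\Theta(1)$ once $c$ is chosen small enough that \lemmaref{lemma:grad_core} still applies up to $T_1$ (the hinge indicators $\ind{\htx\le1}$ are active for a constant fraction of examples). For the background projection the same unrolling applies, but the term to watch is $\gkyt{\sbg}-\gknyt{\sbg}$: \lemmaref{lemma:grad_bg} gives $\gkyt{\sbg}=\wt\Theta(\di/m)=\Theta(\di/(m\log\di))$ (and $\le O(\di/m)$ unconditionally), the $\log$ suppression being precisely the $\dbg=\Theta(\di/\log\di)$ that propagates into the $\wt\Theta(1)$ of the statement. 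For $t\ge T_0$, the activation-asymmetry rate gives $\gknyt{\sbg}\le O(\di/m)\cdot O(\di^{-1/2})=O(\sqrt\di/m)=o(\di/(m\log\di))$, so $\gkyt{\sbg}-\gknyt{\sbg}=\Theta(\di/(m\log\di))$ on that range. Splitting the unrolled sum at $T_0$: the tail $[T_0,T_1)$ contributes $\Theta(\eta(T_1-T_0)\cdot\di/(m\log\di))=\Theta(1/\log\di)=\wt\Theta(1)$ (using $T_0=o(T_1)$); the head $[0,T_0)$ contributes at most $O(\eta T_0\cdot\di/m)=\wt O(\di/\sqrt d)=o(1)$ in absolute value; and the decayed initial value is $o(1)$. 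Hence $B_k^{(T_1)}=\wt\Theta(1)$, with matching upper bound $B_k^{(T_1)}\le O(\eta T_1\cdot\di/m)+o(1)=O(1)$ from $\gkyt{\sbg}-\gknyt{\sbg}\le\gkyt{\sbg}\le O(\di/m)$.

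I expect the background-feature step to be the main obstacle: one must argue that the negative-class gradient mass on background features, $\gknyt{\sbg}$, is genuinely dominated by the already-$\log$-suppressed positive-class mass $\gkyt{\sbg}$, which needs both the quantitative $O(\di^{-1/2})$ activation-asymmetry rate from \theoref{theo:activation} and the fact that this rate holds on a $(1-o(1))$ fraction of $[0,T_1]$ — and the latter is exactly where the dimension gap $d\gg\di^2$ is used. A secondary care point is verifying that none of the hypotheses of \lemmaref{lemma:grad_core}/\lemmaref{lemma:grad_bg} (the hinge indicator staying active, and the ratio condition $|\inp{\wkt}{\bm{m}_i}/\inp{\wkt}{\bm{m}_j}|=\Theta(1)$ feeding the Berry--Esseen bound of \lemmaref{lemma:activation}) break before $T_1$; these are maintained for $t=O(m/(\eta\di))$ just as in the preceding lemmas, costing only the choice of $c$. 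Finally, the statement holds with high probability by a union bound over the initialization events of \lemmaref{lemma:neuron_init}, \lemmaref{lemma:neuron_init_cont}, \lemmaref{lemma:output}, and the event of \theoref{theo:activation}.
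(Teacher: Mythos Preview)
Your proposal is correct and follows essentially the same route as the paper's proof: fix the $\Theta(m)$ neurons via \lemmaref{lemma:neuron_init} and \lemmaref{lemma:neuron_train}, extract the one-dimensional recursions for the core and background projections from \lemmaref{lemma:grad_core} and \lemmaref{lemma:grad_bg}, handle weight decay via $\lambda\eta T_1=o(1)$, and for the background part split the trajectory at $T_0$ using \theoref{theo:activation} to suppress $\gknyt{\sbg}$. Your write-up is in fact slightly more careful than the paper's in two places---you explicitly invoke the sharp $O(\di^{-1/2})$ activation rate from the proof of \theoref{theo:activation} (rather than just the $o(1)$ in its statement) to beat the $1/\log\di$ suppression in $\gkyt{\sbg}$, and you spell out $T_0=\wt o(T_1)$ via $d=\Omega(\di^{2.01})$---but these are refinements of the same argument, not a different approach.
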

\begin{proof}
For every $y\in\yspace$, consider the neuron set $\nyt$ defined in~\defref{def:neuron}. By~\lemmaref{lemma:neuron_init} and~\lemmaref{lemma:neuron_train}, we have $\abs{\nyt} = \Theta(m)$ for every iteration $t\le T_1$.
We break the subsequent proof into two parts: in the first part we prove the desired result for core features $\score$ for all neurons $k\in\ny{T_1}$; in the second part we prove the desired result for background features $\sbg$ for all neurons $k\in\ny{T_1}$. Recall that we use the shorthand $\mu_j$ to denote $\mu_{j1} = \expect{\rvz\sim\dtrain}{\zj}$.

\textbf{Part 1: proving $\sum_{j\in\score}\muj\inp{\wk{T_1}}{\mj} = \inp{\wk{T_1}}{\sum_{j\in\score}\muj\mj} = \Theta(1)$.}

The SGD update~\eqref{eq:sgd} gives
\[
\begin{aligned}
&\biginprod{\wk{t+1}}{\sum_{j\in\score}\muj\mj} - \biginprod{\wk{t}}{\sum_{j\in\score}\muj\mj}\\
&\qquad= -\eta \Biginprod{\nabla_{\its{\rvw_k}{t}}\frac{1}{N}\sum_{i\in[N]} \ell\left(\its{h}{t}(\its{\rvx_i}{t}), \its{\rvy_i}{t}\right)}{\sum_{j\in \score}\muj\mj} - \lambda\eta \biginprod{\wk{t}}{\sum_{j\in\score}\muj\mj}
\end{aligned}
\]
for every $t = 0,\ldots,T_1 - 1$.

Applying~\lemmaref{lemma:grad_core}, we obtain
\[
\begin{aligned}
&\biginprod{\wk{t+1}}{\sum_{j\in\score}\muj\mj} - \biginprod{\wk{t}}{\sum_{j\in\score}\muj\mj}\\
&\qquad= y\cdot\Theta\left(\frac{\eta\di}{m}\right) + y\gknyt{\score} - \lambda\eta \biginprod{\wk{t}}{\sum_{j\in\score}\muj\mj} \\
&\qquad = y\cdot\Theta\left(\frac{\eta\di}{m}\right) - \lambda\eta \biginprod{\wk{t}}{\sum_{j\in\score}\muj\mj}.
\end{aligned}
\]
Without loss of generality, assume that $y = 1$ (the case of $y=-1$ is similar).
By the choice of $T_1 = \Theta(\frac{m}{\eta\di})$, we have
\[
\begin{aligned}
\biginprod{\wk{T_1}}{\sum_{j\in\score}\muj\mj} &\le \Theta\left(\frac{\eta T_1\di}{m}\right) + \biginprod{\wk{0}}{\sum_{j\in\score}\muj\mj}\\
&\le \Theta(1) + \wt{O}\left(\frac{\di}{d}\right) = \Theta(1),
\end{aligned}
\]
where in the second inequality we apply the concentration inequality of the maximum absolute Gaussian (\lemmaref{lemma:max_gaussian}). By our choice of $\lambda = o(\frac{\di}{m})$, we have
\[
\begin{aligned}
&\biginprod{\wk{t+1}}{\sum_{j\in\score}\muj\mj} - \biginprod{\wk{t}}{\sum_{j\in\score}\muj\mj}\\
&\qquad= \Theta\left(\frac{\eta\di}{m}\right) - \lambda\eta \biginprod{\wk{t}}{\sum_{j\in\score}\muj\mj} \\
&\qquad \ge \Theta\left(\frac{\eta\di}{m}\right) - o\left(\frac{\eta\di}{m}\right) = \Theta\left(\frac{\eta\di}{m}\right).
\end{aligned}
\]
Summing the above inequality from $t=0$ to $t=T_1 - 1$ yields
\[
\biginprod{\wk{T_1}}{\sum_{j\in\score}\muj\mj} = \Theta(1).
\]
Similarly, for $y = -1$ we have $\biginprod{\wk{T_1}}{\sum_{j\in\score}\muj\mj} = -\Theta(1)$.

\textbf{Part 2: proving $\sum_{j\in\sbg}\muj\inp{\wk{T_1}}{\mj} = \inp{\wk{T_1}}{\sum_{j\in\sbg}\muj\mj} = \wt{\Theta}(1)$.}

Similar to the first part of the proof, we have the SGD update
\[
\begin{aligned}
&\biginprod{\wk{t+1}}{\sum_{j\in\sbg}\muj\mj} - \biginprod{\wk{t}}{\sum_{j\in\sbg}\muj\mj}\\
&\qquad= -\eta \Biginprod{\nabla_{\its{\rvw_k}{t}}\frac{1}{N}\sum_{i\in[N]} \ell\left(\its{h}{t}(\its{\rvx_i}{t}), \its{\rvy_i}{t}\right)}{\sum_{j\in \sbg}\muj\mj} - \lambda\eta \biginprod{\wk{t}}{\sum_{j\in\sbg}\muj\mj}.
\end{aligned}
\]
Applying~\lemmaref{lemma:grad_bg}, we obtain
\[
\begin{aligned}
&\biginprod{\wk{t+1}}{\sum_{j\in\sbg}\muj\mj} - \biginprod{\wk{t}}{\sum_{j\in\sbg}\muj\mj}\\
&\qquad= \wt{\Theta}\left(\frac{\eta\di}{m}\right) - \eta\gknyt{\sbg} - \lambda\eta \biginprod{\wk{t}}{\sum_{j\in\sbg}\muj\mj},
\end{aligned}
\]
where
\[
\begin{aligned}
\gknyt{\sbg} &= \frac{1}{m}\sum_{j\in\sbg}\expect{(\rvx,\rvy)\sim\dtrain}{\indicator{\htx\le 1}\indicator{\rvy=-y}\indicator{\inprod{\its{\rvw_k}{t}}{\rvx}\ge 0} \muj\rvz_j}\\
&= \frac{1}{2m}\expect{\rvx|\rvy=-y}{\bigg[\indicator{\htx\le 1} \sum_{j\in\sbg}\muj\rvz_j\bigg| \inp{\wkt}{\rvx}\ge 0\bigg]} \prob{\rvx|\rvy=-y}{\left[\inp{\wkt}{\rvx}\ge 0\right]}\\
&\le \Theta\left(\frac{\di}{m}\right)\prob{\rvx|\rvy=-y}{\left[\inp{\wkt}{\rvx}\ge 0\right]}.
\end{aligned}
\]
Using~\theoref{theo:activation}, we have after at most $T_0 = \wt{\Theta}(\frac{m}{\eta\sqrt{d}})$ iterations, $\prob{\rvx|\rvy=-y}{\left[\inp{\wkt}{\rvx}\ge 0\right]} = o(1)$. We thus have
\[
\biginprod{\wk{t+1}}{\sum_{j\in\sbg}\muj\mj} = (1-\lambda\eta) \biginprod{\wk{t}}{\sum_{j\in\sbg}\muj\mj} + \wt{\Theta}\left(\frac{\eta\di}{m}\right)
\]
for every $t \ge T_0$. By a similar argument as in the first part of the proof, we have $\biginprod{\wk{T}}{\sum_{j\in\sbg}\muj\mj} \le \Theta(1)$ and
\[
\begin{aligned}
\biginprod{\wk{T_1}}{\sum_{j\in\sbg}\muj\mj} &= (T_1 - T_0) \wt{\Theta}\left(\frac{\eta\di}{m}\right) + \biginprod{\wk{T_0}}{\sum_{j\in\sbg}\muj\mj}\\
&\ge \wt{\Theta}(1) - T_0\cdot\Theta\left(\frac{\eta\di}{m}\right) - \wt{O}\left(\sqrt{\frac{\di}{d}}\right)\\
&\stackrel{(a)}{=}\wt{\Theta}(1) - \wt{\Theta}\left(\frac{m}{\eta\sqrt{d}}\right) \Theta\left(\frac{\eta\di}{m}\right) \\
&= \wt{\Theta}(1),
\end{aligned}
\]
where $(a)$ is due to $d \in [\Omega(\di^{2.01}),\poly{\di}]$.

Finally, combining \textbf{Part 1} and \textbf{Part 2} completes the proof.
\end{proof}

\subsection{Proof of~\theoref{theo:risk}}
\label{appsec:proof_risk}

For ease of presentation, we first restate the theorem and then introduce its proof.

\begin{theorem}[ID and OOD risks]
% \label{theo:risk}
For every $\eta \le \frac{1}{\poly{\di}}$, there exists $T_2 = \wt{\Theta}(\frac{m}{\eta\di})$ such that with high probability, after $T_2$ iterations, the trained model $\its{h}{T_2}$ satisfies the following:
\begin{equation}
\begin{aligned}
&\exrisk{\dtrain}{\its{h}{T_2}} \le o(1),\\
&\oodrisk{\its{h}{T_2}} = \wt{\Theta}(1).
\end{aligned}
\end{equation}
% $\its{h}{T}$ has \textbf{small ID risk}:
% \begin{itemize}[leftmargin=2em]
% \setlength\itemsep{0.1em}
% \item $\its{h}{T}$ has small ID risk: $\exrisk{\dtrain}{\its{h}{T}} \le o(1)$.
% \item $\its{h}{T}$ has large OOD risk: $\oodrisk{\its{h}{T}} = \Theta(1)$.
% there exist a set of distributions $\{p_i\}_{i\in[\di]}\in\Delta^{\di}([-1, 0])$ such that when $\rvz_i\sim p_i,\forall i\in [\di]$ for $\rvz\sim\dtest$, it holds that $\exrisk{\dtest}{\its{h}{T}} \ge 1 - o(1)$; this immediately leads to $\oodrisk{\its{h}{T}}\ge 1 - o(1)$.
% \end{itemize}
\end{theorem}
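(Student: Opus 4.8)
The plan is to deduce both bounds from the picture of the $\Theta(m)$ ``good'' neurons furnished by \theoref{theo:activation} and \theoref{theo:feature}, sharpened by the all-neuron estimates \lemmaref{lemma:grad_core_all}, \lemmaref{lemma:corr_pos_all}, \lemmaref{lemma:corr_neg}. I would fix $T_2 = \wt{\Theta}(\frac{m}{\eta\di})$ to be the first iteration at which the $\dtrain$-margin stabilizes: the good neurons' expected correlation with their positive class grows by $\Theta(\frac{\eta\di}{m})$ per step while $\prob{\rvx\mid\rvy}{[\its{h}{t}(\rvx)\le 1]}$ stays bounded away from $0$ (\lemmaref{lemma:corr_pos}), and the hinge gradient switches off on an example once this threshold is crossed, so after $T_2$ steps $\rvy\its{h}{T_2}(\rvx)=1\pm o(1)$ for a $(1-o(1))$-fraction of $(\rvx,\rvy)\sim\dtrain$. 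Throughout I use the uniform output bound $|\its{h}{T_2}(\rvx)|=\wt{O}(1)$ for every $\rvx\in\xspace$: each admissible input lies in $\mathrm{span}\{\bm{m}_1,\dots,\bm{m}_{\di}\}$ with $\norm{\rvx}=\Theta(\sqrt{\di})$, and each SGD step moves $\wk{t}$ by at most $\Theta(\frac{\eta}{m}\sqrt{\di})$ in norm (weight decay only shrinks it), so the feature-parallel part of $\wk{T_2}$ has norm $\wt{O}(\frac{1}{\sqrt{\di}})$ (starting from $O(\frac{1}{\sqrt{\di}})$, cf.\ \lemmaref{lemma:output}), giving $|\inp{\wk{T_2}}{\rvx}|=\wt{O}(1)$ for all $k$ and $|\its{h}{T_2}(\rvx)|\le\frac1m\sum_k|\inp{\wk{T_2}}{\rvx}|=\wt{O}(1)$.

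For the ID bound I would show $\rvy\its{h}{T_2}(\rvx)\ge 1-o(1)$ on a $(1-o(1))$-fraction of $(\rvx,\rvy)\sim\dtrain$. Split $\its{h}{T_2}(\rvx)=\sum_k a_k\relu(\inp{\wk{T_2}}{\rvx})$. For $k\in\ny{T_2}$ with $\sign(a_k)=\rvy$, \theoref{theo:activation} gives $\inp{\wk{T_2}}{\rvx}\ge 0$ w.p.\ $1-O(\di^{-1/2})$, and a Berry--Esseen argument (\lemmaref{lemma:activation}) shows $\inp{\wk{T_2}}{\rvx}$ concentrates around $\sum_{j\in\score}\rvy\muj\inp{\wk{T_2}}{\mj}+\sum_{j\in\sbg}\muj\inp{\wk{T_2}}{\mj}=\Theta(1)+\wt{\Theta}(1)>0$ by \theoref{theo:feature} (the core sum is $\rvy\cdot\rvy\,\Theta(1)=\Theta(1)$ since $\sign(a_k)=\rvy$); each such neuron contributes $\frac1m\relu(\inp{\wk{T_2}}{\rvx})\ge 0$ to $\rvy\its{h}{T_2}(\rvx)$. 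Neurons in $\nset{-y}{T_2}$ are inactive on such $\rvx$ w.p.\ $1-o(1)$ (\theoref{theo:activation}), and the remaining neurons are controlled by \lemmaref{lemma:grad_core_all}/\lemmaref{lemma:corr_pos_all}, so, after a union bound, all of them together perturb $\rvy\its{h}{T_2}(\rvx)$ by only $o(1)$. Hence $\rvy\its{h}{T_2}(\rvx)=\frac1m\sum_{k\in\ny{T_2}}\relu(\inp{\wk{T_2}}{\rvx})\pm o(1)$, a sum of $\Theta(m)$ positive terms equal to $1-o(1)$ by the choice of $T_2$, and $\exrisk{\dtrain}{\its{h}{T_2}}\le o(1)\cdot\wt{O}(1)+o(1)=o(1)$.

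For the OOD bound, the upper bound $\oodrisk{\its{h}{T_2}}=\wt{O}(1)$ is immediate from $|\its{h}{T_2}|=\wt{O}(1)$. For the lower bound, pick $\dtest\in\dset$ as in \defref{def:dgp} with $\expect{\mathcal{D}'_j}{\zj}=-\Theta(1)$ on $\sbg$; since $\max\{1-u,0\}\ge 1-u$ it suffices to show $\expect{(\rvx,\rvy)\sim\dtest}{\rvy\its{h}{T_2}(\rvx)}\le 1-\wt{\Theta}(1)$. I would write the ID-to-OOD change as $\sum_k a_k\big(\expect{\dtrain}{\rvy\relu(\inp{\wk{T_2}}{\rvx})}-\expect{\dtest}{\rvy\relu(\inp{\wk{T_2}}{\rvx})}\big)$ and argue, via the all-neuron lemmas, that by $T_2$ every neuron with $\sign(a_k)=y$ has a sign-$y$ core correlation and a \emph{positive} background correlation (being eventually more active on class $y$, its background gradient \eqref{eq:approx_grad} is positive), while every neuron with $\sign(a_k)=-y$ has its core correlation dominate its background correlation and hence stays inactive on class-$y$ inputs. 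On $\dtest$ the core part of each pre-activation is unchanged but the background part flips sign (as $\zj<0$ on $\sbg$ and $\inp{\wk{T_2}}{\mj}>0$), so on a class-$y$ input the correctly-signed output of each $\sign(a_k)=y$ neuron drops, relative to $\dtrain$, by the magnitude of the background part of its pre-activation, which for the $\Theta(m)$ good neurons of each class has expected size $\Theta\big(\sum_{j\in\sbg}\muj\inp{\wk{T_2}}{\mj}\big)=\wt{\Theta}(1)$; since no neuron's contribution to $\rvy h$ increases under the shift by more than $o$ of this, the net drop is $\wt{\Theta}(1)$. With $\expect{\dtrain}{\rvy\its{h}{T_2}(\rvx)}=1\pm o(1)$ this yields $\expect{\dtest}{\rvy\its{h}{T_2}(\rvx)}\le 1-\wt{\Theta}(1)$ and hence $\oodrisk{\its{h}{T_2}}\ge\exrisk{\dtest}{\its{h}{T_2}}=\wt{\Theta}(1)$, matching the upper bound.

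The main obstacle is this last step: converting the per-good-neuron background flip into a genuine $\wt{\Theta}(1)$ drop in $\expect{\dtest}{\rvy h}$. This forces an all-neuron refinement of \theoref{theo:activation}--\theoref{theo:feature} --- tracking, for \emph{every} neuron, the sign of its core correlation, the sign and size of its background correlation, and its class-conditional activation probabilities under both $\dtrain$ and $\dtest$ --- in order to exclude the scenario in which wrongly-signed neurons activate more under the background shift and cancel the loss incurred by the good neurons. The all-neuron gradient bounds (\lemmaref{lemma:grad_core_all}, \lemmaref{lemma:corr_pos_all}, \lemmaref{lemma:corr_neg}) are the right tools, but the bookkeeping --- notably that \emph{every} neuron's core correlation grows at the full rate $\Theta(\frac{\eta\di}{m})$ and therefore eventually dominates the background correlation, which accumulates only at the slower rate arising from $\dbg=\Theta(\di/\log\di)$ --- is where the real work lies. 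A secondary, easier point is pinning the $\dtrain$ margin at $1-o(1)$ rather than merely $\Omega(1)$, which follows from the self-limiting nature of the hinge gradient.
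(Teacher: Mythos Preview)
Your overall architecture matches the paper's: both parts rest on the $\Theta(m)$ good neurons from \theoref{theo:activation} and \theoref{theo:feature}, supplemented by the all-neuron lemmas. Two divergences are worth flagging.

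\textbf{ID risk.} Your ``self-limiting hinge'' picture, with the margin settling at $1\pm o(1)$, is not what the paper does and would not close the argument as stated. If the \emph{expected} margin is merely $1$, the per-example margin $\rvy\its{h}{t}(\rvx)$ still fluctuates by $\Theta(1)$ (it depends on the $\di$ independent $\rvz_j$'s through the pre-activations), so an $\Omega(1)$ fraction of examples would remain below $1$ and $\exrisk{\dtrain}{\cdot}$ would not be $o(1)$. The paper instead argues by contradiction: assume $\mathcal{R}_1\ge\Theta(1)$ persists; then $\prob{\rvx\mid\rvy=1}{[\its{h}{t}(\rvx)\le 1]}=\Theta(1)$ for all $t\in[T_1,T_2]$, so the good neurons keep receiving $\Theta(\frac{\eta\di}{m})$ per step, and with $T_2=\Theta(\frac{m\sqrt{\log m}}{\eta\di})$ the expected output reaches $\Theta(\sqrt{\log m})$; one-sided Bernstein (\lemmaref{lemma:onesided_bernstein}) then gives $\prob{\rvx\mid\rvy=1}{[\its{h}{T_2}(\rvx)\le 1]}=O(1/\sqrt{m})$, the desired contradiction. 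The overshoot to $\Theta(\sqrt{\log m})$ is precisely what buys the concentration you need --- so rather than ``pinning the margin at $1-o(1)$'', you should push it to $\wt{\Theta}(1)$ and then concentrate.

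\textbf{OOD risk.} Here you are considerably more careful than the paper. The paper's Part 2 is essentially two sentences: invoke \theoref{theo:feature}, observe that flipping each background mean from $\Theta(1)$ to $-\Theta(1)$ drops the output contribution of every $k\in\nyt$ by $\wt{\Theta}(1)/m$, sum over the $\Theta(m)$ such neurons, done. The obstacle you identify --- that wrongly-signed neurons might activate more under the shift and compensate --- is not addressed explicitly; the paper treats it as implicit in the $\dcore=\Theta(\di)$ versus $\dbg=\Theta(\di/\log\di)$ separation (every neuron's core correlation accrues at rate $\Theta(\frac{\eta\di}{m})$ while the background correlation accrues at the $\log\di$-slower rate, so the core part dominates the pre-activation of every wrongly-signed neuron and it stays inactive under either distribution). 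Your all-neuron bookkeeping is the right way to make this rigorous, but you can shortcut much of it by appealing directly to this rate gap rather than redoing the activation-probability analysis neuron by neuron.
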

\begin{proof}
We break the subsequent proof into two parts: in the first part we prove the desired result for the ID risk; in the second part we prove the desired result for the OOD risk.

\textbf{Part 1: proving $\exrisk{\dtrain}{\its{h}{T_2}} \le o(1)$.}

By definition, we have
\begin{equation}
\begin{aligned}
\exrisk{\dtrain}{\its{h}{T_2}} &= \expect{(\rvx,\rvy)\sim\dtrain}{\max\left\{1 - \rvy\hx{T_2},0\right\}}\\
&= \frac{1}{2}\underbrace{\expect{\rvx|\rvy=1}{\left[1 - \hx{T_2}\Big|\hx{T_2}\le 1\right]}\prob{\rvx|\rvy=1}{\left[\hx{T_2}\le 1\right]}}_{\mathcal{R}_1}\\
&\qquad + \frac{1}{2}\underbrace{\expect{\rvx|\rvy=-1}{\left[1 + \hx{T_2}\Big|\hx{T_2}\ge -1\right]}\prob{\rvx|\rvy=-1}{\left[\hx{T_2}\ge -1\right]}}_{\mathcal{R}_{-1}}.
\end{aligned}
\end{equation}
We first consider $\mathcal{R}_1$.
By~\theoref{theo:feature}, we have that after $T_1 = \Theta(\frac{m}{\eta\di})$ iterations, for every neuron $k\in\npos{t}$ with $t\ge T_1$, we have
\[
\sum_{j\in\score}\muj\rvz_j\inp{\wk{t}}{\mj} = \Theta(1),\quad \sum_{j\in\sbg}\muj\rvz_j\inp{\wk{t}}{\mj} = \wt{\Theta}(1).
\label{eq:feature_app}
\]
We can then obtain
\[
\expect{\rvx|\rvy=1}{\inp{\wk{t}}{\rvx}} = \sum_{j\in[\di]}\muj\zj\inp{\wk{t}}{\rvx} = \Theta(1).
\]
On the other hand, by~\lemmaref{lemma:corr_pos_all}, we know that for every neuron $k$ satisfying $\sign(a_k) = y$, its correlation grow rate is asymptotically not larger than the correlation grow rate of neurons in $\ny{t}$. Denoting the set of those neurons as $\mathcal{M}_y \defeq \{k\in[m]: \sign(a_k) = y\},\,\forall y\in\yspace$, we then have
\[
\expect{\rvx|\rvy=1}{\inp{\wk{t}}{\rvx}} = O(1),\,\forall k\in\mathcal{M}_1,t\ge T_1.
\]
Meanwhile, for all neurons $k\in\mathcal{M}_{-1}$, by~\lemmaref{lemma:corr_neg} and~\theoref{theo:activation} we have for all $t\ge T_0 = \wt{\Theta}(\frac{m}{\eta\sqrt{d}})$,
\[
\prob{\rvx|\rvy=1}{[\inp{\wk{t}}{\rvx} \ge 0]} = o(1).
\]
Therefore, we have
\[
\begin{aligned}
\expect{\rvx|\rvy=1}{\hx{T_1}} &= \frac{1}{m}\sum_{k\in\mathcal{M}_1}\expect{\rvx|\rvy=1}{\left[\relu\left(\biginprod{\wk{T_1}}{\rvx}\right)\right]} - \frac{1}{m} \sum_{k\in\mathcal{M}_{-1}} \expect{\rvx|\rvy=1}{\left[\relu\left(\biginprod{\wk{T_1}}{\rvx}\right)\right]}\\
&= \frac{1}{m}\sum_{k\in\mathcal{M}_1} \Theta(1) - \frac{1}{m}\sum_{k\in\mathcal{M}_{-1}} o(1)\\
&= \Theta(1).
\end{aligned}
\]
Now, suppose that $\mathcal{R}_1 \ge \Theta(1)$. Choose $T_2 = \Theta(\frac{m\sqrt{\log m}}{\eta\di}) = \wt{\Theta}(\frac{m}{\eta\di})$. Then, for every $t = T_1,\ldots,T_2-1$ we have
\[
\prob{\rvx|\rvy=1}{\left[\hx{T_2}\le 1\right]} = \Theta(1).
\]
This further leads to
\begin{equation}
\begin{aligned}
&\frac{1}{m}\sum_{k\in\mathcal{M}_1}\expect{\rvx|\rvy=1}{\left[\relu\left(\biginprod{\wk{t+1}}{\rvx}\right)\right]} - \frac{1}{m}\sum_{k\in\mathcal{M}_1}\expect{\rvx|\rvy=1}{\left[\relu\left(\biginprod{\wk{t}}{\rvx}\right)\right]}\\
&\qquad = \frac{1}{m}\sum_{k\in\mathcal{M}_1}\expect{\rvx|\rvy=1}{\left[\biginprod{\wk{t+1}}{\rvx} - \biginprod{\wkt}{\rvx}\right]} \\
&\qquad = \frac{1}{m}\sum_{k\in\npos{t}}\expect{\rvx|\rvy=1}{\left[\biginprod{\wk{t+1}}{\rvx} - \biginprod{\wkt}{\rvx}\right]} + \frac{1}{m}\sum_{k\in\mathcal{M}_1\setminus\npos{t}}\expect{\rvx|\rvy=1}{\left[\biginprod{\wk{t+1}}{\rvx} - \biginprod{\wkt}{\rvx}\right]}
\end{aligned}
\label{eq:tmp5}
\end{equation}
For the first term in the RHS of the last equality in~\eqref{eq:tmp5}, by~\lemmaref{lemma:corr_pos} we have
\[
\begin{aligned}
&\frac{1}{m}\sum_{k\in\npos{t}}\expect{\rvx|\rvy=1}{\left[\biginprod{\wk{t+1}}{\rvx} - \biginprod{\wkt}{\rvx}\right]} \\
&\qquad = \frac{1}{m}\sum_{k\in\npos{t}} \left(\Theta\left(\frac{\eta\di}{m}\right) - \lambda\eta \expect{\rvx|\rvy=1}{\biginprod{\wkt}{\rvx}}\right)\\
&\qquad = \Theta\left(\frac{\eta\di}{m}\right),
\end{aligned}
\]
where in the last equality we use $\abs{\npos{t}} = \Theta(m)$, $\lambda = O(\frac{\di}{m^{0.01}})$ and $\expect{\rvx|\rvy=1}{\biginprod{\wkt}{\rvx}} = \wt{O}(1)$ for $t\le T_2$.

For the second term in the RHS of the last equality in~\eqref{eq:tmp5}, by~\lemmaref{lemma:corr_pos_all} we have
\[
\frac{1}{m}\sum_{k\in\mathcal{M}_1\setminus\npos{t}}\expect{\rvx|\rvy=1}{\left[\biginprod{\wk{t+1}}{\rvx} - \biginprod{\wkt}{\rvx}\right]} \le O\left(\frac{\eta\di}{m}\right).
\]
Therefore,
\[
\begin{aligned}
\expect{\rvx|\rvy=1}{\hx{T_2}} &= \expect{\rvx|\rvy=1}{\hx{T_1} + \Theta\left(\frac{\eta\di (T_2-T_1)}{m}\right)} \pm o(1) \\
&= \Theta(1) + \Theta(\sqrt{\log m}) \pm o(1) = \Theta(\sqrt{\log m}).
\end{aligned}
\]
Applying one-sided Bernstein's inequality (\lemmaref{lemma:onesided_bernstein}) then gives
\[
\prob{\rvx|\rvy=1}{\left[\hx{T_2}\le 1\right]} = O\left(\frac{1}{\sqrt{m}}\right),
\]
which contradicts with $\prob{\rvx|\rvy=1}{\left[\hx{T_2}\le 1\right]} = \Theta(1)$. Hence, we must have $\mathcal{R}_1 = o(1)$. By a similar argument, we also have $\mathcal{R}_{-1} = o(1)$. We then have that $\exrisk{\dtrain}{\its{h}{T_2}} = o(1)$ holds.

\textbf{Part 2: proving $\oodrisk{\its{h}{T}} = \wt{\Theta}(1)$.}

This part of the proof directly follows from~\theoref{theo:feature}. Since after $t = T_1$ iterations we have $\sum_{j\in\sbg}\muj\inp{\wkt}{\mj} = \wt{\Theta}(1)$ for every neuron $k\in\nyt$, it can be shown that perturbing each $\mu_j$ from $\Theta(1)$ to $-\Theta(1)$ for $j\in\sbg$ (recall the generation process of the OOD data in~\defref{def:dgp}) changes the output of the network by at least $-\frac{1}{m}\sum_{k\in\nyt}\wt{\Theta}(1) = -\wt{\Theta}(1)$ using the fact that $\abs{\nyt} = \Theta(m)$ for every $t$ (using~\lemmaref{lemma:neuron_init} and~\lemmaref{lemma:neuron_train}). By the definition of the OOD risk we then arrive at the desired result.

Finally, combining \textbf{Part 1} and \textbf{Part 2} completes the proof.
\end{proof}

\section{Separation between Linear Networks and Non-Linear Networks: Proof of~\theoref{prop:linear}}
\label{appsec:proof_nonlinear}

% \subsection{Two-Layer Linear Networks: Proof of~\theoref{prop:linear}}
% \label{appsec:proof_linear}

Before providing the main proof, we first introduce some lemmas that characterize the gradients of the two-layer linear network. In general, the gradients of two-layer linear networks take a similar form to those of two-layer ReLU networks except for not having the ReLU derivative. We can thus reuse some of our lemmas in~\secref{appsec:preliminary} and~\secref{appsec:proof_main} in the analysis of the gradients.

\textbf{Notation.} In this section, we overload the notation from the previous sections such as $\htx$ and $\wkt$ and let them also denote the linear network model/weights.

\begin{lemma}[Gradient of linear networks]
\label{lemma:gradient_linear}
For every example $(x,y)\in\xspace\times\yspace$, every $k\in[m]$, and every iteration $t$, the following holds:
\begin{equation}
\nabla_{\its{\rvw_k}{t}} \ell\left(h(x), y\right) = -
a_k y \indicator{h(x) \le 1} x.
\end{equation}
\begin{proof}
	The proof follows from simple calculation.
\end{proof}
\end{lemma}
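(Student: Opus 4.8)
The plan is a short chain-rule computation, directly parallel to \lemmaref{lemma:gradient} but with the identity activation in place of $\relu$. First I would recall that for the two-layer \emph{linear} network the output is $h^{(t)}(x) = \sum_{k'\in[m]} a_{k'}\inp{\its{\rvw_{k'}}{t}}{x}$, so that $h^{(t)}(x)$ is a linear --- hence everywhere differentiable --- function of each hidden weight $\wkt$, with $\nabla_{\wkt} h^{(t)}(x) = a_k\, x$. This is exactly where the linear case departs from the ReLU case: no factor $\indicator{\inp{\wkt}{x}\ge 0}$ appears, because the identity map has constant derivative $1$ rather than the Heaviside derivative of $\relu$. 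The hinge loss satisfies $\ell(h^{(t)}(x),y)=\max\{1-y\,h^{(t)}(x),\,0\}$, which is a convex, piecewise-linear function of $h^{(t)}(x)$ and hence of $\wkt$.

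Next I would split on its two linear pieces. On the region where the hinge is inactive the loss is identically $0$, so its gradient in $\wkt$ vanishes; on the region where the hinge is active the loss equals the affine function $1-y\,h^{(t)}(x)$, and the chain rule gives $\nabla_{\wkt}\ell(h^{(t)}(x),y)=-y\,\nabla_{\wkt}h^{(t)}(x)=-a_k y\, x$. Combining the two cases yields $\nabla_{\wkt}\ell(h^{(t)}(x),y)=-a_k y\,\indicator{h^{(t)}(x)\le 1}\, x$, where $\indicator{h^{(t)}(x)\le 1}$ denotes, exactly as in \lemmaref{lemma:gradient}, the indicator that the hinge is active on the example $(x,y)$; at the single kink where the hinge is tight, the convention --- consistent with the SGD updates used throughout the paper --- is to take the canonical subgradient there.

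There is no real obstacle: as the paper states, this follows from a simple calculation, and the only point worth a sentence is the non-differentiability of $\max\{\cdot,0\}$ at its kink, handled by fixing the subgradient convention. The substantive content of the lemma is comparative rather than computational --- it records that the linear-network gradient lacks the activation-derivative factor $\indicator{\inp{\wkt}{x}\ge 0}$ present in \lemmaref{lemma:gradient}, so that the gradient projection onto a background feature $\mj$ reduces to a multiple of $\expect{(x,y)}{[y\,\indicator{h^{(t)}(x)\le 1}\,\rvz_j]}$, whose positive and negative contributions on $\rvz_j$ cancel in expectation; this is the structural fact later exploited in the proof of \theoref{prop:linear}.
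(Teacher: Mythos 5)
Your proposal is correct and follows essentially the same route as the paper, which simply invokes the chain rule for the linear output $h^{(t)}(x)=\sum_{k'}a_{k'}\inp{\its{\rvw_{k'}}{t}}{x}$ composed with the piecewise-linear hinge, yielding $-a_k y\,\indicator{h(x)\le 1}\,x$ with no activation-derivative factor; your reading of $\indicator{h(x)\le 1}$ as the hinge-activity indicator (the paper's shorthand for $1-y\,h(x)\ge 0$) and your subgradient convention at the kink are consistent with how the paper uses this lemma. Nothing is missing.
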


\begin{lemma}[Gap between empirical and population gradients]
	\label{lemma:gradient_gap_linear}
For every $k\in[m]$, every $j\in[d]$, and every iteration $t$, if the batch size $N = \poly{d}$ for some sufficiently large polynomial, then the following holds with probability at least $1 - e^{-\Omega(d)}$:
\begin{equation}
\bigg|\Biginprod{\nabla_{\its{\rvw_k}{t}}\frac{1}{N}\sum_{i\in[N]} \ell\left(\its{h}{t}(\its{\rvx_i}{t}), \its{\rvy_i}{t}\right)}{\bm{m}_j} - \Biginprod{\nabla_{\its{\rvw_k}{t}}\expect{(\rvx,\rvy)\sim\dtrain}{\ell\left(h(\rvx),\rvy\right)}}{\bm{m}_j} \bigg| \le \frac{1}{\poly{d}}.
\end{equation}
\end{lemma}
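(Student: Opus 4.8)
The plan is to mirror the proof of~\lemmaref{lemma:gradient_gap} essentially verbatim: the linear-network gradient from~\lemmaref{lemma:gradient_linear} is identical to the ReLU gradient of~\lemmaref{lemma:gradient} except that it drops the activation-derivative factor $\indicator{\inprod{\its{\rvw_k}{t}}{x}\ge 0}$, and that factor was never actually used in obtaining the per-example bounds in the ReLU case --- it only ever made the deviation term \emph{smaller}. So first I would apply Cauchy--Schwarz together with $\norm{\bm{m}_j}=1$ to bound the left-hand side by $\bignorm{\its{\rmS}{t}}$, where $\its{\rmS}{t}\defeq\sum_{i\in[N]}\its{\rmZ_i}{t}$ and $\its{\rmZ_i}{t}\defeq\frac{1}{N}\nabla_{\its{\rvw_k}{t}}\ell(\its{h}{t}(\its{\rvx_i}{t}),\its{\rvy_i}{t})-\frac{1}{N}\nabla_{\its{\rvw_k}{t}}\expect{(\rvx,\rvy)\sim\dtrain}{\ell(h(\rvx),\rvy)}$ are independent, mean-zero, vector-valued summands.

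The key estimates then carry over unchanged. By~\lemmaref{lemma:gradient_linear}, each $\its{\rmZ_i}{t}$ is a difference of terms of the form $\frac{1}{N}a_k\rvy\indicator{h\le 1}\rvx$; using $a_k\in\{-\frac{1}{m},\frac{1}{m}\}$, $|\rvy|=1$, and $\norm{\rvx}\le\sqrt{\di}$ (which follows from~\defref{def:dgp}, since $\rvx$ is a combination of $\di$ orthonormal features with coefficients in $[-1,1]$), I get $\bignorm{\its{\rmZ_i}{t}}\le\frac{2\sqrt{\di}}{mN}$, and hence the second-moment bound $\expect{}{\inprod{\its{\rmS}{t}}{\its{\rmS}{t}}}=\sum_{i\in[N]}\expect{}{\inprod{\its{\rmZ_i}{t}}{\its{\rmZ_i}{t}}}\le\frac{4\di}{m^2N}$. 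Feeding these two bounds into matrix Bernstein's inequality (\lemmaref{lemma:matrix_bernstein}) yields a tail bound of the form $\prob{}{[\bignorm{\its{\rmS}{t}}\ge\delta]}\le(d+1)\exp\!\left(-\Omega\!\left(\frac{m^2N^2\delta^2}{\di+\sqrt{\di}\,\delta mN}\right)\right)$, so taking $N=\poly{d}$ with a sufficiently large polynomial makes $\bignorm{\its{\rmS}{t}}\le\frac{1}{\poly{d}}$ hold with probability at least $1-e^{-\Omega(d)}$, which is exactly the claim.

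I do not expect a genuine obstacle here: the linear gradient is strictly simpler than the ReLU gradient, so every inequality used in~\lemmaref{lemma:gradient_gap} transfers directly, and in particular no analogue of the activation-probability analysis (the Berry--Esseen argument) is needed. The only point worth flagging is the same bookkeeping remark as in the ReLU case --- a union bound over the $T=\poly{d}$ SGD iterations keeps the total failure probability at $e^{-\Omega(d)}$, so in the subsequent analysis of the two-layer linear network toward~\theoref{prop:linear} one may treat this gap bound as holding along the entire optimization trajectory and work with population gradients throughout.
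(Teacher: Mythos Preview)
Your proposal is correct and takes exactly the same approach as the paper: the paper's proof for this lemma simply states that it is nearly identical to \lemmaref{lemma:gradient_gap} and omits the details. Your write-up faithfully reproduces that argument, correctly observing that dropping the activation indicator only simplifies the per-example bounds used in the matrix Bernstein application.
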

\begin{proof}
The proof is nearly identical to~\lemmaref{lemma:gradient_gap}, hence we omit here.
\end{proof}

Since in linear models we do not need to consider the activation probability (equivalently, this can be viewed as each neuron being fully activated for every example), we can analyze the gradient projections for all neurons without resorting to characterizing a subset of neurons as in~\defref{def:neuron}.

\begin{lemma}[Gradient projection onto background features, linear networks] For every iteration $t \le O(\frac{m}{\eta\di})$, every $k\in[m]$, and every $j\in\sbg$, the following holds:
\label{lemma:grad_bg_linear}
\begin{equation}
\left|\Biginprod{\nabla_{\its{\rvw_k}{t}}\frac{1}{N}\sum_{i\in[N]} \ell\left(\its{h}{t}(\its{\rvx_i}{t}), \its{\rvy_i}{t}\right)}{\mj}\right| = \frac{1}{\poly{d}},
\label{eq:grad_bg_linear}
\end{equation}
\end{lemma}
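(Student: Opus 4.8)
The plan is to (i) pass from the empirical gradient projection to the population one via Lemma~\ref{lemma:gradient_gap_linear}, (ii) compute the population projection using the linear-network gradient formula of Lemma~\ref{lemma:gradient_linear}, and (iii) observe that for background features it vanishes exactly, once we know the hinge-loss activation indicator is trivial. Concretely, Lemma~\ref{lemma:gradient_linear} gives $\nabla_{\wkt}\expect{(\rvx,\rvy)\sim\dtrain}{\ell(\htx,\rvy)}=-a_k\,\expect{(\rvx,\rvy)\sim\dtrain}{\rvy\,\ind{\htx\le1}\,\rvx}$, so using $\inprod{\rvx}{\mj}=\rvz_j$ for $j\in\sbg$ (orthonormality of $\bm{M}$ and the generation rule of~\defref{def:dgp}),
\[
\biginprod{\nabla_{\wkt}\expect{(\rvx,\rvy)}{\ell(\htx,\rvy)}}{\mj}=-a_k\,\expect{(\rvx,\rvy)}{\rvy\,\ind{\htx\le1}\,\rvz_j}.
\]
If we can show $|\htx|<1$ for every $(\rvx,\rvy)$ in the support of $\dtrain$, then $\ind{\htx\le1}\equiv1$, and since $\rvz_j$ is independent of $\rvy$ for $j\in\sbg$ with $\expect{}{\rvy}=0$, this expectation equals $\expect{}{\rvy}\expect{}{\rvz_j}=0$; Lemma~\ref{lemma:gradient_gap_linear} then upgrades this to $\bigl|\inprod{\nabla_{\wkt}\frac1N\sum_{i}\ell(\its{h}{t}(\its{\rvx_i}{t}),\its{\rvy_i}{t})}{\mj}\bigr|\le\frac1{\poly d}$, which is the claim. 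This is exactly where non-linearity matters: for $j\in\score$ one has $\inprod{\rvx}{\mj}=\rvy\rvz_j$ instead, so $\rvy^2=1$ survives and the core projection is $-a_k\muj\neq0$, whereas in the ReLU case the extra activation-derivative factor $\ind{\inprod{\wkt}{\rvx}\ge0}$ reintroduces a label dependence even for $j\in\sbg$.

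\textbf{The auxiliary output bound.} The proof therefore reduces to the claim that, for a sufficiently small implicit constant in $t\le O(m/(\eta\di))$, we have $|\htx|<1$ for all $\rvx\in\supp(\dtrain)$, with high probability over the initialization and the mini-batches. I would prove this by induction on $t$, with base case $|\hinitx|=O(\di^{-1/2})<1$ from Lemma~\ref{lemma:output}. For the inductive step, as long as $\ind{h^{(s)}(\rvx)\le1}\equiv1$ for all $s<t$, Lemma~\ref{lemma:gradient_linear} makes every update clean, $\wk{s+1}=(1-\eta\lambda)\wk{s}+\eta a_k\cdot\tfrac1N\sum_i\its{\rvy_i}{s}\its{\rvx_i}{s}$, and $\tfrac1N\sum_i\its{\rvy_i}{s}\its{\rvx_i}{s}$ concentrates in norm, up to $\tfrac1{\poly d}$, around $\expect{}{\rvy\rvx}=\sum_{j\in\score}\muj\mj$ (a vector version of Lemma~\ref{lemma:gradient_gap_linear} plus a union bound over the $\le\poly d$ iterations). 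Unrolling, $\wkt=(1-\eta\lambda)^t\wkinit+a_k\Lambda_t\sum_{j\in\score}\muj\mj+\mathrm{err}_k$ with $\Lambda_t:=\eta\sum_{r=0}^{t-1}(1-\eta\lambda)^{r}=O(m/\di)$ and $\norm{\mathrm{err}_k}\le\tfrac1{\poly d}$; in particular $\inprod{\wkt}{\mj}=\wt O(1/\sqrt d)$ for $j\in\sbg$, since the drift term is orthogonal to $\mj$. Then $\htx=\sum_k a_k\inprod{\wkt}{\rvx}$, and because the drift direction is shared across neurons and is scaled by $a_k$, it contributes $\Lambda_t\cdot\rvy\sum_{j\in\score}\muj\rvz_j\cdot\sum_k a_k^2=\frac{\Lambda_t}{m}\cdot\Theta(\di)=\Theta(1)$ with a constant proportional to that of the window $t\le O(m/(\eta\di))$, while the initialization contributes $\le\sum_k|a_k|\cdot\max_k|\inprod{\wkinit}{\rvx}|=\wt O(\sqrt{\di/d})=\wt O(\di^{-1/2})$ (using $\norm{\rvx}^2=\sum_j\rvz_j^2=\Theta(\di)$, $\wkinit\sim\mathcal N(0,\tfrac1d\bm I)$, $d=\Omega(\di^{2.01})$), and $\mathrm{err}_k$ contributes $\tfrac1{\poly d}$. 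Choosing the window constant small enough makes the $\Theta(1)$ term smaller than $\tfrac12$, so $|\htx|<1$, closing the induction; the gradient-projection bound for $j\in\sbg$ then follows as the corollary described above, for all $t\le O(m/(\eta\di))$.

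\textbf{Main obstacle.} The crux is precisely this uniform-in-$\rvx$ bound $|\htx|<1$: a crude bound on the gradient norm (which would give only $|\inprod{\wkt}{\mj}|=O(\di^{-1/2})$, hence the useless $|\htx|=O(\sqrt\di)$) is far too weak, so one must exploit that the linear-network gradient is effectively rank-one along $\sum_{j\in\score}\muj\mj$ and scaled by the random $\pm\tfrac1m$ output weight, which makes the structured part of $\htx$ scale like $\sum_k a_k^2=\tfrac1m$ rather than $\sum_k|a_k|=1$. One must also keep the implicit constant in $T_1=\Theta(m/(\eta\di))$ small enough that the linearly growing core drift has not yet reached $1$ (and check this remains compatible with what~\theoref{theo:feature} requires), and verify that the initialization and empirical-sampling contributions are lower order, using $d=\Omega(\di^{2.01})$, $m\in[\Theta(\di),\Theta(d)]$, $\lambda=O(\di/m^{1.01})$, and union bounds over neurons and iterations. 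Everything else — the decoupling $\expect{}{\rvy\rvz_j}=\expect{}{\rvy}\expect{}{\rvz_j}=0$ and the appeal to Lemma~\ref{lemma:gradient_gap_linear} — is then routine.
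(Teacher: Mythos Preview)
Your approach is correct and follows the same strategy as the paper: pass to the population gradient via Lemma~\ref{lemma:gradient_gap_linear}, use the linear-network gradient formula of Lemma~\ref{lemma:gradient_linear}, and observe that the projection onto $\mj$ for $j\in\sbg$ vanishes once the hinge indicator is identically $1$. The paper's proof is very terse here---it simply splits $a_k\,\expect{(\rvx,\rvy)}{\rvy\,\ind{\htx\le1}\,\rvz_j}$ into the $\rvy=1$ and $\rvy=-1$ contributions and asserts they cancel, without explicitly justifying why the indicator can be dropped. Your inductive argument for $|\htx|<1$ (exploiting that the linear gradient is essentially rank-one along $\sum_{j\in\score}\muj\mj$ and scaled by $a_k$, so that the structured drift in $\htx$ scales like $\sum_k a_k^2=1/m$ rather than $\sum_k|a_k|=1$) supplies precisely the missing justification; in that sense your proof is a more careful version of the same argument, not a different one.
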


\begin{proof}
Applying~\lemmaref{lemma:gradient_linear} and~\lemmaref{lemma:gradient_gap_linear}, we obtain
\begin{equation}
\begin{aligned}
	&-\Biginprod{\nabla_{\its{\rvw_k}{t}}\frac{1}{N}\sum_{i\in[N]} \ell\left(\its{h}{t}(\its{\rvx_i}{t}), \its{\rvy_i}{t}\right)}{\mj}\\
	 &\qquad= -\biginprod{\nabla_{\its{\rvw_k}{t}}\expect{(\rvx,\rvy)\sim\dtrain}{\ell\left(\htx, \rvy\right)}}{\mj}\pm \frac{1}{\poly{d}}  \\
	 &\qquad= a_k\expect{(\rvx,\rvy)\sim\dtrain}{\rvy\indicator{\htx\le 1}\zj}\pm\frac{1}{\poly{d}}\\
	 &\qquad= a_k \expect{(\rvx,\rvy)\sim\dtrain}{\indicator{\htx \le 1}\indicator{\rvy=1} \zj}\\
	 &\qquad\qquad - a_k\expect{(\rvx,\rvy)\sim\dtrain}{\indicator{\htx \le 1}\indicator{\rvy=-1} \zj} \pm\frac{1}{\poly{d}}\\
	 &\qquad = \pm\frac{1}{\poly{d}}.
	\end{aligned}
\label{eq:app_grad_bg_base_linear}	
\end{equation}
This gives the desired result.
\end{proof}

We are now ready to prove~\theoref{prop:linear}.
For ease of presentation, we first restate the theorem and then introduce its proof.

\begin{theorem}[Linear networks]
If we replace the ReLU functions in the network with identity functions and keep other conditions the same as in~\theoref{theo:feature}, then with high probability, we have $|\inprod{\its{\rvw_k}{T_1}}{\bm{m}_j}| \le \wt{O}(\frac{1}{\sqrt{d}})$ for every $k\in[m]$ and every $j\in\sbg$.
\end{theorem}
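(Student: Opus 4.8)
The plan is to track, for each neuron $k\in[m]$ and each background feature index $j\in\sbg$, the scalar projection $\inprod{\wk{t}}{\mj}$ along the SGD trajectory and to show that it never escapes a $\wt{O}(1/\sqrt d)$ neighbourhood of its value at initialization. Projecting the linear-network SGD update~\eqref{eq:sgd} onto $\mj$ gives the one-step recursion
\begin{equation}
\inprod{\wk{t+1}}{\mj} = (1-\eta\lambda)\inprod{\wk{t}}{\mj} - \eta\Biginprod{\nabla_{\wk{t}}\tfrac{1}{N}\sum_{i\in[N]}\ell\big(\its{h}{t}(\its{\rvx_i}{t}),\its{\rvy_i}{t}\big)}{\mj},
\end{equation}
and since $\eta\lambda\le 1$ (because $\eta\le 1/\poly{\di}$ and $\lambda = O(\di/m^{1.01})$), telescoping and the triangle inequality yield
\begin{equation}
\bigabs{\inprod{\wk{T_1}}{\mj}} \le \bigabs{\inprod{\wk{0}}{\mj}} + \eta\sum_{t=0}^{T_1-1}\bigabs{\Biginprod{\nabla_{\wk{t}}\tfrac{1}{N}\sum_{i\in[N]}\ell\big(\its{h}{t}(\its{\rvx_i}{t}),\its{\rvy_i}{t}\big)}{\mj}}.
\end{equation}

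First I would bound the initialization term. Since $\wk{0}\sim\mathcal{N}(\mathbf 0,\tfrac{1}{d}\bm I_d)$ and $\norm{\mj}=1$, we have $\inprod{\wk{0}}{\mj}\sim\mathcal{N}(0,\tfrac{1}{d})$, so a Gaussian tail bound together with a union bound over the at most $m\dbg=\poly{d}$ relevant pairs $(k,j)$ gives $\max_{k\in[m],\,j\in\sbg}\bigabs{\inprod{\wk{0}}{\mj}} = \wt{O}(1/\sqrt d)$ with high probability (the same estimate already appears in~\lemmaref{lemma:neuron_init_cont}). For the drift term I would invoke~\lemmaref{lemma:grad_bg_linear}: it applies for every $t\le O(\tfrac{m}{\eta\di})$, hence for every $t< T_1 = \Theta(\tfrac{m}{\eta\di})$, and bounds each summand by $1/\poly{d}$, where the polynomial can be taken as large as we wish by the choice of batch size $N=\poly{d}$. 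Consequently the drift term is at most $\eta T_1\cdot\frac{1}{\poly{d}} = \Theta(\tfrac{m}{\di})\cdot\frac{1}{\poly{d}}$, and since $m\le\Theta(d)$ while $\di$ is polynomially large, $\tfrac{m}{\di}\le\poly{d}$, so this term is still $1/\poly{d}$ and is dominated by $\wt{O}(1/\sqrt d)$.

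Combining the two estimates gives $\bigabs{\inprod{\wk{T_1}}{\mj}} = \wt{O}(1/\sqrt d)$ simultaneously for all $k\in[m]$ and $j\in\sbg$, which is the claim. The real substance of the separation is contained in~\lemmaref{lemma:grad_bg_linear} rather than in the above bookkeeping: because the linear-network gradient carries no activation indicator $\ind{\inprod{\wk{t}}{\rvx}\ge0}$, the contributions of $\rvz_j$ coming from positive and from negative examples enter with opposite signs and cancel up to the $1/\poly{d}$ empirical-concentration error, so $\inprod{\wk{t}}{\mj}$ sees no systematic drift --- this is precisely the mechanism that is absent in the ReLU case, where~\theoref{theo:activation} breaks the class symmetry and produces a $\wt{\Theta}(\eta\di/m)$ drift per step. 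The only mild technical point left is that the per-step error $1/\poly{d}$ accumulates over $T_1\le\poly{d}$ iterations, which is harmless as long as $N$ is chosen so that this polynomial has sufficiently high degree; this is the closest thing to an obstacle here, and it is entirely routine.
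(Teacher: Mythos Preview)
Your proposal is correct and follows essentially the same approach as the paper: project the SGD update onto $\mj$, invoke \lemmaref{lemma:grad_bg_linear} to show the per-step drift is $1/\poly{d}$, control initialization via a Gaussian tail bound, and sum over the $T_1=\Theta(m/(\eta\di))$ steps. One minor remark: the initialization bound you want is the one in \lemmaref{lemma:max_gaussian} (maximum of $|\inprod{\wk{0}}{\mj}|$ over $k$), not \lemmaref{lemma:neuron_init_cont} (which bounds $\expect{\rvx|\rvy=y}{\inprod{\wk{0}}{\rvx}}$); your stated argument is right, only the cross-reference is off.
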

\begin{proof}
For every $k\in[m]$ and every $j\in\sbg$, by the SGD update~\eqref{eq:sgd} we have
\[
\inp{\wk{t+1}}{\mj} = -\eta \Biginprod{\nabla_{\its{\rvw_k}{t}}\frac{1}{N}\sum_{i\in[N]} \ell\left(\its{h}{t}(\its{\rvx_i}{t}), \its{\rvy_i}{t}\right)}{\mj} + (1-\lambda\eta) \biginprod{\wk{t}}{\sum_{j\in\score}\muj\mj}.
\]
By~\lemmaref{lemma:grad_bg_linear}, we obtain
\[
\inp{\wk{t+1}}{\mj} = (1-\lambda\eta)\inp{\wk{t}}{\mj} \pm \frac{\eta}{\poly{d}}.
\]
By~\lemmaref{lemma:max_gaussian}, with probability at least $1 - O(\frac{1}{m})$, we have at initialization
\[
\max_{k\in[m]}|\inprod{\wkinit}{\mj}| \le 2\sqrt{\frac{\log m}{d}}.
\]
Recall that $\lambda = O(\frac{\di}{m^{1.01}})$. Combining the above equations gives the desired result.
\end{proof}

\textbf{Remark.} Similar to our analysis of two-layer ReLU networks, for two-layer linear networks we can also analyze the correlation growth between every neuron and the core features and show that SGD can converge to a solution with small ID risk. Since~\theoref{prop:linear} indicates that linear networks do not have feature contamination (i.e., background features do not accumulate in the weights), we can show that the network would also have small OOD risk at convergence. Since this analysis has a similar procedure to (and is also much simpler than) our analysis on two-layer ReLU networks we do not include it here.

% \subsection{NTK: Proof of~\theoref{prop:ntk}}
% \label{appsec:proof_ntk}

% We first introduce the definition of the NTK feature mapping of our two-layer ReLU network.

% \begin{definition}[NTK of the two-layer ReLU network~\citep{xu_how_2021}]
% \label{def:ntk}
% The NTK feature mapping $g:\mathbb{R}^d\to\mathbb{R}^{m(d+1)}$ of the two-layer ReLU network $h$ defined in Eq.~\eqref{eq:nn} is given by
% \begin{equation}
% g(\rvx) = c\left(\rvx \cdot\indicator{\inp{\rvw_k}{\rvx} \ge 0},\,\inprod{\rvw_k}{\rvx} \cdot\indicator{\inp{\rvw_k}{\rvx} \ge 0}\right)_{k\in[m]},
% \end{equation}
% where $\rvw_k\sim\mathcal{N}(\mathbf{0}, \bm{I}_d)$ and $c$ is a constant.
% \end{definition}

% % The above expression follows directly from the kernel formula introduced in~\citep{jacot_neural_2018,arora_exact_2019}.

% We then state a full version of~\theoref{prop:ntk} in the main text and provide its proof.

% \begin{theorem}[NTK]
% Let $g:\mathbb{R}^d\to\mathbb{R}^{m(d+1)}$ be the finite-width NTK feature mapping of the two-layer ReLU network as given by~\defref{def:ntk} and $\rvb^\top = (\rvb_1^\top,\ldots,\rvb_m^\top)$ with $\rvb_k\in\mathbb{R}^d,\forall k\in[m]$ be the learnable classifier weights. Then, 
% if we train a classifier $f(\rvx) = \inprod{\rvb}{g(\rvx)}$ in the NTK regime using $\ell_2$-regularized least squares regression with infinitesimal gradient descent step sizes, then we have
% $\lim_{t\to\infty}\inprod{\its{\rvb_k}{t}}{\bm{m}_j} = 0$ for every $k\in[m]$ and every $j\in\sbg$.
% \end{theorem}

\section{Probability Theory Lemmas}
\label{appsec:prob_lemma}

In this section, we provide the probability theory lemmas used in our proofs for completeness. Since those lemmas are standard results in the probability theory we omit the proofs of them.

We first state an one-sided form of Bernstein's inequality.

\begin{lemma}[One-sided Bernstein's inequality]
\label{lemma:onesided_bernstein}
Given $n$ independent random variables $\{\rvx_i\}_{i\in[n]}$ with $\rvx_i\le b$ almost surely for every $i\in[n]$, the following holds for every $\delta \ge 0$:
\begin{equation}
\prob{}{\Big[\sum_{i\in[n]}(\rvx_i - \expect{}{\rvx_i}) \ge n\delta}\Big] \le \exp{\left(-\frac{n\delta^2}{\frac{1}{n}\sum_{i\in[n]}\expect{}{\rvx_i^2} + \frac{b\delta}{3}}\right)}.
\end{equation}
\end{lemma}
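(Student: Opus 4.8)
The plan is to prove this by the classical Cram\'er--Chernoff (exponential-moment) method: exponentiate and apply Markov's inequality, factor over the independent coordinates, bound each one-variable moment generating function by a clean exponential--quadratic, and optimize the free parameter. First I would set $Y_i\defeq\rvx_i-\E[\rvx_i]$, so the $Y_i$ are independent and mean zero, and record that $Y_i\le b$ almost surely---this is all the one-sided estimate below will use, and it holds in the applications (e.g.\ whenever the $\rvx_i$ are nonnegative, since then $0\le\E[\rvx_i]\le b$). Then for any $\lambda\in(0,3/b)$, since $\exp(\lambda\sum_i Y_i)$ is nonnegative, Markov and independence give
\[
\prob{}{\Big[\sum_{i\in[n]}(\rvx_i-\E[\rvx_i])\ge n\delta\Big]}\le e^{-\lambda n\delta}\prod_{i\in[n]}\E\big[e^{\lambda Y_i}\big].
\]

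The only real computation is the per-variable bound. Writing $\phi(u)\defeq(e^u-1-u)/u^2$ (extended continuously at $0$), so that $e^{\lambda Y_i}=1+\lambda Y_i+\lambda^2 Y_i^2\,\phi(\lambda Y_i)$ and $\phi$ is nondecreasing on $\mathbb{R}$, I would take expectations, kill the linear term via $\E[Y_i]=0$, and use $Y_i^2\ge0$ together with $\phi(\lambda Y_i)\le\phi(\lambda b)$ (which needs only $Y_i\le b$) to obtain $\E[e^{\lambda Y_i}]\le 1+\lambda^2\phi(\lambda b)\,\E[Y_i^2]$. The elementary estimates $\phi(u)\le\frac{1}{2(1-u/3)}$ for $0\le u<3$ (from $(k+2)!\ge 2\cdot 3^{k}$), $\E[Y_i^2]\le\E[\rvx_i^2]$, and $1+w\le e^w$ then give
\[
\E\big[e^{\lambda Y_i}\big]\le\exp\!\Big(\frac{\lambda^2\,\E[\rvx_i^2]}{2(1-\lambda b/3)}\Big),
\qquad\text{hence}\qquad
\prod_{i\in[n]}\E\big[e^{\lambda Y_i}\big]\le\exp\!\Big(\frac{\lambda^2 v}{2(1-\lambda b/3)}\Big),
\]
with $v\defeq\sum_{i\in[n]}\E[\rvx_i^2]$.

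Finally I would combine the two displays and choose $\lambda=n\delta/(v+bn\delta/3)$, which lies in $(0,3/b)$; a short calculation (using $1-\lambda b/3=v/(v+bn\delta/3)$) shows the resulting exponent equals $-(n\delta)^2/\big(2(v+bn\delta/3)\big)$, so that
\[
\prob{}{\Big[\sum_{i\in[n]}(\rvx_i-\E[\rvx_i])\ge n\delta\Big]}\le\exp\!\left(-\frac{n\delta^2}{\tfrac{2}{n}\sum_{i\in[n]}\E[\rvx_i^2]+\tfrac{2b\delta}{3}}\right),
\]
which is the stated estimate up to the absolute constant in the denominator (it is the standard form of Bernstein's inequality). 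I do not expect a genuine obstacle: this is a textbook argument, and the only points needing care are (i) the one-sidedness---the hypothesis $\rvx_i\le b$ (rather than $|\rvx_i|\le b$) is enough precisely because the Bennett-type step uses only $\phi(\lambda Y_i)\le\phi(\lambda b)$, i.e.\ the upper bound $Y_i\le b$, and (ii) checking that the optimizing $\lambda$ stays strictly below $3/b$ so that the bound on $\phi(\lambda b)$ and the geometric series are valid.
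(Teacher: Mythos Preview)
The paper does not actually prove this lemma: it is placed in the ``Probability Theory Lemmas'' appendix with the explicit remark that the proofs are omitted as standard. Your Cram\'er--Chernoff/Bennett argument is exactly the textbook route and is correct; you have correctly identified both caveats yourself---the harmless factor of $2$ in the denominator, and the fact that the step $Y_i\le b$ (rather than $Y_i\le b-\E[\rvx_i]$) tacitly needs $\E[\rvx_i]\ge 0$. Neither affects the single downstream use in the paper (bounding $\prob{}{[\hx{T_2}\le 1]}$), so there is no genuine gap.
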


Note that the above result can also be used to derive bounds on the lower tail by applying it to the random variables $\{-\rvx_i\}_{i\in[n]}$ if each $\rvx_i$ is bounded from below.

We then state a matrix extension of Bernstein's inequality; such type of inequalities is useful for bounding the gradients of the network in our proofs.

\begin{lemma}[Matrix Bernstein's inequality~\citep{oliveira2009concentration,tropp2012user}]
\label{lemma:matrix_bernstein}
Given $n$ independent random matrices $\{\rmX_i\}_{i\in[n]}$ with dimension $d_1\times d_2$ and $\expect{}{\rmX_i} = \mathbf{0}$, $\lVert \rmX_i \rVert_2\le b$ almost surely for every $i\in[n]$, define the sum $\rmS\defeq\sum_{i\in[n]}\rmX_i$ and let $v(\rmS)$ denote the matrix variance statistic of the sum:
\begin{equation}
v(\rmS) \defeq \max{\{ \lVert\expect{}{[\rmS\rmS^*]}\rVert_2, \lVert\expect{}{[\rmS^*\rmS]}\rVert_2 \}},
\end{equation}
where $\lVert \cdot \rVert_2$ denotes the spectral norm a matrix or the $\ell_2$ norm of a vector (when $d_1=1$ or $d_2=1$). Then, the following holds for every $\delta \ge 0$:
\begin{equation}
\prob{}{\big[\lVert\rmS\rVert_2 \ge \delta \big]} \le (d_1 + d_2)\cdot\exp{\left(-\frac{\delta^2}{2v(\rmS) + \frac{2b\delta}{3}}\right)}.
\end{equation}
\end{lemma}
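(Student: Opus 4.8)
The plan is to prove this via the standard matrix Laplace transform (Ahlswede--Winter / Tropp) method, since the paper treats it as a known probability fact. First I would reduce the rectangular statement to the self-adjoint case via the Hermitian dilation $\mathcal{H}(\rmX)$, the $(d_1+d_2)\times(d_1+d_2)$ self-adjoint matrix with zero diagonal blocks and off-diagonal blocks $\rmX$ and $\rmX^{*}$, which satisfies $\lambda_{\max}(\mathcal{H}(\rmX))=\lVert\rmX\rVert_2$ and $\mathcal{H}(\rmX)^{2}=\diag(\rmX\rmX^{*},\rmX^{*}\rmX)$. Thus it suffices to prove, for a sum $\rmS=\sum_{i\in[n]}\rmX_i$ of independent self-adjoint matrices with $\expect{}{\rmX_i}=\mathbf{0}$ and $\lVert\rmX_i\rVert_2\le b$, that $\prob{}{[\lambda_{\max}(\rmS)\ge\delta]}\le D\exp(-\delta^{2}/(2v+2b\delta/3))$, where $D$ is the ambient dimension and $v=\lVert\sum_i\expect{}{\rmX_i^{2}}\rVert_2$; applying this to the matrices $\mathcal{H}(\rmX_i)$ then gives exactly the stated prefactor $d_1+d_2$, the parameter $v(\rmS)=\max\{\lVert\expect{}{\rmS\rmS^{*}}\rVert_2,\lVert\expect{}{\rmS^{*}\rmS}\rVert_2\}$, and $\lVert\rmS\rVert_2=\lambda_{\max}(\mathcal{H}(\rmS))$ on the left.

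For the self-adjoint case I would begin from the exponential tail bound $\prob{}{[\lambda_{\max}(\rmS)\ge\delta]}\le e^{-\theta\delta}\,\expect{}{\Tr\exp(\theta\rmS)}$, valid for all $\theta>0$ (Markov's inequality applied to $\Tr\exp(\theta\,\cdot)$, which dominates $\exp(\theta\lambda_{\max}(\cdot))$ on positive-semidefinite exponentials). The crux is bounding $\expect{}{\Tr\exp(\sum_i\theta\rmX_i)}$: because matrices do not commute one cannot simply factor the exponential as in the scalar case. Here I would invoke Lieb's concavity theorem (concavity of $\rmA\mapsto\Tr\exp(\rmH+\log\rmA)$ on positive-definite $\rmA$) and peel off the independent summands one at a time, applying Jensen's inequality at each step, to obtain $\expect{}{\Tr\exp(\sum_i\theta\rmX_i)}\le\Tr\exp(\sum_i\log\expect{}{e^{\theta\rmX_i}})$.

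Next I would control the matrix cumulant-generating terms. Since $y\mapsto(e^{\theta y}-1-\theta y)/y^{2}$ is increasing and $\lVert\rmX_i\rVert_2\le b$, one has $e^{\theta\rmX_i}\preceq\mathbf{I}+\theta\rmX_i+g(\theta)\rmX_i^{2}$ with $g(\theta)\defeq(e^{\theta b}-1-\theta b)/b^{2}$; taking expectations and using $\expect{}{\rmX_i}=\mathbf{0}$ together with $\mathbf{I}+\rmA\preceq e^{\rmA}$ gives $\expect{}{e^{\theta\rmX_i}}\preceq\exp(g(\theta)\expect{}{\rmX_i^{2}})$, hence $\log\expect{}{e^{\theta\rmX_i}}\preceq g(\theta)\expect{}{\rmX_i^{2}}$ by operator monotonicity of $\log$. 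Summing over $i$, then using $\Tr\exp(\cdot)\le D\exp(\lambda_{\max}(\cdot))$ and $\lambda_{\max}(\sum_i\expect{}{\rmX_i^{2}})\le v$, yields $\prob{}{[\lambda_{\max}(\rmS)\ge\delta]}\le D\exp(-\theta\delta+g(\theta)v)$. I would conclude with the elementary estimate $g(\theta)\le\frac{\theta^{2}/2}{1-b\theta/3}$ valid for $0\le\theta<3/b$ and the choice $\theta=\delta/(v+b\delta/3)$, which minimizes the exponent to $-\delta^{2}/(2v+2b\delta/3)$; substituting the dilation back in yields the displayed inequality.

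The step I expect to be the main obstacle is the passage $\expect{}{\Tr\exp(\sum_i\theta\rmX_i)}\le\Tr\exp(\sum_i\log\expect{}{e^{\theta\rmX_i}})$. In the scalar Bernstein argument this is the trivial multiplicativity of moment-generating functions under independence, but in the matrix setting it genuinely requires a nontrivial operator-trace inequality (Lieb concavity, or an equivalent consequence of Golden--Thompson-type machinery). Everything after it is routine scalar calculus and standard semidefinite-ordering facts, and everything before it is standard.
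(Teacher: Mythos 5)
Your proposal is correct: the Hermitian dilation reduction, the Laplace-transform bound via Lieb's concavity theorem, the moment bound $e^{\theta\rmX_i}\preceq\mathbf{I}+\theta\rmX_i+g(\theta)\rmX_i^2$, and the choice $\theta=\delta/(v+b\delta/3)$ do yield exactly the stated prefactor $d_1+d_2$ and exponent $-\delta^2/(2v+2b\delta/3)$. The paper itself offers no proof—it cites this as a standard result of \citet{oliveira2009concentration} and \citet{tropp2012user}—and your argument is precisely the master-bound derivation used in those references, so there is nothing to reconcile.
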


We then state a basic result for bounding the cumulative distribution function of the unit Gaussian distribution that is repeatedly used in deriving neuron properties in initialization.

\begin{lemma}[Bounds for unit Gaussian variables]
\label{lemma:gaussian}
Let $\rvx\sim\mathcal{N}(0,1)$ be a unit Gaussian random variable. Then, the following holds for every $\delta > 0$:
\begin{equation}
\frac{1}{\sqrt{2\pi}}\frac{\delta}{\delta^2+1}\exp{\left(-\frac{\delta^2}{2}\right)}\le \prob{}{[\rvx\ge \delta]} \le \frac{1}{\sqrt{2\pi}}\frac{1}{\delta}\exp{\left(-\frac{\delta^2}{2}\right)}.
\end{equation}
\end{lemma}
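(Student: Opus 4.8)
The plan is to prove the two bounds separately by elementary estimates on the Gaussian tail integral $Q(\delta) \defeq \prob{}{[\rvx\ge\delta]} = \frac{1}{\sqrt{2\pi}}\int_\delta^\infty e^{-t^2/2}\,dt$. For the upper bound, first I would use the crude majorization $1 \le t/\delta$, valid for all $t \ge \delta > 0$, together with the explicit antiderivative $\frac{d}{dt}\big(-e^{-t^2/2}\big) = t e^{-t^2/2}$:
\begin{equation}
Q(\delta) = \frac{1}{\sqrt{2\pi}}\int_\delta^\infty e^{-t^2/2}\,dt \le \frac{1}{\sqrt{2\pi}}\int_\delta^\infty \frac{t}{\delta}\,e^{-t^2/2}\,dt = \frac{1}{\sqrt{2\pi}}\cdot\frac{1}{\delta}\Big[-e^{-t^2/2}\Big]_{t=\delta}^{t=\infty} = \frac{1}{\sqrt{2\pi}}\cdot\frac{1}{\delta}\,e^{-\delta^2/2},
\end{equation}
where the boundary term at $t=\infty$ vanishes. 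That is the complete argument for the upper bound.

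For the lower bound, it suffices to show $g(\delta) > f(\delta)$ for every $\delta > 0$, where $g(\delta) \defeq \int_\delta^\infty e^{-t^2/2}\,dt$ and $f(\delta) \defeq \frac{\delta}{\delta^2+1}e^{-\delta^2/2}$, and then divide by $\sqrt{2\pi}$. I would compare the two functions through their derivatives: both tend to $0$ as $\delta\to\infty$ (the factor $e^{-\delta^2/2}$ dominates in $f$), and $g'(\delta) = -e^{-\delta^2/2}$, while a short computation using the product rule and the identity $1 - 2\delta^2 - \delta^4 = 2 - (\delta^2+1)^2$ gives $f'(\delta) = \big(\tfrac{2}{(\delta^2+1)^2} - 1\big)e^{-\delta^2/2}$. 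Hence
\begin{equation}
(g - f)'(\delta) = -e^{-\delta^2/2} - \Big(\tfrac{2}{(\delta^2+1)^2} - 1\Big)e^{-\delta^2/2} = -\frac{2}{(\delta^2+1)^2}\,e^{-\delta^2/2} < 0 \qquad\text{for all }\delta > 0,
\end{equation}
so $g - f$ is strictly decreasing on $(0,\infty)$; combined with $\lim_{\delta\to\infty}(g-f)(\delta) = 0$ this forces $(g-f)(\delta) > 0$ for every $\delta > 0$, which is exactly the claimed lower bound.

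I do not expect a genuine obstacle here — this is a classical Mills-ratio estimate — so the only point requiring care is the algebraic cancellation in the expression for $f'$: one must verify $\frac{d}{d\delta}\frac{\delta}{\delta^2+1} - \frac{\delta^2}{\delta^2+1} = \frac{1-\delta^2}{(\delta^2+1)^2} - \frac{\delta^2}{\delta^2+1} = \frac{2}{(\delta^2+1)^2} - 1$ so that the $-1$ cancels the $+1$ from $g'$ and $(g-f)'$ is manifestly negative. One might be tempted instead to integrate $\int_\delta^\infty e^{-t^2/2}dt$ by parts twice (writing the integrand as $\frac1t\cdot t e^{-t^2/2}$, then $\frac1{t^3}\cdot t e^{-t^2/2}$), which yields $g(\delta) = \big(\tfrac1\delta - \tfrac1{\delta^3}\big)e^{-\delta^2/2} + 3\int_\delta^\infty t^{-4}e^{-t^2/2}\,dt \ge \big(\tfrac1\delta - \tfrac1{\delta^3}\big)e^{-\delta^2/2}$; but since $\frac{\delta^2-1}{\delta^3} < \frac{\delta}{\delta^2+1}$ (equivalently $\delta^4 - 1 < \delta^4$) this lower bound is strictly weaker than the one asserted — and not even positive for $\delta \le 1$ — so I would present the monotonicity argument above, which covers all $\delta > 0$ uniformly.
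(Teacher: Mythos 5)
Your proof is correct: the upper bound via the majorization $1\le t/\delta$ on $[\delta,\infty)$ and the lower bound via showing $g-f$ is strictly decreasing with limit $0$ (the derivative computation $f'(\delta)=\bigl(\tfrac{2}{(\delta^2+1)^2}-1\bigr)e^{-\delta^2/2}$ checks out) are both sound, and your closing remark that the twice-integrated-by-parts bound $\bigl(\tfrac1\delta-\tfrac1{\delta^3}\bigr)e^{-\delta^2/2}$ would be strictly weaker is accurate. The paper itself gives no proof to compare against --- it states this lemma as a standard fact and explicitly omits the argument --- and what you have written is precisely the classical Mills-ratio proof that would fill that gap.
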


Finally, we state a result for lower bounding the upper tail of the cumulative distribution function for binomial variables using Hoeffding's inequality:
\begin{lemma}[Tail bound for binomial variables]
\label{lemma:binomial}
Let $\rvx\sim\mathcal{B}(n,p)$ be a binomial random variable with trial number $n$ and success probability $p\in [0,1]$. Then, the following holds for every $n, p$ and integer $k\le np$:
\begin{equation}
\prob{}{[\rvx\ge k]} \ge 1 - \exp{\left(-2n\left(p - \frac{k-1}{n}\right)^2\right)}.
\end{equation}
\end{lemma}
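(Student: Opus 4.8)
The plan is to realize $\rvx$ as a sum of independent Bernoulli trials and apply Hoeffding's inequality to its lower tail. Write $\rvx=\sum_{i=1}^n\rvx_i$ where $\rvx_1,\ldots,\rvx_n$ are i.i.d.\ with $\rvx_i\in\{0,1\}$ and $\Pr[\rvx_i=1]=p$, so that $\mathbb{E}[\rvx]=np$ and each $\rvx_i$ takes values in the interval $[0,1]$ of length $1$.

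First I would reduce the claimed lower bound on $\Pr[\rvx\ge k]$ to an upper bound on a lower-tail probability. Since $\rvx$ is integer-valued, $\Pr[\rvx\ge k]=1-\Pr[\rvx\le k-1]$, so it suffices to prove $\Pr[\rvx\le k-1]\le\exp\!\big(-2n(p-\tfrac{k-1}{n})^2\big)$. Set $t\defeq np-(k-1)=n\big(p-\tfrac{k-1}{n}\big)$; because $k$ is an integer with $k\le np$, we have $k-1<np$, hence $t>0$, which is exactly the condition needed to apply the one-sided Hoeffding bound.

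Then I would invoke Hoeffding's inequality in the form $\Pr\!\big[\sum_i\rvx_i-\mathbb{E}[\sum_i\rvx_i]\le -t\big]\le\exp\!\big(-2t^2/\sum_i(b_i-a_i)^2\big)$ with $a_i=0$, $b_i=1$, so that $\sum_i(b_i-a_i)^2=n$ and therefore $\Pr[\rvx\le np-t]\le\exp(-2t^2/n)$. Substituting $t=np-(k-1)$ gives $\Pr[\rvx\le k-1]\le\exp\!\big(-2(np-(k-1))^2/n\big)=\exp\!\big(-2n(p-\tfrac{k-1}{n})^2\big)$, and combining this with the reduction of the first step yields the stated inequality.

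There is essentially no obstacle here: the result is a direct consequence of a standard concentration inequality, and the only points requiring care are the off-by-one when passing from $\{\rvx\ge k\}$ to its complement $\{\rvx\le k-1\}$ and the verification that the deviation parameter $t$ is positive (so the one-sided Hoeffding bound is applicable), both of which follow immediately from $k\in\mathbb{Z}$ and $k\le np$.
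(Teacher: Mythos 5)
Your proof is correct: the paper itself omits a proof, merely noting that the bound follows from Hoeffding's inequality, and your argument (sum of Bernoulli trials, one-sided Hoeffding lower-tail bound with $t=np-(k-1)>0$, then complementation) is exactly that standard derivation. The off-by-one handling and the positivity check on $t$ are handled correctly, so nothing is missing.
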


\begin{lemma}[Concentration of the maximum of absolute Gaussian]
\label{lemma:max_gaussian}
Let $\rvx_1,\ldots,\rvx_n$ be i.i.d. random variables that follow the zero-mean Gaussian distribution $\mathcal{N}(0,\sigma^2)$. Then, the following holds for every positive integer $n$:
\begin{equation}
\prob{}{\left[\max_{i\in[n]} \abs{\rvx_i} \ge 2\sigma\sqrt{\log n}\right]} \le \frac{2}{n}.
\end{equation}
\end{lemma}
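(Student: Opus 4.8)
The plan is to combine a union bound with a single sub-Gaussian tail estimate; nothing beyond \lemmaref{lemma:gaussian} is required. First I would reduce the two-sided event to a one-sided one: since each $\rvx_i$ is a centered Gaussian, $\prob{}{[\abs{\rvx_i} \ge t]} = 2\,\prob{}{[\rvx_i \ge t]}$ for every $t \ge 0$. A union bound over $i\in[n]$ then gives
\[
\prob{}{\Big[\max_{i\in[n]}\abs{\rvx_i} \ge 2\sigma\sqrt{\log n}\Big]} \le \sum_{i\in[n]}\prob{}{\big[\abs{\rvx_i} \ge 2\sigma\sqrt{\log n}\big]} = 2n\,\prob{}{\big[\rvx_1 \ge 2\sigma\sqrt{\log n}\big]}.
\]

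Next I would normalize and apply the tail bound. Writing $\rvz \defeq \rvx_1/\sigma \sim \mathcal{N}(0,1)$, it suffices to bound $\prob{}{[\rvz \ge 2\sqrt{\log n}]}$. Using the standard sub-Gaussian estimate $\prob{}{[\rvz \ge \delta]} \le e^{-\delta^2/2}$, valid for all $\delta \ge 0$ (the upper bound in \lemmaref{lemma:gaussian} gives an even tighter version), with $\delta = 2\sqrt{\log n}$ one gets $\prob{}{[\rvz \ge 2\sqrt{\log n}]} \le e^{-2\log n} = n^{-2}$. Substituting back, the right-hand side above is at most $2n\cdot n^{-2} = 2/n$, which is exactly the claimed bound.

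There is essentially no obstacle here --- this is a textbook maximal inequality for i.i.d.\ Gaussians --- so the only points worth watching are bookkeeping. The factor $2$ from the two-sided event must be carried through, and it is precisely what turns the estimate from $1/n$ into the stated $2/n$; and the degenerate case $n=1$ causes no trouble, since then $\log n = 0$ and the statement reads $\prob{}{[\abs{\rvx_1} \ge 0]} = 1 \le 2$, so the bound indeed holds for every positive integer $n$. If a sharper constant were wanted, one could instead feed $\delta = 2\sqrt{\log n}$ into the tighter bound $\prob{}{[\rvz\ge\delta]} \le \tfrac{1}{\sqrt{2\pi}\,\delta}\,e^{-\delta^2/2}$ from \lemmaref{lemma:gaussian} to absorb part of the leading factor $2n$, but the crude estimate already delivers $2/n$.
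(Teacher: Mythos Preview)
Your proof is correct and is exactly the standard union-bound-plus-Gaussian-tail argument one expects here. The paper itself omits the proof, stating that this and the other probability lemmas in \secref{appsec:prob_lemma} are standard results, so there is nothing further to compare against.
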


\begin{lemma}[Berry–Esseen theorem]
\label{lemma:berry}
Let $\rvx_1,\ldots,\rvx_n$ be independent random variables with $\expect{}{\rvx_i} = 0$, $\expect{}{\rvx_i^2} = \sigma_i^2 > 0$, and $\rho_i\defeq\expect{}{|\rvx_i^3|} < \infty$. Also, define the normalized sum
\begin{equation}
\rvs_n \defeq \frac{\sum_{i\in[n]}\rvx_i}{\sqrt{\sum_{i\in[n]}\sigma_i^2}}.
\end{equation}
Denote $\Phi$ the cumulative distribution function of $\mathcal{N}(0,1)$. Then, there exists a constant $C_0 \in [0.40, 0.56]$ such that
\begin{equation}
\sup_{\delta\in\mathbb{R}}\left|\prob{}{[\rvs_n < \delta]} - \Phi(\delta)\right| \le C_0 \left(\sum_{i=1}^n \sigma_i^2\right)^{-\frac{3}{2}}\sum_{i=1}^n\rho_i.
\end{equation}
\end{lemma}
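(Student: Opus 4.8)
The statement is the classical Berry--Esseen central limit theorem for independent, not necessarily identically distributed, summands, so the plan is to reproduce the standard Fourier-analytic proof of Esseen. Write $F$ for the distribution function of $\rvs_n$, set $B \defeq \sum_i \sigma_i^2$ and let $L \defeq B^{-3/2}\sum_i \rho_i$ be the Lyapunov ratio. The inequality is trivial unless $L$ is smaller than a fixed absolute constant, since $\sup_\delta |F(\delta) - \Phi(\delta)| \le 1$ always; so assume $L$ small, and after rescaling each $\rvx_i$ by $B^{-1/2}$ assume $B = 1$, so $\sum_i \rho_i = L$. Lyapunov's inequality gives $\sigma_i^3 = (\mathbb{E}\rvx_i^2)^{3/2} \le \mathbb{E}|\rvx_i|^3 = \rho_i$, hence $\max_i \sigma_i \le L^{1/3}$ and $\sum_i \sigma_i^4 \le (\max_i \sigma_i)\sum_i \sigma_i^3 \le L^{4/3} \le L$; I will use these bookkeeping facts repeatedly.

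The engine is Esseen's smoothing inequality: if $f$ is the characteristic function of $F$ and $g(t) = e^{-t^2/2}$ that of $\Phi$, then for every $T > 0$,
\[
\sup_{\delta\in\R}\bigl|F(\delta) - \Phi(\delta)\bigr| \;\le\; \frac{1}{\pi}\int_{-T}^{T}\left|\frac{f(t) - g(t)}{t}\right|\,dt \;+\; \frac{c_1}{T},
\]
with $c_1$ an absolute constant arising from $\sup_\delta |\Phi'(\delta)| = 1/\sqrt{2\pi}$. By independence $f(t) = \prod_{i=1}^n \varphi_i(t)$, where $\varphi_i$ is the characteristic function of $\rvx_i$, and $g(t) = \prod_{i=1}^n e^{-\sigma_i^2 t^2/2}$ because $\sum_i \sigma_i^2 = 1$. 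So the task is to bound $|f(t) - g(t)|$ on a window $|t| \le T$ with $T$ of order $1/L$, which makes the boundary term $c_1/T$ of order $L$.

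To estimate the characteristic-function difference I would combine two ingredients. First, a local Taylor bound: from $|e^{iy} - 1 - iy + \tfrac12 y^2| \le \tfrac16|y|^3$ and $\mathbb{E}\rvx_i = 0$ one gets $|\varphi_i(t) - (1 - \tfrac12\sigma_i^2 t^2)| \le \tfrac16 \rho_i|t|^3$, and $|e^{-s} - 1 + s| \le \tfrac12 s^2$ gives $|e^{-\sigma_i^2 t^2/2} - (1 - \tfrac12\sigma_i^2 t^2)| \le \tfrac18 \sigma_i^4 t^4$; since all factors have modulus $\le 1$, the telescoping estimate $|\prod a_i - \prod b_i| \le \sum_i |a_i - b_i|$ yields $|f(t) - g(t)| \le \tfrac16 L|t|^3 + \tfrac18 L|t|^4$ using $\sum_i \sigma_i^4 \le L$. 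Second, a Gaussian envelope: split the indices into a ``good'' set $\{i:\sigma_i^2 \le 4L^2\}$ and a ``bad'' set; since a bad index has $\sigma_i > 2L$ we get $\sum_{\mathrm{bad}}\sigma_i^2 \le \tfrac{1}{2L}\sum_i \sigma_i^3 \le \tfrac12$, so the good indices carry at least half the variance, and for good $i$ and $|t| \le c_0/L$ one has $\sigma_i^2 t^2 = O(1)$, whence $|\varphi_i(t)| \le \exp(-c\,\sigma_i^2 t^2 + c'\rho_i|t|^3)$; multiplying over the good indices and bounding the bad factors by $1$ gives $|f(t)| \le e^{-\kappa t^2}$ on $|t| \le c_0/L$. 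Interlacing the two estimates through the same telescoping sum (keeping the surviving Gaussian tails attached to each term) upgrades the first bound to $|f(t) - g(t)| \le (\tfrac16 L|t|^3 + \tfrac18 L|t|^4)\,e^{-\kappa' t^2}$ on the window. Plugging this into the smoothing inequality, the integral is at most $\tfrac1\pi\int_{\R}(\tfrac16 L|t|^2 + \tfrac18 L|t|^3)e^{-\kappa' t^2}\,dt = C_2 L$, and choosing $T = c_0/L$ makes the boundary term $(c_1/c_0)L$; summing and undoing the normalization recovers $\sup_\delta|F(\delta) - \Phi(\delta)| \le C_0\,B^{-3/2}\sum_i \rho_i$.

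The hard part is not the layout of the argument but two things. (a) Summands with heavy tails or atypically large individual variance admit no Gaussian envelope on $|\varphi_i|$ at all (think of a $\pm$-valued summand, for which $|\varphi_i(t)| = |\cos(at)|$ does not decay), which is exactly why the second step needs the good/bad partition and the Lyapunov bookkeeping above -- or, in some treatments, a preliminary truncation $\rvx_i \mapsto \rvx_i\indicator{|\rvx_i|\le 1}$ whose induced errors are absorbed into the $\rho_i$. (b) The size of the universal constant: the crude version above gives $C_0$ of order a few units, and pushing it into the stated window $[0.40, 0.56]$ requires the sharpened Taylor remainders, an optimized smoothing kernel, and a finer split of the $t$-range due to van Beek and to Shevtsova, while the lower end is forced by Esseen's classical two-point example showing $C_0 \ge 0.409\ldots$. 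A constant-wise weaker but self-contained alternative avoiding characteristic functions is a Lindeberg-type swapping argument run together with Stein's method.
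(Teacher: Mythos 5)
The paper does not prove this lemma at all: it is quoted in the appendix as a standard probability-theory result (the classical Berry--Esseen theorem), with the proofs of all such lemmas explicitly omitted, so there is no in-paper argument to compare against. Your sketch is the standard Esseen route --- smoothing inequality, product of characteristic functions, Taylor expansion of each factor against its Gaussian counterpart with a telescoping bound, a good/bad variance split (or truncation) to get the Gaussian envelope on the window $|t|\le c_0/L$, and the choice $T\asymp 1/L$ --- and this outline is structurally sound; it delivers the inequality with some absolute constant of order a few units. The one point to be upfront about is that the statement as written pins $C_0$ to the interval $[0.40,0.56]$, and your argument does not (and, as you acknowledge, cannot without the optimized kernels and refined remainder estimates of van Beek and Shevtsova, plus Esseen's two-point lower-bound example) produce that numerical window; since the lemma is invoked in the paper only as a citation of the known sharp result, this is an acceptable deferral rather than a gap in the mathematics you did present, but a self-contained proof of the literal statement would have to import those computations.
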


\newpage

\begin{center}
\vspace{0.5em}
\hypertarget{app:exp}{}
\Large
\textsc{Appendix {\uppercase\expandafter{\romannumeral 2}}: Experimental Details and Additional Results}
\vspace{0.5em}
\end{center}

In this part of the appendix, we present the details of the experiments in the main text and include additional empirical results in both real-world datasets and synthetic distribution shift settings. A quick overview of the structure of this part is as follows:
\begin{itemize}[leftmargin=2em]
\setlength\itemsep{0.5em}
	\item In~\secref{appsec:distill}, we provide the implementation details and more results of the representation distillation experiments in~\secref{sec:main_exp}.
	\item In~\secref{appsec:numerical}, we present more numerical results, implementation details, more results of class activation histograms, and additional feature visualization for deep neural networks.
	% \item In~\secref{appsec:visualization}, we provide additional visualization of the learned features in our experiments.
	\item In~\secref{appsec:conjecture}, we provide empirical evidence that supports the conjecture in~\secref{sec:discussion} and more discussion on related work.
\end{itemize}

\section{Representation Distillation Details}
\label{appsec:distill}

\subsection{Natural Distribution Shifts of ImageNet}

\paragraph{Datasets.} Following~\citep{taori_measuring_2020,radford_learning_2021,wortsman_robust_2022}, we consider 5 natural distribution shift test sets of ImageNet that are representative of real-world distribution shifts without artificial perturbations to images, including ImageNetV2~\citep{recht_imagenet_2019}, ImageNet-R~\citep{hendrycks_many_2021}, ObjectNet~\citep{barbu_objectnet_2019}, ImageNet Sketch~\citep{wang_learning_2019}, and ImageNet-A~\citep{hendrycks_natural_2021}. Compared to the original training and validation (ID test) sets of ImageNet, those test sets are reflective of changes in data distribution due to natural variations in the data collection process such as lighting, geographic location, image background, and styles.

\paragraph{Pre-trained models.} We used pre-trained checkpoints provided by CLIP~\citep{radford_learning_2021}, which is reported to exhibit remarkable robustness to distribution shifts of ImageNet. The official CLIP repository provide CLIP models pre-trained on the same dataset with varying sizes and architectures (ResNets and ViTs). In our experiments, we used five different CLIP models, including four ResNets and one ViT: CLIP-ResNet-50 (CLIP-RN50), CLIP-ResNet-101 (CLIP-RN101), CLIP-ResNet-50x4 (CLIP-RN50x4), CLIP-ResNet-50x16 (CLIP-RN50x16), and CLIP-ViT-B/16. For linear probing, we freezed the weights of the pre-trained models and trained randomly-initialized linear classification heads on top of the extracted representations on the ImageNet training set for 10 epochs. Following the hyperparameters used by~\citet{wortsman_robust_2022}, we used the AdamW optimizer~\citep{loshchilov_decoupled_2019} with learning rate 0.001, $\ell_2$ weight decay 0.1, batch size 256, and a cosine learning rate scheduler~\citep{loshchilov_sgdr_2017}. The results are reported based on the model with the best ID validation accuracy.

\paragraph{Representation distillation.} For each pre-trained CLIP model (teacher model), we freezed its weights and randomly initialized another model with identical architecture to the teacher model. We used the Mean Squared Error (MSE) loss to train the student model on the ImageNet training set, minimizing the mean Euclidean distance between the representations extracted by the student model and the representations extracted by the teacher model. We did not perform extensive grid search on the distillation hyperparameters and sticked to the following hyperparameter choices based on our preliminary experiments:
\begin{itemize}[leftmargin=2em]
	\item CLIP-RN50: AdamW optimizer with learning rate 0.001, $\ell_2$ weight decay 0.05, batch size 256, and a cosine learning rate schedular with warmup for 10000 steps; 100 distillation epochs.
	\item CLIP-RN101: AdamW optimizer with learning rate 0.001, $\ell_2$ weight decay 0.1, batch size 256, and a cosine learning rate scheduler with warmup for 10000 steps; 100 distillation epochs.
	\item CLIP-RN50x4 and CLIP-RN50x16: AdamW optimizer with learning rate 0.0001, $\ell_2$ weight decay 0.5, batch size 256, and a cosine learning rate scheduler with warmup for 10000 steps; 100 distillation epochs.
	\item CLIP-ViT-B/16: AdamW optimizer with learning rate 0.0001, $\ell_2$ weight decay 0.1, batch size 256, and a cosine learning rate scheduler with warmup for 10000 steps; 200 distillation epochs. Besides minimizing the difference between final representations (i.e., the output of the last layer of the networks) of student and teacher networks, we also minimized the difference between student and teacher network's intermediate representations of each residual attention block with a weighting coefficient 0.1.
\end{itemize}

In the linear probing stage, we freezed the parameters of the student models and trained a randomly initialized linear classification head for each student model on the ImageNet training set for 10 epochs. We used the AdamW optimizer with learning rate 0.001, $\ell_2$ weight decay of 0.001, batch size 256, and a cosine learning rate scheduler. The results are reported based on the model with the best ID validation accuracy.

\paragraph{Baseline models.} We reported the results of baseline models provided by the testbed of~\citet{taori_measuring_2020}. In their testbed,~\citet{taori_measuring_2020} catogory the models into different types, where some type of models are trained with more data than the original ImageNet training set. Since our aim is to explore the limit of representation learning using only ID data, we omit the results of those models trained with more data. We also omit the results of models with significantly lower accuracy than common ImageNet models, such as linear classifier on pixels or random features, classifiers based on nearest neighbors, and low accuracy CNNs. Concretely, we reported the results of the following two types of models defined by~\citet{taori_measuring_2020}:

\begin{itemize}[leftmargin=2em]
	\item \texttt{STANDARD}: models obtained by standard training (i.e., ERM) on the ImageNet training set.
	\item \texttt{ROBUST\_INTV}: models trained with existing robust intervention techniques on the ImageNet training set.
\end{itemize}

\paragraph{Detailed results.} 
We list detailed OOD generalization performance of linear probes on pre-trained and distilled representations on all 5 distribution shift test sets as well as the ID generalization results on the original ImageNet validation set in~\tableref{table:results_imagenet}.

\begin{table}[h]
\centering
\caption{Detailed ID and OOD top-1 accuracy (\%) of linear probes on pre-trained and distilled representations on ImageNet-based test sets. ``Im'' refers to ``ImageNet''.}
\label{table:results_imagenet}
\begin{tabular}{cccccccc}
\toprule
 & Im (ID) & OOD Avg. & ImV2 & Im-R & ObjectNet & Im Sketch & Im-A \\
\midrule
CLIP-RN50 & 70.37 & 39.42 & 59.03 & 51.18 & 37.72 & 31.87 & 17.31 \\
Distilled RN50 & 69.85 & 31.63 & 57.97 & 38.22 & 32.72 & 20.97 & 8.25  \\
\midrule
CLIP-RN101 & 72.33 & 45.27 & 61.70 & 59.92 & 43.07 & 37.93 & 23.73 \\
Distilled RN101 & 72.28 & 35.18 & 60.46 & 44.09 & 35.89 & 23.88 & 11.56 \\
\midrule
CLIP-ViT-B/16 & 79.40 & 57.59 & 69.72 & 72.42 & 51.85 & 47.33 & 46.64\\
Distilled ViT-B/16 & 73.58 & 37.14 & 62.45 & 44.43 & 35.52 & 23.83 & 19.47 \\
\midrule
CLIP-RN50x4 & 76.18 & 51.45 & 65.83 & 64.80 & 48.74 & 42.19 & 35.67 \\
Distilled RN50x4 & 76.25 & 41.40 & 65.20 & 49.22 & 42.71 & 29.23 & 20.64 \\
\midrule
CLIP-RN50x16 & 80.24 & 60.61 & 70.13 & 73.67 & 56.92 & 48.52 & 53.79 \\
Distilled RN50x16 & 79.65 & 48.26 & 68.49 & 55.03 & 48.90 & 32.93 & 35.97 \\
\bottomrule
\end{tabular}
\end{table}

\subsection{iWildCam-WILDS}

\paragraph{Dataset.} We used the official version of the dataset provided by WILDS~\citep{koh_wilds_2021}.

\paragraph{Pre-trained models.} In order to obtain a feature extractor that exhibits sufficient generalization ability on the dataset, we explored different pre-trained models including ViTs in CLIP~\citep{radford_learning_2021}, RegNets in SWAG~\citep{singh_revisiting_2022} as well as ResNets pre-trained on ImageNet~\citep{deng_imagenet:_2009}. In the end, we chose a fine-tuned ResNet-50 (RN50) that is pre-trained on ImageNet as the teacher model since we observed that ImageNet-scale pre-training already leads to considerable robustness improvements compared to models trained from scratch on this dataset (also reported by~\citet{miller_accuracy_2021}), while being consistent to the network architecture used in the official WILDS repository. For linear probing, we freezed the parameters of the pre-trained model and trained a randomly initialized linear classification head using the hyperparameters provided by the official WILDS repository. The results are reported based on the model with the best OOD validation accuracy, following the protocol used by the WILDS paper~\citep{koh_wilds_2021}.
 
\paragraph{Representation distillation.} We freezed the weights of the teacher model and randomly initialized a ResNet-50 as the student model. We trained the student model by minimizing the Euclidean distance between its extracted representations and the representations extracted by the teacher model using the MSE loss on the training domains of iWildCam-WILDS. The student model was trained for 150 epochs using AdamW with batch size 128, learning rate 0.0001, and $\ell_2$ weight decay 0.1. In the linear probing stage, we freezed the parameters of the student model and trained a randomly initialized linear classification head using the hyperparameters provided by the official WILDS repository. The results are reported based on the model with the best OOD validation accuracy, following the protocol used by the WILDS paper.
% When testing on the iWildCam-WILDS, no matter the teacher feature extractor or the student one, we use the same hyperparameter as provided by the WILDS benchmark~\citep{koh_wilds_2021}.

\paragraph{Baseline models.} We reported the results of baseline models provided by~\citep{miller_accuracy_2021}. In their result file,~\citet{miller_accuracy_2021} report both results for ImageNet-pre-trained neural networks (corresponding to models with \texttt{model\_type} as ``Neural Network'' in the result file) and results for neural networks trained from scratch (corresponding to models with \texttt{model\_type} as ``ImageNet Pretrained Network''). Since our aim is to explore the limit of representation learning using only ID data, we omit the results of the models with pre-training.

\paragraph{Detailed results.} We list detailed ID and OOD generalization performance of linear probes on pre-trained and distilled representations on iWildCam-WILDS in~\tableref{table:results_iwildcam}.

\begin{table}[h]
\centering
\caption{Detailed ID and OOD Macro F1 of linear probes on pre-trained and distilled representations on iWildCam-WILDS.}
\label{table:results_iwildcam}
\begin{tabular}{ccc}
\toprule
 & ID Macro F1 & OOD Macro F1 \\
\midrule
ImageNet RN50 & 49.30 & 32.46 \\
Distilled RN50 & 32.32 & 13.83 \\
\bottomrule
\end{tabular}
\end{table}

% The results reported in the iWildCam-WILDS panel were obtained through the following process: first, 
% % Meanwhile, we sweep over the architectures like ResNet, RegNet, ViT and so on to find the most suitable model structure.
% In the end, the ResNet-50 pretrained on ImageNet shows best robustness on the test set after finetuning, and is chosen as the teacher in our representation distillation.
% During the distillation, we train for 150 epochs with batch size 128, learning rate 1e-4 and weight decay 0.1 using AdamW. The loss function is commonly used hint loss. When testing on the iWildCam-WILDS, no matter the teacher feature extractor or the student one, we use the same hyperparameter as provided by the WILDS benchmark~\citep{koh_wilds_2021}.

\subsection{Camelyon17-WILDS}

\paragraph{Dataset.} We used the official version of the dataset provided by WILDS~\citep{koh_wilds_2021}.

\paragraph{Pre-trained models.} 
After preliminary experiments, we chose a ViT-B/16 pre-trained by CLIP as our teacher model. For linear probing, we freezed the parameters of the pre-trained model and trained a randomly initialized linear classification head using the hyperparameters provided by the official WILDS repository. The results are reported based on the model with the best OOD validation accuracy, following the protocol used by the WILDS paper~\citep{koh_wilds_2021}.

\paragraph{Representation distillation.} We freezed the weights of the teacher model and randomly initialized a ViT-B/16 with identical architecture to the teacher model as the student model. We trained the student model by minimizing the Euclidean distance between its extracted representations and the representations extracted by the teacher model using the MSE loss on the training domains of Camelyon17-WILDS. The student model was trained for 120 epochs with batch size 128, learning rate 0.0001 and $\ell_2$ weight decay 0.1 using AdamW. For linear probing, we freezed the parameters of the student model and trained a randomly initialized linear classification head using the hyperparameters provided by the official WILDS repository. The results are reported based on the model with the best OOD validation accuracy, following the protocol used by the WILDS paper.

\paragraph{Baseline models.}  We reported the results of all algorithms from the offcial WILDS leaderboard (accessed at September 26th, 2023) that do not use custom data augmentation or pre-training (including ``SGD (Freeze-Embed)'' that uses CLIP pre-training and ``ContriMix'', ``MBDG'', ``ERM w/ targeted aug'' and ``ERM w/ H\&E jitter'' that use custom, task-specific data augmentations) as baseline results.

\paragraph{Detailed results.} We list detailed ID and OOD generalization performance of linear probes on pre-trained and distilled representations on Camelyon17-WILDS in~\tableref{table:results_camelyon}.

\begin{table}[h]
\centering
\caption{Detailed ID validation and OOD test accuracy (\%) of linear probes on pre-trained and distilled representations on Camelyon17-WILDS.}
\label{table:results_camelyon}
\begin{tabular}{ccc}
\toprule
 & ID Validation Accuracy & OOD Test Accuracy \\
\midrule
CLIP-ViT-B/16 & 98.22 & 92.88 \\
Distilled ViT-B/16 & 98.28 & 89.83 \\
\bottomrule
\end{tabular}
\end{table}

\subsection{DomainNet}
\label{appsec:domainnet}

\paragraph{Dataset.} Following the setup of~\citet{tan_class-imbalanced_2020,kumar_fine-tuning_2022}, we used a pruned version of the original DomainNet dataset~\citep{peng_moment_2019}. The pruned dataset consists of 4 domains \{Clipart, Painting, Real, Sketch\} and 40 commonly occurring classes, selected from the original DomainNet which consists of 6 domains and 345 classes.
% The number of images in each of the 4 domains is as follows: \{5323, 9636, 23084, 7936\}.

\paragraph{Implementation details.} We adhered to the experimental settings as in DomainBed~\citep{gulrajani_search_2021}, which encompassed protocols for data augmentation, dataset partitioning, and hyperparameter search strategies. We opted for the widely adopted training domain validation for the model selection criterion. To reduce the computational cost, without loss of generality, we chose the Sketch domain with the largest distributional shifts as the test domain (OOD), while training on the other three domains (ID). For both our model and baseline models, we performed random searches on the hyperparameters with three different random seeds, each involving 5 trials.
% Below, we will delve into the specifics of the implementation of three distinct experimental setups.

\paragraph{Pre-trained models.} We used the official ResNet-50 (RN50), ResNet-101 (RN101), and ViT-B/32 pre-trained checkpoints provided by CLIP.

\paragraph{Representation distillation.}
% It's important to note that in our distillation experiments in the domain generalization setting,
Due to limitations imposed by the scale of the dataset, we exclusively employed the CLIP-RN50 as the teacher model---it turns out in our preliminary experiments that distilling the other two pre-trained models results in \emph{worse} performance both ID and OOD, which we believe is because the scale of the dataset is too small for distilling larger models.
% This experimental setup consists of two stages. Firstly, we distill the pre-trained CLIP-RN50 representations on the training domain data, and subsequently, we linear probe the distilled student model with a randomly initialized single-layer linear classifier.
In the distillation stage, we freezed the pre-trained CLIP-RN50 as the teacher model and used the MSE loss to train the student RN50 model with the exact same structure as CLIP-RN50 but with random initialization.
% Due to the mismatch in data scale and diversity between DomainNet and CLIP's pre-train data, to minimize the risk of overfitting the distilled representations to the training data samples, we do not follow DomainBed's default protocol of using an 80\% data split for training. Instead, we employ all data from the three training domains for the representation distillation process. It does not violate DomainBed's protocol because, during distillation, the student model has not seen any test domain images. Additionally, representation distillation on training domains is independent of the fitting to labels.
We used the AdamW optimizer with a cosine scheduler and learning rate 0.0003, $\ell_2$ weight decay 5e-5, batch size 32, and trained the student model for 95000 iterations.
In the linear probe stage, we freezed the parameters of the student model and add a randomly initialized single-layer linear classifier. We trained the linear probe on the training sets of the three training domains and performed zero-shot evaluation on the test domain. We ultimately select the checkpoints with the highest accuracy on the validation set from the training domain. During this stage, we used the Adam optimizer~\citep{kingma_adam:_2015} with a cosine scheduler and learning rate 0.003, $\ell_2$ weight decay 1e-6, batch size 32, and trained the linear probe for 5000 iterations.

\paragraph{Baseline models.}
We generally followed the settings of DomainBed, with the exception of using a modified RN50 model with the same structure as CLIP-RN50 but randomly initialized.
% as a way to control variables.
Additionally, we introduced a cosine scheduler with a warmup to enhance the convergence of models trained from scratch. We conducted extensive experiments with 15 representative domain generalization algorithms, including ERM~\citep{vapnik_nature_1999}, IRM~\citep{arjovsky_invariant_2019}, GroupDRO~\citep{sagawa_distributionally_2020}, Mixup~\citep{zhang_mixup_2018}, MLDG~\citep{li_learning_2018}, Deep CORAL~\citep{sun_deep_2016}, DANN~\citep{ganin_domain-adversarial_2016}, SagNet~\citep{nam_reducing_2021}, ARM~\citep{zhang_adaptive_2021}, VREx~\citep{krueger_out--distribution_2021}, RSC~\citep{huang_self-challenging_2020}, SelfReg~\citep{kim_selfreg_2021}, IB-ERM~\citep{ahuja_invariance_2021}, and IB-IRM~\citep{ahuja_invariance_2021}, and Fish~\citep{shi_gradient_2022}. We increased the training iterations from the default 5000 to 20000 to ensure the convergence of all methods.

\paragraph{Detailed results.} We list detailed ID and OOD generalization performance of linear probes on pre-trained and distilled representations on DomainNet in~\tableref{table:results_domainnet}.

\begin{table}[h]
\centering
\caption{Detailed ID test and OOD test accuracy (\%) of linear probes on pre-trained and distilled representations on DomainNet.}
\label{table:results_domainnet}
\begin{tabular}{ccc}
\toprule
 & ID Test Accuracy & OOD Test Accuracy \\
\midrule
CLIP-RN101 & 92.30 & 87.34 \\
CLIP-ViT-B/32 & 92.35 & 87.60 \\
\midrule
CLIP-RN50 & 87.02 & 82.58 \\
Distilled RN50 & 77.91 & 64.78 \\
\bottomrule
\end{tabular}
\end{table}

% \subsection{Detailed Empirical Results of Linear Probes on Pre-trained and Distilled Representations}

\section{Additional Experiments and Results}
\label{appsec:numerical}

\subsection{Numerical Experiments}
\label{appsubsec:numerical}

In this section, we present the results of our numerical experiments. The numerical experiments were conducted with parameters $\dcore=\dbg=32$, $d=256$, $m=256$, and $N=1000$. During training, each $\rvz_i,\, i\in[\di]$ was sampled from the uniform distribution on its support $[0,1]$; during testing, each $\rvz_i,\, i\in\score$ was sampled from the same distribution as in training, while each $\rvz_i,\, i\in\sbg$ was sampled from the uniform distribution on $[-1,0]$. We considered two experimental settings:
\begin{itemize}[leftmargin=2em]
\item \textbf{Classification:} We trained a two-layer ReLU network to predict the binary label for each input, which matches our theoretical setting in~\secref{sec:main}. As an ablation, we also trained a two-layer linear network for the same task, replacing the ReLU functions in the network by identity functions.
\item \textbf{Regression (representation distillation):} We trained a two-layer ReLU network to predict the vector $(\rvz_i)_{i\in\score}$ for each input---note that this is an optimal representation for OOD generalization, which matches the setting as our real-world representation distillation experiments in~\secref{sec:main_exp}. As an ablation, we also trained a two-layer linear network.
\end{itemize}

In both settings, we trained the network using SGD with a learning rate 0.001 and a weight decay $\lambda=0.001$. The results are in~\figref{fig:neurons},~\figref{fig:classification}, and~\figref{fig:regression}, which corroborate our theoretical results on
\begin{itemize}[leftmargin=2em]
\item \textbf{Activation asymmetry:} as shown by~\figref{fig:neurons}, each neuron evolves to have positive correlations with at most one class of examples during training.
\item \textbf{Feature contamination happens for non-linear networks:} as shown by~\figref{subfig:relu_classification} (classification) and~\figref{subfig:relu_regression} (regression), two-layer ReLU networks indeed accumulate weight projections onto the background features during training, leading to small ID risk yet large OOD risk.
\item \textbf{Feature contamination does \emph{not} happen for linear networks:} as shown by~\figref{subfig:linear_classification} (classification) and~\figref{subfig:linear_regression} (regression), two-layer linear networks does not accumulate weight projections onto the background features during training, leading to both small ID risk and small OOD risk when the concept class is linearly separable.
\end{itemize}

\paragraph{Extensions to more general settings:} in~\figref{fig:numerical}(d) in the main text, we provide numerical results of the average correlations between weights and background features for different activation functions. Here we detail our experimental settings and provide complete results in~\figref{fig:numerical_all}. Concretely, we consider a \emph{non-linear} relationship between core features and the label where core features for the two classes are distributed in a hyperball in $\mathbb{R}^{\dcore}$ with radii $1.0$ and $2.0$, respectively. We consider four different activation functions, namely ReLU, GELU~\citep{hendrycks_gaussian_2016}, Sigmoid, and Tanh. We consider a two-layer network where both layers have trainable weights and biases. We use the AdamW optimizer~\citep{loshchilov_decoupled_2019} instead of SGD.

\subsection{Class Activation Histograms}
\label{appsec:histograms}

In this section, we include average activation rate histograms for all blocks of ResNet-50 and ViT-B/16 as described in~\secref{sec:evidence} in the main text. For every block in ResNet, we compute the mean activation rate for every class averaged over all channels in the final ReLU layer; for every block in ViT, we compute the mean activation rate for every class averaged over all channels in the GELU layer in its MLP. For every channel with the activation function $\sigma$ and pre-activation input $x$, the activation rate is defined by $\mathsf{rate}(x)\defeq\left\{\begin{aligned} &1,\ \mathrm{if}\ \sigma(x)\ge 0\\ &0,\ \mathrm{otherwise} \end{aligned}\right.$, where $\sigma$ is ReLU for ResNet and GELU for ViT.

See~\figref{appfig:histograms_rn} for histograms of ResNet-50 blocks and~\figref{appfig:histograms_vit} for histograms of ViT-B/16 blocks. For earlier blocks in ResNet-50, activation asymmetry is not exhibited at random initalization but exhibited after training; for later blocks in ResNet-50 and all blocks in ViT, activation asymmetry is exhibited both at random initialization and after training.

\subsection{Neuron Selectivity}
\label{appsec:selectivity}

In this section, we detail the \emph{neuron selectivity} metric that we adopted to produce the results in~\figref{fig:selectivity} in the main text. Concretely, to examine whether the property of activation asymmetry also holds for deep neural networks, we adopt a similar selectivity metric as in~\citep{morcos_importance_2018} to quantify the difference of neuron activation magnitudes between different classes. For a $C$-way multi-class classification problem and for a neuron, we denote the class-conditional mean activation after the nonlinearity for each class by $\mu_1, \ldots, \mu_C$. In other words, each $\mu_i\in\mathbb{R}$ is calculated by averaging the activation of all inputs that belong to class $i$. Then, the \emph{selectivity} of this neuron is defined as
\begin{equation}
\mathsf{Selectivity} \defeq \frac{\mu_\mathrm{max} - \mu}{|\mu_\mathrm{max}| + |\mu| + \epsilon},
\label{eq:selectivity}
\end{equation}
where $\mu_\mathrm{max} = \mathrm{max}\{\mu_1,\ldots,\mu_C\}$ and $\mu = \frac{1}{C}{\sum_{i\in[C]} \mu_i}$ denote the largest class-conditional mean activation and the mean activation of all classes, respectively. $\epsilon > 0$ is a small constant for numerical stability and we set $\epsilon = 1e^{-6}$ in our experiments. In practice, we compute each $\mu_i,i\in[C]$ by averaging over examples in mini-batches with a batch size of 1024 on the ImageNet validation set, and then averaging over all mini-batches. For CLIP-RN50, the reported neuron selectivity is averaged over all dimensions of the output of the last attention pooling layer. For CLIP-ViT-B/16, the reported neuron selectivity is averaged over all dimensions of the output of every GELU layer in the last attention block.
% The original metric in~\citep{morcos_importance_2018} does not have the absolute value operators in the denominator since it focuses on non-negative activation; we extend this metric as in Eq.~\eqref{eq:selectivity} to accommodate possible negative activation such as the output of GELU.

\begin{figure}
\centering
\includegraphics[width=0.999\linewidth]{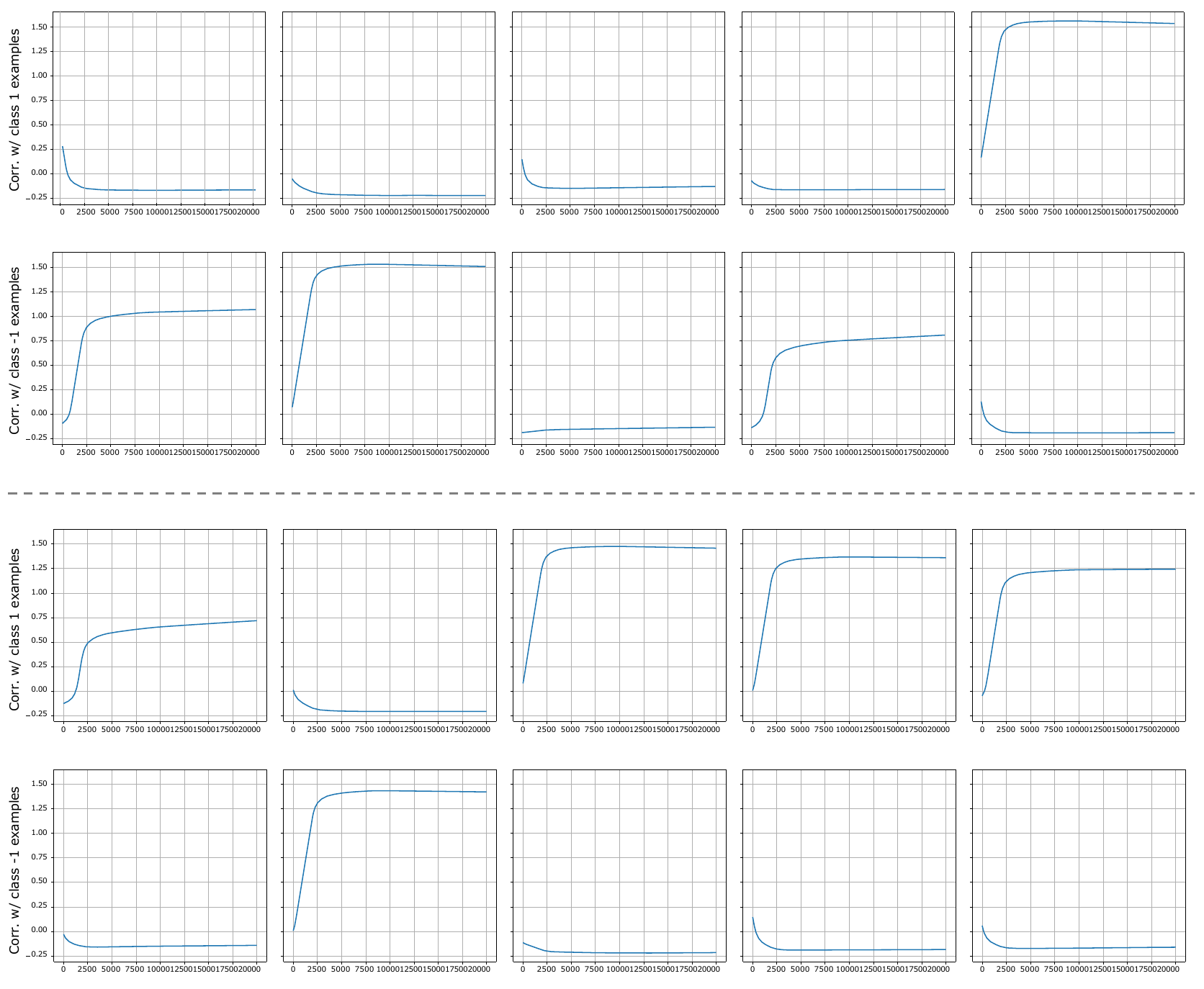}
\caption{\textbf{(Activation asymmetry)} The average correlation between 10 random neurons and examples from both classes as a function of training iterations in the classification setting. In each column, the top plot above shows the average correlation between the weight (learned feature) of the neuron and the examples from class $y=1$, while the bottom plot shows the average correlation between the weight (learned feature) of the neuron and the examples from class $y=-1$. As the training goes on, each neuron evolves to have positive correlation with at most one class of examples, resulting in activation asymmetry.}
% (i.e., $\inprod{\rvw}{\sum_{i\in\spos}\bm{m}_i\expect{\rvz_i\sim\dtrain}{\rvz_i}}$ is positive and $\inprod{\rvw}{\sum_{i\in\sneg}\bm{m}_i\expect{\rvz_i\sim\dtrain}{\rvz_i}}$ is negative; vice versa)}
\label{fig:neurons}
\end{figure}

\begin{figure}
\centering
\subcaptionbox{Two-layer \textbf{ReLU} network\label{subfig:relu_classification}}{\includegraphics[width=0.4\linewidth]{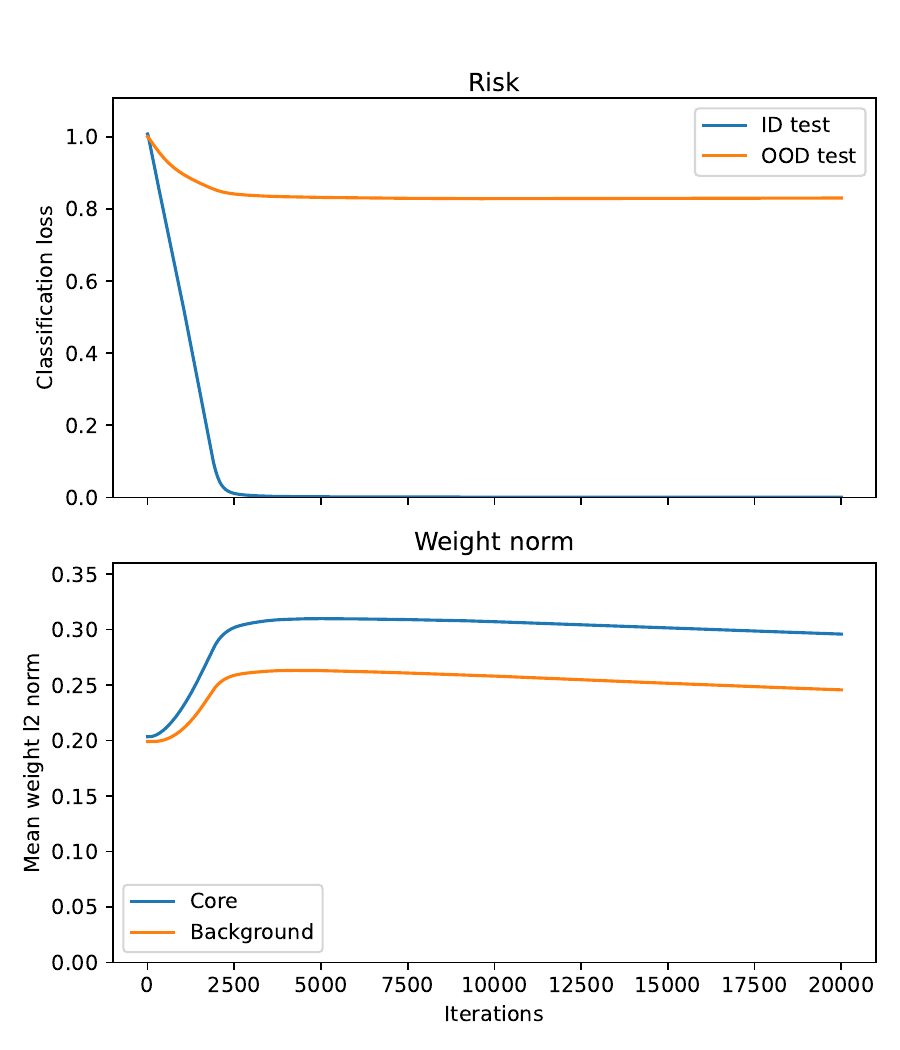}}
\hspace{2em}
\subcaptionbox{Two-layer \textbf{linear} network\label{subfig:linear_classification}}{\includegraphics[width=0.4\linewidth]{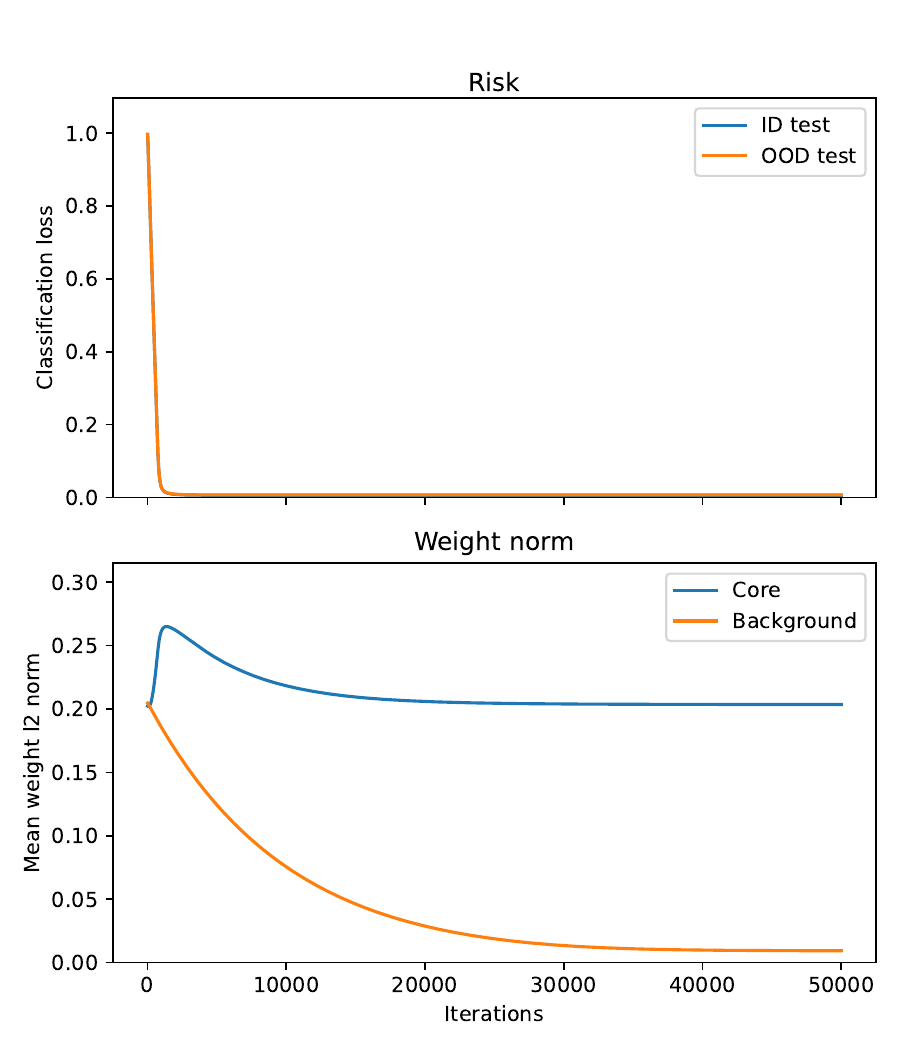}}
\caption{The ID and OOD risks (\textbf{top}) and the norm of weight projections onto core and background features (\textbf{bottom}) in the \textbf{classification} setting.}
\label{fig:classification}
\end{figure}

\begin{figure}
\centering
\subcaptionbox{Two-layer \textbf{ReLU} network\label{subfig:relu_regression}}{\includegraphics[width=0.4\linewidth]{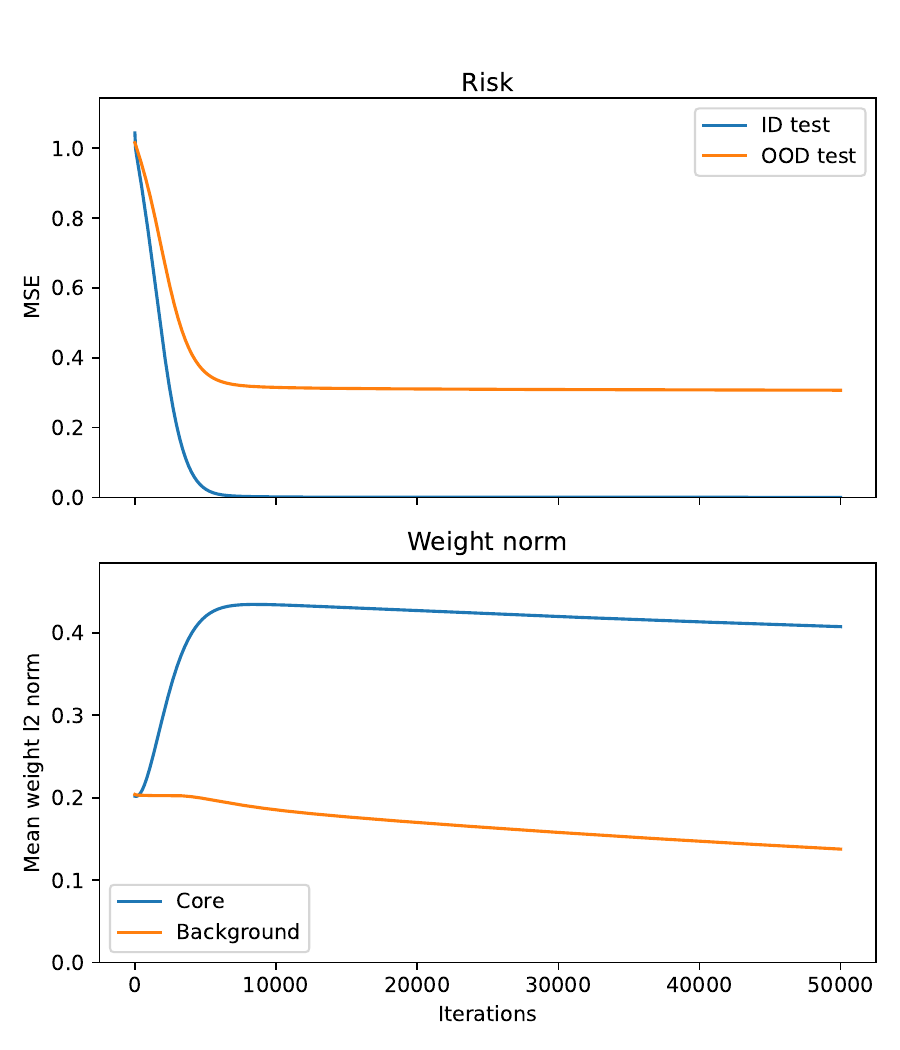}}
\hspace{2em}
\subcaptionbox{Two-layer \textbf{linear} network\label{subfig:linear_regression}}{\includegraphics[width=0.4\linewidth]{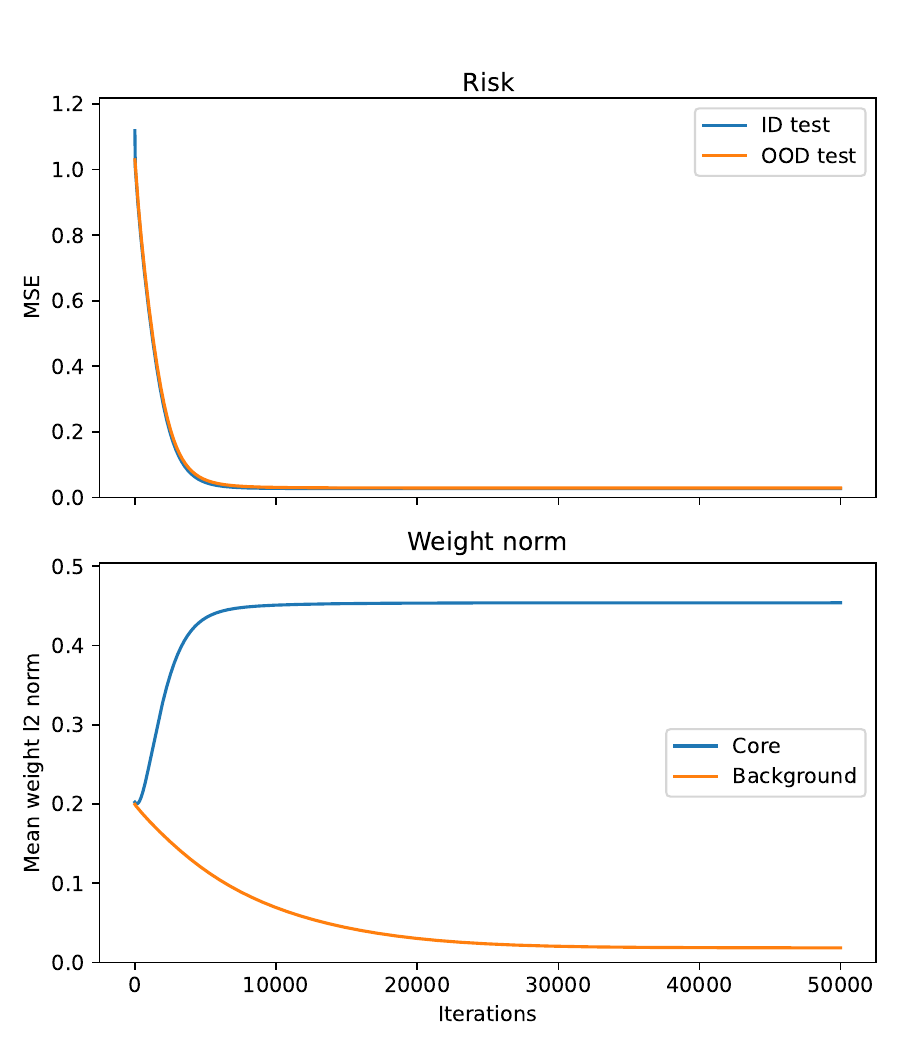}}
\caption{The ID and OOD risks (\textbf{top}) and the norm of weight projections onto core and background features (\textbf{bottom}) in the \textbf{regression} setting.}
\label{fig:regression}
\end{figure}

\begin{figure}
\centering
\subcaptionbox{ReLU\label{subfig:numerical_relu}}{\includegraphics[width=0.48\linewidth]{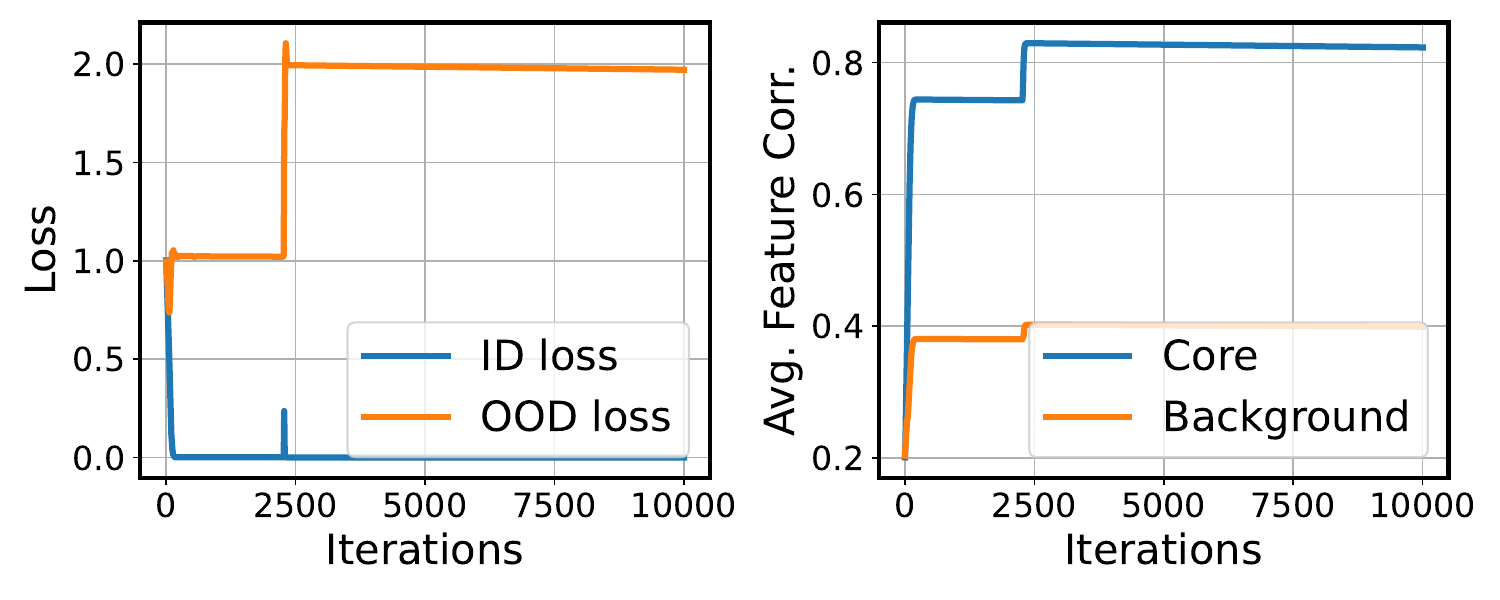}}
\hspace{1em}
\subcaptionbox{GELU\label{subfig:numerical_gelu}}{\includegraphics[width=0.48\linewidth]{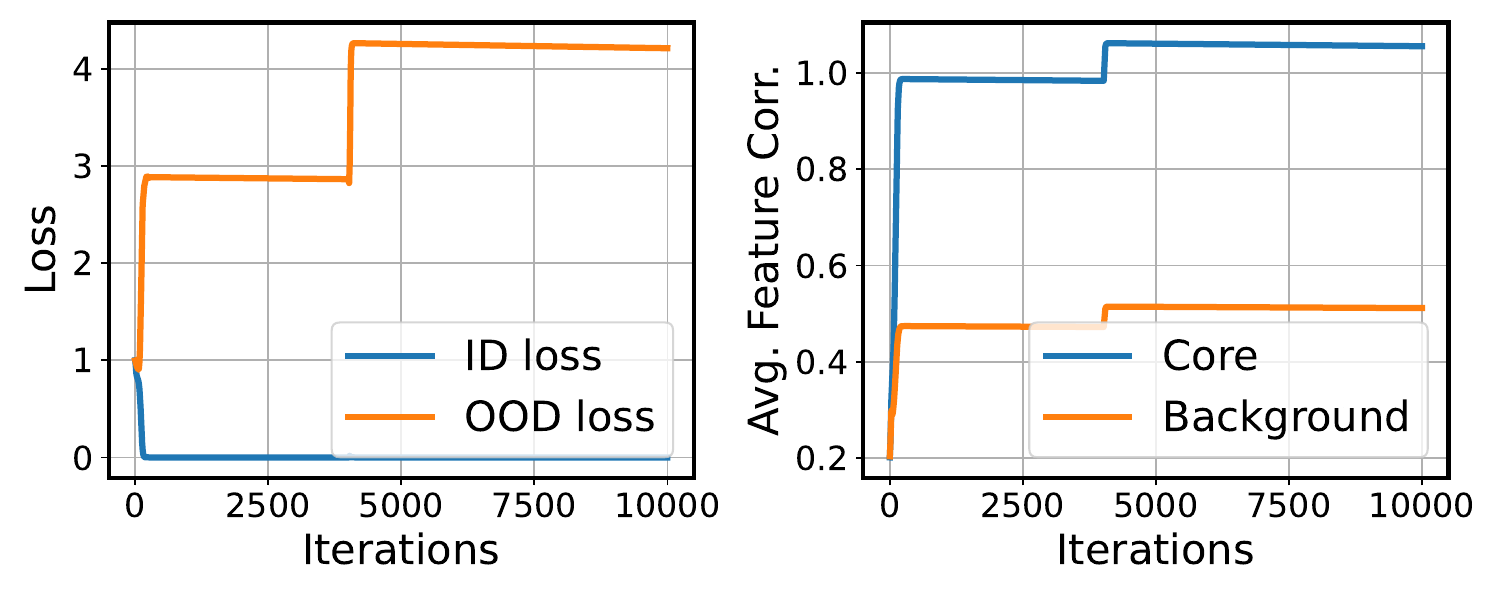}}\vspace{1em}\\
\subcaptionbox{Sigmoid\label{subfig:numerical_sigmoid}}{\includegraphics[width=0.48\linewidth]{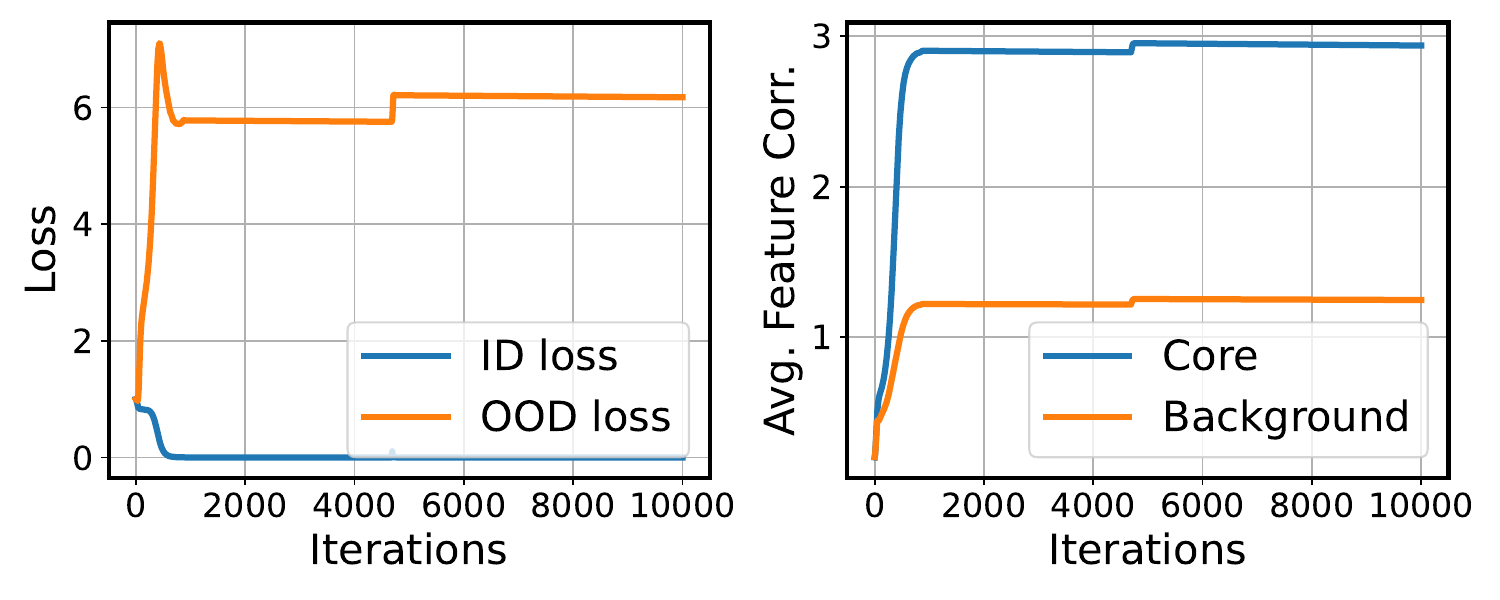}}
\hspace{1em}
\subcaptionbox{Tanh\label{subfig:numerical_tanh}}{\includegraphics[width=0.48\linewidth]{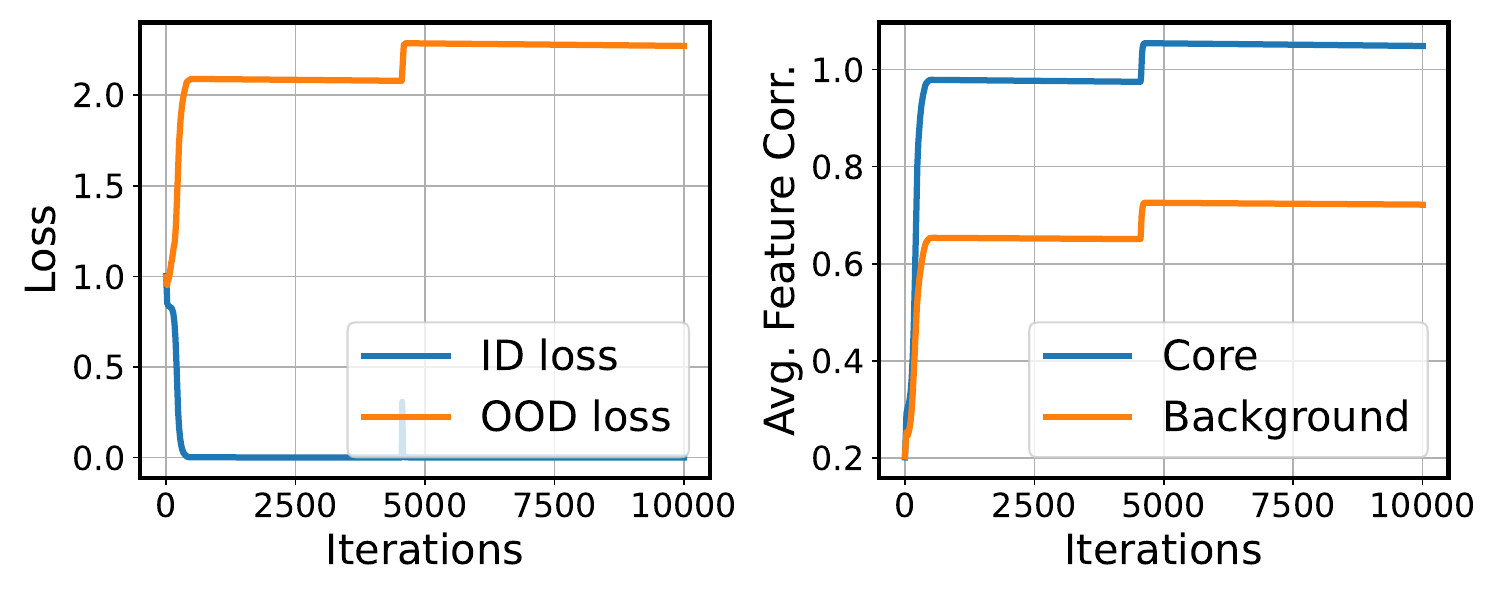}}
\caption{Complete numerical results of the ID/OOD loss and average weight-feature correlations for \textbf{different activation functions}. Feature contamination occurs for all activation functions considered, resulting in large OOD loss.}
\label{fig:numerical_all}
\end{figure}

% \begin{figure}[h]
% \centering
% \includegraphics[width=0.999\linewidth]{}
% \caption{The evolving of randomly selected neurons during training in a numerical simulation with parameters $\dcore = \dbg = 2$, $d = 6$, $m = 40$, $\beta = 0.1$, $N = 256$ and each $\rvz_i,\, \forall i\in[\di]$ sampled from a uniform distribution on its support.
% Depending on the randomness at initialization, each neuron may tend to (\romannumeral 1) learn the core features of class $\rvy=1$ (red boxes), (\romannumeral 2) learn the core features of class $\rvy = -1$ (orange boxes), or (\romannumeral 3) do not learn anything useful for prediction (gray boxes).}
% % (i.e., $\inprod{\rvw}{\sum_{i\in\spos}\bm{m}_i\expect{\rvz_i\sim\dtrain}{\rvz_i}}$ is positive and $\inprod{\rvw}{\sum_{i\in\sneg}\bm{m}_i\expect{\rvz_i\sim\dtrain}{\rvz_i}}$ is negative; vice versa)}
% \label{fig:neurons}
% \end{figure}

% \begin{figure}[t]
% \centering
% \subcaptionbox{Example images from the original CIFAR-10 dataset.}{\includegraphics[width=0.55\linewidth]{}}\\
% \subcaptionbox{Example images from the modified CIFAR-10 dataset.}{\includegraphics[width=0.55\linewidth]{}}
% \caption{Our modifications to CIFAR-10 by padding colored pixels for all images.}
% \label{fig:cifar}
% \end{figure}

\begin{figure}
\centering
\includegraphics[width=0.21\linewidth]{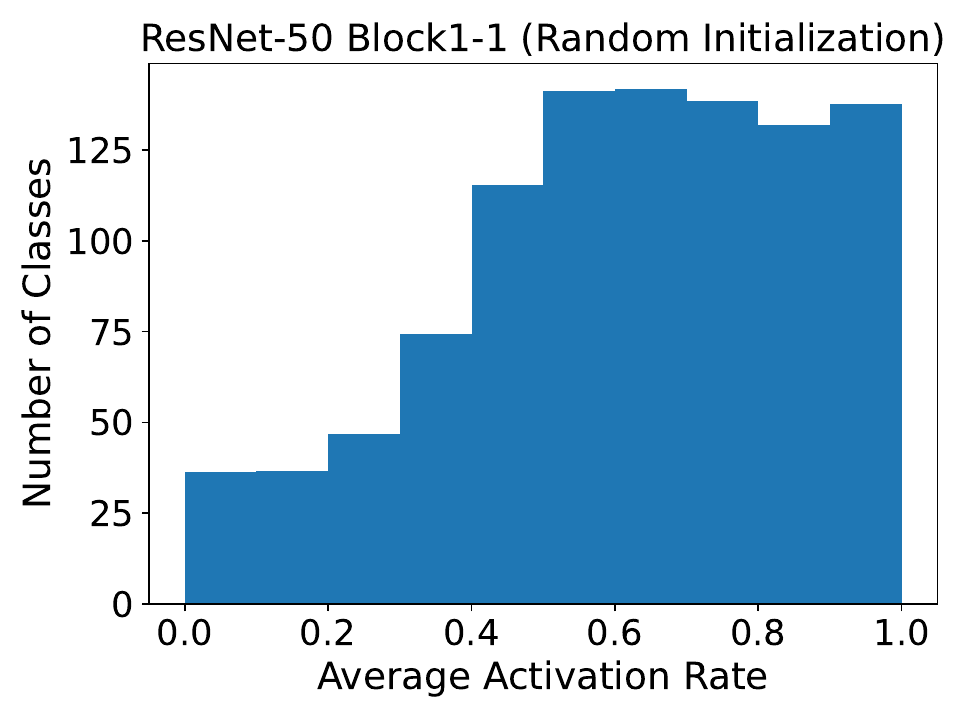}
\includegraphics[width=0.21\linewidth]{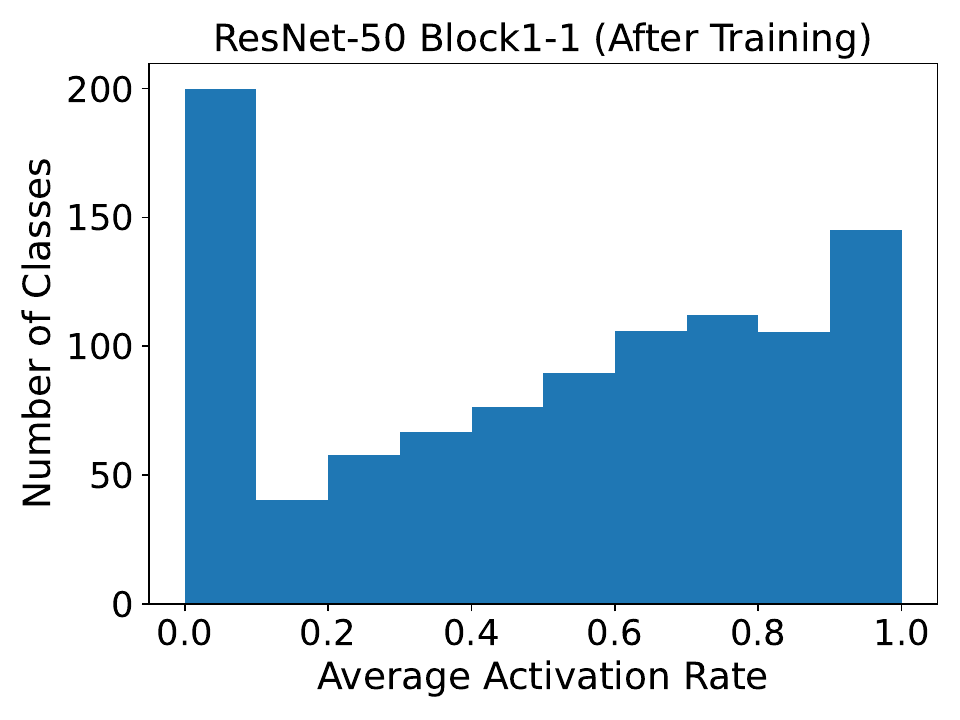}
\includegraphics[width=0.21\linewidth]{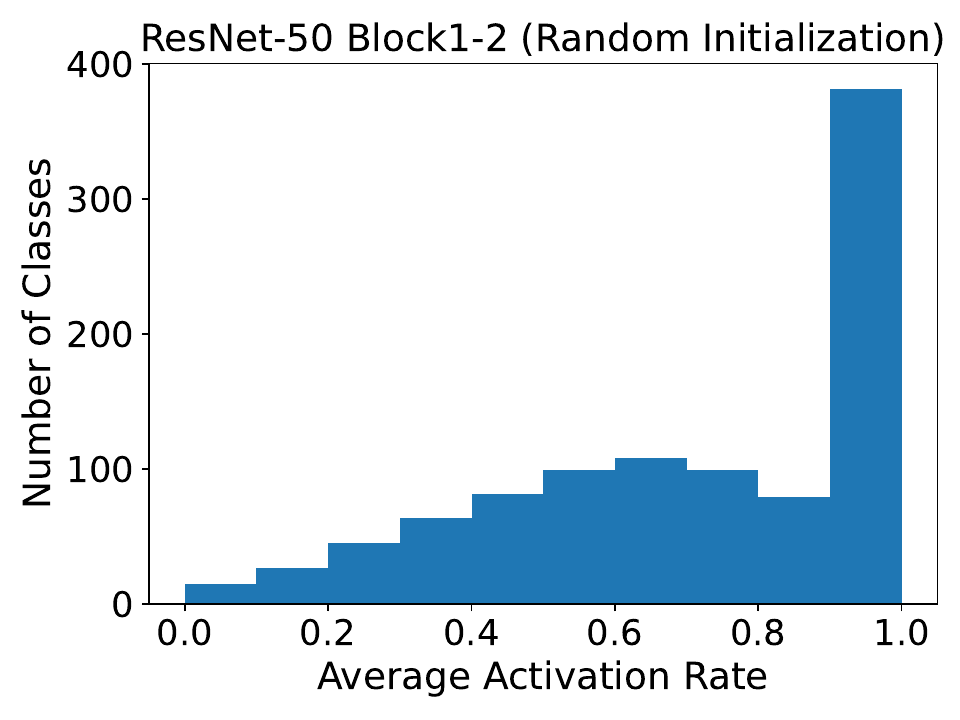}
\includegraphics[width=0.21\linewidth]{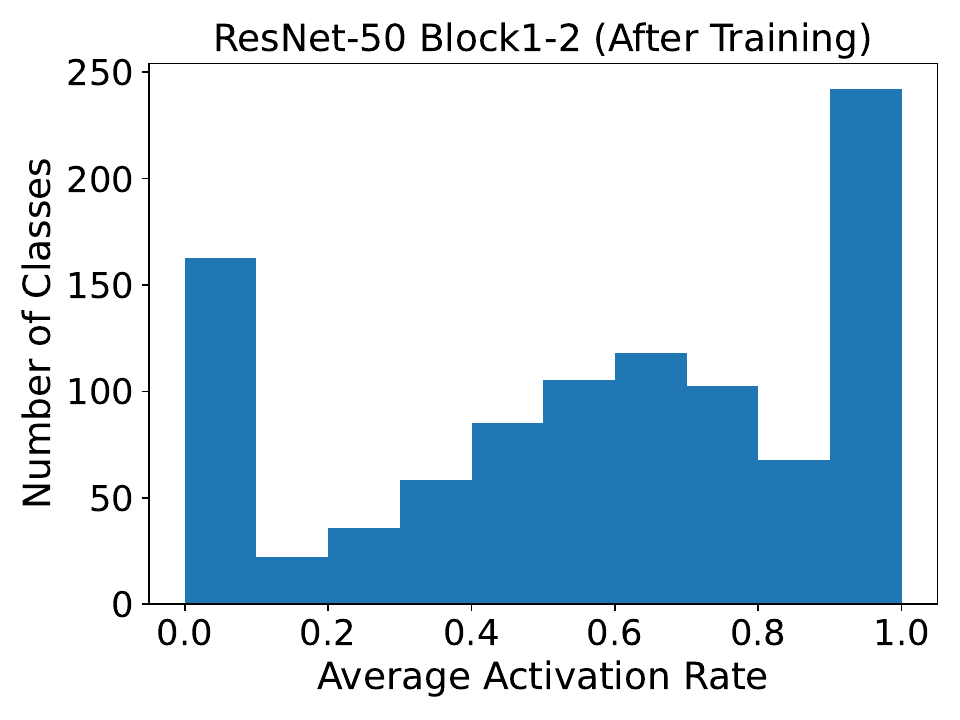} \\

\includegraphics[width=0.21\linewidth]{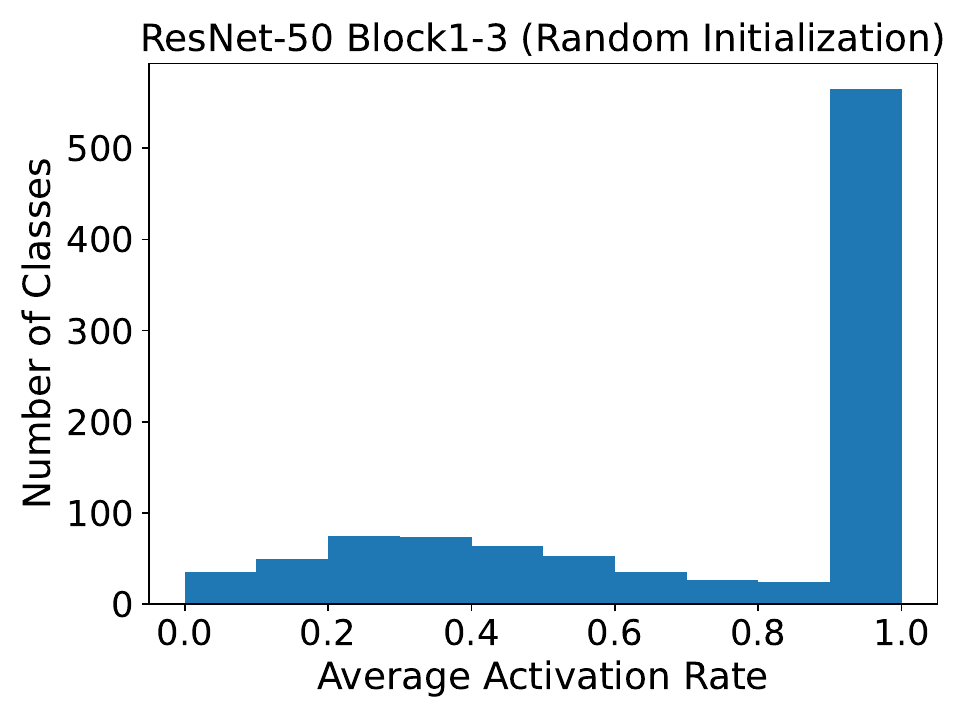}
\includegraphics[width=0.21\linewidth]{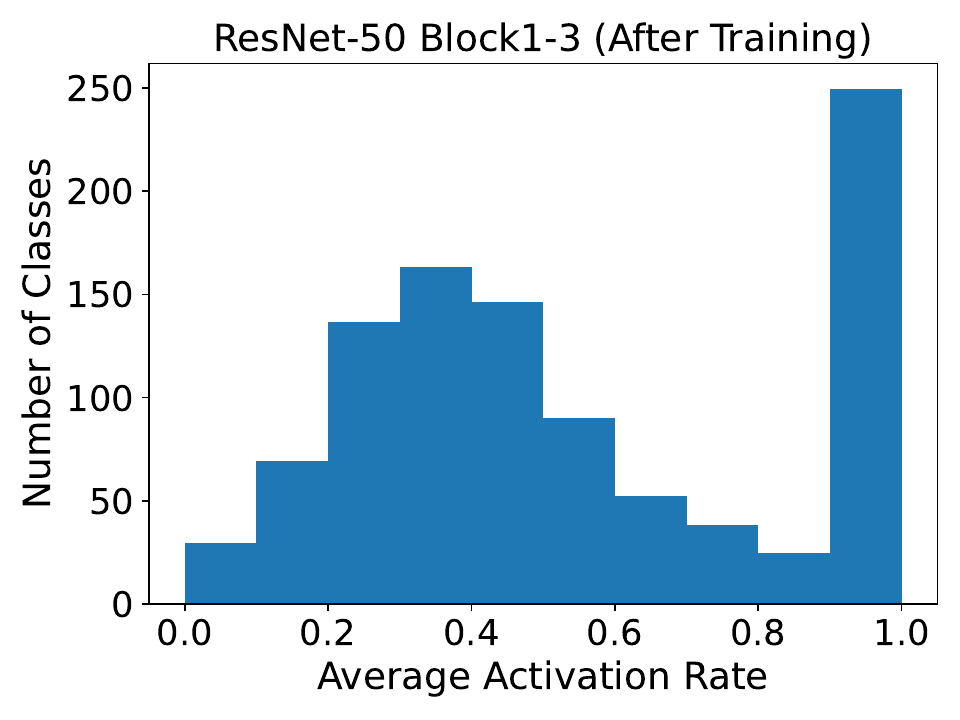}
\includegraphics[width=0.21\linewidth]{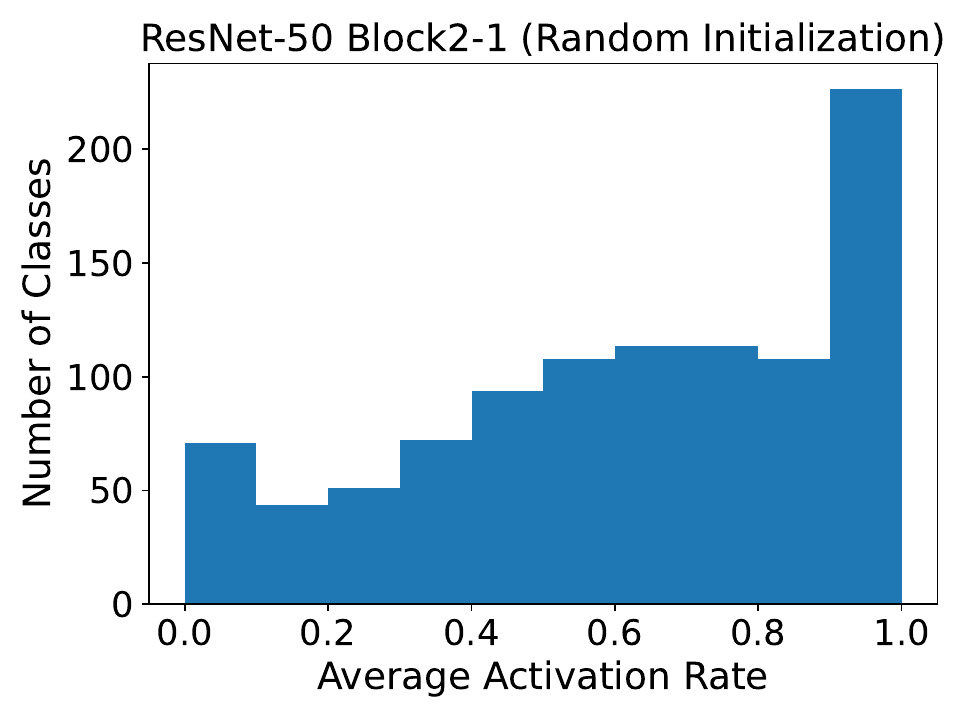}
\includegraphics[width=0.21\linewidth]{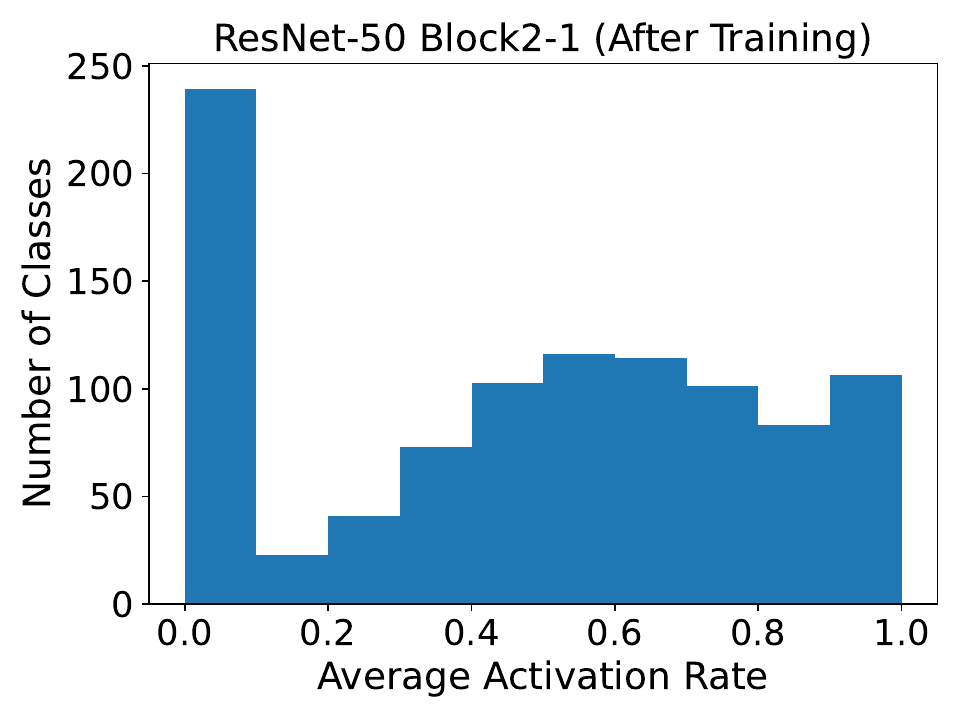} \\

\includegraphics[width=0.21\linewidth]{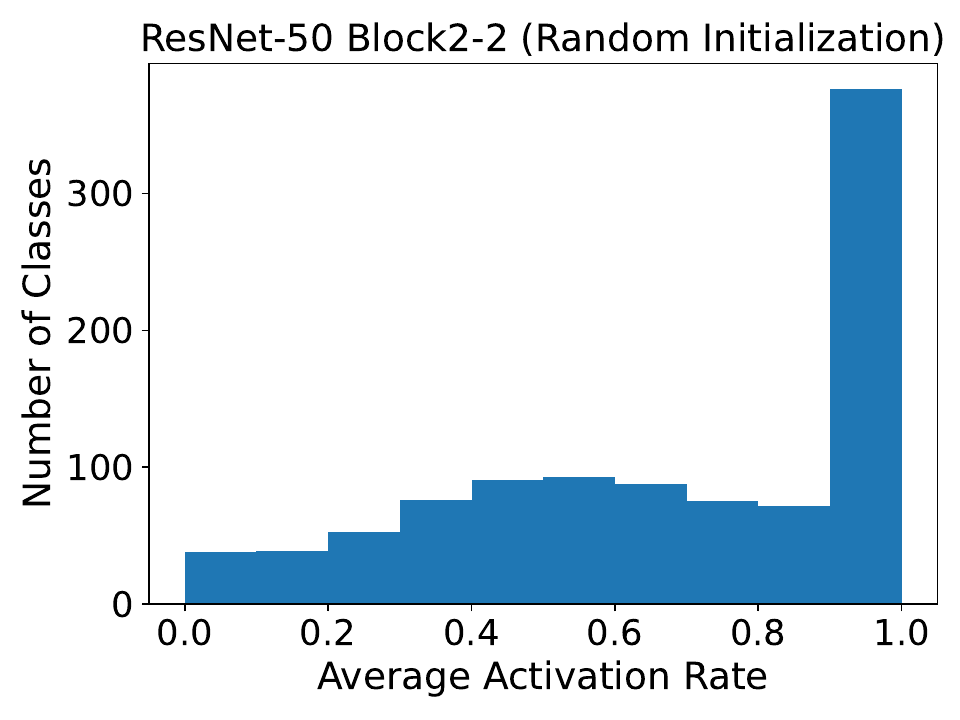}
\includegraphics[width=0.21\linewidth]{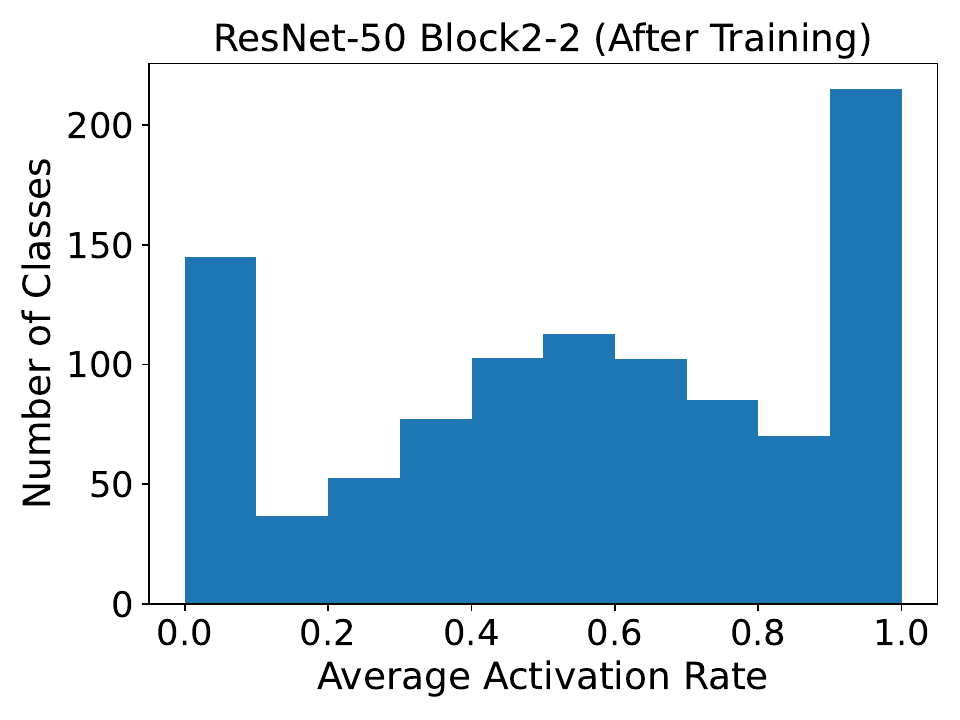}
\includegraphics[width=0.21\linewidth]{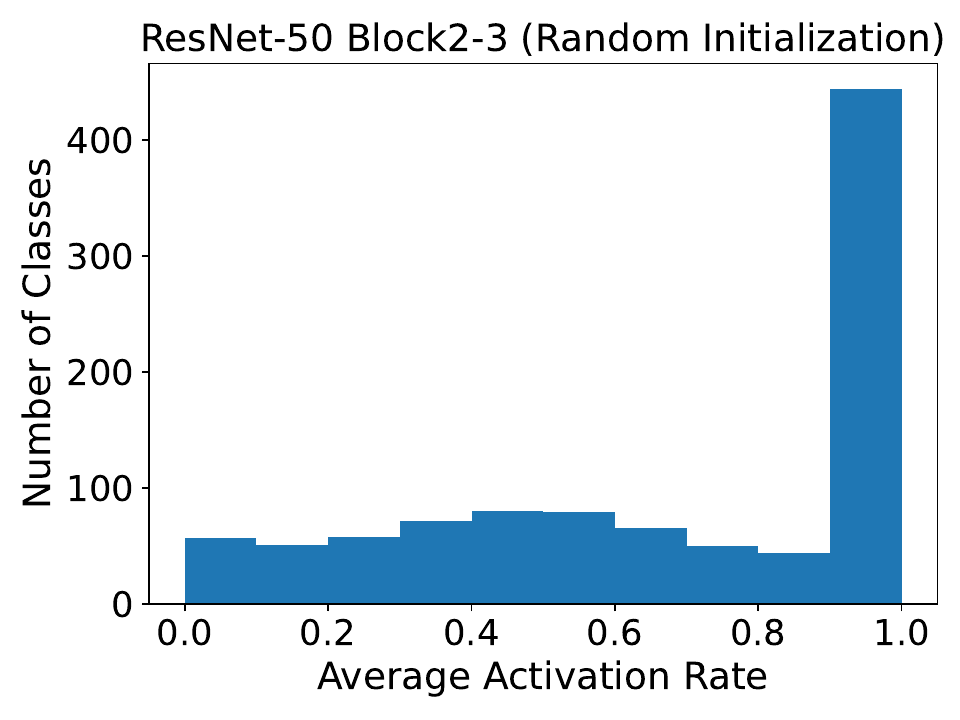}
\includegraphics[width=0.21\linewidth]{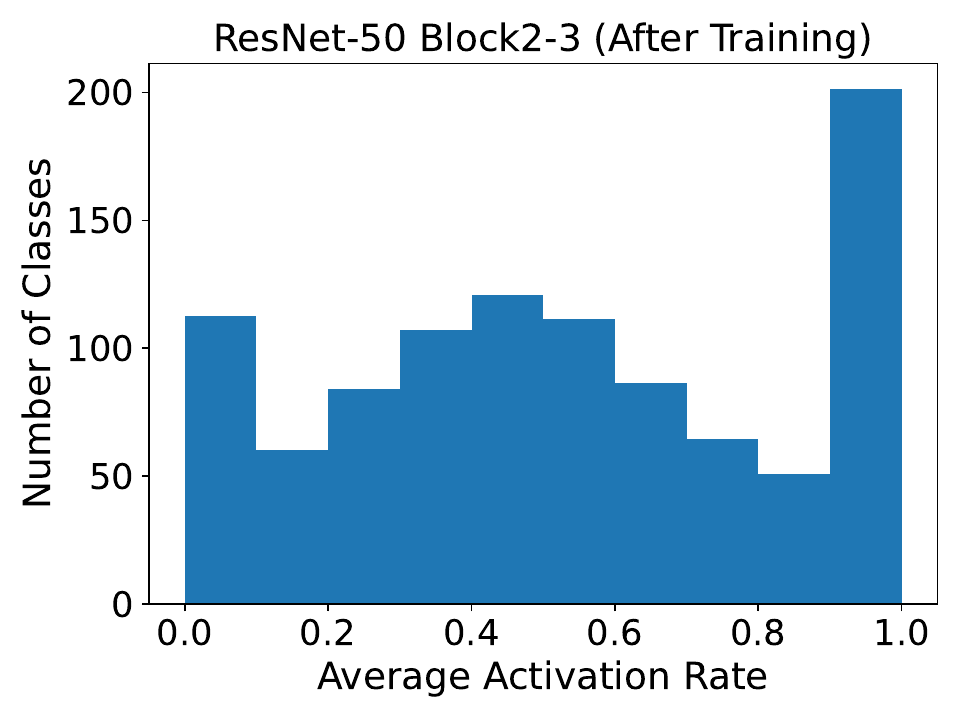} \\

\includegraphics[width=0.21\linewidth]{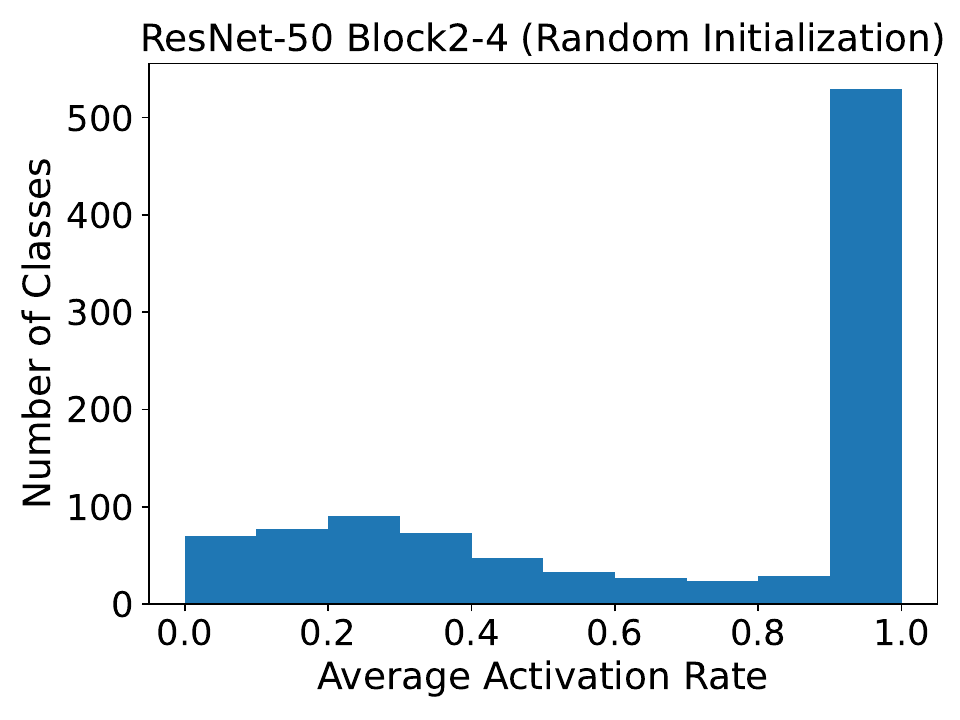}
\includegraphics[width=0.21\linewidth]{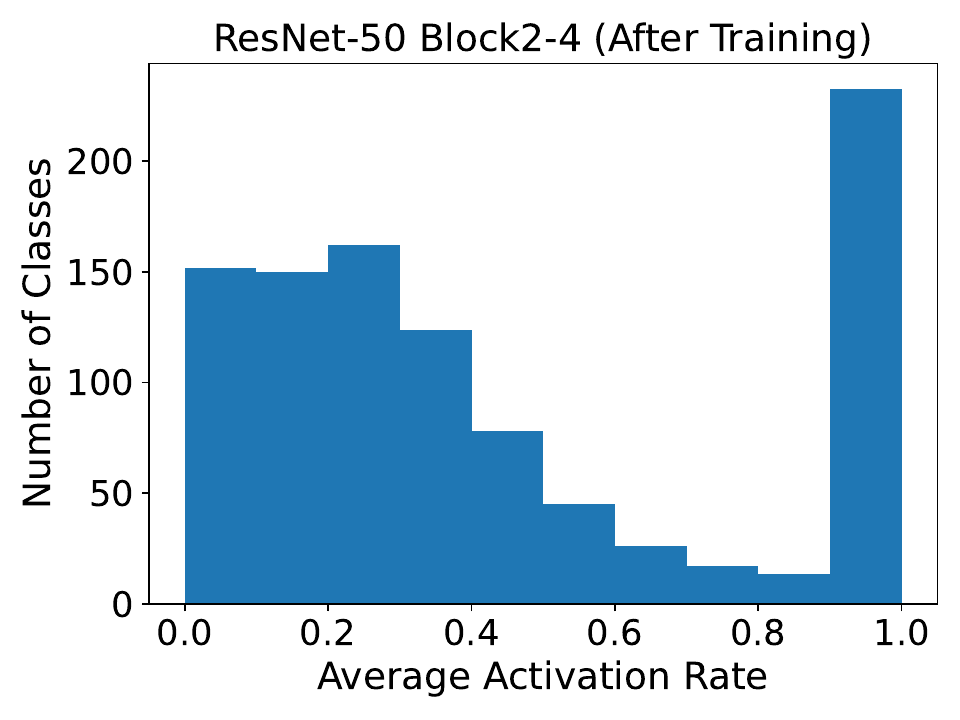}
\includegraphics[width=0.21\linewidth]{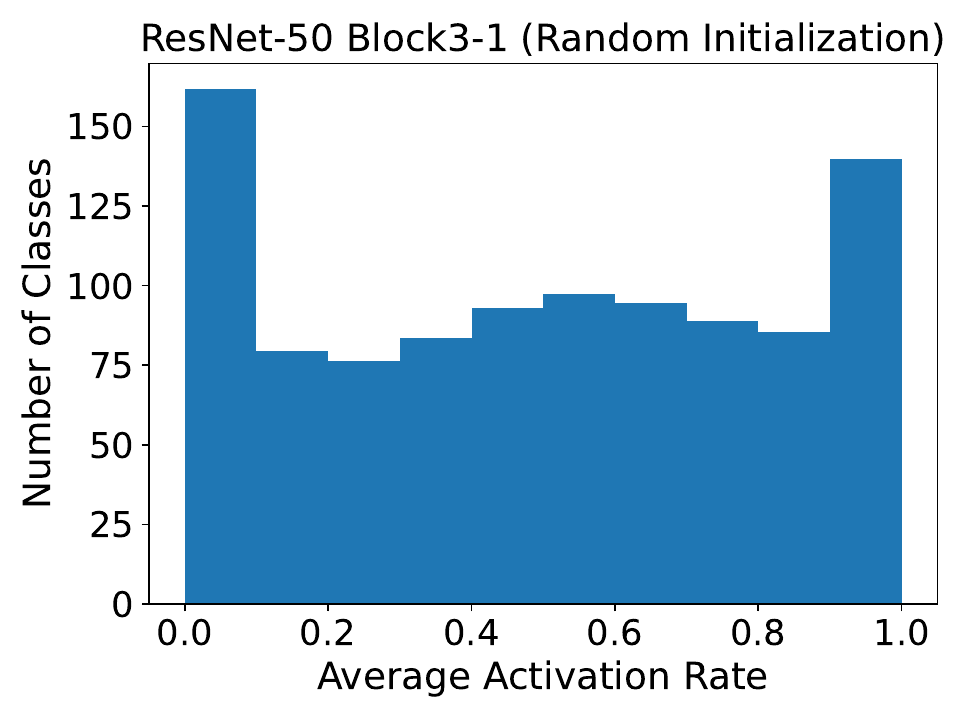}
\includegraphics[width=0.21\linewidth]{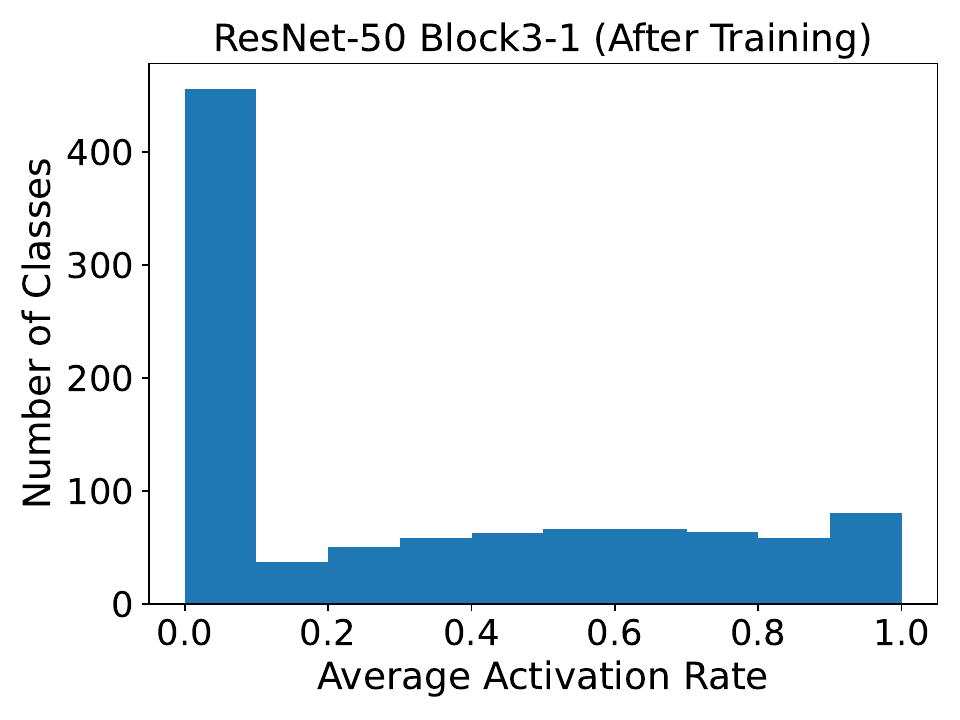} \\

\includegraphics[width=0.21\linewidth]{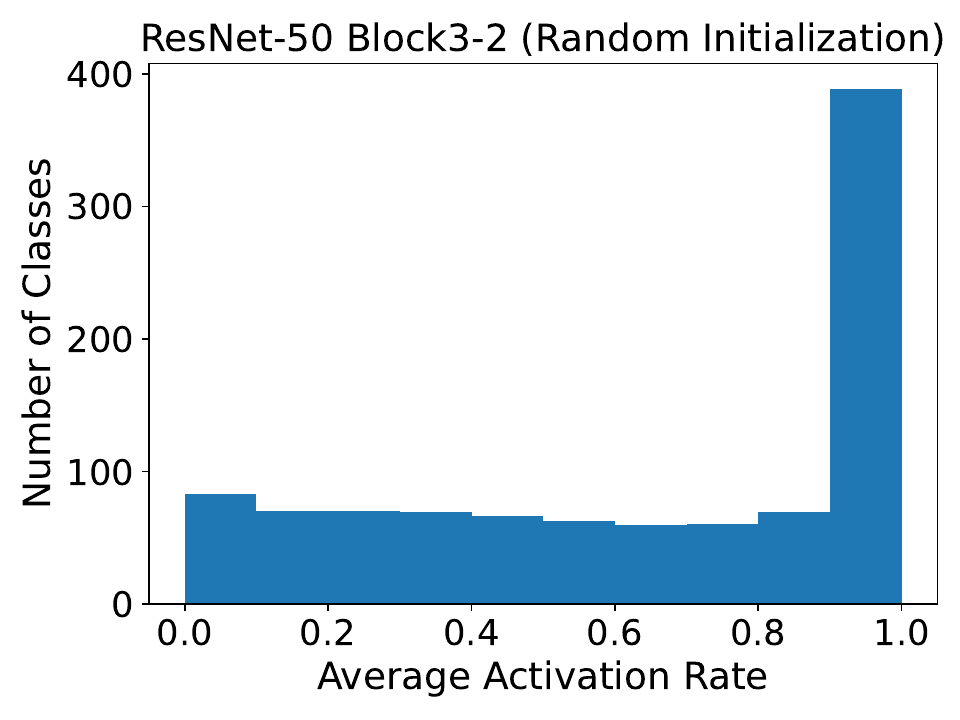}
\includegraphics[width=0.21\linewidth]{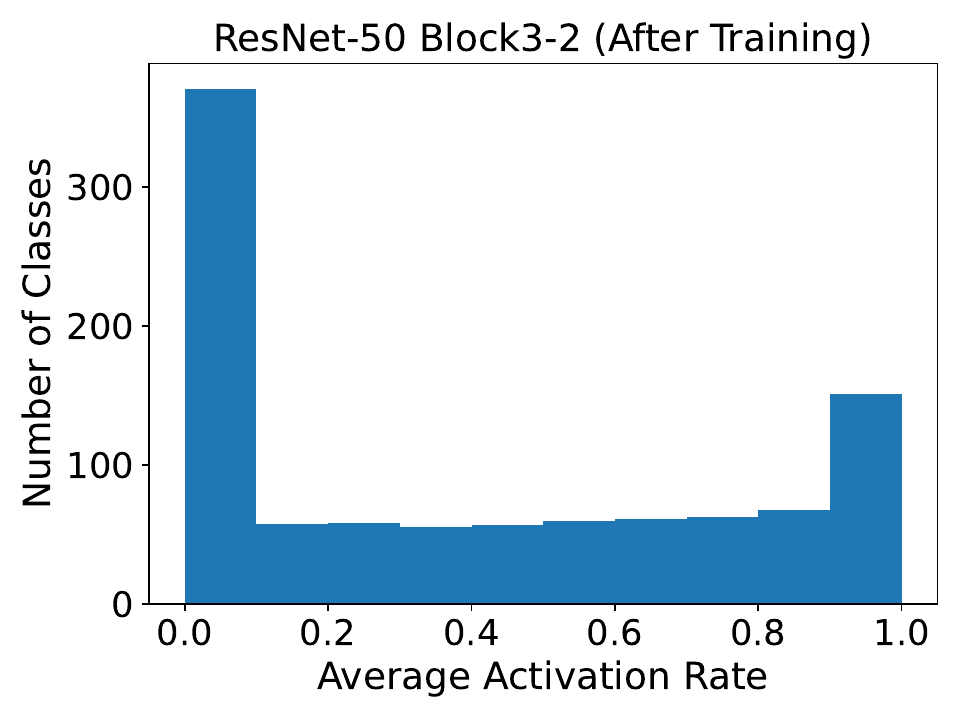}
\includegraphics[width=0.21\linewidth]{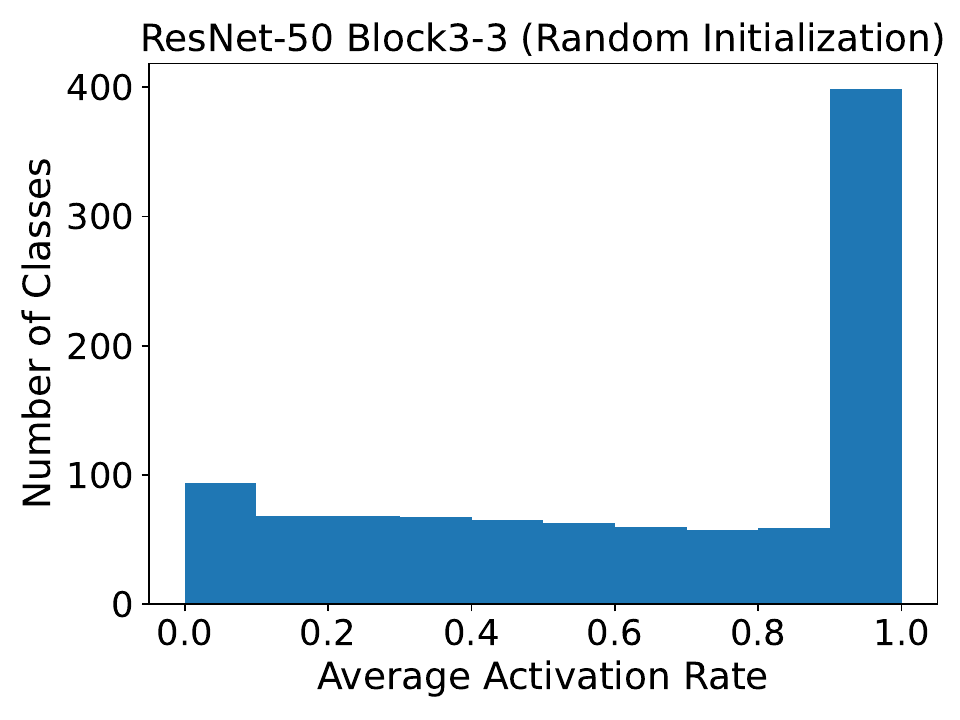}
\includegraphics[width=0.21\linewidth]{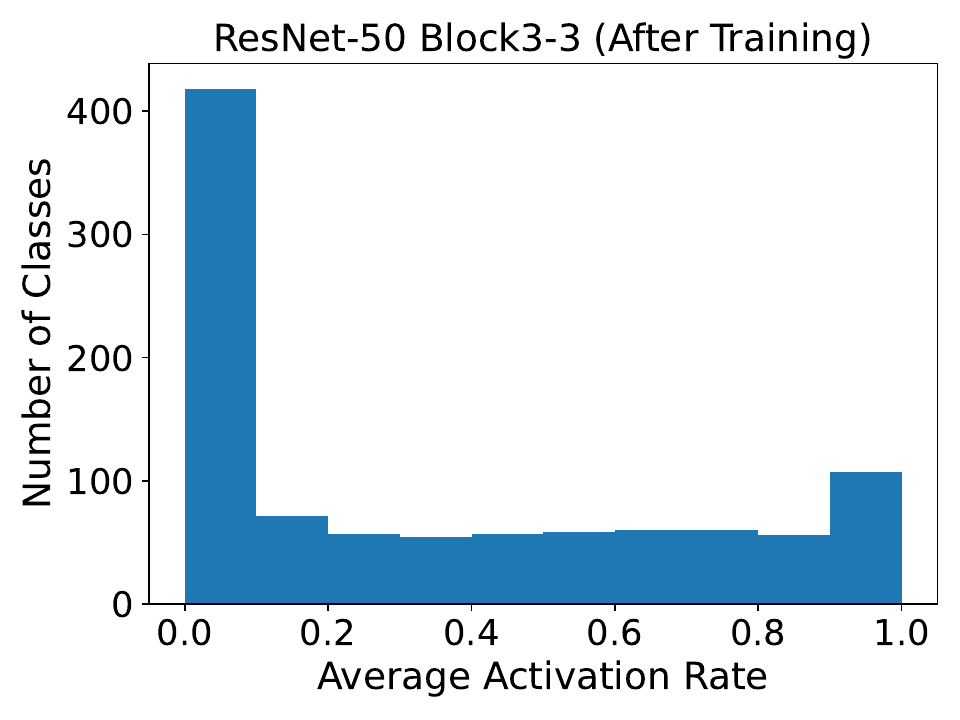} \\

\includegraphics[width=0.21\linewidth]{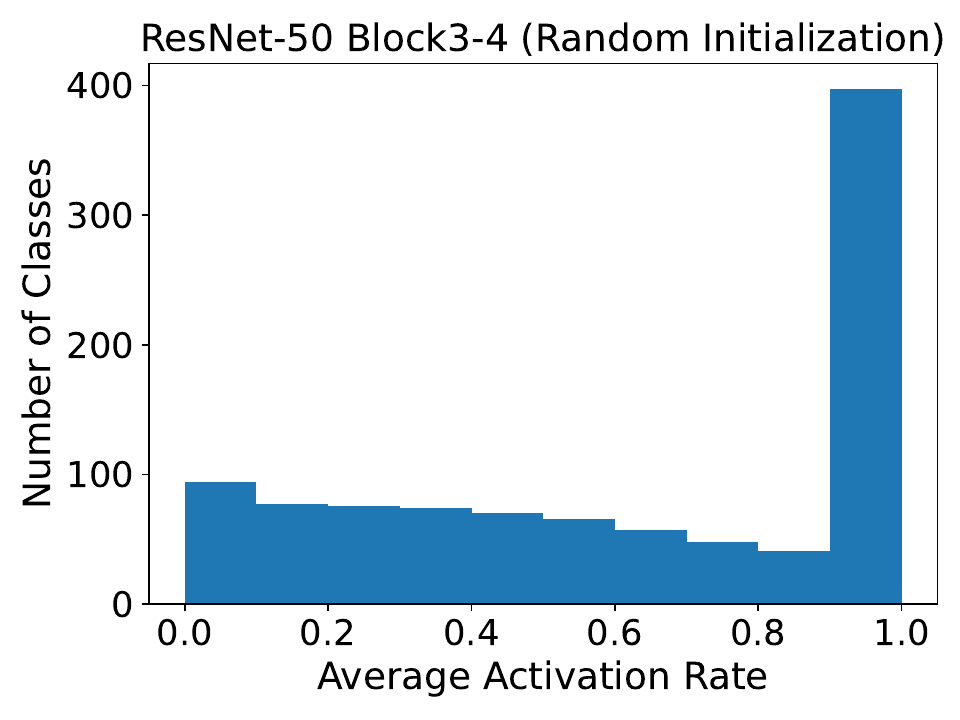}
\includegraphics[width=0.21\linewidth]{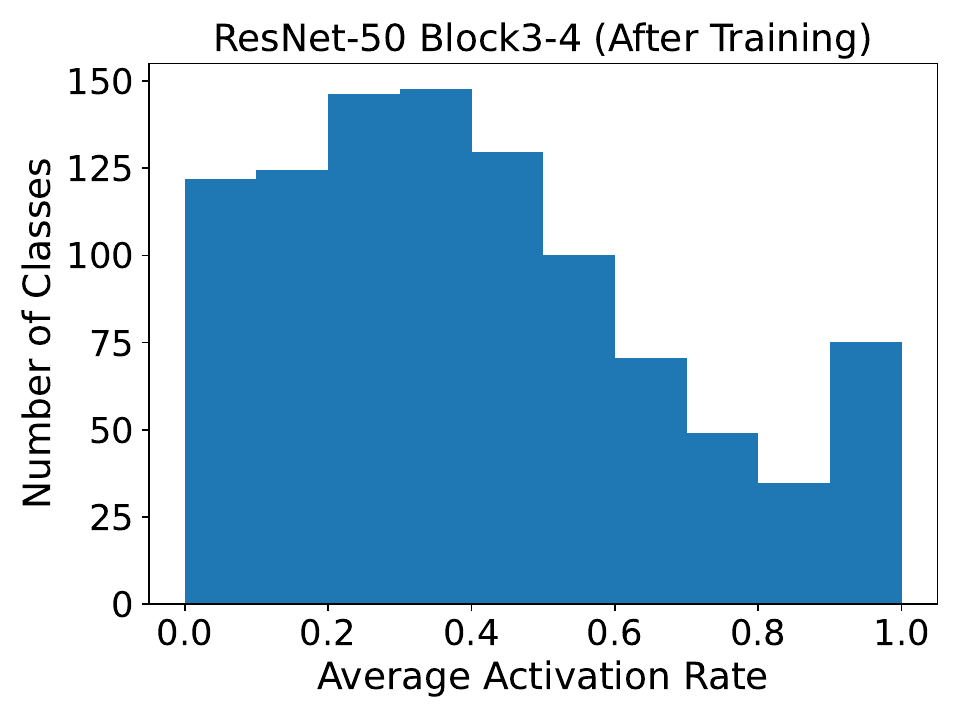}
\includegraphics[width=0.21\linewidth]{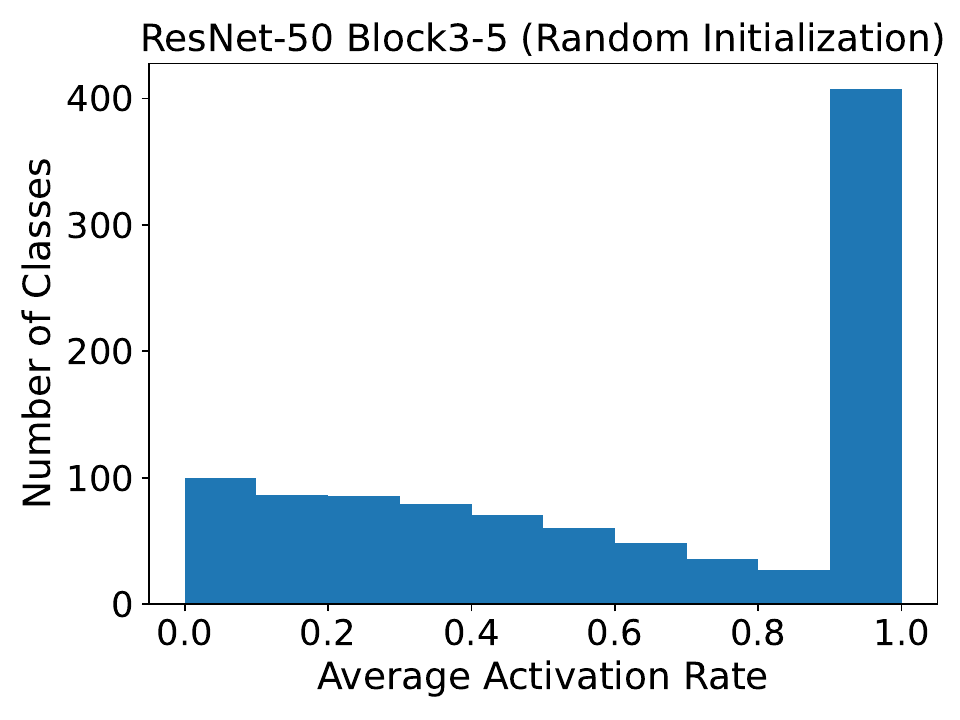}
\includegraphics[width=0.21\linewidth]{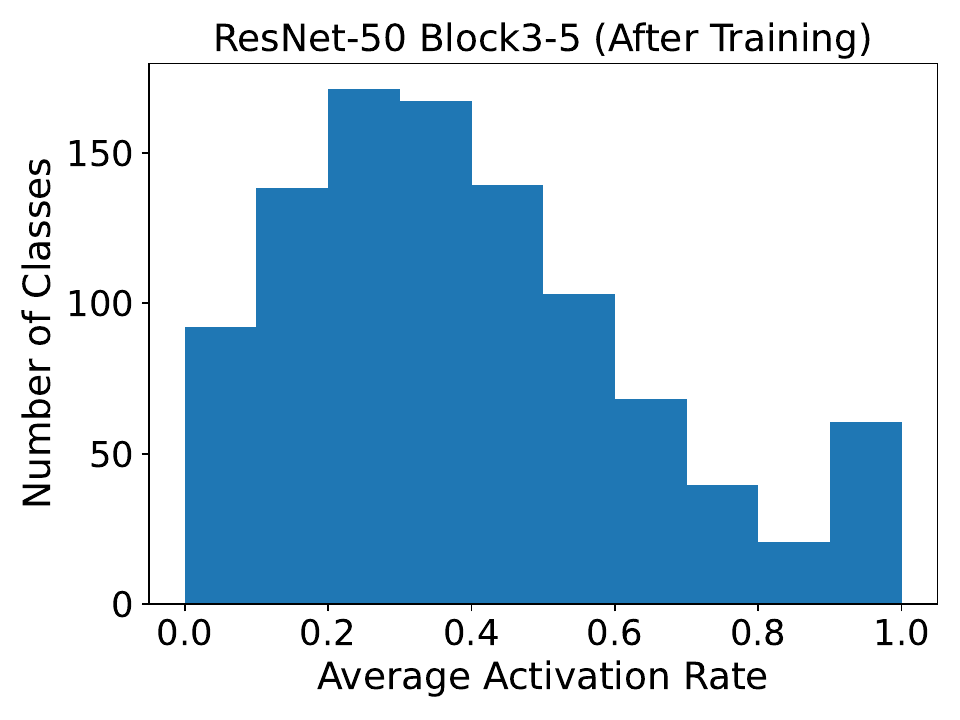} \\

\includegraphics[width=0.21\linewidth]{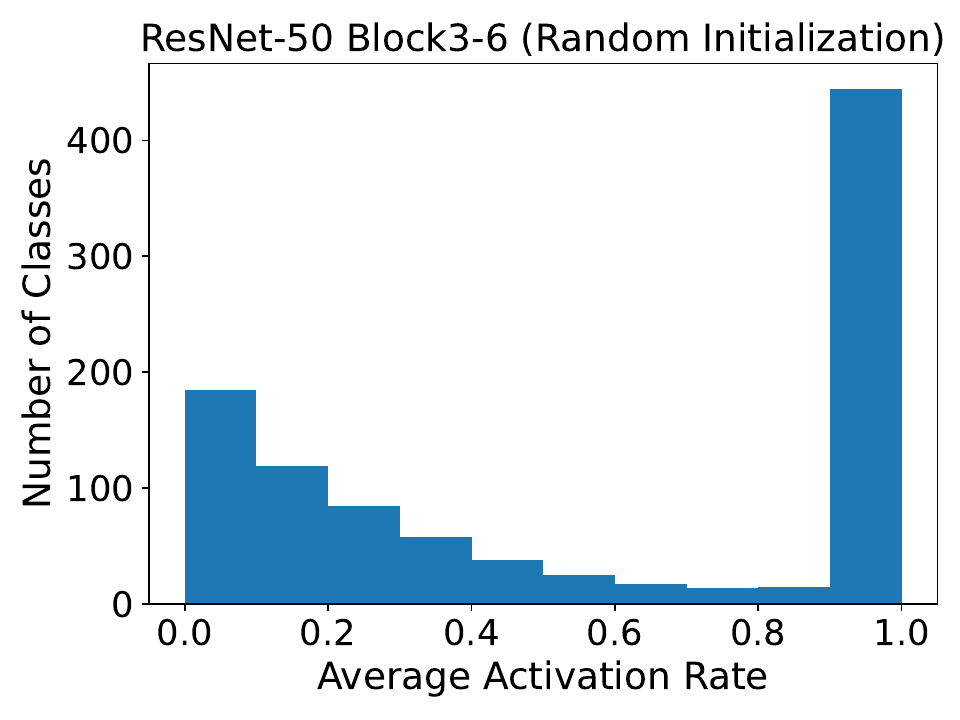}
\includegraphics[width=0.21\linewidth]{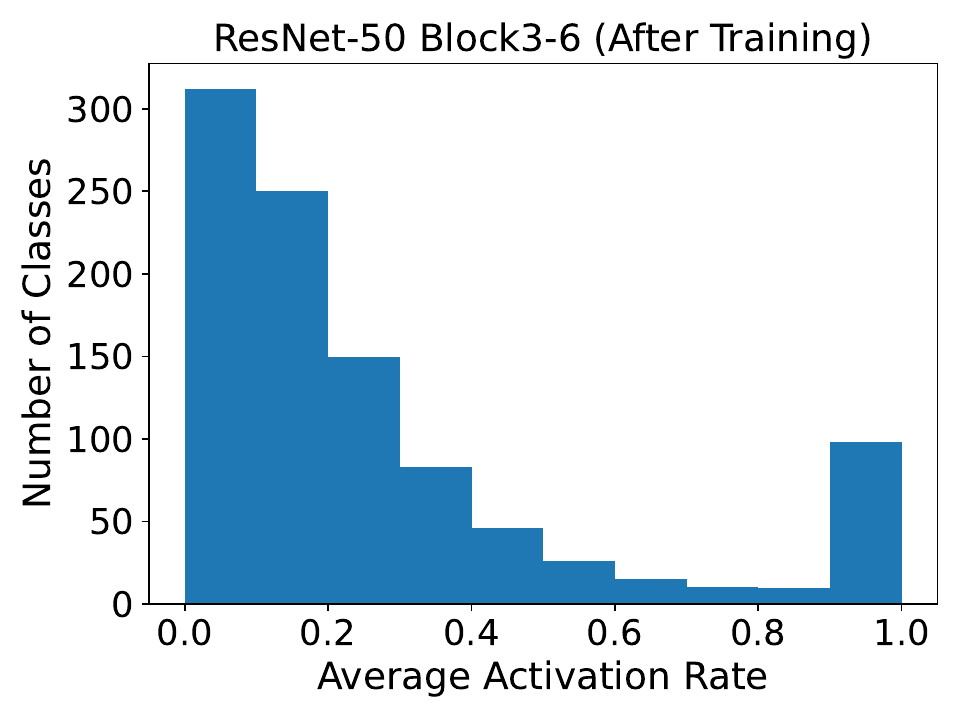}
\includegraphics[width=0.21\linewidth]{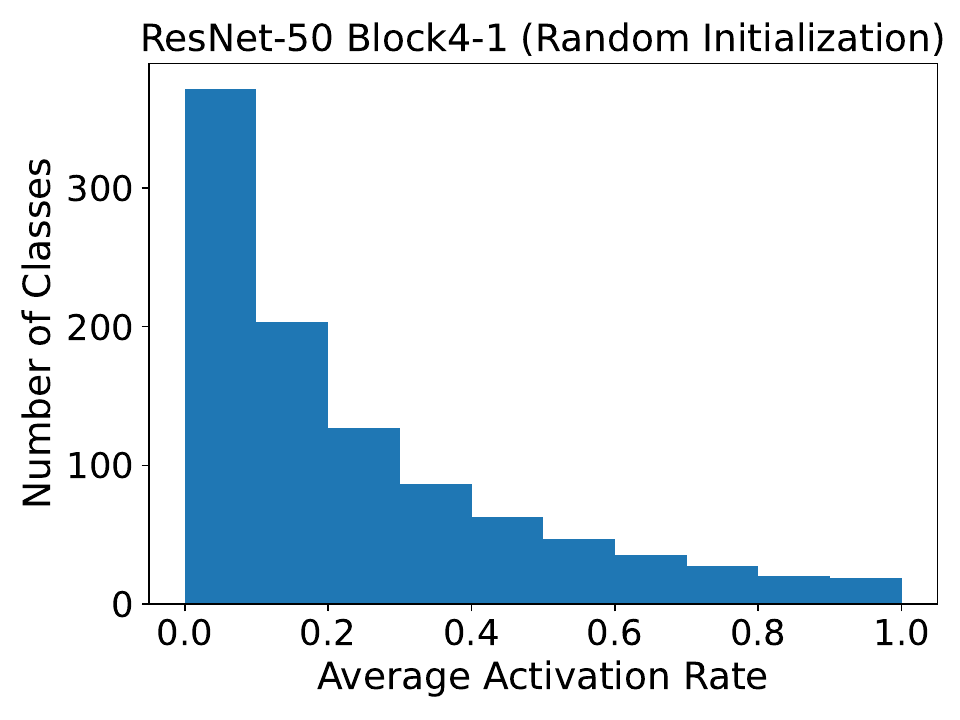}
\includegraphics[width=0.21\linewidth]{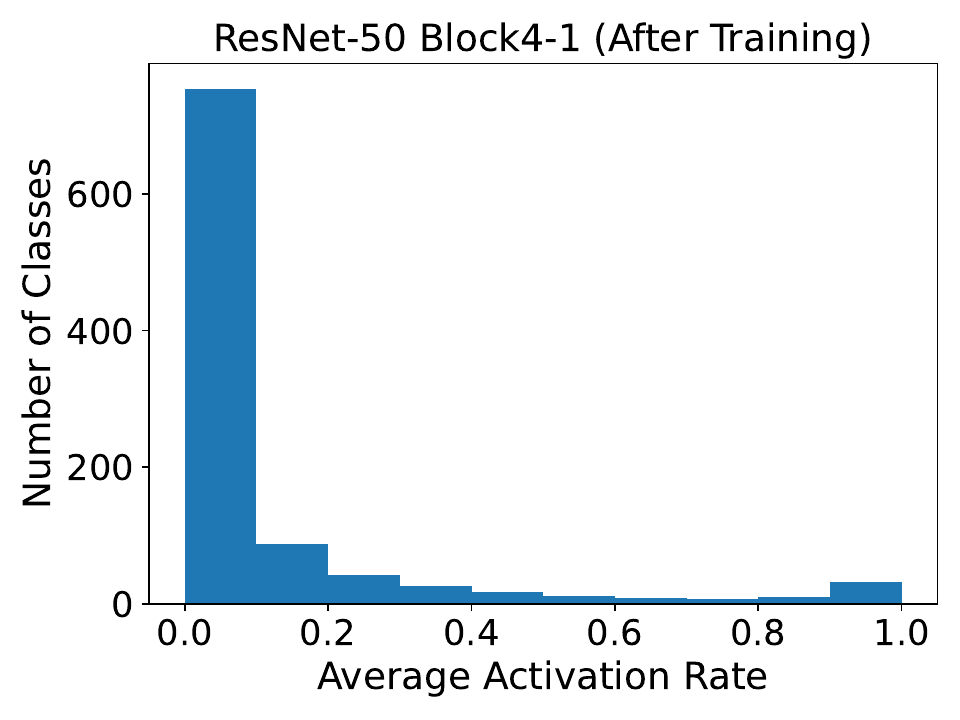} \\

\includegraphics[width=0.21\linewidth]{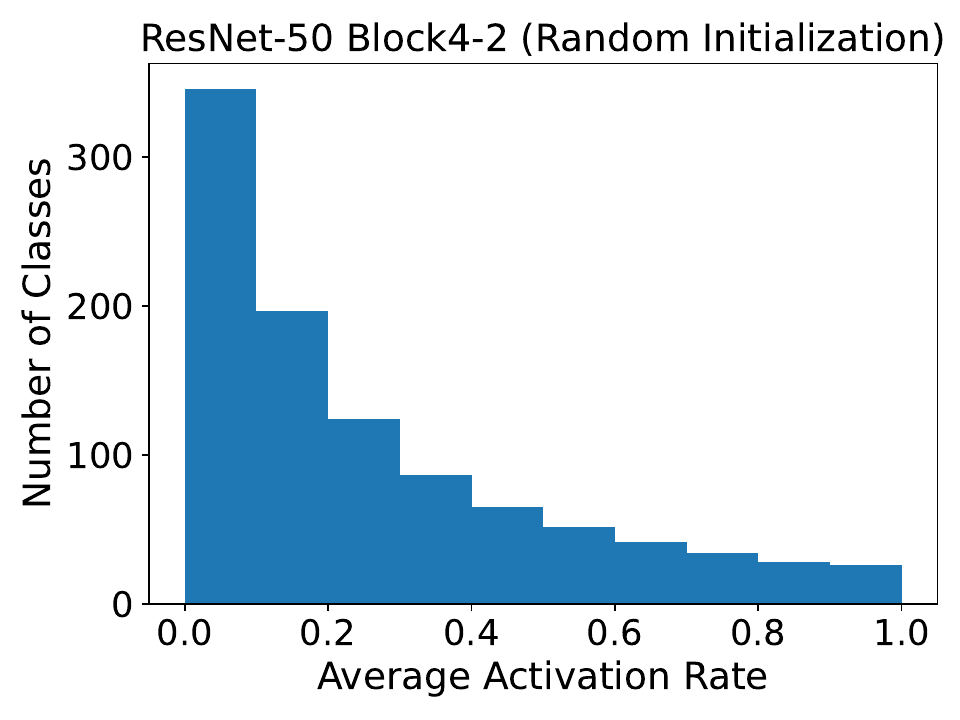}
\includegraphics[width=0.21\linewidth]{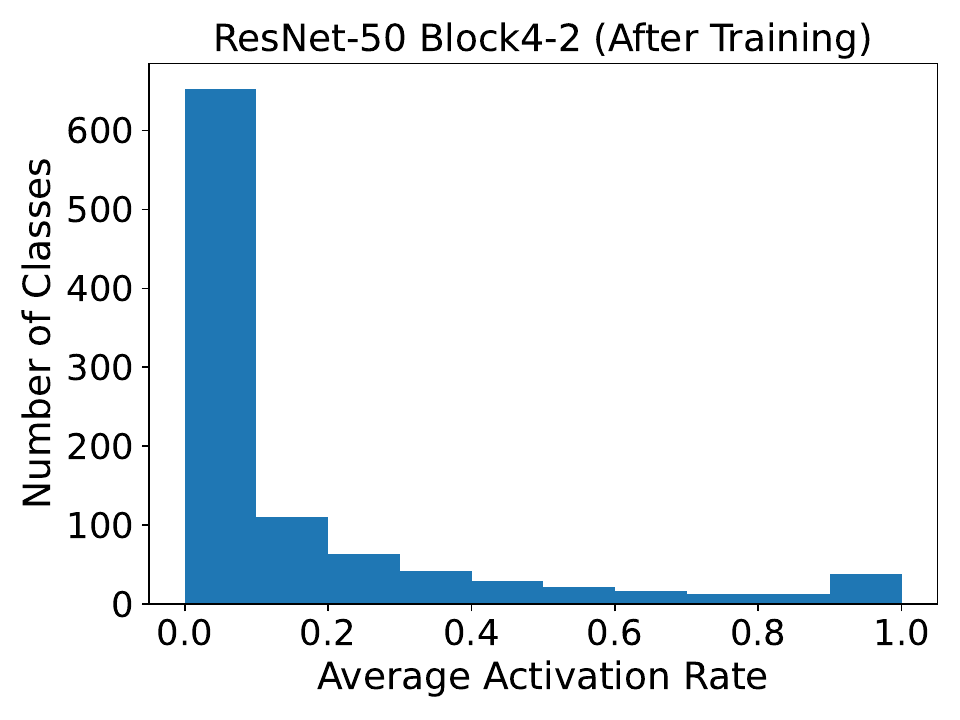}
\includegraphics[width=0.21\linewidth]{random_CLIP_RN50_stage4_block2_epoch0_rate_hist.pdf}
\includegraphics[width=0.21\linewidth]{distilled_CLIP_RN50-wd0_stage4_block2_epoch10_rate_hist.pdf}

\caption{Activation rate histograms of all blocks in randomly initialized and trained ResNet-50 networks, computed from the ImageNet validation set.
}
\label{appfig:histograms_rn}
\end{figure}

\begin{figure}
\centering
\includegraphics[width=0.245\linewidth]{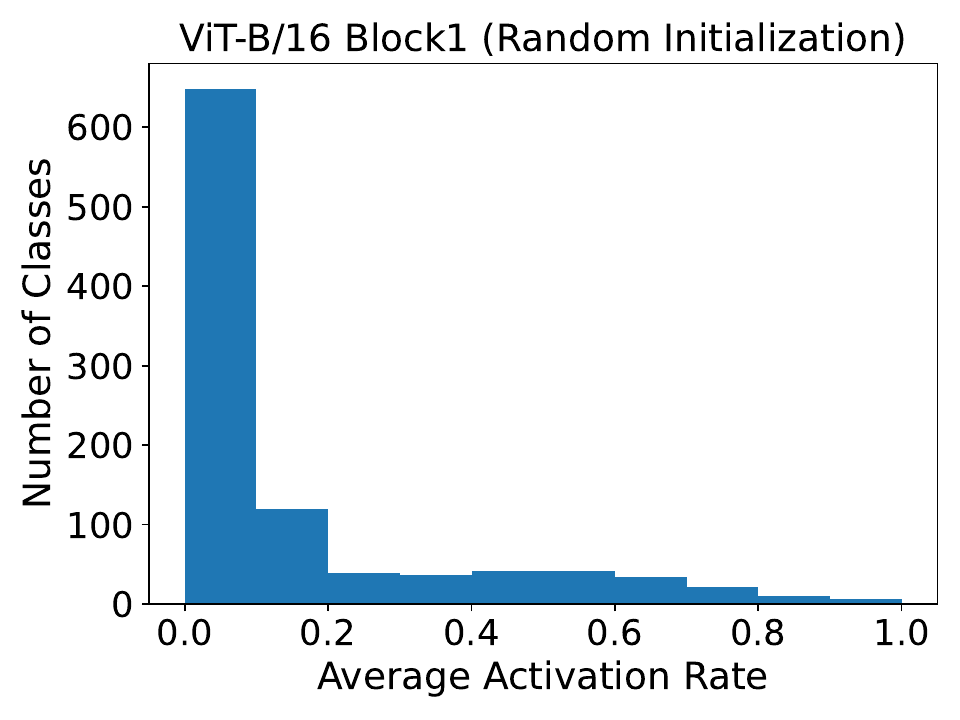}
\includegraphics[width=0.245\linewidth]{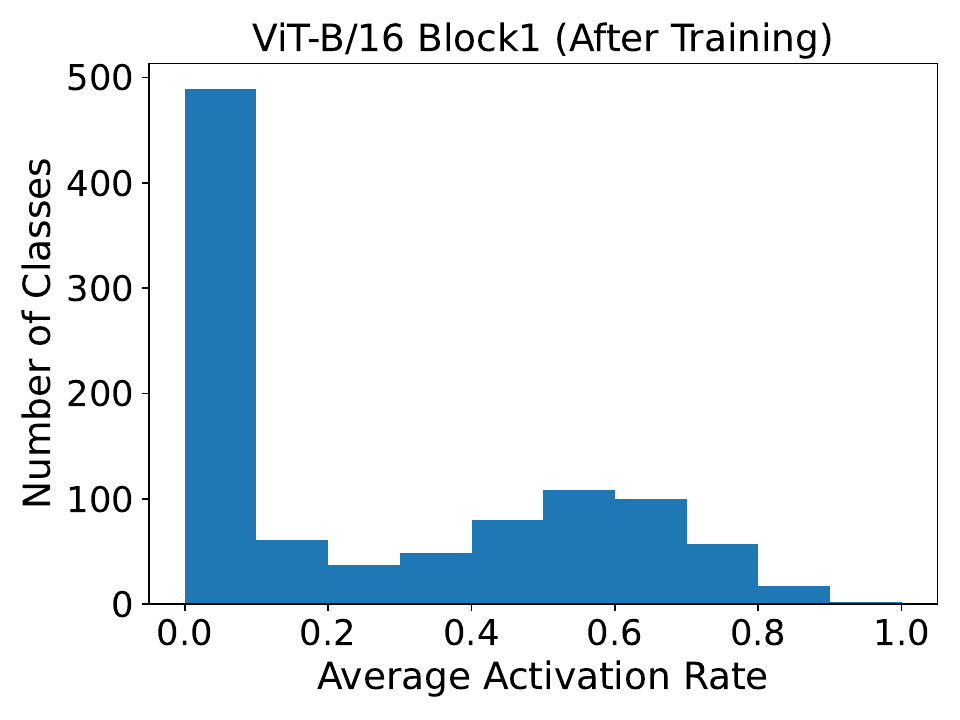}
\includegraphics[width=0.245\linewidth]{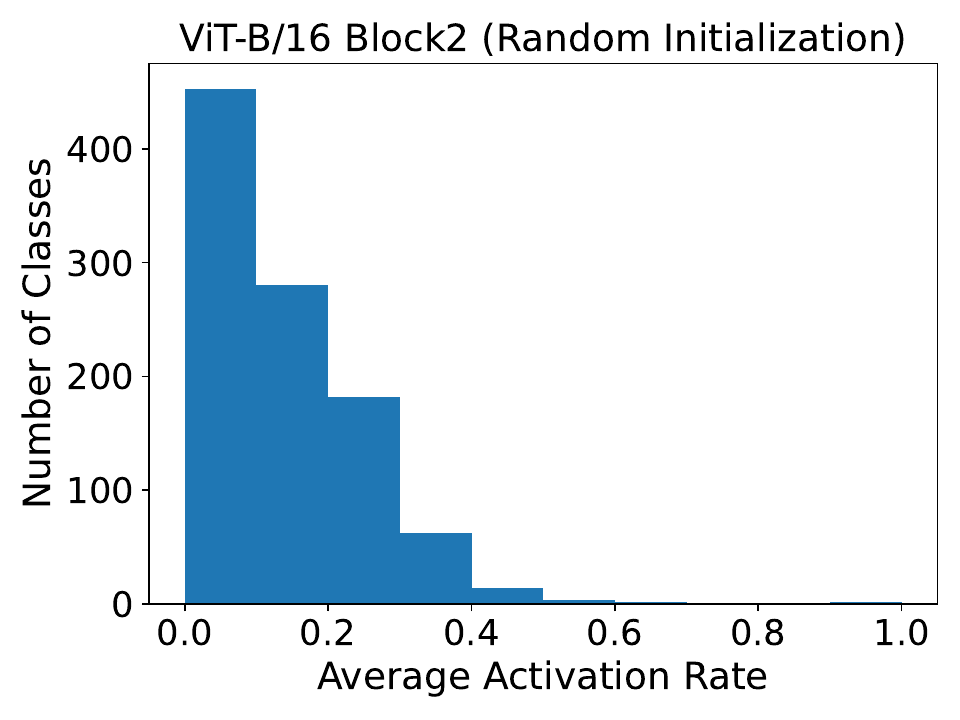}
\includegraphics[width=0.245\linewidth]{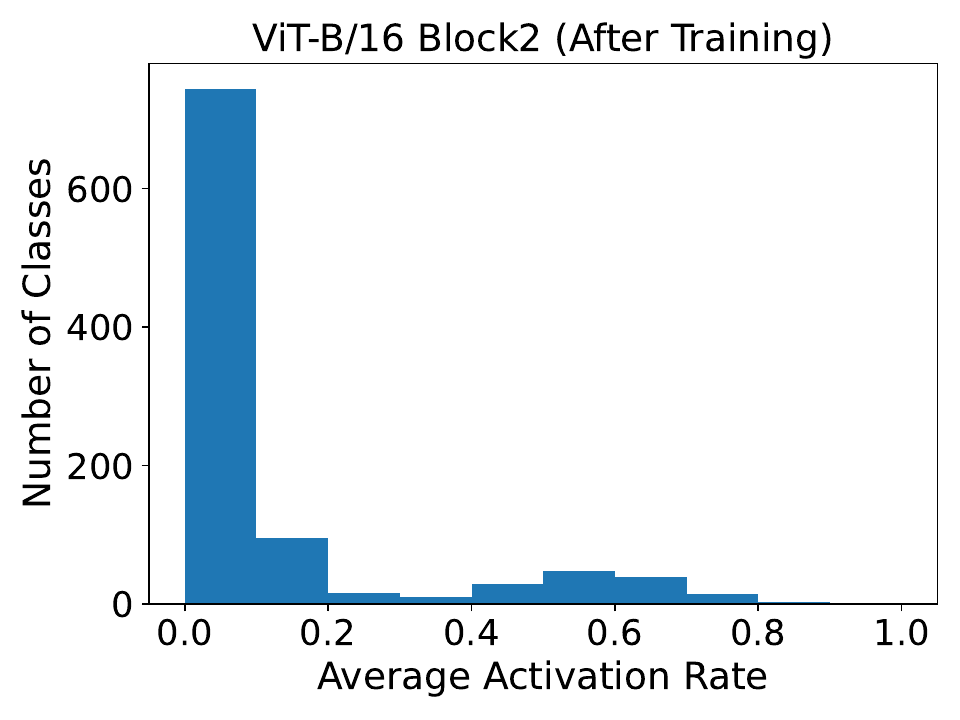} \\

\includegraphics[width=0.245\linewidth]{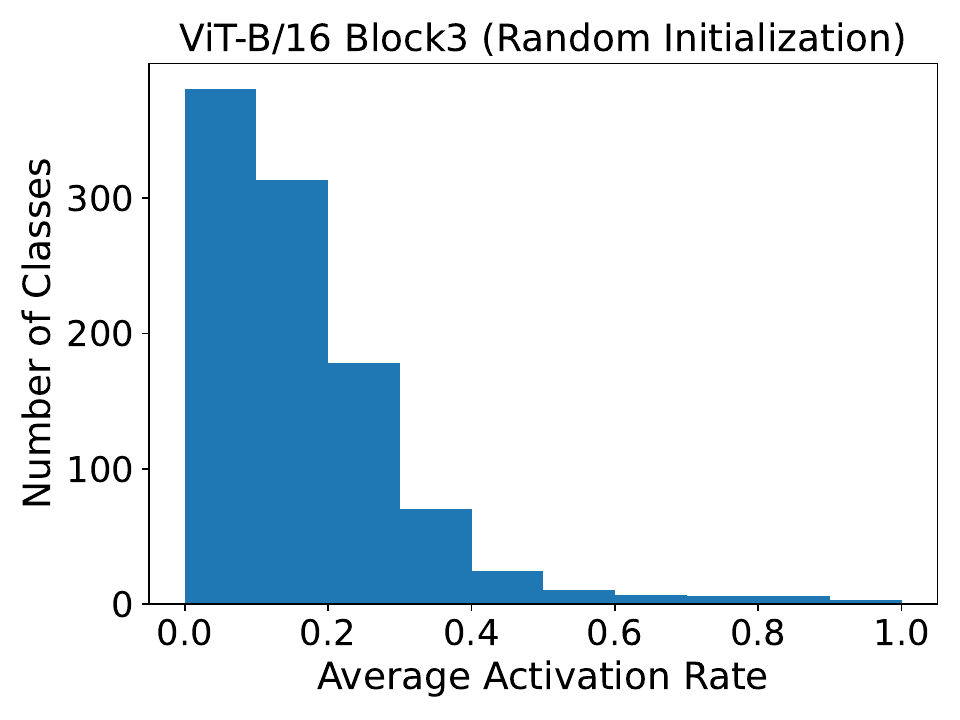}
\includegraphics[width=0.245\linewidth]{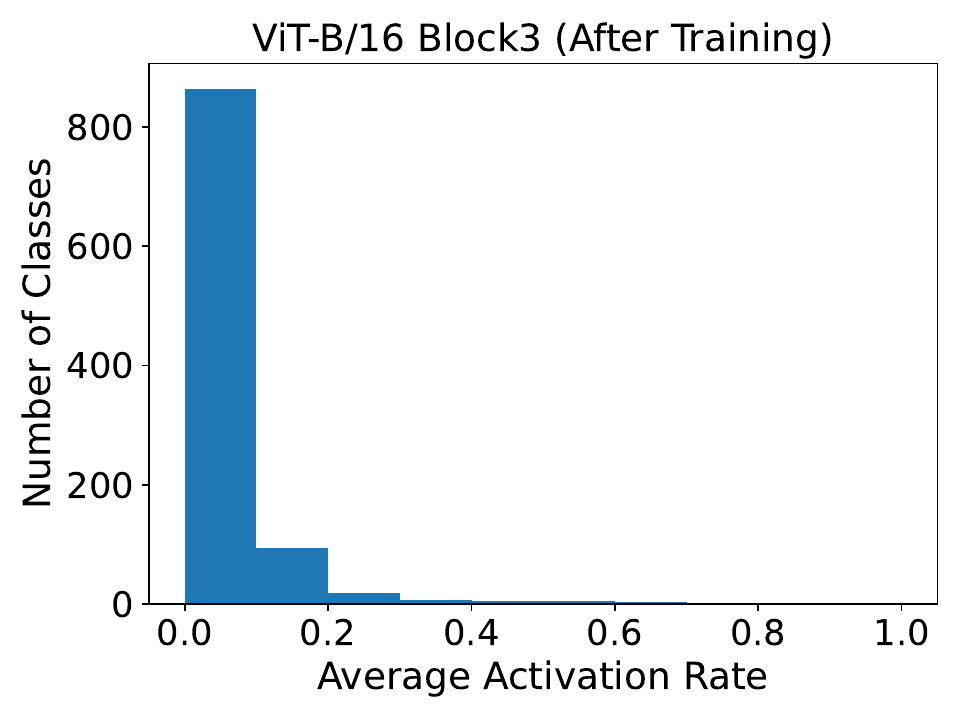}
\includegraphics[width=0.245\linewidth]{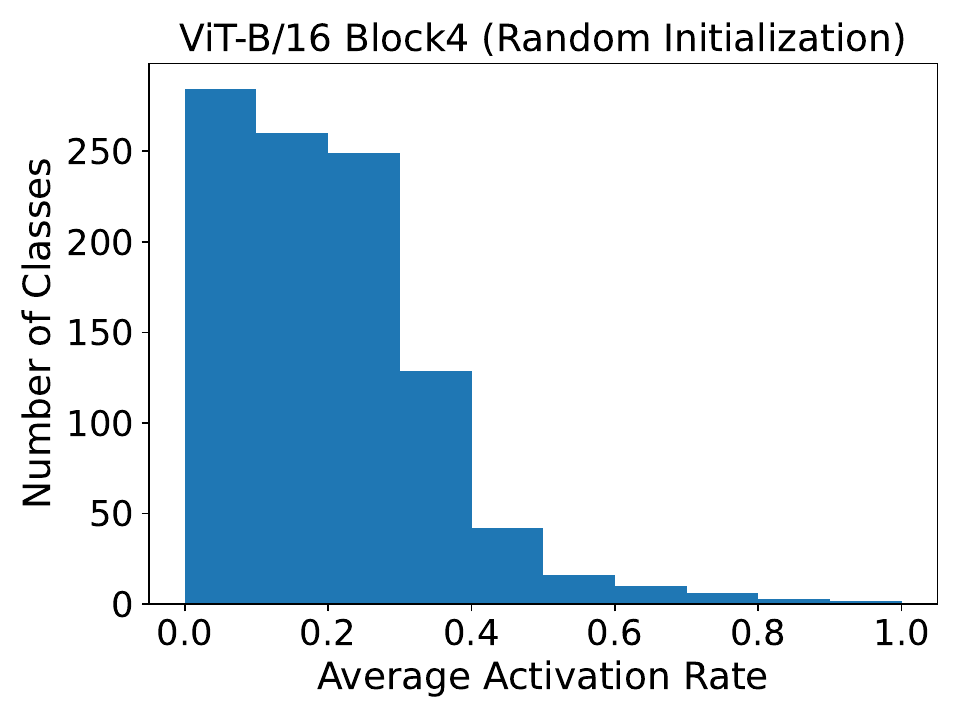}
\includegraphics[width=0.245\linewidth]{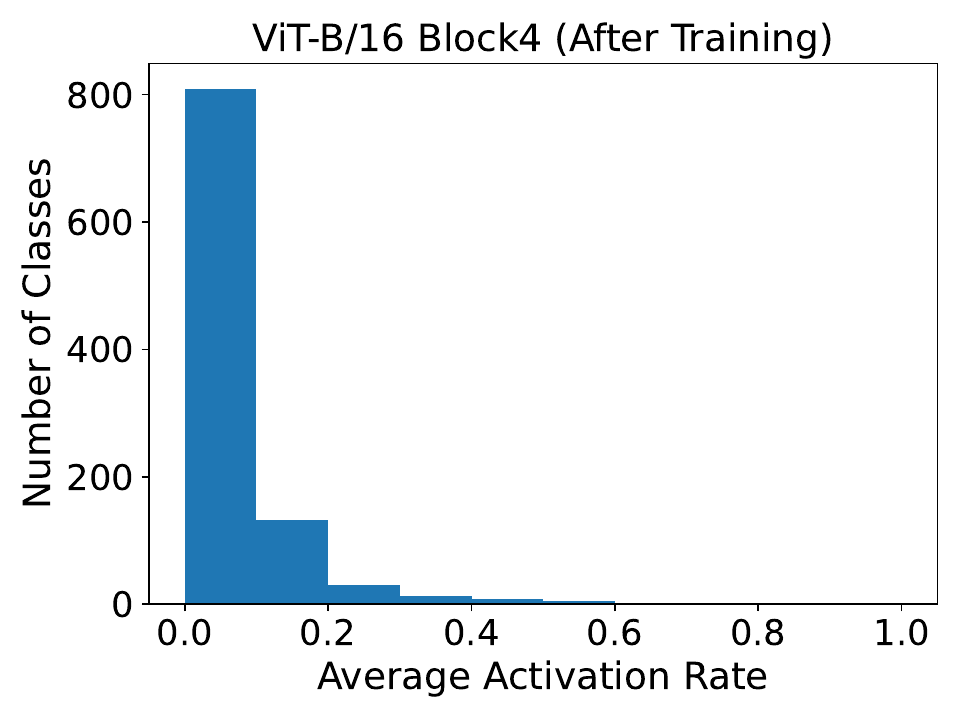} \\

\includegraphics[width=0.245\linewidth]{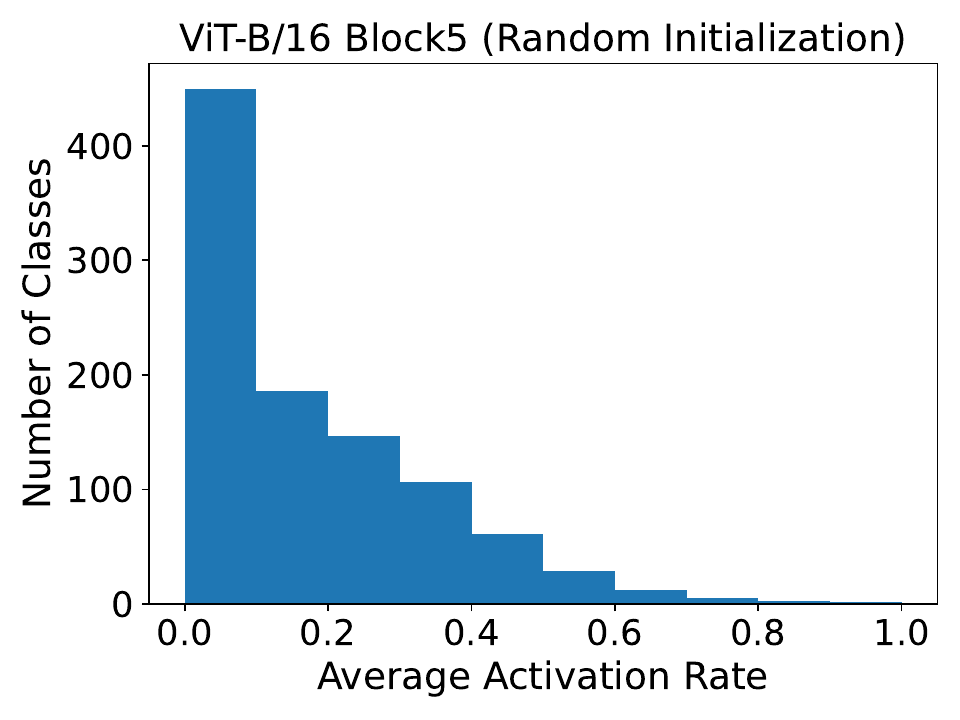}
\includegraphics[width=0.245\linewidth]{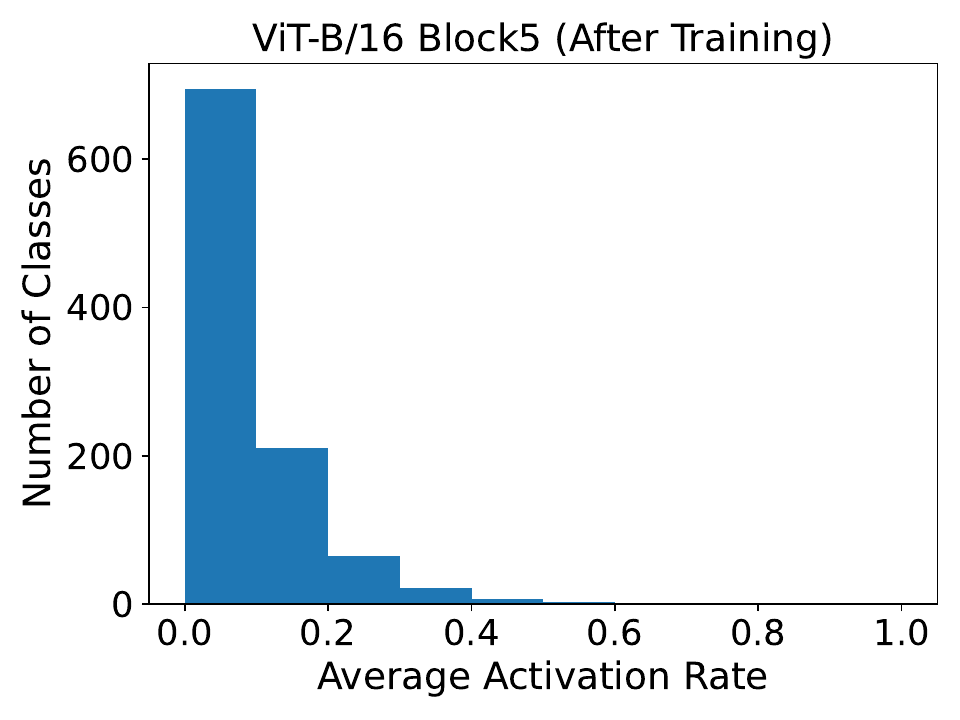}
\includegraphics[width=0.245\linewidth]{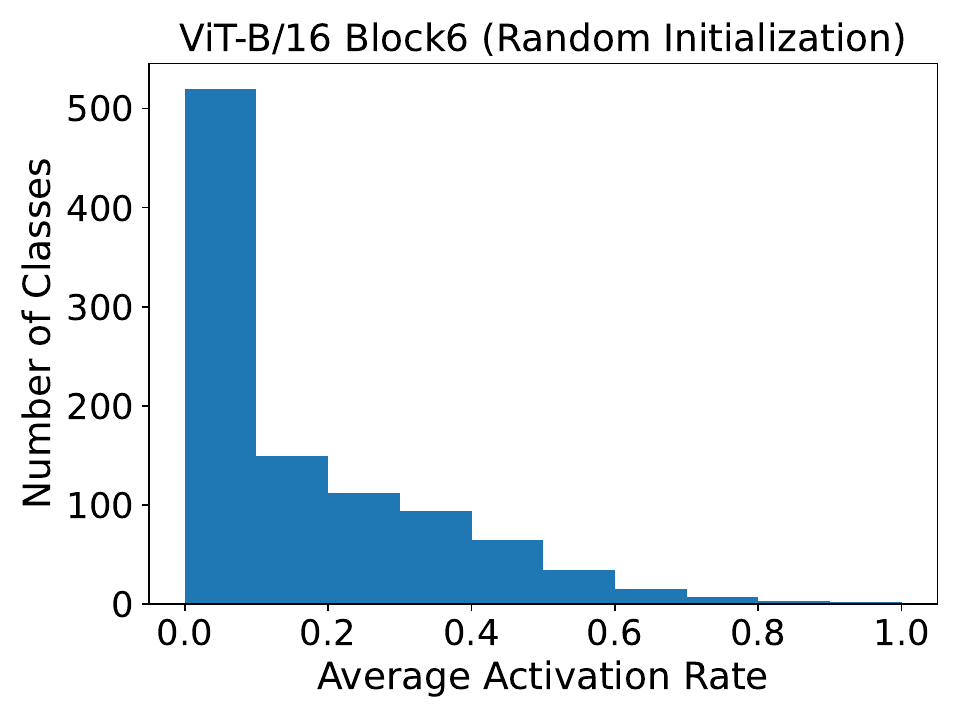}
\includegraphics[width=0.245\linewidth]{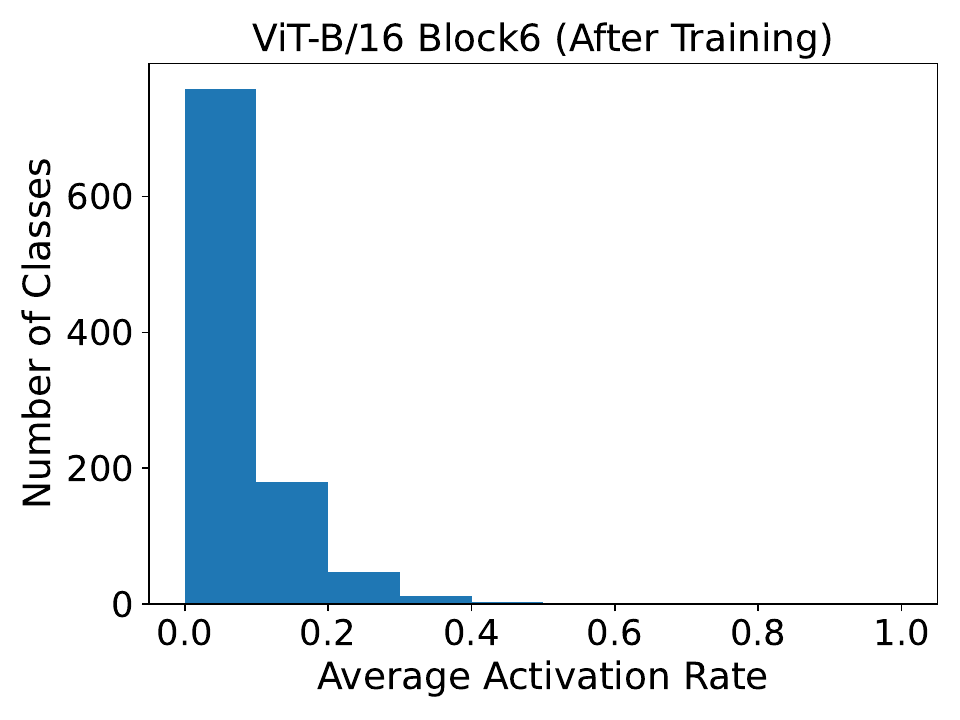} \\

\includegraphics[width=0.245\linewidth]{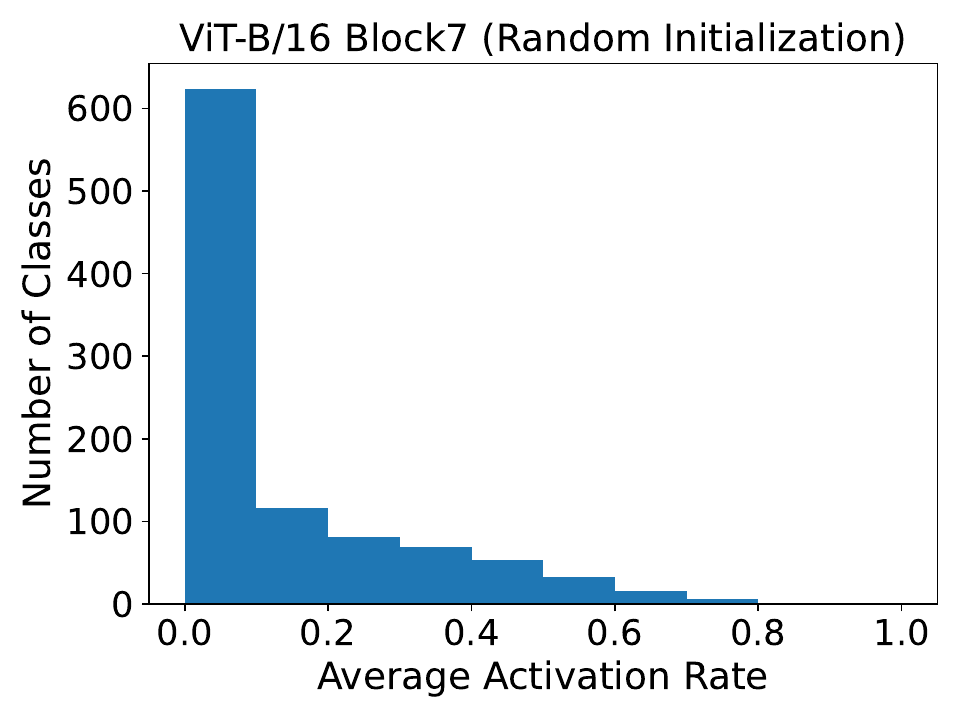}
\includegraphics[width=0.245\linewidth]{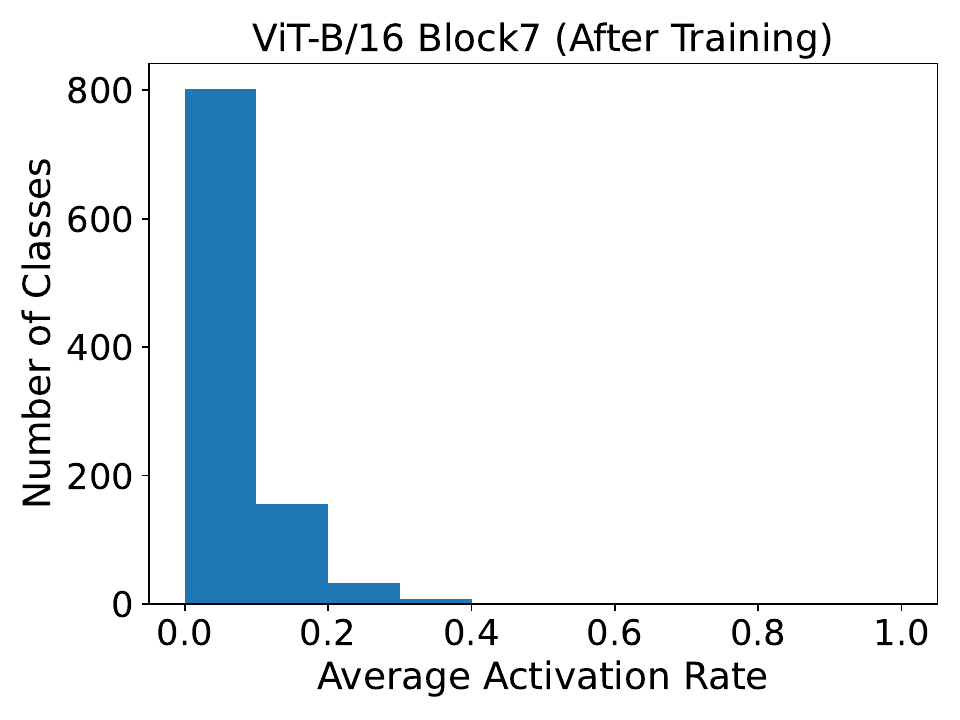}
\includegraphics[width=0.245\linewidth]{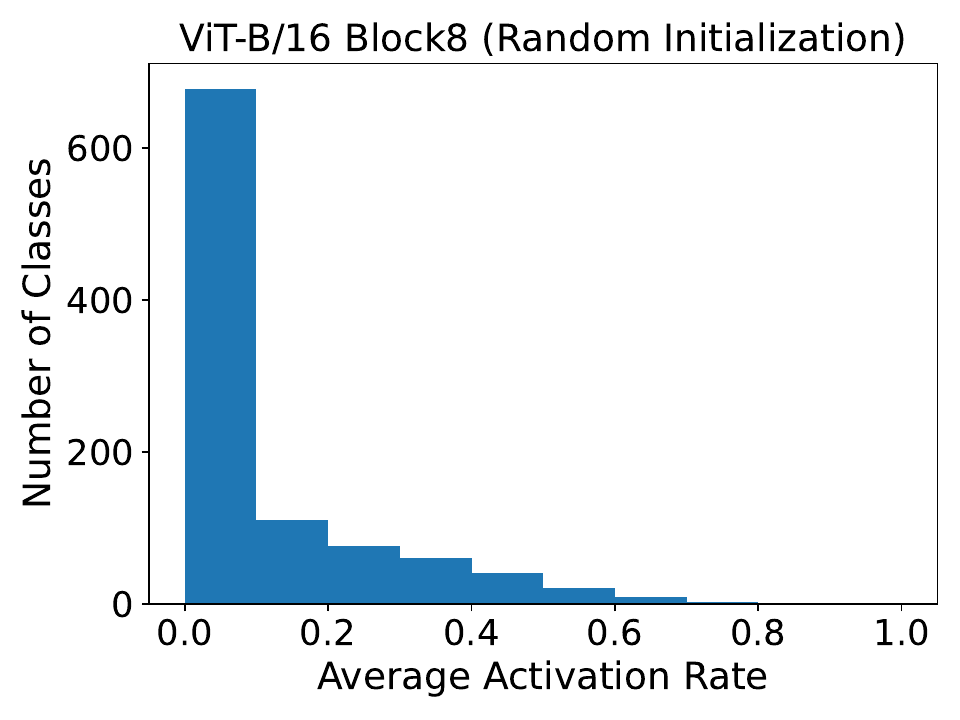}
\includegraphics[width=0.245\linewidth]{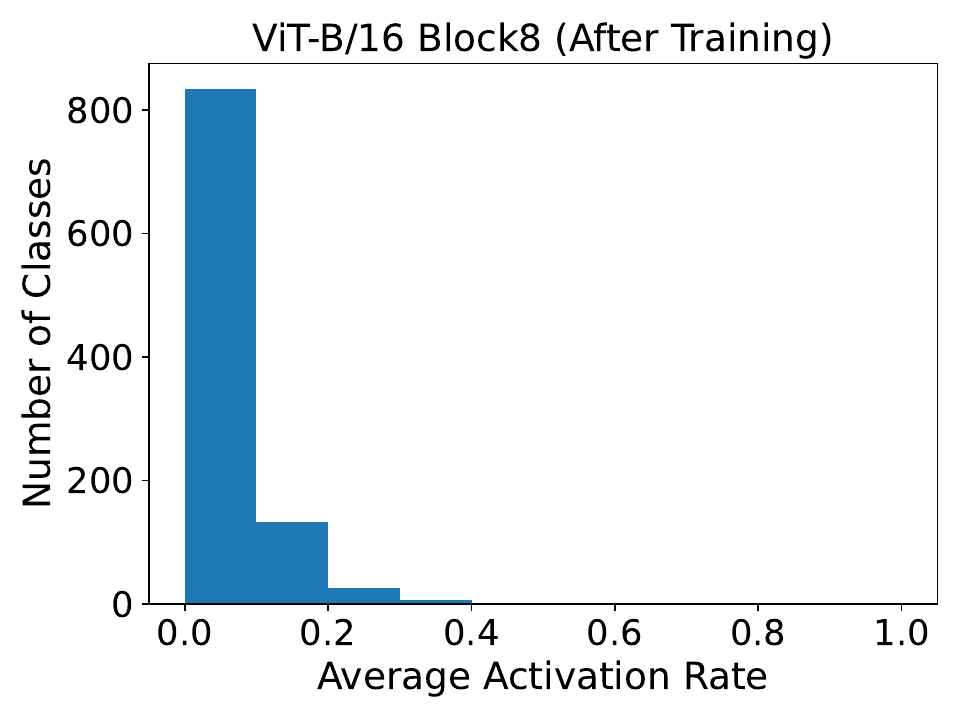} \\

\includegraphics[width=0.245\linewidth]{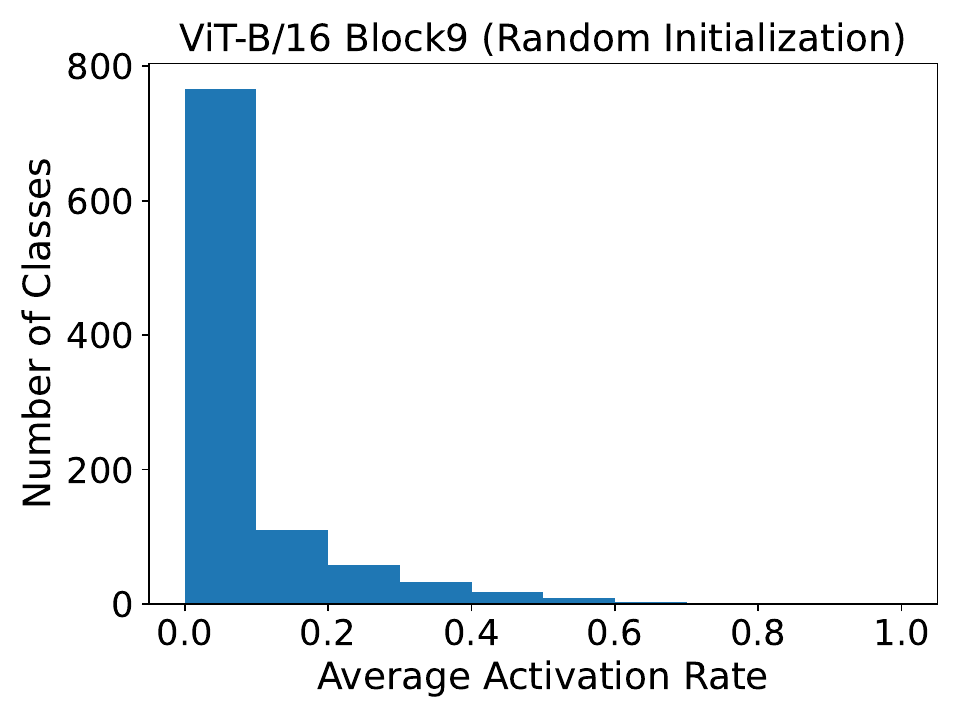}
\includegraphics[width=0.245\linewidth]{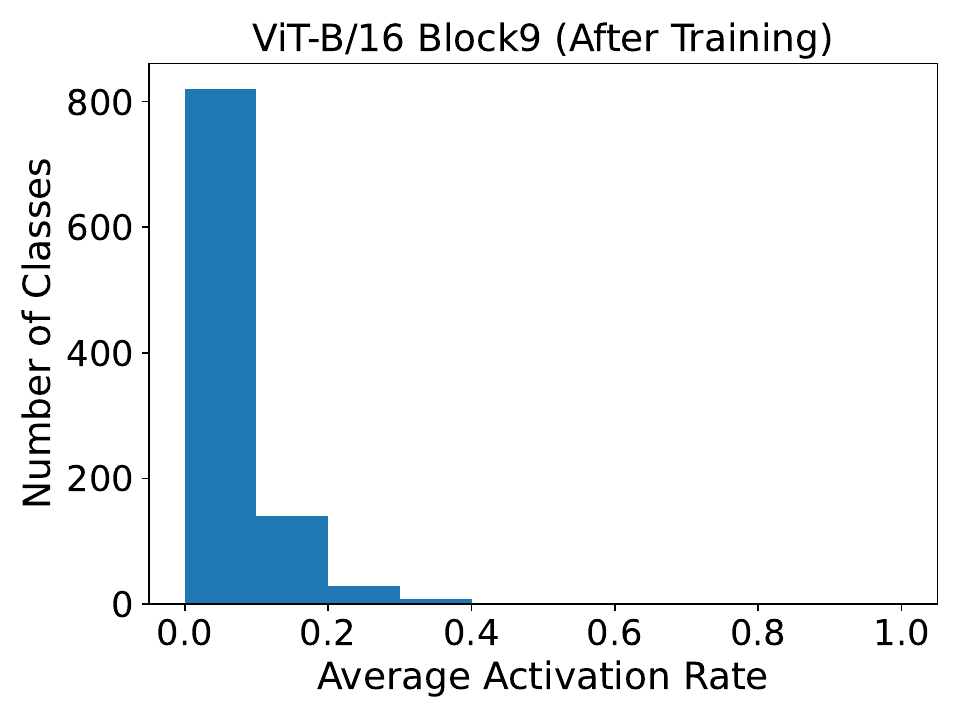}
\includegraphics[width=0.245\linewidth]{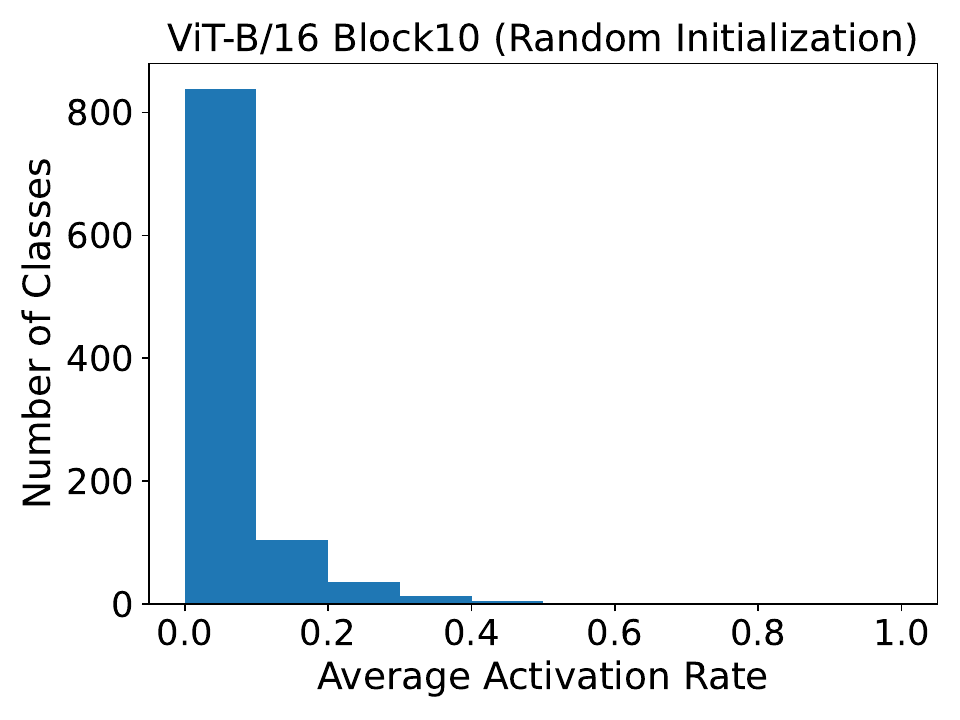}
\includegraphics[width=0.245\linewidth]{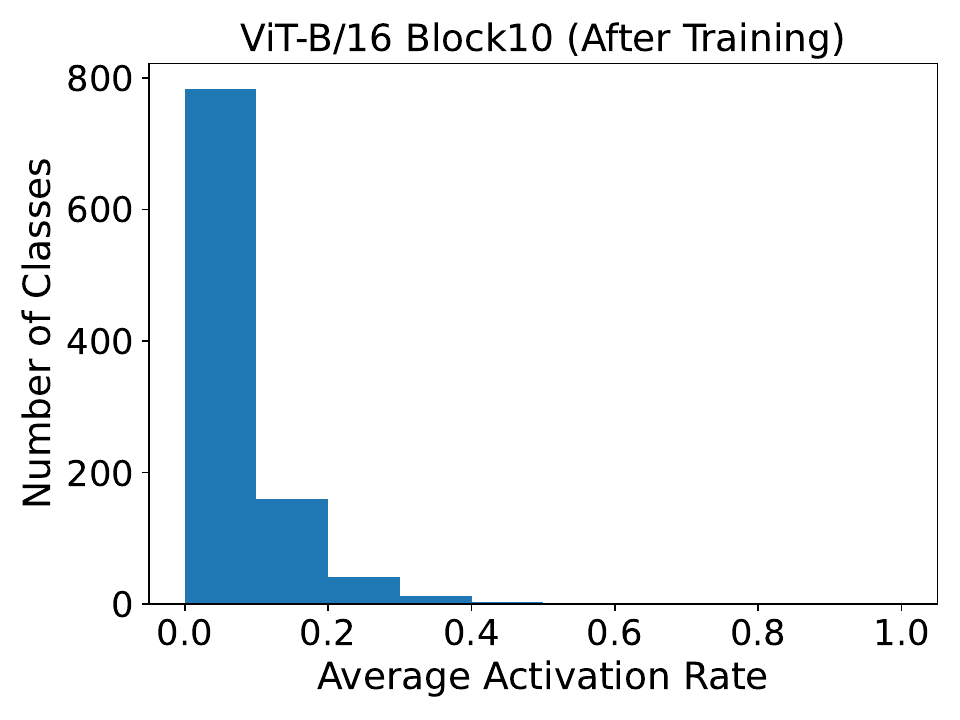} \\

\includegraphics[width=0.245\linewidth]{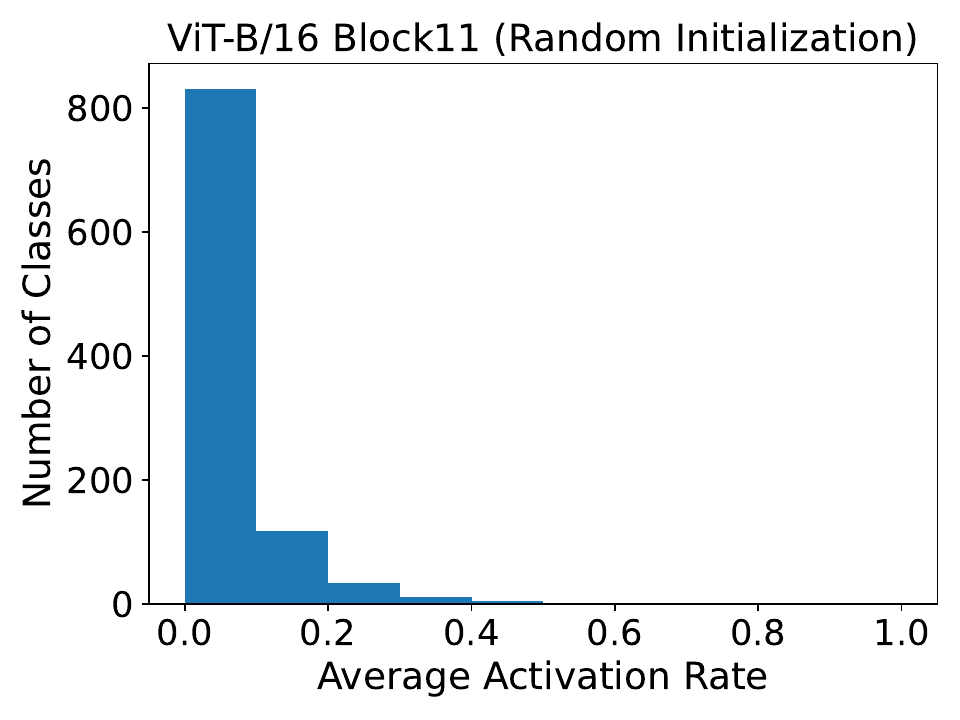}
\includegraphics[width=0.245\linewidth]{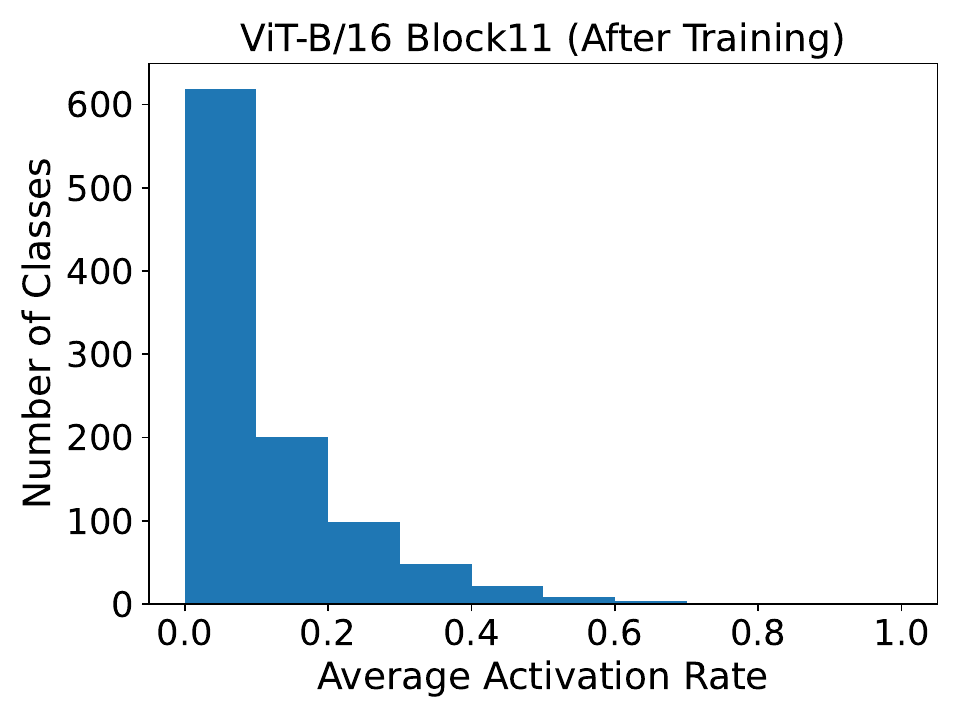}
\includegraphics[width=0.245\linewidth]{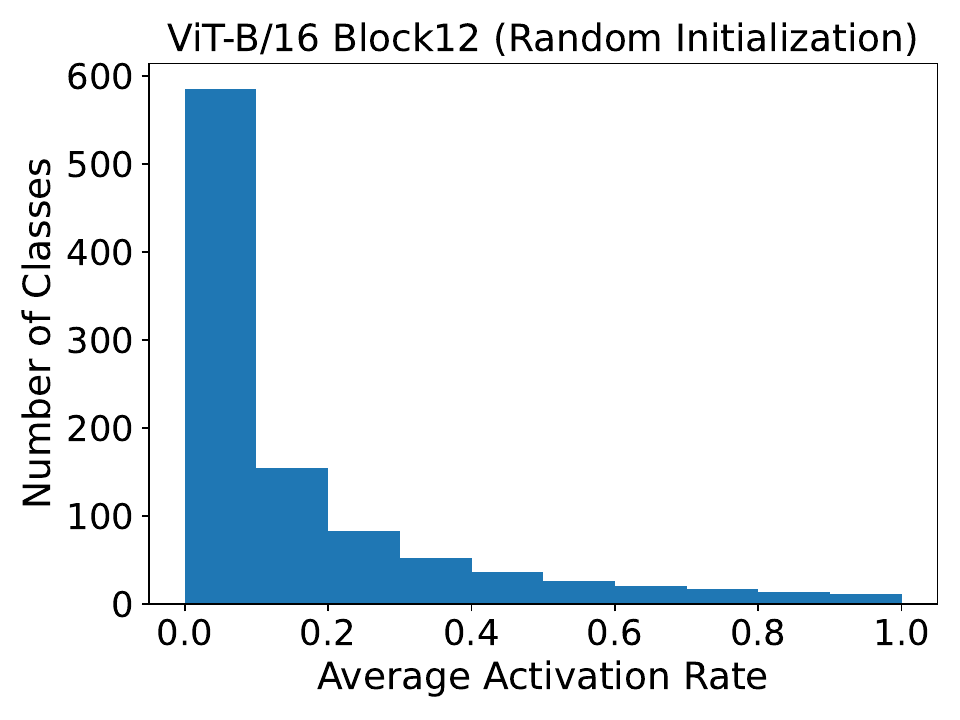}
\includegraphics[width=0.245\linewidth]{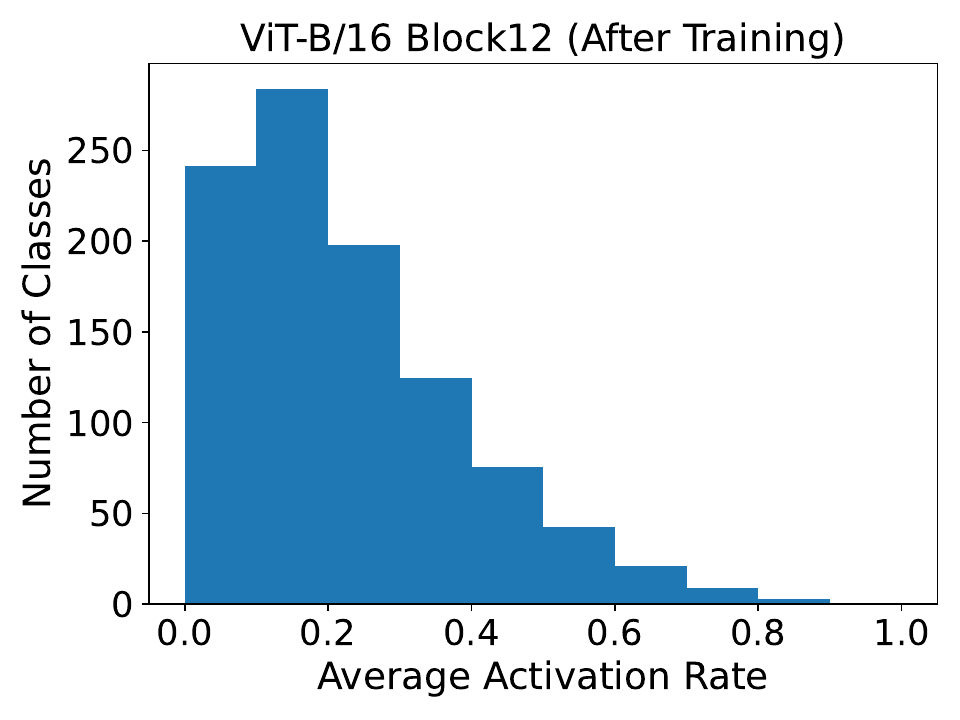}

\caption{Activation rate histograms of all blocks in randomly initialized and trained ViT-B/16 networks, computed from the ImageNet validation set.
}
\label{appfig:histograms_vit}
\end{figure}

\begin{figure}[t]
\centering
\subcaptionbox{Learned features of a ResNet-32 trained on the original CIFAR-10 dataset.}{\includegraphics[width=0.4\linewidth]{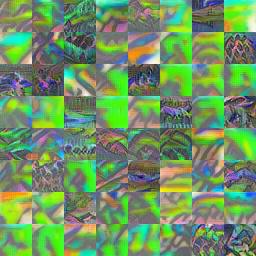}}
\hspace{1em}
\subcaptionbox{Learned features of a ResNet-32 trained on the modified CIFAR-10 dataset with background features.}{\includegraphics[width=0.4\linewidth]{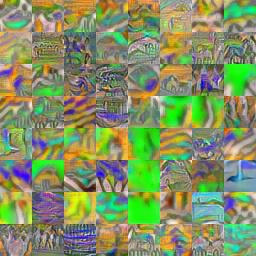}}
\caption{Additional visualization of the learned deep features in our CIFAR-10 experiment.}
\label{appfig:features}
\end{figure}

\subsection{Feature Visualization on CIFAR-10}
\label{appsec:visualization}

To investigate the presence of feature contamination in real-world datasets, we conducted a experiment based on a variant of the CIFAR-10 dataset that is explicitly modified to incorporate background features that have \emph{no correlation} with the label. Concretely, we augmented the CIFAR-10 training set by padding brick red pixels to the original images from CIFAR-10 and resized the padded images to the size of the original images, as shown in~\figref{fig:visualization}(c). Since our padding does not impact the original image contents, it follows the ``orthogonal'' setting in our theoretical model where the core features (the original image contents) and the background features (the padded pixels) are independent---there exists a \emph{linear} transformation in the \emph{input space} that can fully recover core features and remove background features.

We then visualize the learned features of a ResNet-32 network trained on the original CIAFR-10 dataset and another ResNet-32 trained on our modified dataset. Following the visualization technique in~\citet{allen-zhu_feature_2021}, we performed adversarial training using the method proposed by~\citet{salman2019provably} and visualized the features learned by the network's convolutional kernels in the 31st layer using the same hyperparameters as described in~\citet{allen-zhu_feature_2021}. As shown by~\figref{appfig:features}, we observe notable differences in the learned color information between models trained on the original CIFAR-10 dataset and its modified variant. Meanwhile, we note that there are no obvious geometric patterns in the red areas, which we conjecture is due to the augmentations used during training such as random cropping and flipping. In general, the visualization results suggest that background features are indeed learned by deep neural networks despite having no correlation with the label, which corroborates our theory and indicates that \emph{feature contamination also happens in \textbf{deep features} learned from real-world image data}.

\section{Empirical Evidence that Supports the Conjecture in~\secref{sec:discussion}}
\label{appsec:conjecture}

In this section, we provide preliminary empirical evidence that supports the conjecture stated in~\secref{sec:discussion} in the main text and discuss its relation with related observations in recent work. For ease of presentation, here we restate this conjecture:

\begin{conjecture*}
Pre-training on a sufficiently diverse dataset does not remove uncorrelated features, but \emph{linearizes} those features in the model's representations, hence mitigating feature contamination and improving OOD generalization.
% Pre-training on a sufficiently diverse dataset with a suitable auxiliary objective does not remove uncorrelated features for a certain task, but \emph{linearizes} those features in the model's representations, hence alleviating feature contamination and improving OOD generalization.
% Pre-training on a sufficiently large and diverse dataset alleviates feature contamination and leads to more linearized representations, hence improving OOD genenralization.
\end{conjecture*}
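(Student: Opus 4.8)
The conjecture fuses a \emph{representation-learning} claim (pre-training linearizes uncorrelated features) with a \emph{generalization} consequence (this mitigates feature contamination), so the plan is to formalize the setup and then attack it in two stages. Extend \defref{def:dgp} with a \emph{diverse} pre-training distribution in which every background weight $\zj$, $j\in\sbg$, is drawn from a symmetric law on $[-1,1]$ with $\E[\zj]=0$ and $\Theta(1)$ variance — equivalently a uniform mixture over many pre-training ``environments'' whose background features point in opposite directions — while core-feature weights behave as before. Equip pre-training with an objective whose population optimum \emph{requires retaining all of} $\rvz$ (e.g. reconstruction of $\rvx$, regression of the full vector $\rvz$, or a contrastive loss with enough negatives), so the optimal representation cannot simply discard background features. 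Write $\phi(\rvx)$ for the pre-trained encoder after $T^{\mathrm{pre}}$ SGD steps. The two targets are: (a) \textbf{retention} — a linear readout of $\phi(\rvx)$ recovers each $\zj$, $j\in\sbg$, up to $\wt{O}(1/\sqrt d)$; and (b) \textbf{linearization} — with core-feature weights held fixed, $\phi(\rvx)$ is affine in $(\zj)_{j\in\sbg}$ up to $\wt{O}(1/\sqrt d)$, so its background content lies in a fixed subspace $V_{\mathrm{bg}}$ on which it depends linearly.

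\textbf{Stage 1 (downstream consequence, assuming (a) and (b)).} Fix $\phi$ and train a linear head — or allow light fine-tuning — on the downstream binary task of \defref{def:dgp} by SGD. Since the head coordinates spanning $V_{\mathrm{bg}}$ are affine in $(\zj)_{j\in\sbg}$, the hinge-loss gradient projected onto $V_{\mathrm{bg}}$ is, up to $1/\poly d$, proportional to $\E[(\ind{\rvy=\rvy_\mathrm{pos}}-\ind{\rvy=\rvy_\mathrm{neg}})\,\zj]$ \emph{without} the activation-derivative factor — precisely the cancellation already exploited in \propref{prop:linear} — hence it vanishes in expectation because background weights are label-independent. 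Iterating as in \appref{appsec:proof_nonlinear} keeps the downstream model $\wt{O}(1/\sqrt d)$-orthogonal to $V_{\mathrm{bg}}$, so its output is essentially invariant to the $[0,1]\!\to\![-1,0]$ shift of background features, giving $\oodrisk{\cdot}=o(1)$ — the contrast with \theoref{theo:risk}. Part (a) enters only to certify that this was achieved \emph{without removing} the background features: they survive in $\phi$, merely confined to the inert subspace $V_{\mathrm{bg}}$.

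\textbf{Stage 2 (the representation-learning claim (a) and (b)).} Here I would run a trajectory analysis of pre-training SGD paralleling \secref{appsec:proof_main}, with the zero-mean condition $\E_{\mathrm{pre}}[\zj]=0$ as the key leverage: for any neuron whose activation pattern is approximately independent of a weakly-coupled background coordinate, the contaminating gradient term (the analogue of $\gkyt{\sbg}$ in \lemmaref{lemma:grad_bg}) becomes $\approx\E[\ind{\text{active}}]\,\E_{\mathrm{pre}}[\zj]=0$, so there is no systematic drift coupling $\zj$ into core-carrying neurons. One then argues that the only neurons accumulating non-trivial $\inp{\wkt}{\mj}$ for $j\in\sbg$ are those ``assigned'' to $\mj$ by random initialization, that such neurons organize into approximate $\pm$ pairs whose units combine as $\relu(c\zj)-\relu(-c\zj)=c\zj$ (forced by the objective, which must realize the negative values of $\zj$ present under the diverse data), and that these pairs have pre-activations depending essentially on their own feature — establishing (b), with (a) following since the same pairs make $\zj$ linearly readable.

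\textbf{Main obstacle.} Stage 2 carries all the difficulty: one must show that activation asymmetry — which \emph{creates} contamination on the downstream task — does \emph{not}, on the pre-training task, drive SGD to couple background features nonlinearly into core-carrying neurons; i.e. that the zero-mean symmetrization genuinely suppresses the drift \emph{throughout} the trajectory, not just at initialization, and does so for objectives (reconstruction, contrastive, autoregressive) more intricate than the hinge loss. Pinning down how diverse the pre-training data must be — enough environments for the symmetrization to bite, not so degenerate that background features become unidentifiable — and propagating $\wt{O}(1/\sqrt d)$ error control through a multi-phase SGD analysis, is the crux. A more modest, likely tractable target is Stage 1 together with (b) \emph{assumed} as a ``linearized-representation'' hypothesis on $\phi$ (consistent with the evidence in \appref{appsec:conjecture}), which isolates (a) and (b) as the precise open problem.
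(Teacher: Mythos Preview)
The paper does not prove this statement: it is explicitly labeled a \emph{conjecture} and the authors write that ``rigorously proving this conjecture requires a more fine-grained treatment for the pre-training data distribution and the SGD dynamics and thus leave it as future work.'' What the paper actually offers in \secref{appsec:conjecture} is purely empirical support: (i) PCA visualizations and linear domain-classification experiments on PACS showing that CLIP representations are far more linearly separable by domain than ERM representations, and (ii) activation-density histograms showing that pre-trained CLIP has denser neuron activation than a distilled student, interpreted as evidence that features are less non-linearly coupled. There is no formal setup for pre-training, no trajectory analysis, and no theorem.

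Your proposal therefore cannot be compared to a proof in the paper --- there is none. What you have written is a sensible research plan that goes well beyond the paper. A few comments on it as such. Your Stage~1 is essentially sound and is indeed the easy half: once you assume the encoder output is affine in $(\zj)_{j\in\sbg}$, the cancellation argument from \theoref{prop:linear} transfers directly to the linear head, and the $\wt{O}(1/\sqrt d)$ control follows. Your identification of Stage~2 as the crux matches the paper's own assessment. The specific mechanism you sketch --- that zero-mean $\E_{\mathrm{pre}}[\zj]=0$ kills the contaminating drift term analogous to $\gkyt{\sbg}$ --- is plausible at initialization but fragile along the trajectory: once a neuron's activation indicator becomes correlated with $\zj$ (which can happen through interactions with \emph{other} background coordinates, not just core features), the factorization $\E[\ind{\text{active}}\,\zj]\approx\E[\ind{\text{active}}]\,\E[\zj]$ breaks, and you need an inductive argument that this correlation stays small. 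The $\pm$-pair story for reconstructing negative $\zj$ is also delicate: nothing in the paper's analysis guarantees such pairing emerges from SGD rather than, say, a single neuron with a large negative bias. Your fallback --- take (b) as a hypothesis and prove Stage~1 --- is the right modest target and would already be a clean complement to the paper's \theoref{prop:linear}.
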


\begin{figure}[t]
\centering
\subcaptionbox{ERM representations}{\includegraphics[width=0.32\linewidth]{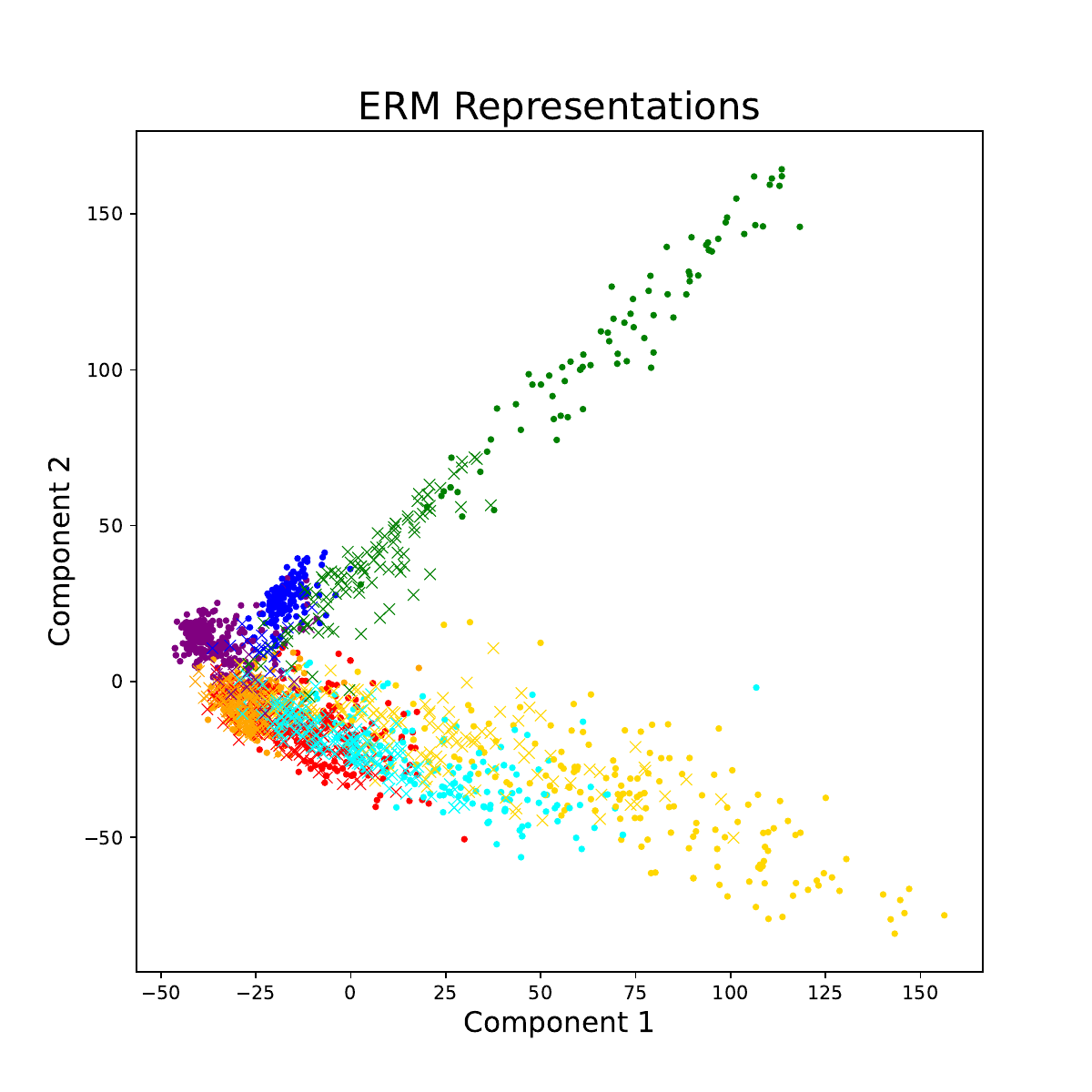}}
\subcaptionbox{CLIP-RN50 representations}{\includegraphics[width=0.32\linewidth]{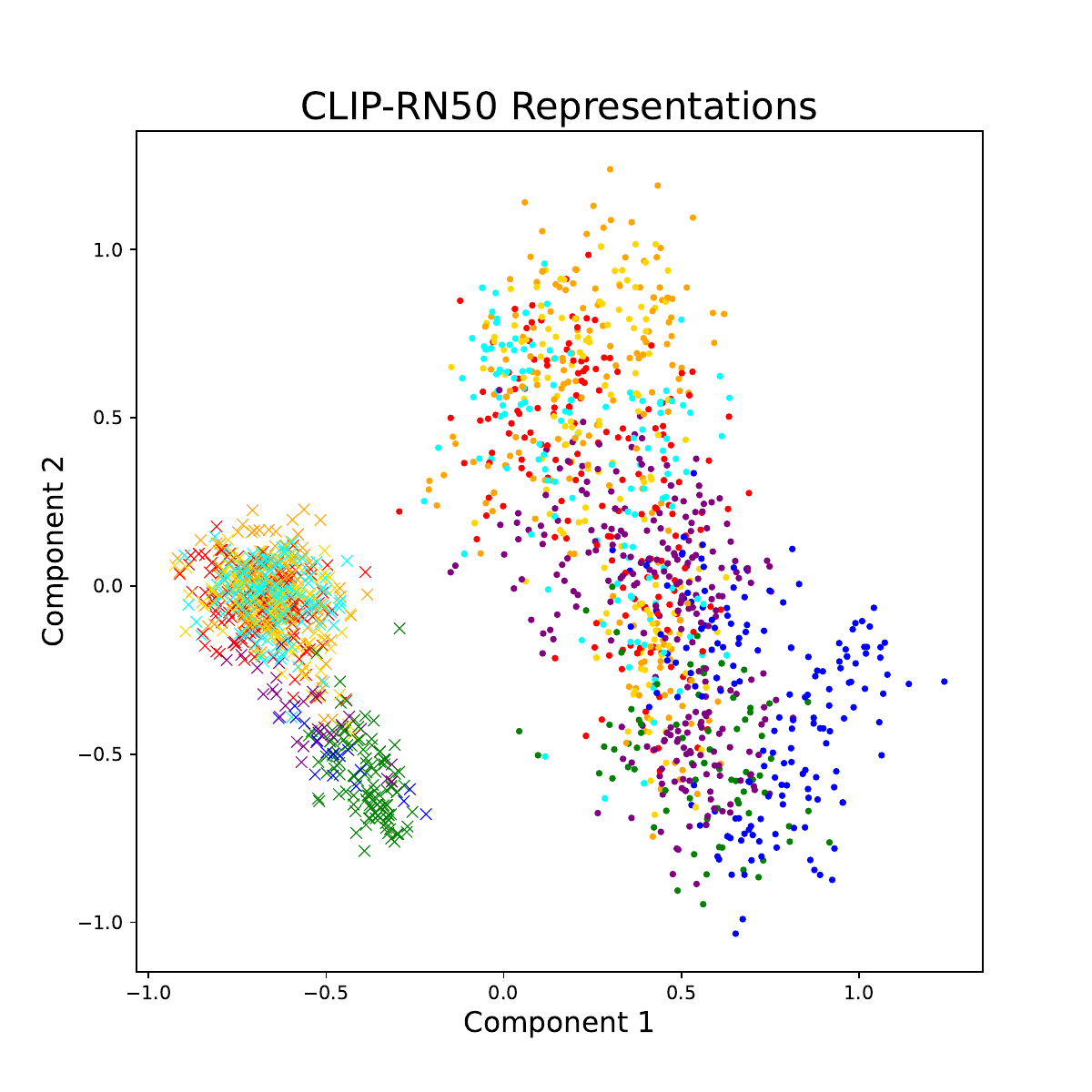}}
\subcaptionbox{CLIP-ViT-B/16 representations}{\includegraphics[width=0.32\linewidth]{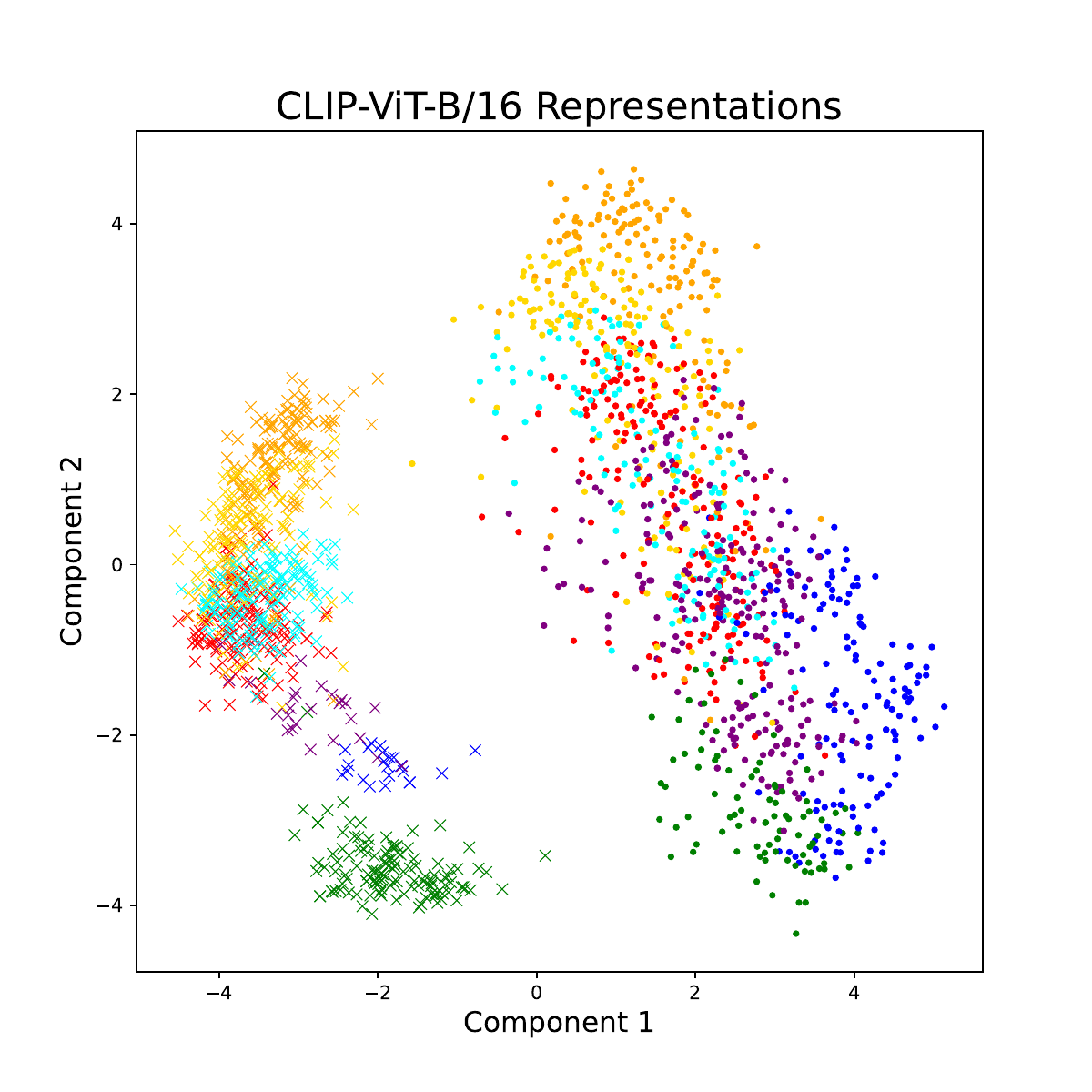}}
\caption{Visualizations of ERM and CLIP representations after PCA dimensionality reduction to two dimensions. Circles refer to image representations in the training domains, while crosses refer to image representations in the test domain. Different colors represent different classes. Compared to ERM representations where the examples from training and test domains are visually mixed, CLIP representations exhibit strong \emph{linear separability} of different domains.}
\label{fig:pca}
\end{figure}

\begin{table}[h]
\centering
\caption{Detailed ID test accuracy, OOD test accuracy, and domain classification error (\%) of linear probes on pre-trained and distilled representations on PACS.}
\label{table:results_pacs}
\begin{tabular}{cccc}
\toprule
 & ID Test Acc. & OOD Test Acc. & Domain Classification Error \\
\midrule
CLIP-ViT-B/16 & \bf 99.68 & \bf 91.59 & \bf 0.06 \\
CLIP-RN50 & 97.35 & 85.67 & 0.19 \\
\midrule
ERM-RN50 & 99.28 & 76.47 & 1.02 \\
\bottomrule
\end{tabular}
\end{table}

\subsection{Large-Scale Pre-training Leads to Linear Separability of Domains}

To empirically test this conjecture, we first examined the properties of the pre-trained representations from CLIP and the representations learned by ERM on a domain generalization dataset PACS~\citep{li_deeper_2017} for image classification. The images in PACS are divided into four domains, namely Photo, Art painting, Cartoon, and Sketch, with seven common categories. We trained a ResNet-50 ERM model using the examples from the first three domains (ID) and the Sketch domain was treated as the OOD domain. To evaluate the robustness of CLIP representations, we fitted a linear probe on top of freezed CLIP representations on ID domains and evaluated the learned linear probe on the OOD domain.

We begin by a 2-dimensional visualization of both the learned ERM representations and the CLIP representations using PCA dimensionality reduction. As shown in~\figref{fig:pca}, \emph{ERM representations and CLIP representations exhibit quite different properties in terms of domain separability}: while examples from training and test domains are visually mixed in ERM representations, examples from training and test domains are \emph{strongly linearly separable} in CLIP representations.

We then quantitatively examined this linear separability by fitting linear classifiers on top of ERM and CLIP representations for \emph{domain classification}. Concretely, we trained linear classifiers with the original ``class'' label of each example substituted by its domain index. We then evalute the accuracy of this classifier on a hold-out validation set. As shown in~\tableref{table:results_pacs}, domain classifiers on CLIP representations have considerably smaller error than domain classifiers on ERM representations, which is consistent with visualization. This phenomenon is related to recent work on unsupervised domain adaptation based on contrastive learning~\citep{shen_connect_2022,haochen_beyond_2022}, where it has been shown that contrastive learning can learn representations that disentangle domain and class information, enabling generalization that they refer to as ``linear transferability''~\citep{haochen_beyond_2022}. However, their analysis requires that unlabeled examples from the target domain are seen by the contrastive learning algorithm during training, while large-scale pre-training in our context seems to achieve a similar disentangling effect even without explicitly trained on the target distribution. Further theoretical explanations of this phenomenon is an important future work.

In summary, the results in this section suggest that the representations learned by large-scale pre-training is highly linearized, with features representing different factors of variation not as non-linearly coupled as in our analysis on feature contamination. We believe that such high linearity of representations plays a critical role in the OOD capability of pre-trained models.

\begin{figure}[t]
\centering
\begin{subfigure}{0.3\linewidth}
\centering
    \includegraphics[width=0.99\linewidth]{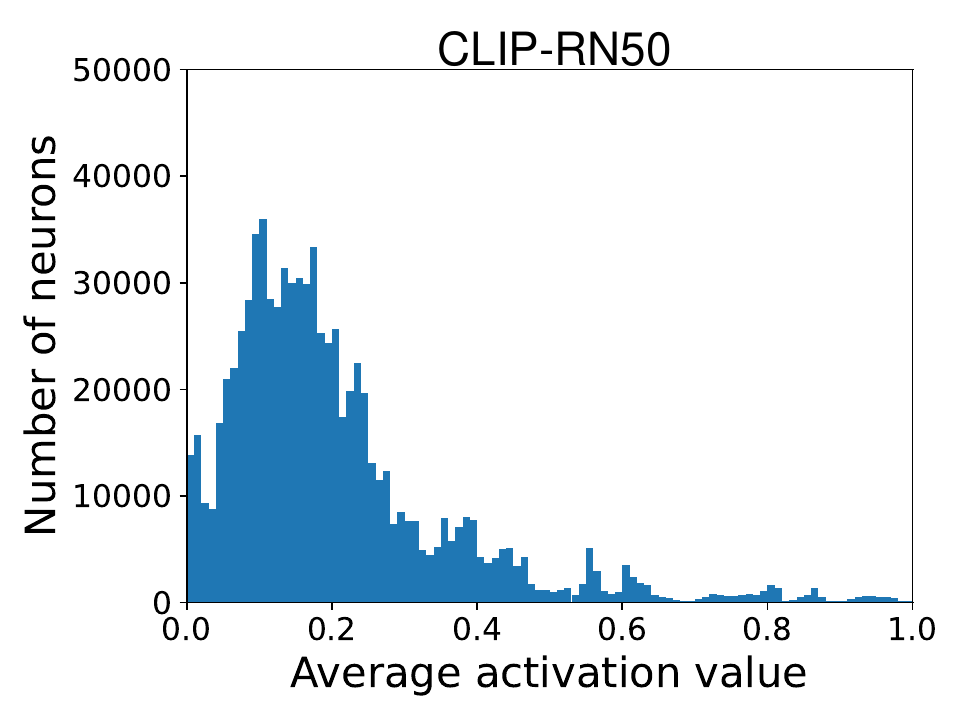}\\
    \includegraphics[width=0.99\linewidth]{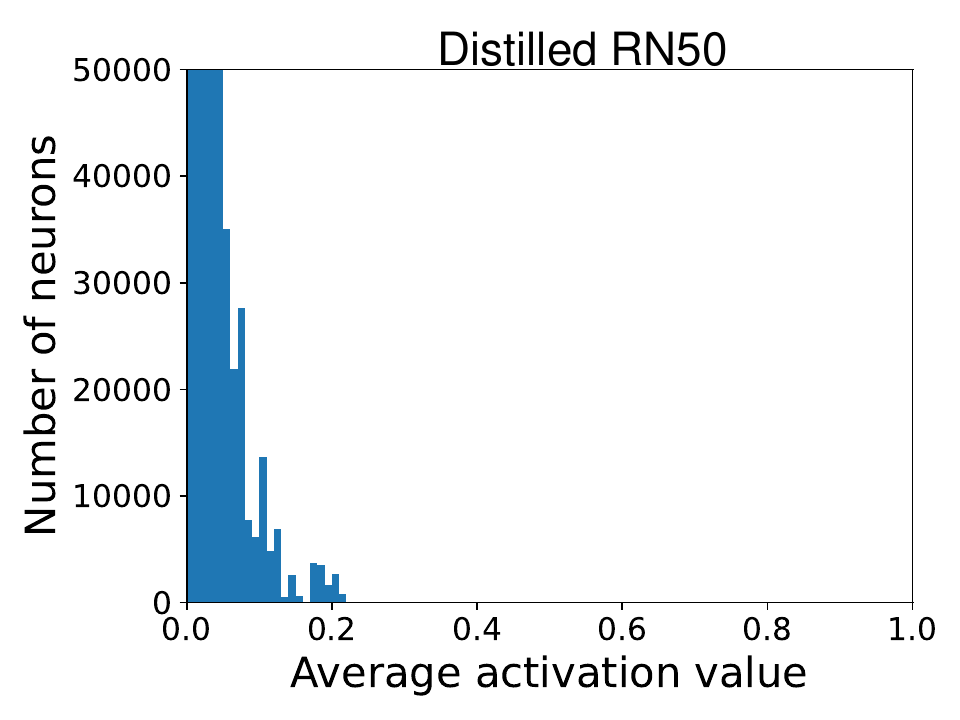}
    \caption{ImageNet}
\end{subfigure}
\begin{subfigure}{0.3\linewidth}
\centering
    \includegraphics[width=0.99\linewidth]{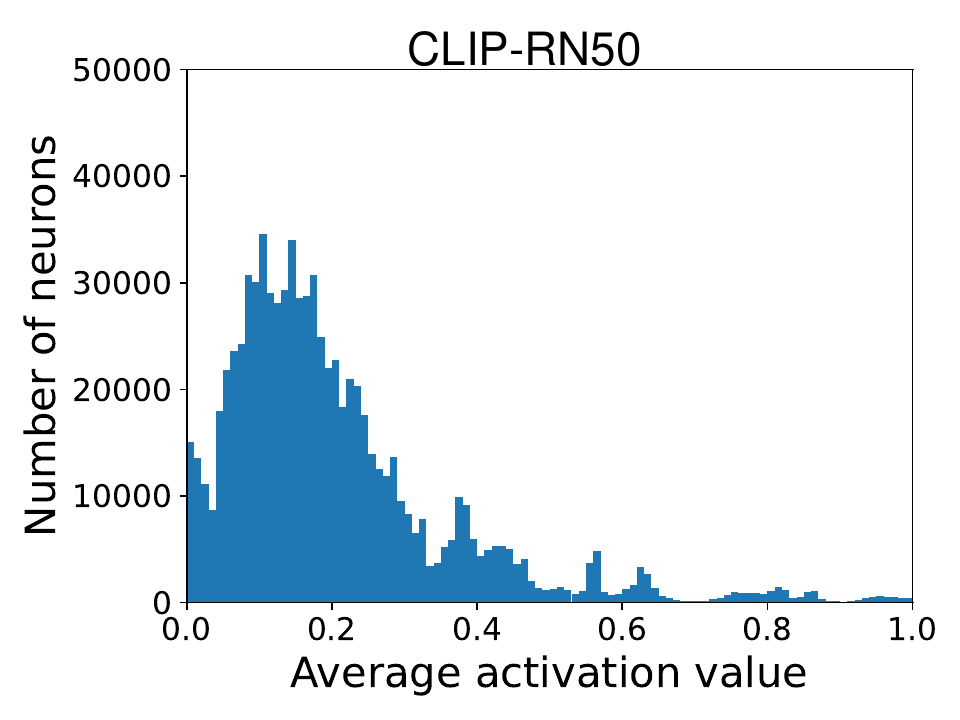}\\
    \includegraphics[width=0.99\linewidth]{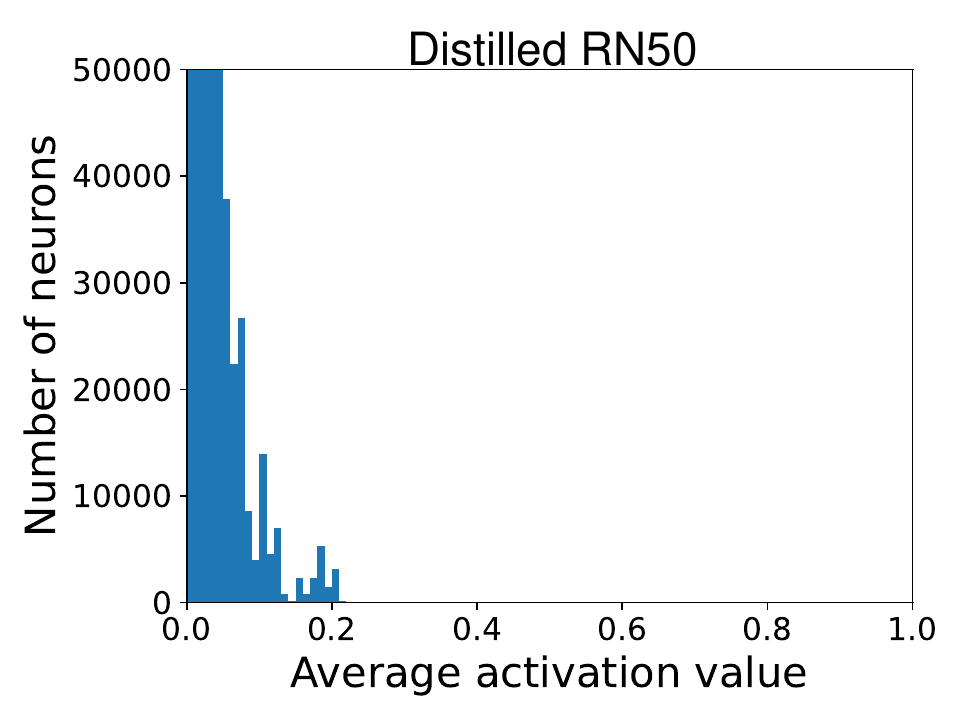}
    \caption{ImageNetV2}
\end{subfigure}
\begin{subfigure}{0.3\linewidth}
\centering
    \includegraphics[width=0.99\linewidth]{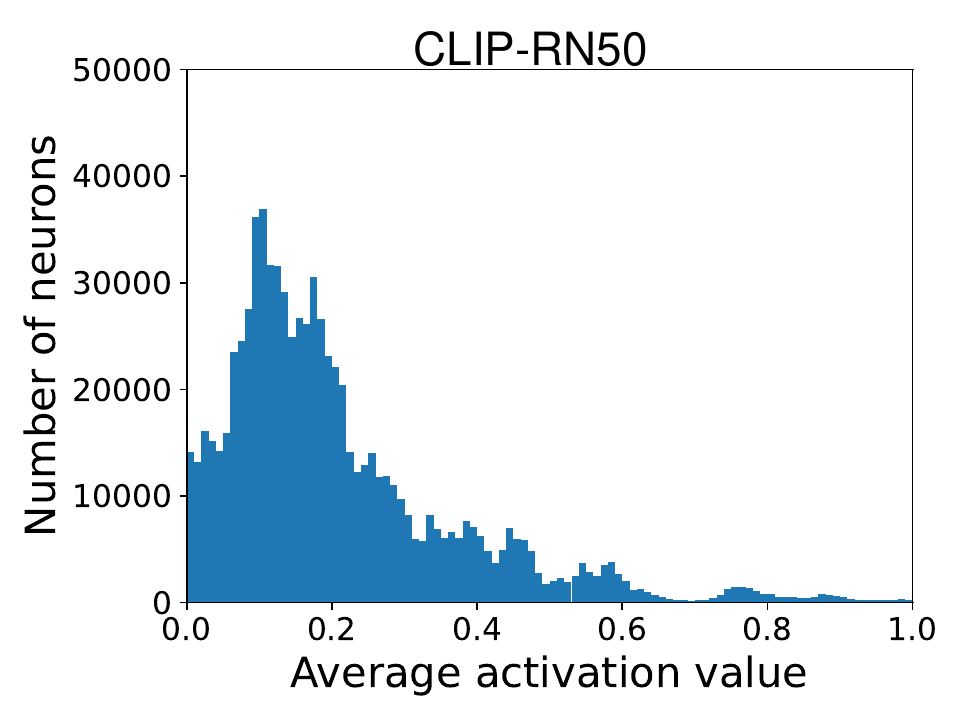}\\
    \includegraphics[width=0.99\linewidth]{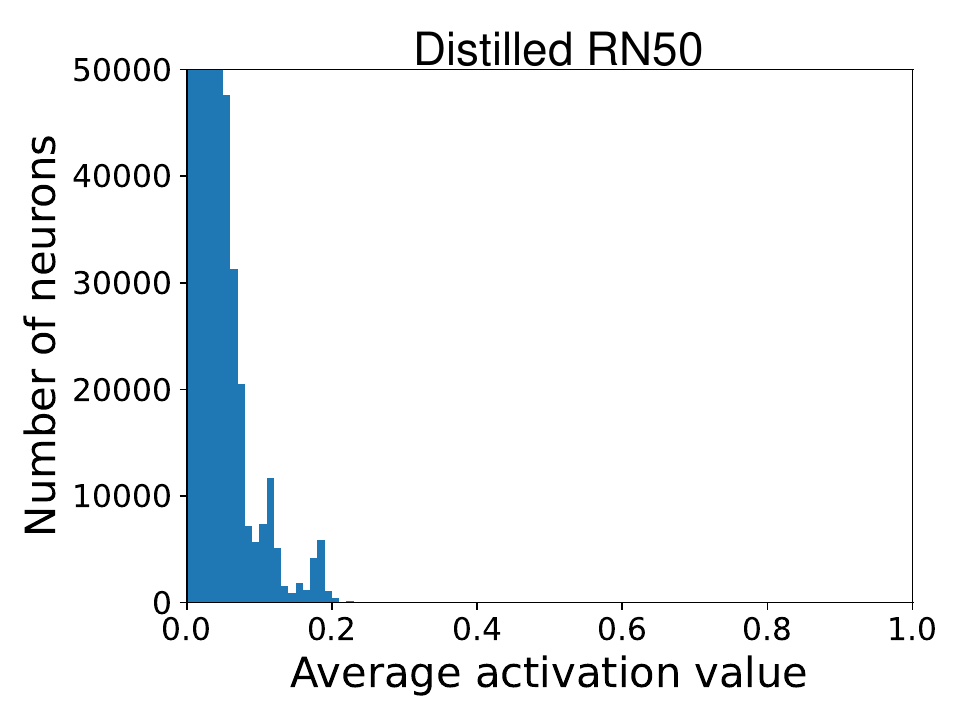}
    \caption{ImageNet-R}
\end{subfigure}\\

\begin{subfigure}{0.3\linewidth}
\centering
    \includegraphics[width=0.99\linewidth]{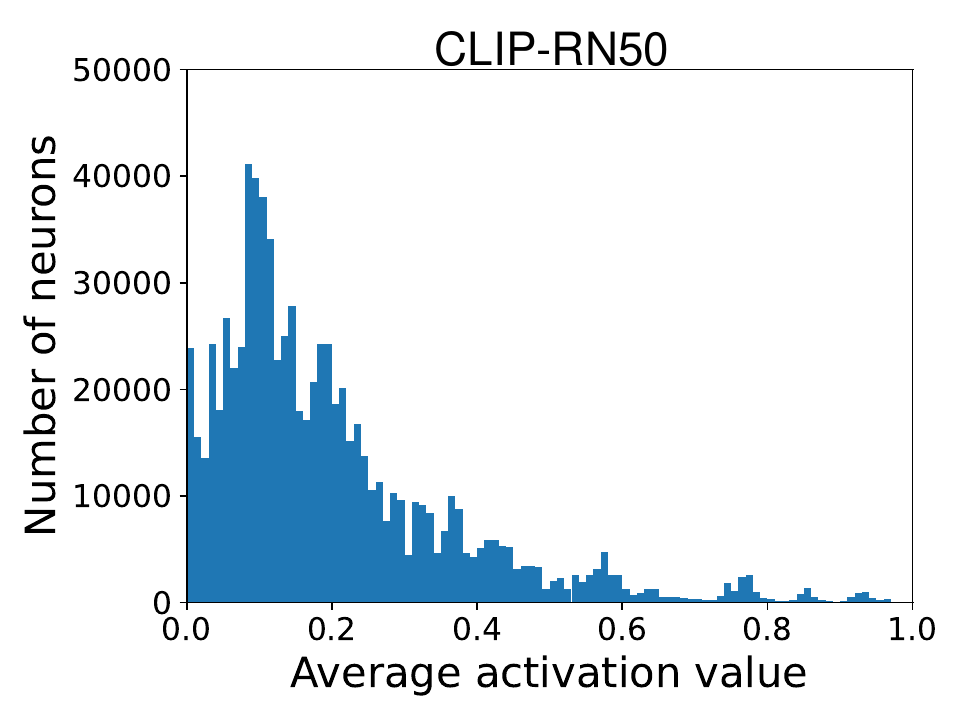}\\
    \includegraphics[width=0.99\linewidth]{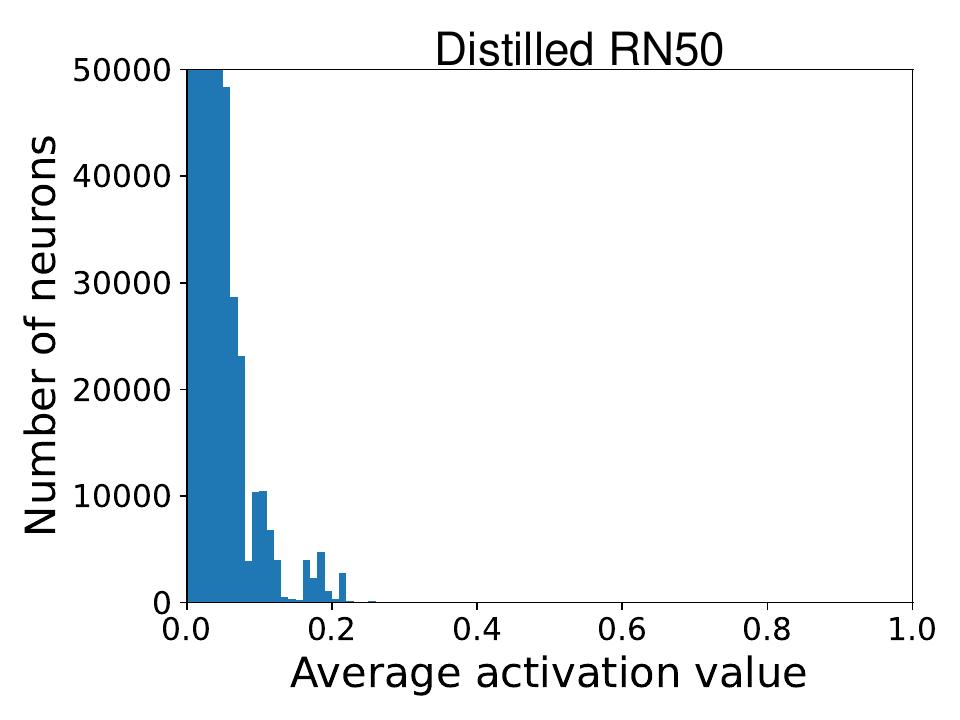}
    \caption{ObjectNet}
\end{subfigure}
\begin{subfigure}{0.3\linewidth}
\centering
    \includegraphics[width=0.99\linewidth]{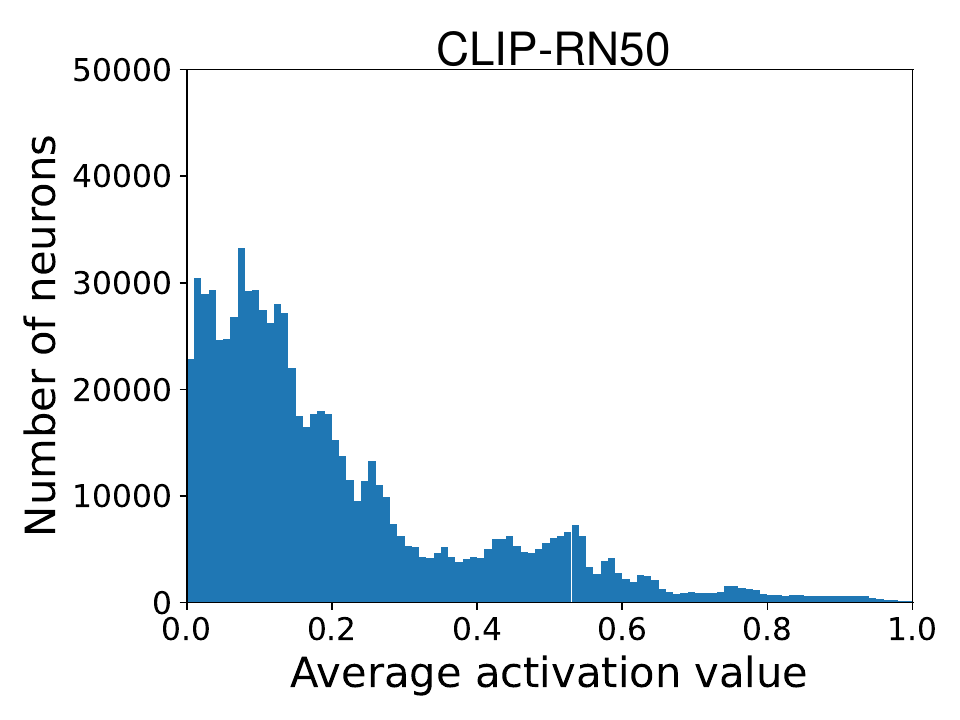}\\
    \includegraphics[width=0.99\linewidth]{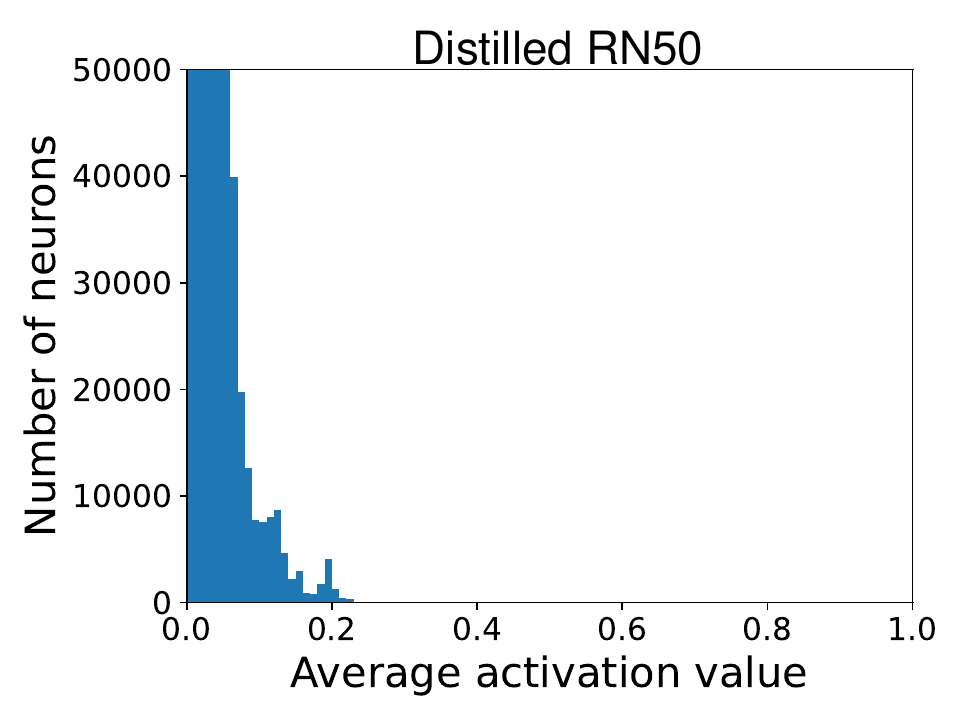}
    \caption{ImageNet Sketch}
\end{subfigure}
\begin{subfigure}{0.3\linewidth}
\centering
    \includegraphics[width=0.99\linewidth]{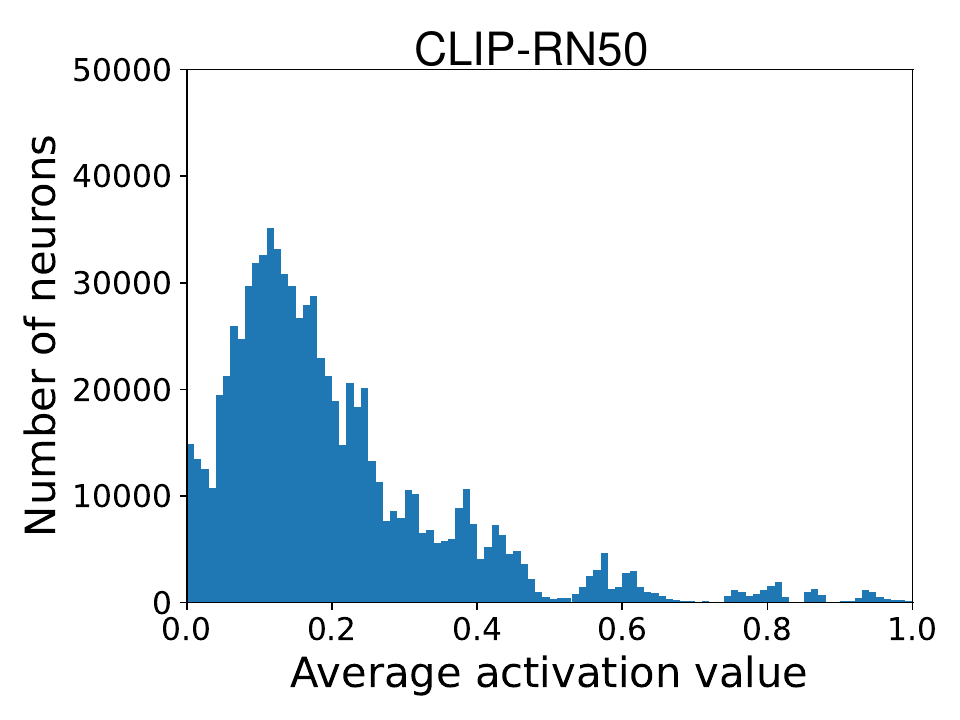}\\
    \includegraphics[width=0.99\linewidth]{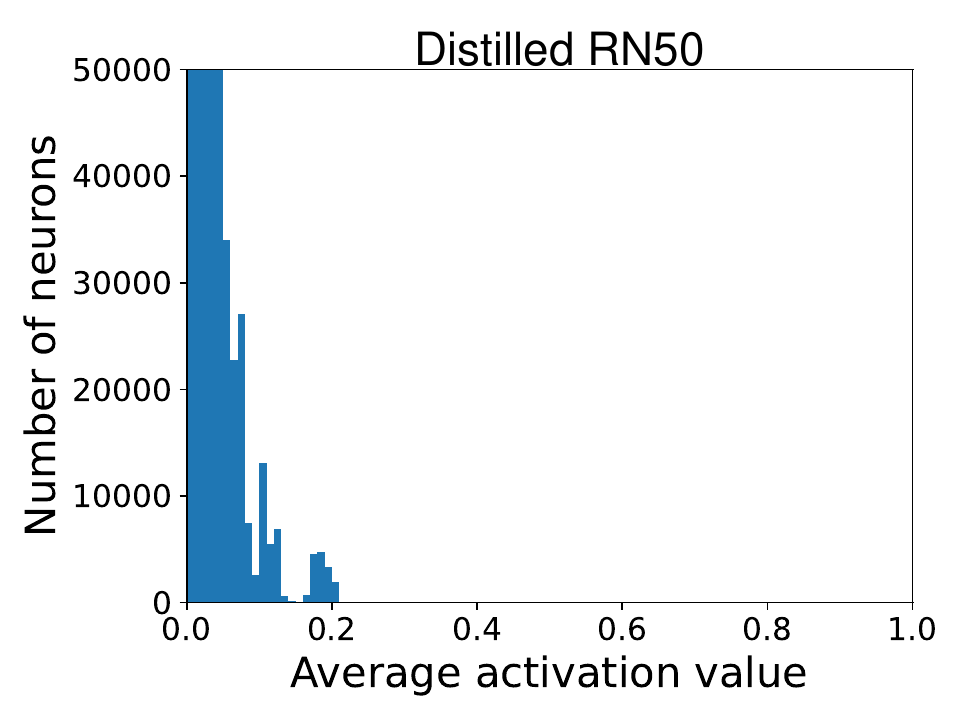}
    \caption{ImageNet-A}
\end{subfigure}

\caption{Histograms of average neuron activations of a pre-trained CLIP-RN50 and a distilled CLIP-RN50 on ImageNet distribution shift datasets. In each subfigure, the top plot shows the histogram of CLIP, and the bottom plot shows the histogram of the distilled model.}
\label{fig:activation}
\end{figure}

\subsection{Large-Scale Pre-training Leads to Denser Neuron Activation}

In this section, we study property of pre-trained representations from another angle of neuron activation. As we have formally proved in~\secref{sec:main}, feature contamination causes the neurons to learn non-linearly coupled features. The activation of each neuron is thus likely to involve multiple feature vectors due to this coupling. By the above deduction, if pre-training alleviates feature contamination and learns more linearized features, then the activation of different feature vectors would be more likely to involve different neurons, resulting in an increase in the total number of activated neurons for each input.

Empirically, we confirmed the above hypothesis by calculating the histogram of the neuron's expected activation value in pre-trained and distilled models from the ImageNet experiments in~\secref{sec:main_exp}. We considered the CLIP-RN50 teacher model and its corresponding student model obtained from representation distillation, and maintained an estimate of the average activation value for each output ReLU activation in the first residual block during one evaluation run. We plot the histogram of the neuron's average activation value in~\figref{fig:activation}. As shown by the figure, the pre-trained CLIP model indeed have considerably denser neuron activation than the distilled model, even on the ID ImageNet validation set where their top-1 accuracy is nearly the same (70.37\% for the pre-trained CLIP model and 69.85\% for the distilled model). This suggests that pre-trained models learn more ``decoupled'' features than models trained solely on the ID data.

\subsection{More Discussion on Related Work}

\paragraph{Explaining the distributional robustness of CLIP.} Understanding the remarkable distributional robustness of large-scale pre-trained models such as CLIP is an open research problem of its own. Due to the amount and diversity of pre-training data, a major confounder in this problem is that pre-trained models may have ``seen'' similar examples in standard distribution shift test sets during pre-training rather than be essentially robust to unseen distribution shifts. Recently,~\citet{mayilvahanan_does_2024} conducted controlled experiments suggesting that even if we remove examples that are semantically similar to those in OOD test sets during pre-training, CLIP still remains a large portion of its distributional robustness. Therefore, CLIP must have achieved good OOD performance in a non-trivial way by learning OOD generalizable representations rather than simply memorize the test distribution.~\citet{gandelsman_interpreting_2024} shows that CLIP's image representations can be decomposed as sums across individual image patches, layers and attention heads that are often \emph{property-specific}. Such ``decomposability'' in the representations implies that CLIP may represent different semantics in the input images in a decoupled way, which may be free from feature contamination. However, a rigorous connection between this observation and the linearity of features remains to be explored.

\paragraph{The linear representation hypothesis.} An important observation made by recent work on interpreting the representations of large language models (LLMs) is that many high-level, abstract concepts are \emph{linearly} represented in the LLMs' intermediate activation spaces~\citep{marks_geometry_2023,allen-zhu_physics_2023,gould_successor_2023,park_linear_2023,heinzerling_monotonic_2024,gurnee_language_2024}. At a high level, those results are related to our conjecture in~\secref{sec:discussion} on the \emph{feature linearization} effect of pre-training. However, it remains an open problem how pre-training leads to such effects. Another closely related concept in the literature is \emph{superposition}~\citep{elhage_toy_2022}, which hypothesizes that neural networks may represent more independent features than the number of neurons in the network by assigning different features to the same neuron when those features are sparse and correlated with the task. By contrast, we show that neural network can also learn \emph{uncorrelated} features even when it has enough neurons to represent all features separately.

%%%%%%%%%%%%%%%%%%%%%%%%%%%%%%%%%%%%%%%%%%%%%%%%%%%%%%%%%%%%%%%%%%%%%%%%%%%%%%%
%%%%%%%%%%%%%%%%%%%%%%%%%%%%%%%%%%%%%%%%%%%%%%%%%%%%%%%%%%%%%%%%%%%%%%%%%%%%%%%

\end{document}